\documentclass{article}

\usepackage{arxiv}

\usepackage[utf8]{inputenc} 
\usepackage[T1]{fontenc}    
\usepackage{hyperref}       
\usepackage{url}            
\usepackage{booktabs}       
\usepackage{amsfonts}       
\usepackage{nicefrac}       
\usepackage{microtype}      
\usepackage{lipsum}		
\usepackage{graphicx}
\usepackage[numbers]{natbib}
\usepackage{doi}
\usepackage{pgfplots}
\usepackage{subcaption}
\usepackage[utf8]{inputenc}
\usepackage{float}
\usepackage{wrapfig}
\usepackage{booktabs}
\usepackage{multicol}
\usepackage{multirow}
\usepackage{caption} 
\usepackage{amsmath}
\usepackage{relsize}
\usepackage{color,soul}
\usepackage{xcolor}
\usepackage[toc,page]{appendix}
\usepackage{amsthm}
\usepackage{amssymb}
\usepackage{cleveref}
\usepackage{enumitem}   

\usepackage{nicematrix}
\usepackage{diagbox}
\usepackage{makecell}
\usepackage{algpseudocode}
\usepackage{algorithm}
\usepackage{pifont}

\newtheorem{lemma}{Lemma}
\newtheorem{corollary}{Corollary}
\newcommand{\norm}[1]{\left\|#1\right\|_2}

\newcommand{\blmin}{\bar\lambda_{\min}}
\newcommand{\blmax}{\bar\lambda_{\max}}

\newcommand{\bLb}{\bar\Lambda}

\pgfplotsset{width=10cm,compat=1.9}

\usepgfplotslibrary{external}
\definecolor{light-gray}{gray}{0.95}

\title{ADPTNet: Adaptive with Prescriptive Timescales Non-Linear SSM for Sequence Modelling}

\author{ \href{https://orcid.org/0000-0003-2726-1860}{\includegraphics[scale=0.06]{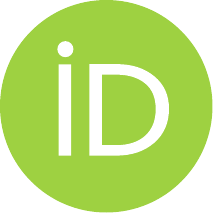}\hspace{1mm}Matei-Ioan Stan *} \\
	International Centre for Neuromorphic Systems\\
	The University of Manchester\\
	Manchester, United Kingdom \\
	\texttt{matei.stan@manchester.ac.uk} \\
	\And
	\href{https://orcid.org/0000-0003-1728-2828}{\includegraphics[scale=0.06]{orcid.pdf}\hspace{1mm} Oliver Rhodes} \\
    International Centre for Neuromorphic Systems\\
	The University of Manchester\\
	Manchester, United Kingdom  \\
	\texttt{oliver.rhodes@manchester.ac.uk} \\
}

\renewcommand{\headeright}{}
\renewcommand{\undertitle}{}
\renewcommand{\shorttitle}{ADPTNet: ADaptive with Prescriptive Timescales Non-Linear SSM for Sequence Modelling}

\hypersetup{
pdftitle={ADPTNet: ADaptive with Prescriptive Timescales Non-Linear SSM for Sequence Modelling},
pdfsubject={q-bio.NC, q-bio.QM},
pdfauthor={Matei-Ioan Stan},
pdfkeywords={Spiking Neural Networks, State Space Models, Sequence Modelling, Long Range Dependencies, Dynamical Systems, State Tracking, Selectivity},
}

\begin{document}
\maketitle

\begin{abstract}
A central aim of neuromorphic computing is to provide a viable alternative to highly energy-intensive Transformer-based AI.  However, efficient alternatives struggle to capture the set of qualities that have secured the Transformer's status as the de facto standard in sequence modelling. Any realistic contender must be data-adaptive, able to capture long-range dependencies, and GPU-parallelisable, but also non-linearly recurrent to enable complex reasoning. Based on evidence suggesting the auditory cortex operates on fixed timescales, this work proposes the ADaptive with Prescriptive Timescales Network (ADPTNet) as a potential solution to achieving all four properties simultaneously. ADPTNet is built around local topological conjugates, obtained by a novel combination of linear attention and Riemannian optimisation, applied to static global dynamics. This enables non-linear yet predictable long-term behaviour. Dynamical systems theory proofs provide theoretical guarantees for the parametric control of ADPTNet's timescales (its Lyapunov spectrum). ADPTNet improves performance on Selective Copying over Hawk, the existing method balancing long-range memory and adaptability, while also improving state tracking over linear SSMs like Mamba. On sequential CIFAR-10, ADPTNet matches linear SSM accuracy and outperforms existing selective models (incl. the Transformer), using fewer parameters. We also introduce a neuromorphic SpikingADPTNet, which achieves a new state-of-the-art accuracy on the Spiking Speech Commands dataset ($83.56\%\pm0.15$). Finally, ADPTNet's constant timescales enable two efficient, Jacobian-free extensions to the DEER parallel simulation algorithm (Conv and Forward DEER) that retain the same average convergence. Conv DEER adds no computational overhead beyond the network's forward pass and enables non-linear RNN parallelisation via iterated convolutions for the first time. 

\end{abstract}

\keywords{Spiking Neural Networks \and State Space Models \and Sequence Modelling \and Selectivity \and State Tracking \and Dynamical Systems \and Long Range Dependencies}

\section{Introduction} \label{sec:intro}

In recent years, sequence-based tasks have proven a hotbed of progress in deep learning research. This is best exemplified by the billion-dollar industry that has emerged around Large-Language Models (LLMs), essentially ever-improving token-sequence compression and decoding neural network algorithms. More specifically, this massive effort to build state-of-the-art LLMs has largely relied on developing variations of what has effectively become the standard architecture, Transformers \citep{vaswani2017attention}.

Some research on the remarkable success of Transformers in Natural Language Processing (NLP) applications has focused on how their core component, self-attention, overcomes vanishing and exploding gradients that affect Recurrent Neural Networks (RNNs) \citep{hochreiter2001gradient, pascanu2013difficulty}, enabling the use of longer context windows. Another strong argument for the Transformer's ubiquitous adoption has been the formulation of self-attention around massive and parallelisable matrix multiplications, which are well-positioned to take advantage of modern GPUs and thus are currently "winning" the hardware lottery \citep{hu2019introductory, hooker2021hardware}. A final research direction for explaining the Transformer's performance, which has attracted increasing interest over time, has been to formalise the role of the architecture's perceived "ad aptability". 

This latter property of self-attention layers has been described in literature using terms at various levels of abstraction. Some have defined adaptability in relation to the flexibility granted by direct modelling of token-to-token interactions \citep{lee2022fnet}. Others have focused on how self-attention implicitly defines the weights used for linear sequence-wise mixing as functions conditioned on the input tokens, thereby making it a data-controlled operation \citep{massaroli2020dissecting, poli2023hyena}. Links have also been drawn between attention key-value pairings and associative Hopfield Networks \citep{ramsauer2020hopfield}. Furthermore, associative storage and recall are thought to be key components of in-context learning, which refers to the emergent ability of Transformer-based LLMs to effectively predict unseen inputs based on information provided through prompts at inference-time \citep{arora2024zoology, olsson2022context}. 

The dominance of Transformer-based architectures, however, is not unchallenged, with a major current challenge being the infamous energy-intensity of training and deploying state-of-the-art LLM systems \citep{singh2025survey}. From an algorithmic perspective, this is partly due to quadratic-in-sequence-length compute and memory scaling of self-attention. In this regard, recurrent architectures have seen a resurgence of interest based on comparatively favourable sub-quadratic scaling. Most notably, linear Transformers \citep{tay2022efficient} and State Space Models (SSMs) \citep{gu2021efficiently} have emerged as overlapping research streams focused on matching or outperforming Transformers in associative recall and long-context sequence modelling, respectively. Consequently, contributions from both fields have provided efficiency gains in industrial LLM development \citep{botev2024recurrentgemma, team2025kimi, chandiramani2026nemotron}. However, fundamentally, they still rely on dense floating-point Multiply-Accumulate (MAC) operations on GPUs, much like Transformers, and thus inherit a similar energy-consumption paradigm \citep{bai2024beyond}. 

One might assume that this taxing energy consumption is an inherent drawback of intelligent systems, were it not for the evident counterexample of the human brain. The cortex vastly outperforms current frontier models, including in key areas of deep learning research, such as reasoning and few-shot generalisation, while using a fraction of the energy \citep{hasler2017special}. Neuromorphic computing aims to bridge this gap by developing algorithms and computational substrates that emulate the brain's efficacy and efficiency. Perhaps most famously, Spiking Neural Networks (SNNs), the third generation of neural network models \citep{maass1997networks} explicitly aim to reduce energy consumption by avoiding MAC operations through sparse, binary communication between biologically plausible neurons. SNNs are also intended for deployment on neuromorphic hardware, which can leverage spike-train sparsity and potentially avoid the von Neumann bottleneck of traditional hardware, including GPUs \citep{kudithipudi2025neuromorphic}. Consequently, neuromorphic systems have been shown to reduce energy consumption by orders of magnitude compared to mainstream counterparts \citep{davies2021advancing, rhodes2020real}.

While SNNs have a proven track record of reducing energy consumption, to date, it remains mostly confined to small-scale machine learning applications. As neuromorphic algorithms scale, they typically begin to sacrifice biological inspiration, creating an inherent tension in their definition. For instance, scaling SNNs in depth typically requires loosening the definition of spikes (e.g., "graded" integer spikes \citep{fang2021deep}), or replacing them altogether (e.g., ternary-weight linear layers with continuous activations \citep{zhu2024scalable, stan2024learning}), becoming harder and harder to distinguish from low-bit quantised RNNs. It is, therefore, worthwhile to ask how cortical computational principles can still contribute to scaling up SNNs in the age of massive foundation models and alleviate this tension. 

Realistically, a precondition for neuromorphic systems to catch up to the widespread adoption of current LLMs is to first match the Transformer's favourable qualities. For the purposes of this study, as mentioned above, these target qualities include long-range sequential dependency modelling, amenability to fast parallel simulation, and a notion of "adaptability". The present study attempts to address all three properties, drawing on recent developments in understanding temporal processing in the auditory cortex.

The brain performs computations using recurrent, non-linear dynamics \citep{douglas1995recurrent, singer2016does, vignesh2025review}, determined by a mix that includes, but is not limited to, individual neuronal dynamics \citep{gerstner2014neuronal}, synaptic connectivity \citep{marsh2024emergent, breakspear2007neuronal}, and neuromodulation \citep{nadim2014neuromodulation}. The auditory cortex is no exception, as it essentially implements non-linear mappings from auditory sensory inputs to higher-order neural representations \citep{keshishian2020estimating}. Crucially, however, recent evidence suggests that across both primary (i.e., acoustic processing) and non-primary (i.e., speech or music processing) areas, inputs are processed within almost fixed-duration temporal windows \citep{sabat2025neurons, norman2025temporal}. In other words, the dynamical systems within the auditory cortex have a non-linear fading memory \citep{boyd1985fading}, that leaks information over time at a roughly constant, inherent rate, irrespective of the current state of neural activity (e.g., individual neuron membrane voltages) or the incoming inputs. This is not a universal property of non-linear dynamical systems, as one cannot generally predict trajectories in phase space a priori \citep{sussillo2013opening}, including whether input information is forgotten at all. 

Traditional non-linear SNNs, much like RNNs, do not generally provide guarantees about time scales across phase space either \citep{sussillo2013opening, smith2021reverse}. The linear sub-threshold behaviour in popular neuron models, such as the Leaky Integrate-and-Fire (LIF) model \citep{gerstner2014neuronal}, can be parametrised to have pre-determined and constant memory properties in individual neurons (Section \ref{SNN_formula}). Furthermore, one can also parametrise recurrent synaptic connectivity matrices to guarantee certain linear memory properties \citep{hermans2010memory}. Nevertheless, once non-linear spiking is introduced, the analytical guarantees granted by linear dynamics no longer apply, making direct initialisation and parametrisation of network timescales difficult \citep{engelken2023gradient}. This is a well-known shortcoming of non-linear recurrent architectures, and is the root cause of the recently mentioned vanishing and exploding gradient problems. If one were to further increase the complexity of the network architecture, for example, by adding neuromodulation (i.e., changing recurrent weights at inference based on context \citep{alkilany2025neuromodulation}), it is safe to assume that timescale parametrisation would become even more difficult. 

These challenges in reliably controlling the timescales of non-linear dynamics have fuelled a recent wave of interest in SNNs with purely linear recurrence, taking advantage of progress made in effective initialisation and parametrisation schemes for linear SSMs in deep learning literature \citep{stan2024learning, meyer2025diagonal, shen2025spikingssms}. However, as interest in linearity has increased in both neuromorphic and deep learning research, evidence has also been mounting for previously unknown advantages of non-linear recurrence. Recurrent computation has been shown to improve reasoning capabilities while using fewer parameters compared to traditional vanilla parallel blocks such as Transformers and SSMs \citep{zhu2025scaling, jolicoeur2025less}. Non-linear RNNs layers have also been theoretically proven to model algorithmic state-tracking tasks and finite-state automata (e.g., tracking the state of a chess game), unlike Transformer or SSM layers \citep{merrill2024illusion, terzic2026structured}. This is a crucial observation, because state-tracking is devised as a synthetic test for complex reasoning capabilities, which have become a major focal point in LLM research \citep{merrill2023parallelism, jolicoeur2025less}. 

Much like non-linear dynamics, synaptic neuromodulation has also attracted recent attention in the RNN and SNN literature. It typically takes the form of LSTM-like gating \citep{hochreiter1997long, jing2019gated}, or subnetworks that control the gain scaling of recurrent weights \citep{costacurta2024structured, alkilany2025neuromodulation}, and has been shown to potentially help emulate Transformer "adaptability" in both linear \citep{gu2023mamba} and non-linear \citep{mishra2026m} recurrent architectures.

\begin{figure}[H]
    \centering
        \centering        
        \includegraphics[width=0.5\textwidth]{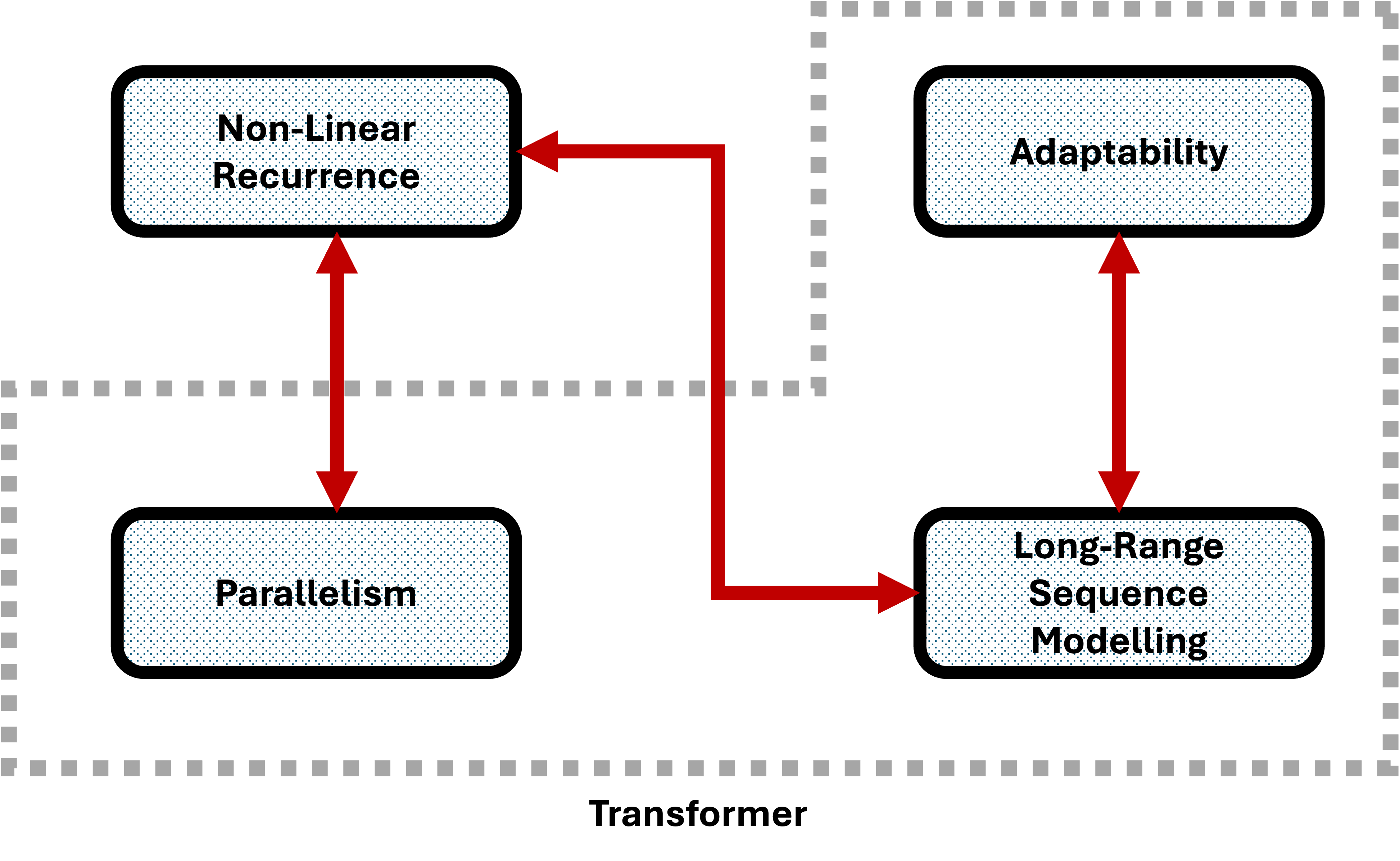}
        \hfill
        \caption{\textbf{Trade-Offs in RNNs and SSMs} This figure shows the four qualities required for state-of-the-art and versatile sequence modelling. Three of them are staples of Transformer performance: adaptability (ii), long-range sequence modelling (iii), and parallelisability (iv). The fourth, non-linear recurrence (i), is thought to enable performance unattainable by vanilla Transformers. A competitive RNN or SSM should possess all four. However, as the red arrows highlight, they are fundamentally at odds with each other. } 
        \label{fig:tradeoffs}
\end{figure}

\textbf{Contributions} To summarise, the overarching goal of this work is to progress neuromorphic computing toward a viable energy-efficient alternative to current large-scale AI systems. That implicitly means directly competing with the Transformer. This can be measured by performance on the current taxonomy of sequence modelling tasks, probing four specific qualities needed to match and even outperform the Transformer (Fig.~\ref{fig:tradeoffs}):

\begin{enumerate}[label=(\roman*)]

    \item  \textbf{Non-linear recurrent dynamics} are needed for reasoning, and their effectiveness is measured by state tracking ability. This is where architectures have a concrete lever to improve on the Transformer;
    
    \item \textbf{Adaptability}, measured by selectivity, is essential for in-context learning and language modelling more broadly;

    \item \textbf{Long-range memory} is necessary for processing long conversations, documents, or high-frequency/long duration signals in general. One cannot deploy a Transformer alternative if it cannot process the same context lengths; 

    \item \textbf{Parallelisability} is crucial for GPU-optimised, fast, and scalable training. Any architecture that cannot scale to the same large-scale training datasets as the Transformer immediately faces a fundamental hurdle to widespread adoption. 
\end{enumerate}

The main challenge is that pairing together all four qualities requires addressing a series of inherent trade-offs. Non-linear dynamics (i) typically require sequential simulation, preventing parallelism (iv). Furthermore, non-linear recurrence (i) and adaptability (ii) usually hinder long-range modelling performance (iii), since one cannot directly use state-of-the-art stable initialisation and parametrisation schemes from linear time-invariant SSMs. Thus, a clear research gap remains in tackling all trade-offs simultaneously, especially for neuromorphic and energy-efficient architectures. To address this, this study provides the following contributions:

\begin{enumerate}
    \item A novel recurrent architecture, the ADaptive with Prescriptive Timescales Network (ADPTNet), is proposed. ADPTNet draws inspiration from the auditory cortex to process inputs using input-invariant timescales while remaining adaptive through neuromodulation. The main building block of the proposed dynamics is the topological conjugation operation, which uses data-dependent similarity transforms constrained to the orthogonal manifold to modulate recurrent dynamics. This is the first study to extend linear Transformer and DeltaNet online update rules to Riemannian manifold optimisation. Furthermore, it is also the first to combine them with Dynamical Systems and Chaos Theory. Accordingly, this work provides theoretical guarantees on parametric control of the Lyapunov spectrum of ADPTNet dynamics as a function of its recurrent eigenspectrum. The theory is then validated empirically, showing how to use SSM initialisation and parametrisation techniques in non-linear ADPTNet without losing performance on long-context sequence modelling (iii). At the same time, the study also shows how the same topological conjugation is sufficiently non-linear (i) to enable state tracking, while also injecting selectivity in the network (ii) to the same level as established selective and recurrent architectures \citep{de2024griffinmixinggatedlinear}, tackling two of the three trade-offs mentioned above.   
    
    \item Two novel efficient parallelisation methods co-designed with ADPTNet and based on DEER \citep{lim2024parallelizing} are proposed. Conv-DEER is a Jacobian-free DEER variation that achieves similar average convergence to Quasi-DEER \citep{gonzalez2024towards}. It leverages ADPTNet's constant, predictable timescales to enable parallelisation via convolution, for the first time for non-linear RNNs. Within each iteration, Conv-DEER adds no computational overhead beyond the forward pass and scan/convolution, and uses only the network's static recurrent eigenspectrum. Forward-DEER is also proposed, reaching even closer convergence behaviour to Quasi-DEER, while still not computing any derivatives or Jacobians, relying solely on the forward dynamics of the network. 

    \item ADPTNet is used as a backbone for an SNN, yielding state-of-the-art accuracy on the Spiking Speech Commands classification task \citep{cramer2020heidelberg}, outperforming current state-of-the-art methods by over a percentage point ($83.56\% \pm 0.15$). 
    
\end{enumerate}

\section{Background} \label{sec:background}

\subsection{Spiking Neural Networks} \label{SNN_formula}

As previously mentioned, the most common backbone for SNNs is the LIF neuron model \citep{eshraghian2023training, gerstner2014neuronal}. In continuous time, sub-threshold LIF membrane voltage $u\in \mathbb{R}$ dynamics are equivalent to an RC circuit (resistance R, capacitance C) with a time constant of $\tau=RC$ (Eq. \ref{RC}) \footnote{Note that Equation~\ref{RC} uses $I$ to denote input current. This notation will later be overloaded to mean the identity matrix, as is convention in linear algebra.}. In practice, Euler discretisation is used for simulation (Eq. \ref{discrete_lif}) and the resting membrane voltage ($u_{rest}$) is set to $0$, yielding an exponential moving average with a decay rate ($\beta$) parametrised by the step size ($\Delta t$) and $\tau$ (Eq. \ref{beta}). As mentioned in Section \ref{sec:intro}, the memory horizon of neurons in the sub-threshold regime can be directly defined by $\beta$. However, once the membrane voltage crosses the firing threshold ($\theta$), a spike is fired, and the voltage is reset (Eq. \ref{spike_fn} and \ref{spiking}), anticipating the difficulties in predicting overall timescales (Section \ref{sec:intro}). In the context of building SNNs with vector states $u \in \mathbb{R}^N$, an additional recurrent feedback connection with trainable synaptic weights ($W_{rec}$) can also be added (Eq. \ref{rsnn}), sometimes referred to as Recurrent SNNs (RSNNs) \citep{bellec2020solution}. Finally, because spiking ($s$) is a non-differentiable Heaviside step function, vanilla backpropagation cannot be directly applied. One could address this by applying a local, brain-inspired learning rule such as Spike-Timing Dependent Plasticity (STDP) \citep{bengio2015stdp}. However, the community has converged on surrogate gradients as a de facto standard owing to their performance being closest to that of standard backpropagation \citep{neftci2019surrogate}. While the binary threshold function is applied in the forward pass, the derivative of a differentiable surrogate function $\sigma$ (e.g., sigmoid, arctan, etc.) (Eq. \ref{surrogate_grad}) is applied backwards, allowing backpropagation to otherwise be applied as usual. 

\begin{equation} \label{RC}
    \tau \frac{du(t)}{dt} =  -(u(t)-u_{rest}) + I(t)R, \space \tau = RC
\end{equation}

\begin{equation} \label{discrete_lif}
    u[t + 1] = \beta u[t] + (1-\beta)I[t+1]
\end{equation}

\begin{equation} \label{spiking}
    u[t+1] = u[t+1] - \theta s[t]
\end{equation}

\begin{equation} \label{beta}
    \beta = e^{\frac{-\Delta t}{\tau}}
\end{equation}

\begin{equation} \label{spike_fn}
s[t] = 
\begin{cases}
   1, \space  u[t] \geq \theta\\
   0, \space u[t] < \theta
\end{cases}
\end{equation}

\begin{equation} \label{rsnn}
    u[t+1] = \beta u[t] + (1-\beta) (I[t] + W_{rec}s[t])- \theta s[t]
\end{equation}

\begin{equation} \label{surrogate_grad}
    \frac{ds[t]}{du[t]} \approx \frac{d\sigma[t]}{du[t]}
\end{equation}

\subsection{State Space Models} \label{ssms}

SSMs have a long history of usage in dynamical systems and control theory \citep{dahleh2004lectures}, along with well-established applications such as neural activity modelling \citep{linderman2019hierarchical}. Much like a vanilla RNN (Eq. \ref{vanilla_rnn}), SSMs are generically a function of a recurrent state $x$ and incoming input $u$ (Eq. \ref{ssm_rec}), with output $y$ typically obtained using a readout matrix and an often left-out skip connection (Eq. \ref{ssm_readout}). However, for the purposes of this work, the SSMs considered do not include a non-linearity ($\sigma$) in the recurrence. Depending on whether model parameters also vary over time, SSMs can be Linear Time-Varying (LTV) or Linear Time-Invariant (LTI) (Eq.~\ref{eq:lti_ssm}). 

\begin{equation} \label{vanilla_rnn}
    f(x_{t+1}) = \sigma(W_{rec}x_t + W_{in}u_{t+1}) 
\end{equation}

\begin{equation} \label{ssm_rec} 
    x_{t+1} = A_{t+1} x_t + B_{t+1}u_{t+1}
\end{equation}

\begin{equation}  \label{ssm_readout}
    y_{t+1} = C_{t+1}x_{t+1} + D_{t+1}u_{t+1}
\end{equation}

\begin{equation} \label{eq:lti_ssm}
    \begin{split}
        x_{t+1} &= A x_t + Bu_{t+1} \\
        y_{t+1} &= C x_{t+1} + D u_{t+1}
    \end{split}
\end{equation}

In the context of deep learning, SSMs were first introduced in the Legendre Memory Unit (LMU) \citep{voelker2019legendre}, in which a linear memory network is coupled to the main RNN. Because of its linearity, the memory unit can be initialised to project the input signal onto a Legendre orthogonal polynomial basis, thereby granting theoretical guarantees for compressing a fixed-width sliding window of the past. Building on the LMU, the S4 architecture \citep{gu2021efficiently} relinquished the non-linear RNN in favour of a purely linear recurrent unit that also extended the initialisation schemes to other orthogonal polynomial bases (i.e., Laguerre, Chebyshev, etc.), and compression strategies beyond fixed windows, such as exponentially decaying \citep{gu2022train}. S4 was a landmark achievement for SSMs, as it was the first to be proven to outperform Transformers on long-range dependency modelling tasks. Follow-up studies have established that the exponentially decaying memory inductive bias is sufficient to achieve this performance \citep{li2022makes, ma2022mega}. Accordingly, the current consensus is that $A$ is typically a complex-valued diagonal matrix, with its eigenvalues distributed close to the unit circle \citep{gu2022parameterization, orvieto2023resurrecting}. One may notice a strong resemblance to sub-threshold LIF dynamics (Section~\ref{SNN_formula}), or their complex-valued variation, Resonate-and-Fire (RF) neurons \citep{orchard2021efficient}. Similar to LIF neurons, SSMs are typically first formulated in continuous time and then discretised for simulation. For notational simplicity, the SSM descriptions included here are in discrete time. 

All SSMs described so far in this section have been LTI architectures. However, from the perspective of Language Models, LTI SSMs are known to lag behind Transformers \citep{poli2023hyena, fu2022hungry} and lack the latter's adaptability (Section~\ref{sec:intro}). To account for this, LTV SSMs have also been developed, first introduced as Mamba \citep{gu2023mamba}. Here, $A_t$, $B_t$, $C_t$ are functions of the input $u_t$, typically the output of a Multi-layer Perceptron (MLP), and thus do not create non-linear dependencies between time steps. Mamba has made progress in closing the gap with Transformers for language modelling and has been widely adopted in hybrid architectures alongside them \citep{botev2024recurrentgemma}. Mamba's success on language tasks has, however, come at the cost of long-range dependency modelling performance, losing the theoretical guarantees of earlier LTI SSMs \citep{yu2026block}. One may also notice that the input-dependent parameters in Mamba are reminiscent of neuromodulated RNNs in neuromorphic literature \citep{costacurta2024structured, alkilany2025neuromodulation}, showcasing the apparent trade-off between adaptability and long-range dependencies referenced in Section~\ref{sec:intro}. 

A key to the success of SSMs has been the GPU-parallelisable nature of linear recurrences for training, while retaining efficient $O(1)$ sequential deployment. The output of an LTI system ($y$) is equivalent to the convolution of an input signal with a global sequence-long kernel defined by the powers of the recurrent matrix \citep{chilkuri2021parallelizing, gu2021combining} (Eq. \ref{sum_view_ssm}). Furthermore, these convolutions can be efficiently computed using element-wise multiplication in the Fourier domain, a near-linear operation in sequence length ($O(N \log(N))$) compared to the Transformer's quadratic scaling, when using Fast Fourier Transforms (FFTs) \citep{cooley1965algorithm}. Mamba-like LTV architectures are not equivalent to convolutions, and instead rely on parallel associative scans for parallelisation \citep{blelloch2002scans}. 

\begin{equation} \label{sum_view_ssm}
    y[t] = CA^tBu[0] + ... + CABu[t-1] + CBu[t]
\end{equation}

\begin{equation} \label{kernel}
    \kappa = (CB, CAB, CA^2B, ..., CA^LB)
\end{equation}

\begin{equation} \label{conv_view}
    y = u \circledast \kappa = FFT^{-1}(FFT(u) \ast FFT(\kappa))
\end{equation}

\subsection{Lyapunov Exponents} \label{sec:lyapunov_exponents}

\begin{figure}[H]
    \centering
        \begin{subfigure}[b]{0.3\textwidth}
            \centering        
            \includegraphics[width=\textwidth]{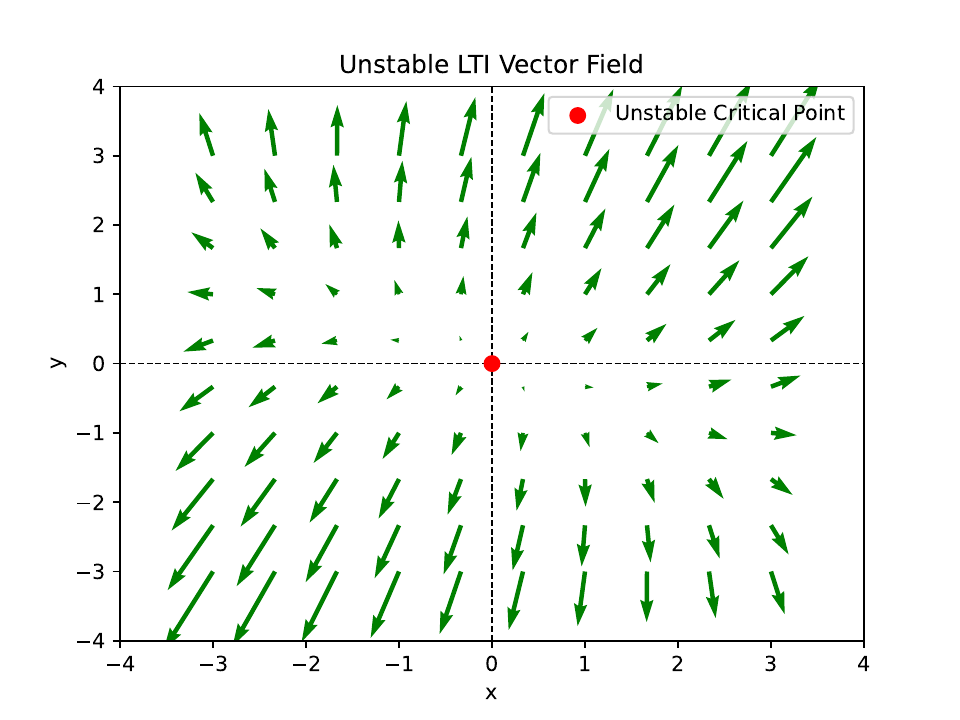}
            \caption{Unstable LTI System}
            \label{fig:unstable_lti}
         \end{subfigure}
        \begin{subfigure}[b]{0.3\textwidth}
            \centering        
            \includegraphics[width=\textwidth]{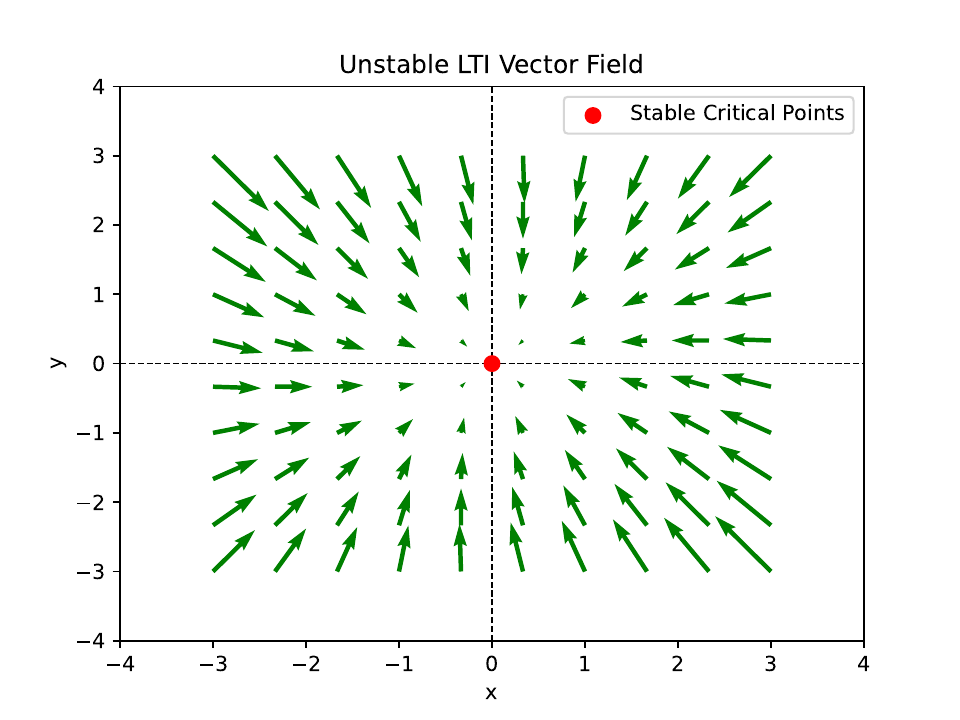}
            \caption{Stable LTI System}
            \label{fig:stable_lti}
        \end{subfigure}
        \begin{subfigure}[b]{0.3\textwidth}
            \centering        
            \includegraphics[width=\textwidth]{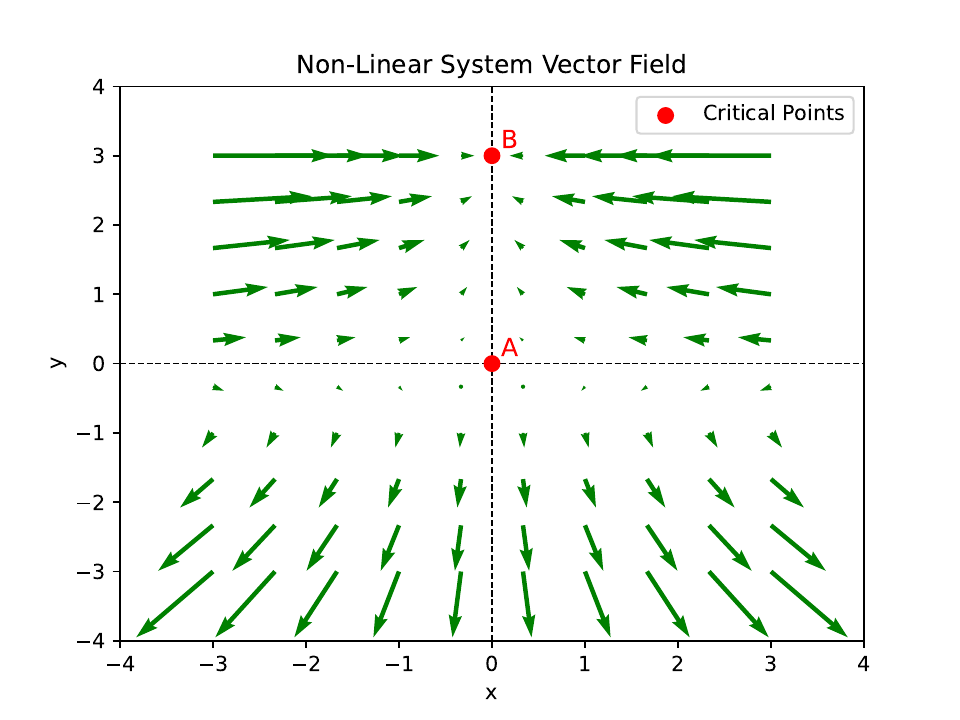}
            \caption{Non-Linear System}
            \label{fig:nonlinear}
        \end{subfigure}
        
     \hfill
        \caption{\textbf{Difference in predictability between LTI and Non-Linear Systems}. Subfigures~\ref{fig:unstable_lti} and ~\ref{fig:stable_lti} show the vector fields of unstable/stable 2-dimensional LTI systems ($\dot{x} = A_{11}x + A_{12}y, \space \dot{y} = A_{21}x + A_{22}y$), where $A$ has positive and negative eigenvalues, respectively. Subfigure~\ref{fig:nonlinear} shows the vector field of a non-linear dynamical system: $\dot{x}=-2x - 3xy, \space \dot{y}=3y - y^2$ \citep{herman2008second}. The system has two critical points: A is a stable point (sink) that attracts trajectories in its vicinity, while B is a saddle point that attracts and repels trajectories depending on their location. }
        \label{fig:dynamical_system}
\end{figure}

As highlighted in Sections~\ref{SNN_formula} and ~\ref{ssms}, the long-term dynamics of LTI systems are fully determined by the eigenvalues of recurrent matrices. In continuous time, if all the eigenvalues have negative real parts, then the system exponentially relaxes back to its resting state in the absence of an external driver (Figure ~\ref{fig:stable_lti}).  Conversely, if there are eigenvalues with positive real parts, inputs may persist indefinitely in memory, causing the system to diverge exponentially from its resting state over time (Figure~\ref{fig:unstable_lti}). Alternatively, in discrete settings, stability is tied to eigenspectra bounded within the unit circle. The eigenvalues also describe the rate of convergence/divergence from the resting state. 

LTV and non-linear systems do not generally benefit from a priori predictors of long-term behaviour. Consider the phase plane in Figure~\ref{fig:nonlinear}. Depending on the state of the system, trajectories may either converge to point B or converge/diverge around the saddle point A.  Moreover, the rate of convergence/divergence also differs by region. While one can approximate locally with Jacobian linearisation in the vicinity of critical points, long-term trajectories cannot be predicted. Instead, one can describe the behaviour of already observed trajectories using Lyapunov exponents. For a given discrete non-linear system $x_{t+1}=f(x_t)$, Lyapunov exponents ($\lambda_k$) are the averaged singular values ($\sigma_k$) of a system's long-term Jacobian (G) (Eq. ~\ref{eq:long_term_jacobian}) for a sampled trajectory (Eq. ~\ref{eq:lyapunov_exp}) \citep{geist1990comparison, engelken2023gradient}.  In particular, the Largest Lyapunov Exponent (LLE or $\lambda_1$) indicates whether the system behaviour is stable or chaotic. First, if the LLE is negative, a trajectory starting from a perturbed state  $x_t + \Delta$ eventually converges with its unperturbed counterpart. The rate of convergence is lower the closer the LLE is to zero, i.e., the system has a longer memory horizon. Second, if the LLE is positive, perturbations result in exponentially diverging trajectories and chaotic dynamics. 

\begin{equation}\label{eq:long_term_jacobian}
    G_{1:N}=\prod_t^{1:N}J_t,  \space J_t = \frac{dx_{t}}{dx_{t-1}}
\end{equation}

\begin{equation}\label{eq:lyapunov_exp}
    \lambda_k = lim_{N\rightarrow \infty}\frac{1}{N}log(\sigma(G_N)_k)
\end{equation}

Lyapunov exponents have been used not only to formalise vanishing and exploding gradient problems, but also to mitigate them. Most notably, Gradient Flossing \citep{engelken2023gradient, engelken2024analyzing} has been proposed as a method to directly control network timescales through optimisation over the Lyapunov spectrum. For example, one can use the LLE magnitude as a loss function to encourage a slowly decaying fading memory, as sought after in SSMs, and use it in conjunction with traditional Backpropagation-Through-Time (BPTT) \citep{werbos1990backpropagation}. However, separate Gradient Flossing epochs are typically required, and computing Jacobians and their singular values is computationally and memory-intensive, slowing training and reducing scalability. Moreover, Lyapunov exponents are descriptors of particular trajectories, which implies that input sequences outside the training set may still produce undesirable dynamics. Finally, as network complexity increases and sequences grow longer, it becomes increasingly difficult to train for precisely targeted Lyapunov exponents, which also tend to drift during training anyway. The solution proposed here aims in part to provide a more scalable and reliable alternative to Gradient Flossing. 

\subsection{Parallel Simulation Algorithms} \label{sec:paral_algorithms}

As highlighted in Section~\ref{sec:lyapunov_exponents}, non-linear dynamics cannot be predicted as easily as linear time-invariant systems. Consequently, computing outputs from non-linear RNNs and SNNs traditionally requires step-by-step sequential simulation, which cannot fully leverage GPUs and thus limits scalability compared to parallel architectures such as SSMs and Transformers \citep{kalchbrenner2016neural, orvieto2023resurrecting}. An emerging solution to bridging this gap has been to reframe RNN simulation through Newton's Fixed-Point Method, iteratively refining the entire trajectory in a single step \citep{lim2024parallelizing, gonzalez2024towards, danieli2025pararnn}. Given a generic discrete non-linear recurrence rule (Eq.~\ref{eq:generic_rec}) and a state-guess tensor for all time steps ($s \in R^{T\times N}$), one can define a residual tensor (Eq.~\ref{eq:residual}) which becomes $\mathbf{}{0}$ if and only if the state-guess is the target trajectory. Each Newton iteration minimises the Euclidean norm of the residual (Eq. \ref{eq:deer_loss}), by finding the optimal update step (Eq. \ref{eq:optimal_update}). The resulting sequence-wide iteration step (Eq.~\ref{eq:deer_step}) is now known as the DEER algorithm \citep{lim2024parallelizing}. While similar Newton-iteration-based parallel-in-time simulation methods have been discussed for decades \citep{gander201550}, DEER is notable for including a GPU-efficient way to compute the large Jacobian and its product with the residual required for the state update (Eq. \ref{eq:optimal_update}). Because the residual Jacobian matrix $\mathcal{J}$ is block lower bidiagonal, its inverse is populated, using the same terminology from Eq.~\ref{eq:long_term_jacobian}, by long-term Jacobians (Eq.~\ref{eq:big_jacobian}), and its product with the residual matrix is equivalent to an associative parallel scan. 

At any Newton iteration $k$, DEER is guaranteed to have converged on the first $k$ time steps of the simulation \citep{gonzalez2024towards}. Therefore, it takes at most as many steps as a step-by-step simulation. However, since DEER iterations are considerably more computationally intensive than applying a single RNN step, speed-ups are only possible if DEER converges in substantially fewer steps. As hinted by the presence of long-term Jacobians $G_{i:j}$ in $\mathcal{J}$, there is a direct connection between the LLE of the recurrent systems simulated (Section~\ref{sec:lyapunov_exponents}) and the convergence rate of DEER \citep{gonzalez2026predictability}. For instance, positive LLEs, i.e., chaotic dynamics, typically cannot be effectively parallelised with DEER. Intuitively, errors in earlier steps propagate across time and can affect all subsequent states, increasing the number of fixed-point iterations and approaching the total number of sequential steps. For the same reason, even with a negative LLE, the longer the system's memory horizon, the more future states are affected, and hence, more iterations are required for convergence.

\begin{equation} \label{eq:generic_rec}
    x_{t+1} = f(x_t), \space x_i \in \mathbb{R}^N
\end{equation}

\begin{equation} \label{eq:residual}
    r(s) = [f(s_0) - s_1, f(s1) - s2, ... , f(s_{T-1})-s_T], \space r(s)\in \mathbb{R}^{T \times N}
\end{equation}

\begin{equation} \label{eq:deer_loss}
    \mathcal{L}(s) = \frac{1}{2}||r(s)||^2
\end{equation}

\begin{equation} \label{eq:optimal_update}
    \Delta s= \underset{h}{\min}(\lVert r(s+h) \rVert_2^2) = \underset{h}{\min}(\lVert (r(s) + \frac{dr(s)}{ds}h \rVert_2^2) = -\left(\frac{dr(s)}{ds}\right)^{-1}r(s)
\end{equation}

\begin{equation} \label{eq:deer_step}
    s^{(i+1)} = s^{(i)} +\Delta s^{(i)} = s^{(i)} -\left(\frac{dr(s^{(i)})}{ds^{(i)}}\right)^{-1}r(s^{(i)})
\end{equation}

\begin{equation} \label{eq:big_jacobian}
    \mathcal{J}^{-1} = \left( \frac{dr(s)}{ds} \right)^{-1} = \begin{pmatrix}
        I& \mathbf{0} & \mathbf{0} & \dots & \mathbf{0}\\ 
        -J_2 & I & \mathbf{0} & \dots & \mathbf{0} \\
    \mathbf{0} & -J_3 & I & \dots & \mathbf{0} \\
        \vdots & & \ddots & & \vdots \\
        \mathbf{0} & \mathbf{0} & \dots & -J_T & I \\
    \end{pmatrix}^{-1} = \begin{pmatrix}
        I & \mathbf{0} & \mathbf{0} & \dots& \mathbf{0} \\
        G_{2:2} & I  & \mathbf{0} & \dots& \mathbf{0} \\
        G_{2:3} & G_{2:2} & I  & \dots& \mathbf{0} \\
        \vdots & & \ddots & & \vdots \\
        G_{2:T} & G_{2:T-1} & G_{2:T-2} & \dots & I
    \end{pmatrix}
\end{equation}


DEER suffers from two main drawbacks. Computing and storing full Jacobians $J_t$ for each time step $t$ in each batch during training limits scalability. Furthermore, the long-term Jacobians $G_{i:j}$ in $\mathcal{J}$ can become numerically unstable even if the system is asymptotically stable, depending on the presence of volatile transitory dynamics. To address the former, \citet{gonzalez2024towards} showed that Quasi-Newton methods, which replace the full $J_t$ Jacobians with diagonal approximations, have similar convergence properties to the original DEER algorithm, reducing compute and memory overheads. To tackle numerical instability, the study also proposes a Kalman filter-based approach that dampens all $J_t$ eigenvalues by a fixed factor, thereby preventing the $G_{i:j}$ terms from exploding. This work aims to provide more efficient heuristics for both challenges (see Section~\ref{sec:alt_deer}).

\subsection{Orthogonal Matrices} \label{sec:orthogonal}

The Orthogonal Group ($O(n)$) comprises matrices $Q$ with the property that $QQ^T=I$. From a neural network perspective, orthogonal matrices are of interest because, multiplied by any vector $v\in \mathbb{R}^n$, they conserve Euclidean norms and angles. In other words, because they have eigenvalues restricted to $\{\pm 1\}$ and singular values in $\{1\}$, a system that multiplies an input vector by a sequence of orthogonal matrices (e.g., $Q_TQ_{T-1}...Q_1v$) has a memory that neither decays nor explodes. Such a dynamical system, where all recurrent Jacobians $J_t$ are in $O_n$, would have a Lyapunov Spectrum (Section~\ref{sec:lyapunov_exponents}) equal to $\mathbf{0}$, a property referred to as criticality or the edge of chaos \citep{engelken2023lyapunov}.

One could be tempted to argue that, for example, an SSM with an orthogonal recurrent matrix $A$ (Eq.~\ref{ssm_rec}) would have favourable memory properties. However, since no information dissipates and new input signals $u$ continue to arrive, the state could grow asymptotically, leading to numerical instabilities \citep{orvieto2023resurrecting}. Therefore, orthogonal recurrent matrices have been mostly found in non-linear RNNs (Eq.~\ref{eq:ortho_rnn}) \citep{arjovsky2016unitary, helfrich2018orthogonal, wisdom2016full}. Because of the added element-wise non-linear function $\sigma$, the recurrent Jacobian of the network $J_t$ becomes a product between a diagonal and orthogonal matrix (Eq.~\ref{eq:ortho_jac}). If one uses a common activation function, such as a sigmoid, tanh or ReLU, the spectral norm (i.e., maximum singular value) of the diagonal derivative term $D_t$ is bounded by $1$. In turn, because the recurrent weight is orthogonal, the spectral norm of every recurrent $J_t$ and, implicitly, long-term $ G_{1:T}$ are also $\le 1$ (Eq.~\ref{eq:ortho_long_term}). Therefore, the Lyapunov spectrum is negative, and the system is stable with fading memory and no exploding gradients (Section~\ref{sec:lyapunov_exponents} and \citet{arjovsky2016unitary}). 

Here, it is worth emphasising that orthogonal RNNs cannot guarantee that all timescales of the network are slowly decaying, long-term memory. As Eq.~\ref{eq:ortho_long_term} suggests, there is no lower bound on the norm of each $J_t$.  For example, one could have all ReLU activations set to zero at once and thus all information forgotten. Orthogonal parametrisation does not solve such "dead neuron" pathologies \citep{douglas2018relu}. Furthermore, the power of initialisation schemes for SSMs lies in setting the distribution of all recurrent eigenvalues near the unit circle, thereby enabling multiple long-term timescales \citep{orvieto2023resurrecting}. Even if there are no time steps where $D_t$ is $\mathbf{0}$, then the inequality in Eq.~\ref{eq:ortho_long_term} only shows the upper bound of slow timescales being $1$, but does not help with setting the rate of decay for all other timescales besides the slowest. Regularisation methods like Gradient Flossing \citep{engelken2023gradient} can be added to mitigate this, but they cannot offer any guarantees about the system's specific timescales either (Section~\ref{sec:lyapunov_exponents}). The solution proposed in this work uses orthogonal matrices in a way that also guarantees a lower bound on the Lyapunov Spectrum.

\begin{equation} \label{eq:ortho_rnn}
    x_{t+1} = \sigma(Qx_t + W_{in}u_{t+1}) 
\end{equation}

\begin{equation} \label{eq:ortho_jac}
    J_t = \frac{d\sigma}{dx_{t-1}}Q = D_tQ 
\end{equation}

\begin{equation} \label{eq:ortho_long_term}
    \lVert G_{1:T} \rVert = \lVert \prod_t^{1:T}J_t \rVert = \lVert \prod_t^{1:T} D_tQ\rVert \le \prod_t^{1:T}\lVert D_tQ \rVert = \prod_t^{1:T}\lVert D_t \rVert
\end{equation}

Training neural networks that use orthogonal parametrisations takes advantage of the fact that the $O_n$ also carries a Riemannian manifold structure, which enables smooth optimisation \citep{absil2008optimization}. Over the general linear group $GL(n)$, one can define a dynamical system whose state $M \in \mathbb{R}^{n \times n}$ evolves in the steepest descent direction that minimises a scalar objective function $f$ (Eq.~\ref{eq:continuous_gds}). Discretisation with a step size $\eta$ yields the traditional gradient descent algorithm (Eq.~\ref{eq:vanilla_gds}). Importantly, the update step automatically produces a valid new matrix $M_t$ in $GL(n)$.

\begin{figure}[H]
    \centering
        \centering        
        \includegraphics[width=0.5\textwidth]{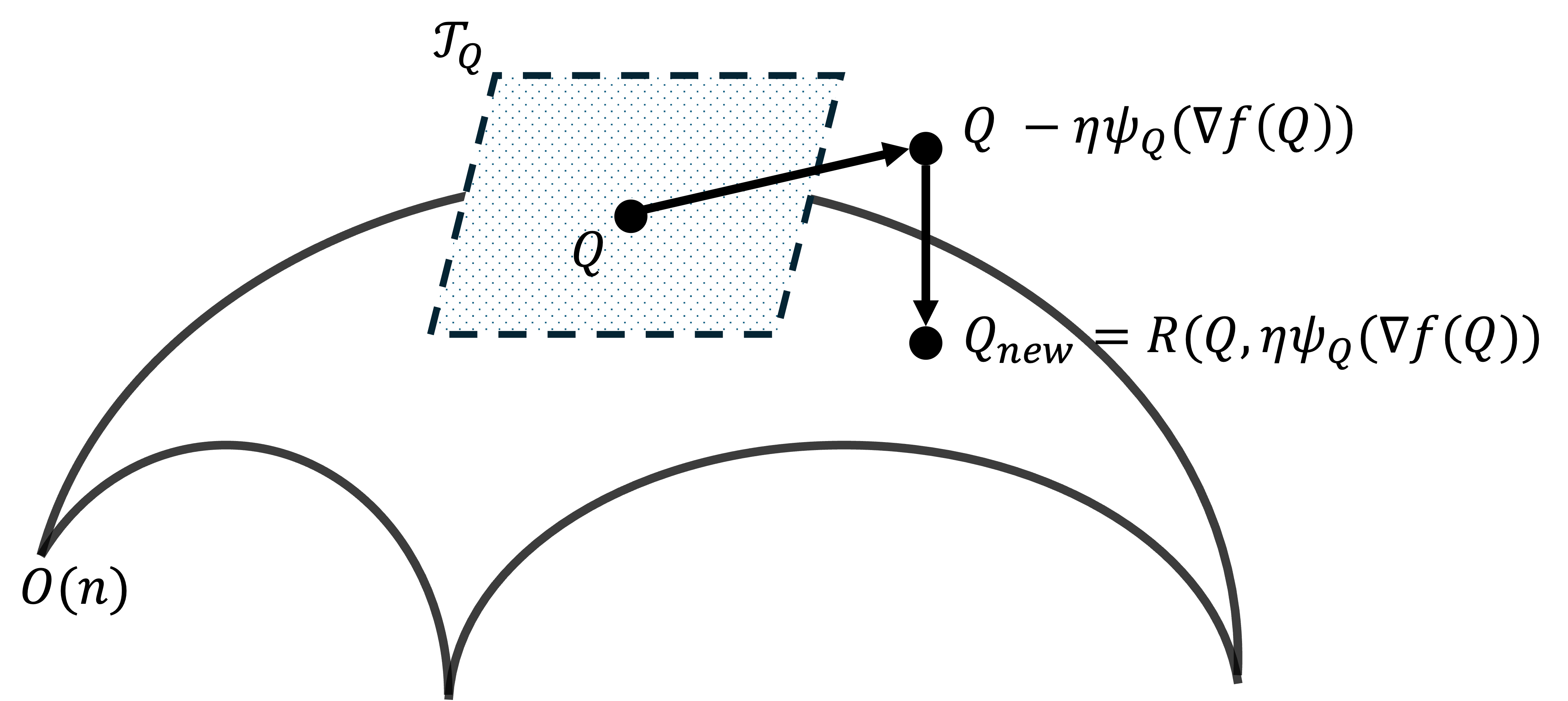}
        \hfill
        \caption{\textbf{Riemannian Gradient Descent over the Orthogonal Manifold} Given an initial orthonormal matrix $Q$ and a Euclidean gradient $\nabla f(Q)$, Riemannian Gradient Descent is performed by first projecting the Euclidean gradient to the tangent space to $O(n)$ at $Q$, i.e., $\mathcal{T}_Q$ \citep{absil2008optimization}. The updated $Q$ from the tangent space is then projected back to the manifold using a retraction function $R$}. 
        \label{fig:ortho_gd}
\end{figure}

That is not the case in Riemannian optimisation over orthogonal matrices, as taking a Euclidean gradient step could break the orthogonality constraint $(Q - \eta \nabla f(Q)) (Q - \eta \nabla f(Q))^T \neq I$.  To keep trajectories constrained to $O_n$, any point on it has to satisfy $QQ^T =I$, and therefore any infinitesimal direction of travel from that point has to respect Eq.~\ref{eq:manifold_cond}. In practice, this is obtained using a projection onto the tangent space to $O_n$ at $Q$ ($\mathcal{T}_QO_n$) (Eq.~\ref{eq:tangent_proj}), which is essentially a mapping to a skew-symmetric matrix (a matrix $X$ with the property that $X^T+X=0$). Once a valid directional derivative is computed, one can integrate over it to obtain the final modified orthogonal matrix. In discrete gradient descent, this is done by projecting the updated matrix back onto the manifold using a retraction ($\mathcal{R}(Q, \psi_Q(\nabla f(Q))$). The exact solution is the matrix exponential (Eq.~\ref{eq:exponential_retraction}) \citep{absil2008optimization}. This is evidently computationally expensive, so it is often substituted with a Taylor approximation in the form of the Cayley map (Eq.~\ref{eq:cayley_map}) \citep{ablin2022fast}. 

\begin{equation} \label{eq:continuous_gds}
    \frac{dM}{dt} = -\nabla f(M)
\end{equation}

\begin{equation} \label{eq:vanilla_gds}
    M_t = M_{t-1} - \eta \nabla f(M_{t-1})
\end{equation}

\begin{equation} \label{eq:manifold_cond}
    \frac{d(QQ^T)}{dt}=\frac{dI}{dt} \Rightarrow \frac{dQ}{dt}Q^T + Q\left( \frac{dQ}{dt} \right)^T = \mathbf{0}
\end{equation}

\begin{equation} \label{eq:tangent_proj}
    \psi_Q(\nabla f(Q)) = \frac{1}{2}(\nabla f(Q)Q^T - Q\nabla f(Q)^T)
\end{equation}

\begin{equation} \label{eq:exponential_retraction}
    \mathcal{R}_{exp}(Q, \eta\cdot\psi_Q(\nabla f(Q)) = exp(\eta \cdot\psi_Q(\nabla f(Q))Q
\end{equation}

\begin{equation} \label{eq:cayley_map}
    \mathcal{R}_{aprox}(Q, \eta \cdot \psi_Q(\nabla f(Q)) = (I - \frac{\eta}{2}\psi_Q(\nabla f(Q))(I + \frac{\eta}{2}\psi_Q(\nabla f(Q))Q
\end{equation}

\section{Related Work}

\subsection{Neuromodulation in Neural Networks} \label{sec:neuromodulation}

The term "neuromodulation" is used in this work to refer to neural architectures where parameters vary during training and inference as a function of the network input. The focus here is mostly on neuromodulated RNNs, where both the recurrent weights and the state of the network evolve as functions of the previous weights and state, and incoming inputs (Eq.~\ref{eq:generic_neuromodulation_rnn}). As previously highlighted in Sections~\ref{sec:intro} and \ref{ssms}, this typically takes the form of gating or hypernetworks \citep{ha2016hypernetworks}.

\begin{equation}\label{eq:generic_neuromodulation_rnn}
    \begin{aligned}
    W_{t+1} &= f(W_t, x_t, u_{t+1}) \\
    x_{t+1} &= g(W_tx_t, u_{t+1})
    \end{aligned}
\end{equation}

First popularised by LSTMs and GRUs \citep{hochreiter1997long, cho2014properties}, gating is now considered any operation based on Hadamard products that scales activations in an input-dependent manner. As argued in \citet{gu2023mamba}, the most basic example of it is the Gated Linear Unit activation \citep{dauphin2017language} (Eq.~\ref{eq:glu}), comprising two linear layers, where one also passes through a sigmoid function and element-wise scales the output of the other. In recurrent models, the goal of gating is most commonly to enable dynamic forgetting, in which the current state and/or input determine which information decays more quickly \citep{jing2019gated, gu2023mamba}. Hypernetworks go a step further by generating all model weights using a smaller subnetwork, rather than performing only element-wise scaling. In their original conception, the sub-networks only took fixed qualities, such as depth, as input to produce parameters. However, more recently, they have also become data-controlled \citep{chauhan2024brief}. Modern SSMs have been blurring the distinctions between the two research directions, with recurrent weight matrices more and more commonly becoming diagonal, thereby effectively reducing hypernetworks down to forget gates \citep{gu2023mamba, zhu2024scalable}. 

\begin{equation} \label{eq:glu}
    GLU(x) = (W_Ax+b_A) \odot \sigma(W_Bx+b_B)
\end{equation}

All forms of neuromodulation described so far are heuristic in nature: the goal is to explore the general effect of data-controlled parameters on model performance, for example, on tasks requiring selective forgetting \citep{jelassi2024repeat}, or continual learning \citep{von2019continual}. More recently, however, increased attention has been given to endowing neuromodulation with more principled structure, with the deep learning and neuromorphic research communities adopting slightly diverging goals and understandings. 

In mainstream deep learning, structured neuromodulation has been widely explored in associative recall (AR). AR tasks test a model's ability to store key-value pairs in working memory and then retrieve the correct value when prompted with a query \citep{arora2024zoology}. Because all keys, values, and queries are provided at inference time, the network is said to "learn" the associations "in-context". While content-addressable associative memory has been a popular subject of study since at least the advent of Hopfield Networks \citep{hopfield1982neural}, this modern framing emphasises performing AR online and at scale. To generalise, LLMs have to be able to dynamically answer prompts such as "The apple is red. What colour is the apple?" without necessarily relying on prior knowledge. The Transformer \citep{vaswani2017attention} is explicitly formulated in these terms. Its core operation, self-attention, linearly projects the input sequence ($X \in \mathbb{R}^{T \times d}$) to keys ($K=XW_k$), queries ($Q=XW_q$), and values ($V=XW_v$), where  $W_k, W_q, W_v\in \mathbb{R}^{d \times d_k}$. The product of the keys and queries yields the attention matrix, which then mixes all values across the sequence (Eq.~\ref{eq:self-attention}). Because the attention matrix is a function of the input, some have argued that the Transformer is a hypernetwork \cite{schug2025attention}. Furthermore, given the explicit associative memory formulation, it can also be considered an example of structured neuromodulation, using the terminology of this study. 

\begin{equation} \label{eq:self-attention}
     O = \mathtt{SelfAttention}(Q, K, V) = \mathtt{Softmax}(\frac{QK^T}{\sqrt{d_k}})V
\end{equation}

The $QK^T \in \mathbb{R}^{T \times T}$ product in self-attention (Eq.~\ref{eq:self-attention}) incurs a quadratic computational and memory complexity in sequence length $T$, making efficient alternatives to Transformers highly desirable. Linearised Transformers \citep{tay2022efficient, peng2023rwkv, zhai2021attention} aim to reduce this overhead by not computing a $T \times T$ matrix and replacing the row-wise $\mathtt{Softmax}$ over the attention matrix with non-linearities over $K$ and $Q$ individually \citep{katharopoulos2020transformers}. In the original Linear Transformer (Eq.~\ref{eq:aft}) \citep{katharopoulos2020transformers}, for example, the attention matrix is a sum over $d_k \times d_k$ outer products of individual key-value pairs ($k_t$, $v_t$).  $\mathtt{Softmax}$ normalisation is only applied between keys, and the query is transformed using a generic element-wise non-linearity (e.g., sigmoid). These changes reduce the computational and memory requirements to linear scaling in sequence length. 

However, interest in linearised Transformers of this original form waned under pressure from developments in hardware-optimised vanilla self-attention implementations \citep{dao2022flashattention} and the advent of SSMs (see Section~\ref{ssms}). One could argue that this ebb ended with the landmark connection between linearised Transformers and the delta rule \citep{widrow1988adaptive} in \citet{schlag2021linear}. The structure of the Linear Transformer, and related architectures, is amenable to a recurrent formulation (Eq.~\ref{eq:rec_linear_transformer}). However, unlike traditional RNNs, the recurrence is not over a vector state but rather over a so-called "fast weights" matrix $W$, which evolves over time during inference. As highlighted in \citet{schlag2021linear}, without the $\mathtt{Softmax}$ normalisation in Eq.~\ref{eq:rec_linear_transformer}, this weight-dynamics view of Linear Transformers is remarkably similar to using online gradient descent over a recall error correction objective (Eq.~\ref{eq:delta_rule}). The difference is that while Linear Attention keeps adding key-value pairs to memory, the gradient descent step first retrieves and removes the current value associated with the key (via the Householder transformation $I-k_{t+1}k_{t+1}^T$ \footnote{The key $k_{t+1}$ is typically normalised \citep{yang2024parallelizing}}) before writing a new associated value  ($k_{t+1}v_{t+1}^T$). The result is that only changes in the value vector $v_{t+1}$ are stored, hence the name of the delta rule. This architecture, referred to as DeltaNet \citep{yang2024parallelizing}, is now an important pillar of efficient Transformer development, with significant research focused on In-Context Learning and Test-Time Training directly adopting its online optimisation framing \citep{behrouz2025s}. Moreover, a priority has been to improve compression capabilities in theoretically grounded ways.  For instance, Lattice \citep{karami2025lattice} treats the columns of $W_t$ as individual memory slots and only stores new information that is orthogonal to each of them. MesaNet \citep{von2025mesanet} extends online optimisation from single gradient descent steps at each time step to a regression over all preceding time steps, producing an optimal readout matrix for the query. 

\begin{equation} \label{eq:aft}
    o_t = \frac{1}{ \left( \sum_{i=1}^{t}e^{k_t} \right)\cdot \sigma(q_t)}\left( \sum_{i=1}^t ( e^{k_i} \otimes v_i) \right) \sigma(q_t)
\end{equation}

\begin{equation} \label{eq:rec_linear_transformer}
    \begin{array}{l}
            W_{t+1} = W_t + e^{k_{t+1}}\otimes v_{t+1} \\
            z_{t+1} = z_t + e^{k_{t+1}} \\
            o_{t+1} = \frac{1}{z_{t+1}\cdot \sigma(q_{t+1})} W_{t+1}\sigma(q_{t+1})
    \end{array}
\end{equation}

\begin{equation} \label{eq:delta_rule}
    \begin{aligned}
    W_{t+1} &= W_t -\frac{\eta}{2} \nabla_{W_t}(\lVert W_tk_{t+1} - v_{t+1} \rVert^2) \\
    &= W_t - \eta(W_tk_{t+1}k_{t+1}^T - k_{t+1}v_{t+1}^T) \\
    &= W_t(I - \eta k_{t+1}k_{t+1}^T) + \eta k_{t+1}v_{t+1}^T \\
    \end{aligned}
\end{equation}

It is worth emphasising that Linear Transformer and DeltaNet-inspired models do not fully fit the neuromodulated RNN template from  Eq.~\ref{eq:generic_neuromodulation_rnn}. In these architectures, the output at each time step $t$ is the readout product of $W_{t}$ and $q_{t}$ (Eq.~\ref{eq:rec_linear_transformer}) and is only fed to the next layer, not the next time step. In other words, since the query only depends on the current input and is not recurrent, the weights $W_t$ are the state of the system. This differs from a neuromodulated RNN, where recurrence also applies to a vector state, making $W_t$ both a state and a modulator of the network's dynamics. Because linearised Transformer recurrences rely mostly on addition (writing key-value outer products to memory) and, when present, multiplications are restricted to Householder reflections, the main concern is keeping the norm of $W_t$ bounded. In contrast, a multiplicative interaction between recurrent states and weights in neuromodulated RNNs warrants careful consideration of long-term Jacobian stability when parametrising $W_t$ (Section~\ref{sec:lyapunov_exponents}). In practical terms, this means that in a recurrent Transformer, the eigenvalues of $W_t$ itself are typically irrelevant. In a neuromodulated RNN, however, they can play an important role in how well the model can perform (Section~\ref{sec:orthogonal}). This difference helps explain the approach neuromorphic research has taken regarding structured neuromodulation.

In a significant body of neuroscience-inspired literature, structured synaptic neuromodulation has taken the form of constraining $W_t$ in RNNs, rather than adopting an associative recall formalism as in deep learning research. For instance, relevant to this study, in \citet{zador2025walking}, neuromodulated recurrent weights are constrained to linear interpolations of basis points on a rigid matrix manifold (e.g., a straight line or an ellipse). In \citet{costacurta2024structured}, the recurrent weights $W_t$ are constrained to rank-K matrices, where each low-rank component is scaled by an individual neuromodulatory signal. 

Although not within a single, comparable architecture, the fundamental concepts behind DeltaNet are also present in neuromorphic research. Evidently, associative learning algorithms such as the Delta and Hebbian rules originated in neuroscience \citep{hebb2005organization, kang2024distinguishing}. Moreover, SNNs benefit from additional biologically plausible online and local rules that can leverage temporal information, such as STDP \citep{bengio2015stdp} or e-prop \citep{bellec2020solution}, which have yet to see substantial adoption in mainstream deep learning. However, all these learning algorithms are not typically paired with Transformer-like Key-Value-Query projections and are generally intended as full backpropagation replacements rather than inference-time augmentations. In-context gradient descent is present to some extent in SNNs, though. For instance, Parameter-Free Attention \citep{sun2025towards} minimises, at inference, a linear-separability loss that implements lateral inhibition for input currents to SNN layers. 

\subsection{Parallelisable Architectures} \label{sec:related_parallel}

As mentioned in Section~\ref{sec:intro}, GPU-parallelisation is one of the core advantages that have fuelled the popularity of Transformers, and thus is a prerequisite for any efficient alternative to be competitive. Furthermore, SSMs have shown through equivalent parallel and iterative formulations that one need not trade off training and deployment efficiency (Section~\ref{ssms}). In fact, parallel/recurrent duality has emerged as an essential characteristic of state-of-the-art efficient Transformer contenders. 

Both Linear Transformers and DeltaNet variants require only linear operations between time steps, so, unsuspectingly, they also exhibit parallel/sequential duality (Section~\ref{sec:neuromodulation}). However, naive full parallelisation can lead to similar quadratic-scaling issues to those of traditional self-attention. Recurrence rules such as Eq.~\ref{eq:rec_linear_transformer} or \ref{eq:delta_rule} assume causality, and therefore cannot be directly mapped to an efficient matrix multiplication with linear scaling in sequence length (e.g., $Q(K^TV)$, $K^TV \in \mathbb{R}^{d_k \times d_k}$), because a binary causal mask $M$ is required (Eq.~\ref{eq:causal_masking}) \citep{yang2023gated}. Instead, one can hedge between parallel and sequential forms through chunk-wise parallel simulation \citep{yang2024parallelizing, yang2025gated}. Each segment of length $C$ is processed in parallel at a quadratic in-chunk-size cost, with the final state $W_{C*k}$ of each chunk $k$ then recurrently fed into the next as its initial state. However, the overall complexity across the entire sequence length becomes sub-quadratic.

\begin{equation} \label{eq:causal_masking}
    O = (QK^T\odot M_{causal})V
\end{equation}

As highlighted in Section~\ref{sec:paral_algorithms}, there has also been a growing interest in parallelising non-linear dynamics. Much like the aforementioned DEER algorithm \citep{lim2024parallelizing} and its variations \citep{gonzalez2024towards, danieli2025pararnn}, many parallelisation methods for non-linear systems typically trade fully sequential, step-by-step execution for large, whole-sequence fixed-point iterations. Notably, \citet{schone2025implicit} introduces implicit SSMs, which iteratively apply an SSM layer to the entire sequence until convergence to a fixed point. While implicit SSMs have been shown to simulate certain non-linear dynamics, unlike DEER-based algorithms, they are not 1:1 interpretable equivalents to arbitrary non-linear RNNs. In implicit SSMs, non-linear dynamics are intrinsically encoded in fixed-point iterations, even if the simulation is sequential, i.e., each sequential simulation step still requires multiple fixed-point iterations for itself. 

While the ultimate goal of SNNs is generally considered deployment on specialised low-power hardware, the most popular training paradigm remains backpropagation (see Section~\ref{SNN_formula}). Hence, to improve scalability, GPU parallelism at training time has also gained interest within the neuromorphic community. The most direct, and perhaps most popular, strategy to achieve this has been to build SNNs based on mainstream parallel architectures. First introduced in \citet{stan2024learning}, SSM-based SNNs can be trained using the same convolutional approach as standard SSMs (see Section~\ref{ssms}) \citep{shen2025spikingssms, bal2025p}. Analogously, spiking counterparts to vanilla \citep{zhou2022spikformer} and linearised \citep{yao2023spike} Transformers have also been proposed, similarly inheriting parallelism from the baseline architectures. However, a direct spiking adaptation of DeltaNet has not yet been studied to the best of the authors' knowledge. 

DEER-like algorithms have also found counterparts in SNN literature. As the name suggests, Fixed-Point Parallel Training (FPT) \citep{feng2025efficient} follows the same principles as DEER (Section~\ref{sec:paral_algorithms}). However, unlike DEER, FPT does not require computing Jacobians $J$ or long-term Jacobians $G$ in $\mathcal{J}$ (see Eq.~\ref{eq:big_jacobian}). Instead, FPT takes advantage of LIF sub-threshold dynamics, and propagates spikes "in the future" using powers of the membrane decay constant $\beta$ (Eq.~\ref{eq:fpt_matrix}). To handle the spiking non-linearities, where DEER would compute $J_t$, FPT simply propagates the spikes themselves, or, during training (for numerical stability), a smooth approximation of the Heaviside step function $s(u-\theta)$ (e.g., sigmoid). Therefore, instead of using a residual tensor for the fixed-point iteration, FPT directly treats all membrane voltages across time $u\in \mathbb{R}^{T \times d}$ as weighted sums of previous output spikes $s(u-\theta)$ and input currents $i$ (Eq.~\ref{fpt}). While not computing any Jacobians reduces memory and computational overheads, it makes the critical assumption that the network has no recurrent weights (see Eq.~\ref{rsnn}), and thus that all neurons are independent. This effectively limits expressivity compared to vanilla RNNs. 

\begin{equation} \label{eq:fpt_matrix}
    \mathbf{B} = \begin{pmatrix}
        1 & \mathbf{0} & \mathbf{0} & \dots& \mathbf{0} \\
        \beta & 1  & \mathbf{0} & \dots& \mathbf{0} \\
        \beta^2 & \beta & 1  & \dots& \mathbf{0} \\
        \vdots & & \ddots & & \vdots \\
        \beta^{T-1} & \beta^{T-2} & \beta^{T-3} & \dots & 1
    \end{pmatrix}
\end{equation}

\begin{equation} \label{fpt}
    \begin{aligned}
    u_1 &= Bi\\
    u_{2} &= -\theta(\mathbf{B} - I)s(u_{1} -\theta) + \mathbf{B}i \\
    \vdots \\
    u_{k} &= -\theta(\mathbf{B} - I)s(u_{k-1} -\theta)  + \mathbf{B}i \\
    \end{aligned}
\end{equation}

It is worth noting that SNNs also benefit from an additional, idiosyncratic simulation paradigm. Event-based simulations reduce execution time by computing between-spike sub-threshold membrane evolution using closed-form solutions, thereby leveraging spiking sparsity to skip simulation steps \citep{wunderlich2021event}. Moreover, they rely on producing sequences of exact spike timings in continuous time rather than requiring temporal discretisation \citep{engelken2023sparseprop}. Event-based methods have also been combined with DEER-like Newton fixed-point iterations to produce all spikes in parallel, providing further speed-ups \citep{morrill2026bullet}. 

Because only spike times are computed in event-based simulation, training uses these continuous time points rather than surrogate gradients to account for spike non-differentiability during backpropagation \citep{wunderlich2021event, meszaros2025efficient}. However, in practice, this also means that backpropagation cannot effectively control the creation of new spikes, only shift existing ones forwards/backwards in time. Spikes can be deleted by moving them outside the task's time window, thereby causing neurons to become silent. Once silent, neurons will remain silent (i.e., dead neurons) because non-spiking sub-threshold intervals do not receive any learning signals. \citep{eshraghian2023training, wenig2026quadratic, klos2025smooth}. Therefore, while event-based simulation holds tremendous potential, surrogate gradient methods are still generally preferred for achieving state-of-the-art accuracy \citep{eshraghian2023training}.

In this context, event-based discretisation could be considered a middle ground. Traditional discretisation schemes, such as zero-order hold (ZOH) or Euler, generally employ fixed step sizes, imposing regular sampling on the data that can be processed by systems parametrised with them, such as SSMs. In contrast, neuromorphic sensors such as retina-inspired Dynamic Vision Sensor (DVS) Cameras \citep{amir2017low} produce irregular, asynchronous, and sparse events at high temporal resolution. A frame-based approach with regular sampling, such as a traditional SSM, would waste resources processing a large number of empty input frames. Similar to event-based simulation, event-based discretisation can be applied to SSMs to reduce computational overhead by evolving their linear dynamics in closed form between input spikes \citep{schone2024scalable}. However, these event-based SSMs operate only on precise input spike times, not on internal network spikes, unlike full event-based simulation. In other words, this method would not be used to speed up SNN execution in general. 

\subsection{Long Range Modelling in SNNs} \label{sec:long_range_snn}

Learning dependencies over long temporal horizons is difficult in non-linear recurrent architectures, owing to the inherent unpredictability of dynamical systems (see Sections~\ref{sec:lyapunov_exponents} and \ref{sec:orthogonal}).  Being non-linear RNNs themselves (Section~\ref{SNN_formula}), LIF-based SNNs are no exception. Accordingly, mirroring the progress from vanilla RNNs to LSTMs, Long Short-Term Memory Recurrent SNNs (LSNNs) were a significant milestone in tackling long sequences \citep{bellec2018long, yin2021accurate}. Not to be confused with the notion of adaptability from Section~\ref{sec:intro}, LSNNs use adaptive LIF neurons (ALIF), where the firing threshold temporarily rises $\theta[t]$ after each spike emission (Eq.~\ref{eq:alif}). Evolving with a slower timescale $\alpha$ than the membrane voltage decay rate $\beta$, the threshold becomes a form of linear long-term memory. Because higher thresholds increase firing sparsity, the neurons' silence patterns implicitly encode their memory \citep{salaj2021spike}. \citet{bittar2022surrogate} further generalised adaptive LIF (adLIF) neurons to slowly and linearly evolving currents $w$ instead of thresholds (Eq.~\ref{eq:adlif}). One may notice that both ALIF and adLIF could be considered ontological precursors to the LMU \citep{voelker2019legendre}, being non-linear RNNs with auxiliary linear memory units, in this case using heuristic parametrisation schemes. 

\begin{equation} \label{eq:alif}
    \begin{aligned}
         u[t+1] &= \beta u[t] + (1-\beta)(I[t+1] + W_{rec}s[t]) - \theta[t+1]s[t] \\
         \theta[t+1] &= \alpha \theta_[t] + (1-\alpha)s[t]
    \end{aligned}
\end{equation}

\begin{equation} \label{eq:adlif}
    \begin{aligned}
         u[t+1] &= \beta u[t] + (1-\beta)(I[t+1] + w[t]) - \theta s[t] \\
         w[t+1] &= \alpha w[t] + (1-\alpha)(u[t] + s[t])
    \end{aligned}
\end{equation}

Another dominant approach to increasing long-range processing capabilities in SNNs has been the use of axonal delays. In biological neural circuits, depending on factors including the cell type, conductance and size of the axon, action potentials can spend anywhere between below 1ms to over 100ms travelling to their post-synaptic destinations \citep{wang2008functional}. These heterogeneous delays have inspired computational theories regarding the expressive power of the spike patterns they produce \citep{izhikevich2006polychronization, izhikevich2025spiking}. Continuous-time delayed dynamical systems have even been theoretically proven to have infinite-dimensional state spaces \citep{erneux2009applied}. For the purposes of discretised SNNs, however, their main contribution to long-sequence processing is avoiding vanishing and exploding gradients. In very deep neural networks, one can add skip connections between layers ($\mathtt{Out}_k(x) = \mathtt{Layer}_k(x) + x$) to mechanistically enable better gradient propagation across depth \citep{he2016deep}. In essence, axonal delays achieve a similar effect over the unrolled temporal computation of SNNs, adding skip connections between time points $\tau$ steps apart (Eq.~\ref{eq:delay_lif}). At a high level, long-term Jacobians $G_{1:t}$ (Section~\ref{sec:lyapunov_exponents}) contain terms of the general form $W_{rec}\frac{ds}{du}$ with exponents at most $\frac{t}{\tau}$. Therefore, delays reduce their vanishing/exploding exponents by a constant factor, regardless of the $W_{rec}$ initialisation and parametrisation \citep{queant2025delrec}. That means that in the absence of additional gradient controls, delays must grow proportionally with the input sequence length to keep up the same long-term Jacobian properties. Therefore, because $\tau$-long delays require equally large state buffers, axonal delays are difficult to scale for state-of-the-art long sequence tasks (e.g., over 10,000 steps). 

\begin{equation}\label{eq:delay_lif}
    u[t+1] = \beta u[t] + (1-\beta)(I[t+1] + W_{rec}s[t-\tau]) - \theta s[t]
\end{equation}

Finally, leveraging modern SSM methods has also improved SNN performance on long sequences. As mentioned in Section~\ref{ssms}, SSMs are essentially leaky integrators akin to LIF sub-threshold dynamics. This shared computational primitive has led to the development of SSM-based SNNs, first comprehensively explored in \citet{stan2024learning}, which have shown that linear SNNs with SSM initialisation and parametrisation schemes can also retain strong long-range modelling performance. Follow-up studies have also integrated SSM concepts into non-linear SNN architectures such as adLIF by reintroducing resets after spiking \citep{fabre2025structured}. While these approaches benefit from increased sparsity from refractory periods and thus improved efficiency, recurrent non-linearities also mean that the linear memory properties of SSMs are no longer guaranteed, potentially diminishing their scalability with longer sequences.

\section{Methods} \label{sec:methods}

The goal of ADPTNet is to possess all three of the qualities highlighted in Section~\ref{sec:intro} as key to the Transformer's success: (a) adaptability, (b) long-range dependency modelling, and (c) parallelisability. This section details how the topological conjugate backbone of ADPTNet serves to attain them. 

\subsection{ADPTNet Recurrence Rule} \label{sec:ADPTNet_def}

In general, two dynamical systems with recurrence functions $\frac{dx}{dt} = f(x)$ and $\frac{dx}{dt} = g(x)$ are said to be topologically conjugate if there exists a bijection $h(x)$ such that $g(x)$ can be formulated as the composition $g(x) = (h^{-1}\circ f \circ h)(x)$. Relevant here, given two linear systems $\frac{dx}{dt} = Ax$ and $\frac{dx}{dt} = Bx$, if there exists a non-singular matrix $C$ such that $B = C^{-1}AC$, then the systems are also said to be topologically conjugate. As linear topological conjugacy equates to similarity transforms, the two systems share intrinsic qualities such as eigenspectra but differ in their eigenvectors/modes (Fig.~\ref{fig:top_conj}). In this context, ADPTNet employs topological conjugates by chaining them together to change flow direction locally while also retaining globally coherent properties (Fig.~\ref{fig:stitch_top_conj}). Furthermore, the similarity transforms are parametrised to be a function of the system's recurrent state, i.e. $C^{-1}(x)AC(x)$, matching the neuromodulation template set out in Section~\ref{sec:neuromodulation}. Considering a discrete-time setting, an RNN-like step function can be derived as Eq.~\ref{eq:tc_template_rec}. 

\begin{equation}  \label{eq:switching_linear_rec}
    \frac{dx}{dt} = \begin{cases}
   C_1^{-1}AC_1x, \space  t \le T_1\\
   C_2^{-1}AC_2x, \space T_1 < t < T_2 \\
   C_3^{-1}AC_3x, \space  T_2 \le t
\end{cases}
\end{equation}

\begin{equation}  \label{eq:tc_template_rec}
    x_{t+1}= C^{-1}(x_t)AC(x_t)x_t + Bu_{t+1}
\end{equation}

\begin{figure}[H]
    \centering
        \begin{subfigure}[b]{0.53\textwidth}
            \centering        
            \includegraphics[width=\textwidth]{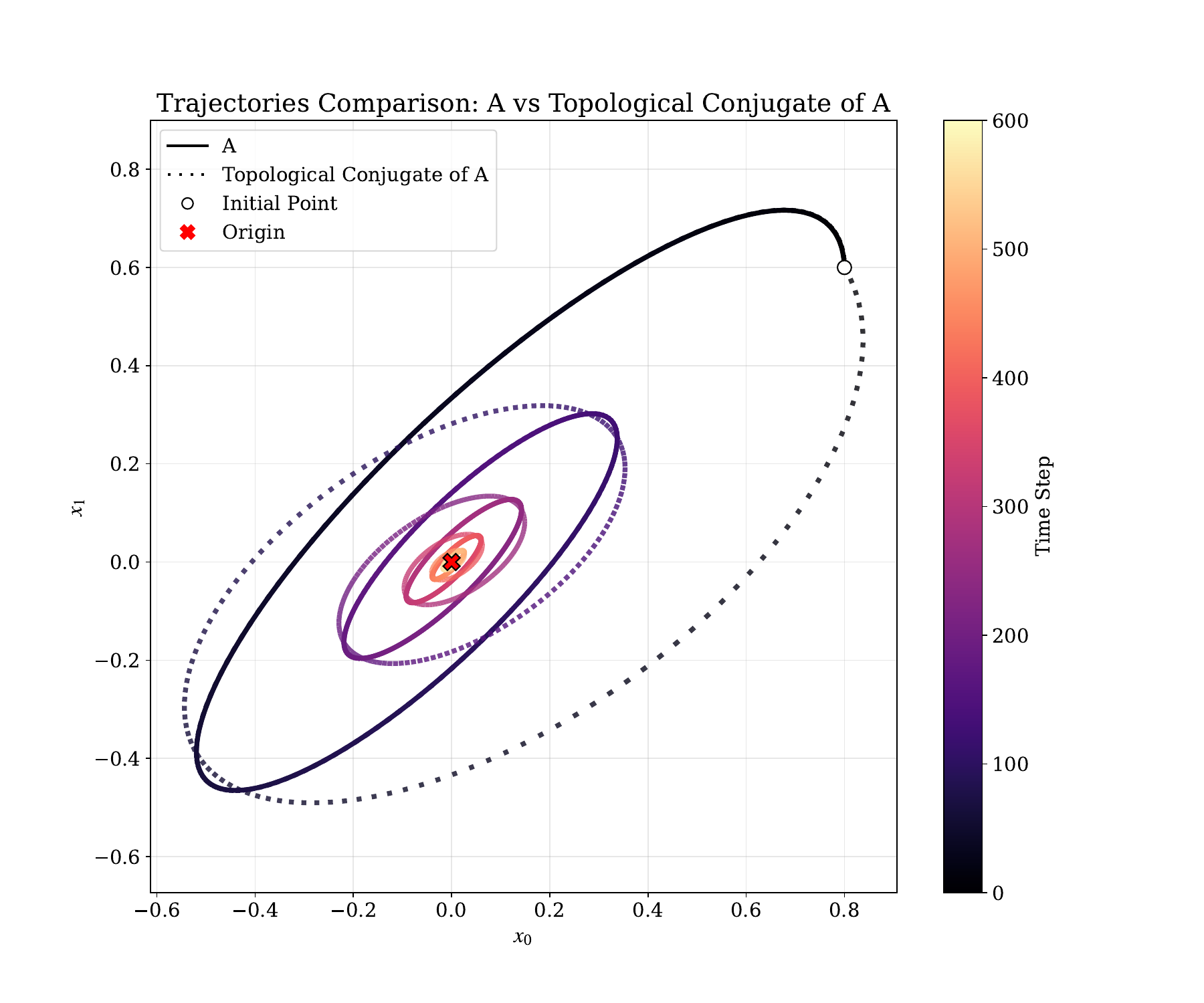}
            \caption{Linear Topological Conjugates}
            \label{fig:top_conj}
         \end{subfigure}
        \begin{subfigure}[b]{0.45\textwidth}
            \centering        
            \includegraphics[width=\textwidth]{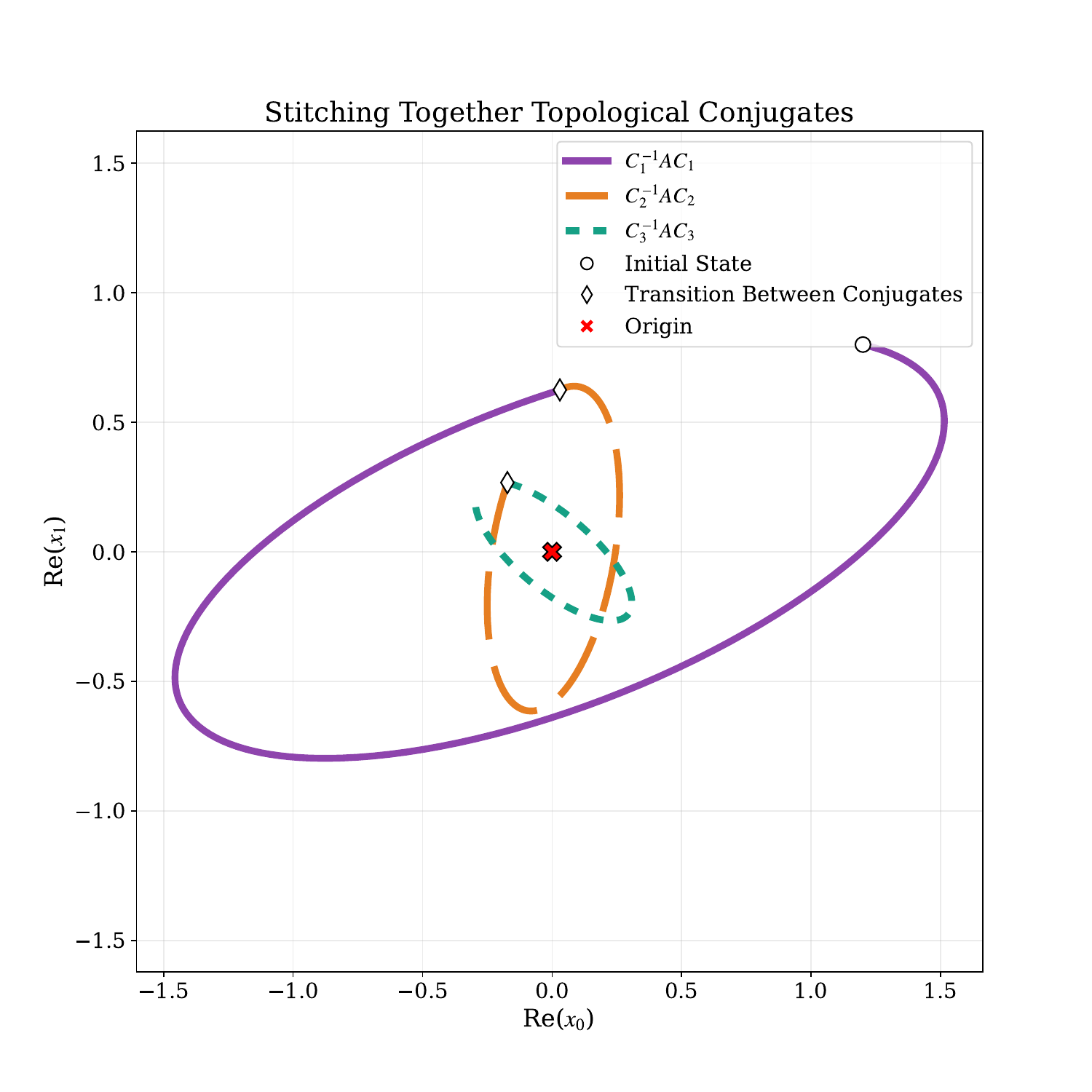}
            \caption{Switching between Topological Conjugates}
            \label{fig:stitch_top_conj}
        \end{subfigure}
        
     \hfill
        \caption{\textbf{Topological Conjugates in Linear Dynamics}. Subfigure~\ref{fig:top_conj} shows trajectories from two linear dynamical systems, one governed by $\frac{dx}{dt}=Ax$ and the other by a topologically conjugate equation $\frac{dx}{dt}=C^{-1}AC$. Notably, they share the same oscillation frequency and decay back to the origin at the same rate, as they share the same eigenvalues. However, the direction of travel, i.e., eigenmodes, differs. Subfigure~\ref{fig:stitch_top_conj} shows a toy example of how one can chain together (or "stitch") topological conjugates, changing eigenvector direction every 450ms, while keeping certain dynamical properties such as oscillation frequency constant (Eq.~\ref{eq:switching_linear_rec}).}
        \label{fig:topological_conjugates}
\end{figure}

Any potential implementation fitting this proposed parametrisation needs to be computationally inexpensive and effectively enable direct, fine-grained control over global dynamics, i.e., guarantee long-range memory. By definition, controlling global memory properties is a matter of parametrising $A$. The most computationally cheap and interpretable solution is to constrain $A$ to be a diagonal matrix containing the eigenvalues of the system ($\Lambda$). This also allows taking advantage of state-of-the-art diagonal SSM parametrisation, with the details of the specific methods employed here described in Section~\ref{sec:eigvalue_param}. 

The goal of defining $C$ as a function of the recurrent state is to enable adaptability in the system's dynamics. Therefore, one way to obtain $C$ from $x$ would be to follow current state-of-the-art practices in efficient Transformer alternatives (see Section~\ref{sec:neuromodulation}), and base it around outer products of key-value projections ($k \otimes v$). Concretely, taking inspiration from the Linear Transformer \citep{katharopoulos2020transformers}, $C$ can statefully evolve over time as in Eq.~\ref{eq:naive_c_kv}. It is important to highlight that here $k$ and $v$ are functions of the recurrent state $x$, rather than incoming inputs as in Linear Transformers. While appealing, this stateful and unconstrained $C$ approach poses several problems. 

\begin{equation}  \label{eq:naive_c_kv}
    C_{new} = C_{old} + k \otimes v, \space k = f_K(x), \space v = f_V(x)
\end{equation}

A first drawback is that, without careful consideration, accumulating key-value pairs in $C$ poses inherent challenges when computing its inverse $C^{-1}$. First, generic matrix inversion for $GL(n)$ has a high computational cost, with $\mathcal{O}(n^3)$ complexity. Moreover, matrix conditioning plays a significant role in the numerical stability of inversion, effectively constraining the floating-point precision admissible by the algorithm if not accounted for \citep{von2025mesanet}. The solution proposed here is to constrain $C$ to the Orthogonal Group $O(n)$, which enables efficient and stable inversion by transposing.

As detailed in Section~\ref{sec:orthogonal}, constraining matrices that evolve over time on the orthogonal manifold requires Riemannian Gradient Descent. In this case, the gradient $\nabla f(Q)$ \footnote{To align with notational convention for orthogonal matrices, $C$ is replaced from here on with $Q$} can be considered the k-v outer products, and thus the update rule for $Q$ becomes Eq.~\ref{eq:naive_ortho_update}. 

\begin{equation}  \label{eq:naive_ortho_update}
    Q_{new} = Q_{old}\texttt{exp}(S), \space S = kv^TQ^T - Qvk^T
\end{equation}

A second drawback with the premise of stateful $Q$ evolution as in Eq.~\ref{eq:naive_ortho_update}, is that it necessitates large $\frac{dQ_{new}}{dQ_{old}}$ Jacobians for backpropagation. Describing the credit assignment between all $n \times n$ elements in $Q_{new}$ with respect to all $n \times n$ entries in $Q_{old}$ imposes an infeasible $\mathcal{O}(n^4)$ memory cost. In general, Linear Transformers can avoid this drawback since the recurrent weight matrix does not have to be materialised sequentially (see Section~\ref{sec:neuromodulation}). Here, however, $Q$ Riemannian evolution is a non-linear function and thus requires full instantiation. In the interest of scalability and keeping network memory capacity transparently and entirely tied to the recurrent state $x$, for the purposes of this work, the memory of $Q$ itself is removed. From a theoretical perspective, this is equivalent to taking a Hebbian-like associative key-value step on the orthogonal manifold from the identity $I$ at each recurrent step of the network (Eq.~\ref{eq:Q_func}). 

\begin{equation} \label{eq:Q_func}
    Q = Iexp(kv^TI - I^Tvk^T) = exp(kv^T - vk^T), \space I = \mathtt{eye(n)}
\end{equation}

Taking all into consideration, the ADPTNet recurrence rule amounts to Eq.~\ref{eq:state_dependent_similarity} and ~\ref{eq:recurrence_eq}. To implement the matrix $\mathtt{exp}$ function required for the retraction back to the orthogonal manifold, the most widely used method is the Cayley Map as in Eq.~\ref{eq:cayley_map_rec}\footnote{The notation for skew-symmetric matrices, S, becomes overloaded in this work, since it may also refer to a spiking activation.} and \ref{eq:cayley_map_rec_T} \footnote{From now it is assumed that $M$, $Q$, $S$, $k$, and $v$ are always functions of $x$, and thus the notation is simplified to omit this detail.}, where $S$ is the skew-symmetric map applied to the $kv^T$ outer product, and $k$ and $v$ are normalised linear projections of the state $x$ (Eq.~\ref{eq:details_skew_kv}). The $\beta$ parameter in Eq.~\ref{eq:cayley_map_rec} is effectively the discretisation step size for travelling over the orthogonal manifold in the direction pointed by $S$. Because in this case the starting point is always the identity $I$, $\beta$ gains the additional function of directly tuning the off-diagonal interactions between the elements of $x$ within the dynamics of the network (Fig.~\ref{fig:beta_effect_interactions}). 

\begin{figure}[H]
    \centering
        \centering        
        \includegraphics[width=0.75\textwidth]{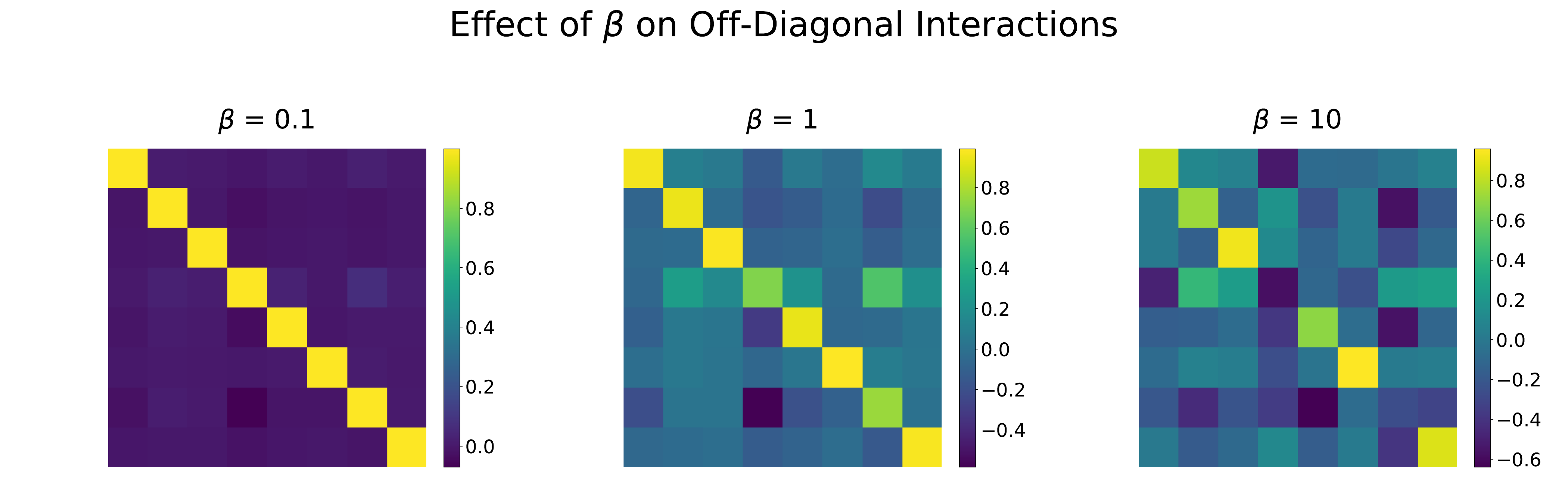}
        \hfill
        \caption{\textbf{Effect of $\beta$ on off-diagonal interactions} Intuitively, as one takes a larger step away from the identity $I$ on the orthogonal manifold (Eq.~\ref{eq:Q_func}), the resulting matrix becomes less and less dominated by its main diagonal, strengthening interactions between the neurons in the network recurrent state $x$. } 
        \label{fig:beta_effect_interactions}
\end{figure}
 
\begin{equation}\label{eq:state_dependent_similarity}
    M_t = Q_t \Lambda Q_t^T, \space Q_t \in O(n), \space \Lambda = \mathtt{diag}(\lambda_1, \dots, \lambda_n)
\end{equation} 

\begin{equation}\label{eq:recurrence_eq}
    x_{t+1} = M_tx_t + Bu_{t+1}
\end{equation}

\begin{equation} \label{eq:cayley_map_rec}
    Q = (I - \frac{\beta}{2}S)^{-1}(I + \frac{\beta}{2}S) 
\end{equation}

\begin{equation} \label{eq:cayley_map_rec_T}
    Q^T = (I - \frac{\beta}{2}S)(I + \frac{\beta}{2}S)^{-1}
\end{equation}

\begin{equation} \label{eq:details_skew_kv}
    S = kv^T - vk^T, k = \frac{Kx} {||Kx||}, v = \frac{Vx}{||Vx||}
\end{equation}

Having established the core mechanics of the network, it is important to emphasise how they differ from traditional RNNs. For simplicity, the example of a vanilla RNN with ReLU activations is used here, namely $x_{t+1} = \mathtt{ReLU}(W_{rec}x_t) + W_{in}u_{t+1}$. The baseline logic is that the recurrent weight matrix $W_{rec}$ stretches and rotates the state $x$, which then determines the outputs of the $\mathtt{ReLU}$ activations. The activations, in turn, form a diagonal $D$ that element-wise scales entries in $x$ by $\in \{ 0, 1\}$, i.e. $x_{t+1} = D(W_{rec}x_t)\odot(W_{rec}x_t) + W_{in} u$. The Lyapunov Spectrum (Section~\ref{sec:lyapunov_exponents}) of the network cannot be known a priori because $D$ entries change arbitrarily. In contrast, ADPTNet fixes and parametrises the entries of the diagonal scaling factors. To obtain non-linearity, the state $x$ is rotated by $Q^T$ to decay along different modes with the pre-determined decay rates. This distinction enables the claims in Lemma~\ref{lemma:J_singular_vals} and \ref{lemma:le_spectrum} to be made. 

\begin{lemma} \label{lemma:J_singular_vals}

For any given initialisation of $\Lambda$ where all $0 < \lambda_i < 1$, the singular values ($\sigma_i$) of the recurrent Jacobian matrix $J$ of the ADPTNet recurrence lie within the bounds $\lambda_i \pm 8\beta\lambda_{max}$ (Assuming K, V are orthogonal). 

\end{lemma}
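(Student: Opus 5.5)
We write the recurrence as $x_{t+1}=Q(x_t)\Lambda Q(x_t)^Tx_t+Bu_{t+1}$ and apply the product rule to the Jacobian:
\[
J=\frac{\partial x_{t+1}}{\partial x_t}=Q\Lambda Q^T+\Bigl[\tfrac{\partial Q}{\partial x}\Bigr]\Lambda Q^Tx+Q\Lambda\Bigl[\tfrac{\partial Q^T}{\partial x}\Bigr]x .
\]
This splits $J$ into a \emph{principal term} $M=Q\Lambda Q^T$ and a perturbation $E$. Since $Q\in O(n)$, $M$ is symmetric, and its singular values are exactly $\lambda_i$, using $0<\lambda_i<1$. Weyl's inequality for singular values, $|\sigma_i(M+E)-\sigma_i(M)|\le\|E\|_2$, then reduces the lemma to showing $\|E\|_2\le 8\beta\lambda_{max}$.

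To bound $E$ we work along a direction $h$, i.e.\ we bound $\|Eh\|$ for $\|h\|=1$. Write $P=I-\tfrac{\beta}{2}S$ and $R=I+\tfrac{\beta}{2}S$, so that $Q=P^{-1}R$.
\begin{itemize}
\item Differentiating the Cayley map gives $dQ=P^{-1}\tfrac{\beta}{2}dS\,(I+Q)$, and similarly for $dQ^T$.
\item Because $S$ is skew-symmetric, $P$ and $R$ have all singular values $\ge 1$ and are normal. Hence $\|P^{-1}\|\le 1$, $\|Q\|=1$, and $\|I+Q\|\le 2$.
\item This yields $\|dQ\|\le \beta\|dS\|$, and the same bound for $dQ^T$.
\item Each of the two derivative terms is therefore bounded by $\beta\|dS\|\,\lambda_{max}\|x\|$, using $\|\Lambda\|=\lambda_{max}$.
\end{itemize}

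It remains to bound $\|dS\|$, where $dS=dk\,v^T+k\,dv^T-dv\,k^T-v\,dk^T$, so $\|dS\|\le 2(\|dk\|+\|dv\|)$ since $k,v$ are unit vectors. For the normalised projection $k=Kx/\|Kx\|$ we have $dk=(I-kk^T)K\,dx/\|Kx\|$. With $K$ orthogonal, $\|Kx\|=\|x\|$, so $\|dk\|\le\|h\|/\|x\|$, and likewise for $dv$. Thus $\|dS\|\le 4/\|x\|$. The factor $\|x\|$ cancels against the $\|x\|$ multiplying each term, which is exactly why the bound is state-independent. Each derivative term is then at most $4\beta\lambda_{max}$, and the two together give $\|E\|\le 8\beta\lambda_{max}$, as claimed.

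The main obstacle is the derivative of the Cayley map. The difficulty is committing to the factorisation $dQ=\tfrac{\beta}{2}P^{-1}dS\,(I+Q)$, which follows from $P\,dQ-\tfrac{\beta}{2}dS\,Q=\tfrac{\beta}{2}dS$, and verifying the norm bounds on $P^{-1}$ and $(I+Q)$. I would check these through the eigenvalues of $S$, which are purely imaginary: that makes $\|P^{-1}\|=\max|1-i\beta\mu/2|^{-1}\le 1$. A secondary subtlety is that $x=0$ makes $k$ undefined. I would state the bound for $x\neq0$, where the normalisation is smooth.
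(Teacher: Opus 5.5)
Your proof is correct and follows essentially the same route as the paper. Both arguments use the split $J = M + \frac{dM}{dx}x$, Weyl's inequality for singular values, the Cayley-map derivative, and the cancellation of $\|x\|$ against $\|dk\|,\|dv\|\le \|h\|/\|x\|$ for orthogonal $K,V$; your $\tfrac{\beta}{2}P^{-1}dS\,(I+Q)$ is the paper's $\beta P^{-1}dS\,P^{-1}$, since $I+Q=2P^{-1}$.

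The one difference is in your favour. You bound $\|\Lambda Q^T x\|\le\lambda_{max}\|x\|$ directly, which avoids the paper's claim that $B=(I-\tfrac{\beta}{2}S)^{-1}\Lambda(I-\tfrac{\beta}{2}S)$ has $\|B\|_2=\lambda_{max}$. That claim is not exact, because $B$ is a non-orthogonal similarity of $\Lambda$.
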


\begin{proof}

As per the product rule, the recurrent Jacobian matrix $J$ comprises the matrix $M$ itself and its derivative with respect to the state $\frac{dM}{dx}$ (Eq.~\ref{eq:J}). The eigenspectrum of $M$ is known by definition, $\Lambda$, and because by construction $M$ is symmetric positive definite, they are also $M$'s singular values ($\lambda_i = \sigma_i(M)$). If the norm of the second term is sufficiently small, it is then possible to use Weyl's inequality to find bounds on the singular value spectrum of their sum, $J$ (Eq.~\ref{eq:weyl}). 

\begin{equation}\label{eq:J}
    J = M + \frac{dM}{dx}x
\end{equation}

\begin{equation}\label{eq:weyl}
    | \sigma_i(J) - \sigma_i(M) | < \left\|\frac{dM}{dx}x\right\|_2
\end{equation}

To find bounds on the norm of $\frac{dM}{dx}x$, it is necessary to reduce it with the chain rule to terms with known norms. Using the product rule again, $\frac{dM}{dx}x$ further breaks down into two terms for each of the two sides of the similarity transform (Eq.~\ref{eq:dMdx}). For the purposes of this proof, it is assumed that $Q$ is parametrised using the Cayley Map (Eq.~\ref{eq:cayley_map_rec}), and thus $\frac{dQ}{dx}$ and its transpose take the form of Eq.~\ref{eq:dQdx} and \ref{eq:dQTdx}. When chaining them together within derivatives of the similarity transform in  $\frac{dM}{dx}x$ (Eq.~\ref{eq:dqdx_lmb_qt}), it is worth observing that the $\beta$ hyperparameter directly scales the norm of $\frac{dM}{dx}x$.

\begin{equation} \label{eq:dMdx}
     \frac{dM}{dx} = \frac{dQ}{dx}\Lambda Q^T + Q\Lambda \frac{dQ}{dx}
\end{equation}

\begin{equation} \label{eq:dQdx}
    \frac{dQ}{dx} = \beta(I -  \frac{\beta}{2}S)^{-1}\frac{dS}{dx}(I -  \frac{\beta}{2}S)^{-1}
\end{equation}

\begin{equation}\label{eq:dQTdx}
    \frac{dQ^T}{dx} = -\beta(I +  \frac{\beta}{2}S)^{-1}\frac{dS}{dx}(I +  \frac{\beta}{2}S)^{-1}
\end{equation}

\begin{align}\label{eq:dqdx_lmb_qt}
        \frac{dQ}{dx}\Lambda Q^T  &= \beta(I -  \frac{\beta}{2}S)^{-1}\frac{dS}{dx}(I -  \frac{\beta}{2}S)^{-1}\Lambda Q^T \\
        &= \beta(I -  \frac{\beta}{2}S)^{-1}\frac{dS}{dx}(I -  \frac{\beta}{2}S)^{-1}\Lambda (I -  \frac{\beta}{2}S)(I +  \frac{\beta}{2}S)^{-1}
\end{align}

To simplify the notation, the $A$ and $B$ \footnote{While the symbol $B$ is overloaded because it is also used for the input weights in the recurrence formula for ADPTNet (Eq.~\ref{eq:recurrence_eq}), in the scope of this proof it is only used as the shorthand from Eq.~\ref{eq:sub_B}.} substitutions defined in Eq.~\ref{eq:sub_A} and \ref{eq:sub_B} are introduced. 

\begin{equation}\label{eq:sub_A}
    A = (I - \frac{\beta}{2}S)^{-1}
\end{equation}

\begin{align}\label{eq:sub_B}
    B = (I - \frac{\beta}{2}S)^{-1}\Lambda (I - \frac{\beta}{2}S)
\end{align}

Using the shorthand notation, the similarity transform derivative terms can be contracted to Eq.~\ref{eq:simple_dQdx} and \ref{eq:simple_dQTdx}, and thus the overall $\frac{dM}{dx}x$ term reduces to Eq.~\ref{eq:simple_dMdx}.

\begin{equation} \label{eq:simple_dQdx}
    \frac{dQ}{dx}\Lambda Q^T = \beta A \frac{dS}{dx} BA^T
\end{equation}

\begin{equation} \label{eq:simple_dQTdx}
    Q \Lambda\frac{dQ^T}{dx} = \beta A B^T \frac{dS}{dx}A^T
\end{equation}

\begin{equation} \label{eq:simple_dMdx}
    \frac{dM}{dx}x = \beta A (\frac{dS}{dx} B  a - B^T \frac{dS}{dx}a), a = A^Tx
\end{equation}

Next, going down the derivation chain, the derivative of the skew-symmetric map $\frac{dS}{dx}$ requires a tensor outer product $\otimes_{outer}$ (Eq.~\ref{eq:dSdx}) if computed on its own, since it is the Jacobian matrix of a matrix with respect to a vector. However, the tensors collapse back to matrices when multiplied with the rest of the terms in $\frac{dM}{dx}x$ (Eq.~\ref{eq:dSdxBa} and \ref{eq:BdSdxa}), avoiding the need to explicitly compute their individual norms.

\begin{equation} \label{eq:dSdx}
    \frac{dS}{dx} = \frac{dk}{dx}\otimes_{outer}v^T + k\otimes_{outer} \frac{dv}{dx} - \frac{dv}{dx}\otimes_{outer}k^T - v\otimes_{outer} \frac{dk}{dx} 
\end{equation}

\begin{align} \label{eq:dkdx}
    \frac{dk}{dx} &= \frac{1}{||Kx||} (I - kk^T)K
\end{align}

\begin{align} \label{eq:dvdx}
    \frac{dv}{dx} &= \frac{1}{||Vx||} (I - vv^T)V
\end{align}

\begin{equation} \label{eq:dSdxBa}
    \frac{dS}{dx}Ba = \frac{dk}{dx}(v^TBa) + (ka^T)B^T\frac{dv}{dx} - \frac{dv}{dx} (k^TBa) - (va^T)B^T\frac{dk}{dx}
\end{equation}

\begin{equation} \label{eq:BdSdxa}
    B^T\frac{dS}{dx}a = B^T \frac{dk}{dx}(v^Ta) + B^T(ka^T)\frac{dv}{dx} - B^T\frac{dv}{dx}(k^Ta) - B^T(va^T)\frac{dk}{dx}
\end{equation}

Given that $\left\|B\right\|_2 = \left\|B^T\right\|_2=\lambda_{max}$, $\left\|A\right\|_2 = \left\|A^T\right\|_2 = 1$, $\left\|\frac{dk}{dx} \right\|_2=\left\|\frac{dv}{dx} \right\|_2=\frac{1}{\left\|x \right\|_2}$, $\left\|v \right\|_2 = \left\|k \right\|_2 = 1$ and $\norm{a} \le \norm{A}\norm{x}$ then:

\begin{equation} \label{eq:lemma1_result}
    \begin{split}
            \norm{\frac{dM}{dx}x} 
    &\le \beta \norm{A} \norm{B} \norm{a} ( 4 \norm{v}\norm{\frac{dk}{dx}} + 4\norm{k}\norm{\frac{dv}{dx}}) \\
    &\le \beta\lambda_{max}\norm{A}\norm{x}(\frac{4}{\norm{x}} + \frac{4}{\norm{x}}) \\
    &= 8\beta\lambda_{max}
    \end{split}
\end{equation}

 Since the norm of $\frac{dM}{dx}x$ is upper-bounded by $8\beta\lambda_{max}$, for sufficiently small $\beta$, each of the singular values of $J$ is bounded within:

\begin{equation*}
    \max(0, \lambda_i - 8\beta\lambda_{max}) \le \sigma_i(J) \le \lambda_i + 8\beta\lambda_{max}, \space \forall i \in [1, \dots,  n]
\end{equation*}

\end{proof}

To simplify analysis going forward, it is assumed that $\lambda_{min}$, $\lambda_{max}$, and $\beta$ are constrained such that $\lambda_{min} - 8\beta\lambda_{max} > 0$. Having established spectral bounds for individual ADPTNet recurrent Jacobian matrices $J$, it is now possible to formalise theoretical guarantees on long-term behaviour:

\begin{corollary} \label{coro:lipschitz}
    The Lipschitz constant of the ADPTNet recurrent step function is $\lambda_{\max} + 8\beta \lambda_{\max}$, for $x \in \mathbb{R}^{n} \setminus \{\mathbf{0}\}^n$.
\end{corollary}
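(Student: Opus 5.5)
The plan is to obtain the Lipschitz constant from the uniform Jacobian bound of Lemma~\ref{lemma:J_singular_vals} via the mean value inequality. Write the step function as $F(x) = M(x)x + Bu$. The input term $Bu$ does not depend on $x$, so it drops out of every difference $F(x)-F(y)$, and only $\phi(x) = M(x)x$ matters. Its Jacobian is exactly the $J$ of Eq.~\ref{eq:J}.

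\textbf{Step 1 (uniform Jacobian bound).} By Lemma~\ref{lemma:J_singular_vals}, every singular value of $J(x)$ satisfies $\sigma_i(J) \le \lambda_i + 8\beta\lambda_{\max} \le \lambda_{\max} + 8\beta\lambda_{\max}$. Hence
\begin{equation*}
\norm{J(x)} \le \lambda_{\max} + 8\beta\lambda_{\max} \quad \text{for every } x \neq \mathbf{0}.
\end{equation*}
The bound does not depend on $\norm{x}$. This is because the factor $1/\norm{x}$ in $\frac{dk}{dx}$ and $\frac{dv}{dx}$ cancels against $\norm{a} \le \norm{x}$ in Eq.~\ref{eq:lemma1_result}. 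That cancellation is what makes a global constant possible.

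\textbf{Step 2 (mean value inequality).} For $x, y$ with the segment $\gamma(s) = y + s(x-y)$, $s\in[0,1]$, write
\begin{equation*}
\phi(x)-\phi(y) = \int_0^1 J(\gamma(s))(x-y)\,ds.
\end{equation*}
Taking norms and applying Step 1 gives $\norm{\phi(x)-\phi(y)} \le (\lambda_{\max}+8\beta\lambda_{\max})\norm{x-y}$.

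\textbf{Step 3 (segments through the origin; the main obstacle).} Here $k$, $v$ and hence $M$ are undefined at $\mathbf{0}$, so $J$ is unavailable there. If the segment avoids the origin, Step 2 applies directly. If it passes through $\mathbf{0}$, I would use either of two arguments.

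\emph{Perturbation and continuity.} Replace $y$ by $y+\epsilon w$ with $w$ chosen so the segment misses the origin. This is generic when $n\ge 2$, since a line through the origin has measure zero. Then use continuity of $\phi$ on $\mathbb{R}^n\setminus\{\mathbf{0}\}$ and let $\epsilon\to 0$.

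\emph{Extension by zero.} Observe that $\norm{M(x)x} \le \lambda_{\max}\norm{x}$, because $M$ is orthogonally similar to $\Lambda$. So $\phi$ extends continuously to $\phi(\mathbf{0})=\mathbf{0}$. Then split the segment at $\mathbf{0}$ and add the bounds on the two halves; since the two pieces are collinear, their lengths sum to $\norm{x-y}$.

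The $n=1$ case is trivial, because $M = \lambda_1$ there.

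I expect this domain issue at the origin to be the only real subtlety; the rest is a direct combination of Lemma~\ref{lemma:J_singular_vals} with the mean value inequality. I would also note in passing that the bound should be read as an upper bound on the Lipschitz constant, and that the standing assumption $\lambda_{\min} - 8\beta\lambda_{\max} > 0$ is not needed for it.
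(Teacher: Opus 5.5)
Your proposal is correct and takes the same route as the paper, which sets $L=\sup_x\|J(x)\|_2$ and reads off $\lambda_{\max}+8\beta\lambda_{\max}$ from Lemma~\ref{lemma:J_singular_vals}; your mean value inequality and your treatment of segments through the origin make rigorous what the paper's one-line argument leaves implicit. Your caveat is also well taken: the argument only gives an upper bound on the Lipschitz constant, not its exact value as the paper's chain of equalities suggests.
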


\begin{proof}
Since Lemma~\ref{lemma:J_singular_vals} derives an upper bound on the norm of the recurrent Jacobian $\left\| J \right\|_2$, this also produces, by definition, the Lipschitz constant $L = \sup \left\|J \right\|_2 = \left\|M + \frac{dM}{dx}x\right\|_2=\lambda_{\max} + 8 \beta\lambda_{max}$. 
\end{proof}

\begin{lemma} \label{lemma:le_spectrum}

For sufficiently many trajectory time steps $T$, the entire Lyapunov spectrum of ADPTNet dynamics is bounded within $[\ln(\min(\lambda_{min}-8\beta\lambda_{max})), \space \ln(\lambda_{max} + 8\beta\lambda_{max})]$. 

\end{lemma}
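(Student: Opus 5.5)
The plan is to pass from the per-step singular value bounds of Lemma~\ref{lemma:J_singular_vals} to bounds on the singular values of the long-term Jacobian $G_{1:T}=\prod_t J_t$ (Eq.~\ref{eq:long_term_jacobian}). We then take $\frac{1}{T}\ln$ and the limit in Eq.~\ref{eq:lyapunov_exp}. Write $\sigma^{-}=\lambda_{min}-8\beta\lambda_{max}$ and $\sigma^{+}=\lambda_{max}+8\beta\lambda_{max}$. By Lemma~\ref{lemma:J_singular_vals}, under the standing assumption $\sigma^{-}>0$, every $J_t$ along any trajectory avoiding the origin satisfies $\sigma^{-}\le\sigma_{\min}(J_t)\le\sigma_{\max}(J_t)\le\sigma^{+}$. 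Corollary~\ref{coro:lipschitz} already records the upper part. In particular each $J_t$ is invertible.

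For the upper bound, we use submultiplicativity of the spectral norm: $\sigma_k(G_{1:T})\le\sigma_1(G_{1:T})\le\prod_t\norm{J_t}\le(\sigma^{+})^T$. For the lower bound, we use the dual fact that the smallest singular value is supermultiplicative for square matrices. This holds because $\sigma_{\min}(AB)=1/\norm{(AB)^{-1}}\ge 1/(\norm{A^{-1}}\norm{B^{-1}})=\sigma_{\min}(A)\sigma_{\min}(B)$. Hence $\sigma_k(G_{1:T})\ge\sigma_n(G_{1:T})\ge(\sigma^{-})^T$ for every $k$. Taking logarithms and dividing by $T$ gives $\ln\sigma^{-}\le\frac{1}{T}\ln\sigma_k(G_{1:T})\le\ln\sigma^{+}$ uniformly in $T$ and $k$. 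The bounds are therefore preserved under $\limsup/\liminf$, and the limit defining $\lambda_k$ (whenever it exists, e.g.\ by Oseledets) lies in $[\ln\sigma^{-},\ln\sigma^{+}]$. The phrase ``for sufficiently many time steps'' is then only needed for the finite-$T$ estimate to approximate the exponent; the bound itself holds at every $T$.

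The main obstacle is justifying that the per-step bounds hold along the entire trajectory. Lemma~\ref{lemma:J_singular_vals} requires $x_t\neq\mathbf{0}$ (so that $k,v$ and their derivatives are defined), and it also requires $\sigma^{-}>0$. I would state explicitly that the lemma applies to trajectories that never hit the origin. Since the map $x\mapsto M(x)x+Bu$ is not defined by normalisation at $x=0$, the origin can be treated as a measure-zero exceptional set, or excluded by assumption as in Corollary~\ref{coro:lipschitz}. A secondary point is that Weyl's inequality in Lemma~\ref{lemma:J_singular_vals} gives individual bounds $\lambda_i\pm8\beta\lambda_{max}$. For the product argument, however, only the extreme ones are needed, which is why the spectrum is bounded by the interval built from $\lambda_{min}$ and $\lambda_{max}$ rather than per-index intervals.
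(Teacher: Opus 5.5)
Your proposal is correct and follows the paper's proof. Both arguments take the per-step extreme singular value bounds from Lemma~\ref{lemma:J_singular_vals} and propagate them to the long-term Jacobian via sub-multiplicativity of $\sigma_{\max}$ and super-multiplicativity of $\sigma_{\min}$ (the paper phrases this as an induction on $k$), then take $\frac{1}{T}\ln$ and the limit. Your explicit proof that $\sigma_{\min}(AB)\ge\sigma_{\min}(A)\sigma_{\min}(B)$, together with your remarks on excluding the origin and on the bound holding uniformly in $T$, tightens steps the paper states loosely; its induction display writes $\sigma_{\min}$ where $\sigma_{\max}$ is needed on the upper side.
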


\begin{proof}

In Lemma \ref{lemma:J_singular_vals}, it is established that all ADPTNet recurrent Jacobian $J$ singular values $\sigma_i(J)$ lie within $\pm 8\beta\lambda_{max}$ of the eigenvalues $\lambda_i$. Therefore

\begin{equation*}
    \lambda_{min} - 8\beta\lambda_{max} \le \sigma_i(J) \le \lambda_{max} + 8\beta\lambda_{max}, \space \forall i \in [1, \dots, n]
\end{equation*}

For notational simplicity, let $\tilde{\sigma}_{min} = \lambda_{min} - 8\beta\lambda_{max}$ and $\tilde{\sigma}_{max} = \lambda_{max} + 8\beta\lambda_{max}$ denote the limits above. As with any arbitrary matrices, for any two Jacobian matrices $J_1$ and $J_{2}$ from time steps $1$ and $2$, all the singular values of their product are bounded:  

\begin{equation*}
     \tilde{\sigma}_{min} * \tilde{\sigma}_{min} \le \sigma_i(J_1J_{2}) \le  \tilde{\sigma}_{max} * \tilde{\sigma}_{max}, \space \forall i \in [1, \dots, n] 
\end{equation*}

If we assume that the long term Jacobian product $G_{k} = J_1J_2 \dots J_k$ (Section~\ref{sec:lyapunov_exponents}) has singular values bounded by $\tilde\sigma_{min}^k$ and $\tilde\sigma_{max}^k$, then 

\begin{equation*}
    \begin{split}
        \tilde\sigma_{min}^k * \tilde\sigma_{min} \le \sigma_{min}(G_k) \sigma_{min}(J_{k+1}) &\le \sigma_i(G_kJ_{k+1}) \le \sigma_{min}(G_k) \sigma_{min}(J_{k+1}) \le \tilde\sigma_{max}^k * \tilde\sigma_{max} \iff \\
        \tilde\sigma_{min}^{k+1} &\le \sigma_i(G_kJ_{k+1}) \le \tilde\sigma_{max}^{k+1} 
    \end{split}
\end{equation*}
    
Hence, by induction, ADPTNet long-term Jacobians $G_{k}$ always have singular values bounded by matching powers of $\tilde\sigma_{min}$ and $ \tilde\sigma_{max}$. Taking the natural logarithm of $G_k$ is then bounded as 

\begin{equation*}
    \begin{split}
        ln(\tilde\sigma_{min}^k) &\le ln(\sigma_i(G_k)) \le ln(\tilde\sigma_{max}^k) \iff \\
        ln(\tilde\sigma_{min}) k &\le ln(\sigma_i(G_k)) \le ln(\tilde\sigma_{max})k
    \end{split}
\end{equation*}

Considering the formula for obtaining the Lyapunov spectrum $\alpha_i$ \footnote{While the notational convention is to denote the Lyapunov spectrum by $\lambda$, to avoid confusion with the recurrent eigenspectrum of ADPTNet, $\alpha$ is used instead. } (see Section~\ref{sec:lyapunov_exponents}), it becomes apparent that $\forall$ Lyapunov exponents are also bounded 

\begin{equation*}
    \begin{split}
        \ln(\tilde\sigma_{min}) k \le ln(\sigma_i(G_k)) \le \ln(\tilde\sigma_{max})k &\iff \\
        \ln(\tilde\sigma_{min}) \le \frac{1}{k}ln(\sigma_i(G_k)) \le \ln(\tilde\sigma_{max}) &\iff \\ 
        \ln(\tilde\sigma_{min})\le \lim_{k\rightarrow \infty}\frac{1}{k}ln(\sigma_i(G_k)) \le \ln(\tilde\sigma_{max}) &\iff \\
        \ln(\tilde\sigma_{min})\le \alpha_i \le \ln(\tilde\sigma_{max}), \space \forall i \in [1, \dots, n]
    \end{split}
\end{equation*}

\end{proof}

Lemma~\ref{lemma:le_spectrum} is built on the assumption that the Lyapunov spectrum is derived from the singular values of the long-term Jacobian $G_k$. However, in practice this is not numerically stable \citep{gonzalez2026predictability, engelken2023lyapunov, engelken2023gradient}, and, as such, the Lyapunov spectrum is typically computed as the sum of individual $ln(\sigma_i(J_t))$ terms, which are always sorted. Therefore, given this implementation detail, it is possible to derive even stronger bounds on the relationship between recurrent eigenvalues and the Lyapunov spectrum in ADPTNet:

\begin{lemma}\label{lemma:le_exact_match}

When using numerically stable methods to obtain the Lyapunov Spectrum, for sufficiently small $\beta$ and a given recurrent eigenspectrum $\Lambda$ of ADPTNet, individual Lyapunov exponents are $\alpha_i \approx \ln(\lambda_i) + \Delta_i$, where $\Delta_i \in [-\frac{8\beta\lambda_{max}}{\lambda_i}, \frac{8\beta\lambda_{max}}{\lambda_i}]$, $\lambda_{max} = \lambda_{1} \ge \lambda_2 \ge \dots \ge \lambda_n = \lambda_{min}$, and $\alpha_{max} = \alpha_{1} \ge \alpha_ \ge \dots \ge \alpha_ = \alpha_{min}$. 

\end{lemma}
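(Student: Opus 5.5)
The plan is to work directly with the numerically stable estimator the statement refers to. Under this convention the $i$-th Lyapunov exponent is the time average of the sorted per-step log singular values:
\[
\alpha_i \approx \frac{1}{T}\sum_{t=1}^{T}\ln\sigma_i(J_t).
\]
This sidesteps the product $G_T$, so the induction of Lemma~\ref{lemma:le_spectrum} is not needed. The only input required is a per-index bound on $\sigma_i(J_t)$, uniform in $t$.

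\textbf{Step 1.} Invoke Lemma~\ref{lemma:J_singular_vals} in its index-wise form, i.e.\ Weyl's inequality applied to $J_t = M_t + \frac{dM}{dx}x_t$. Since $M_t = Q_t\Lambda Q_t^T$ is symmetric positive definite, its ordered singular values are exactly $\lambda_1 \ge \dots \ge \lambda_n$, whatever $Q_t$ is. Combined with $\norm{\frac{dM}{dx}x} \le 8\beta\lambda_{max}$, this gives, for every $t$ and every sorted index $i$,
\[
|\sigma_i(J_t)-\lambda_i| \le 8\beta\lambda_{max}.
\]
The sorting convention in the statement is what makes the $i$-th singular value of each $J_t$ pair with the same $\lambda_i$ at every step, even though the singular vectors rotate with $Q_t$. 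I will state this pairing explicitly, because it is exactly the point where the stable estimator differs from the $G_T$-based argument.

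\textbf{Step 2.} Take logarithms. Write $\sigma_i(J_t) = \lambda_i(1+\epsilon_{i,t})$ with $|\epsilon_{i,t}| \le 8\beta\lambda_{max}/\lambda_i$. The standing assumption $\lambda_{min} - 8\beta\lambda_{max} > 0$ guarantees $|\epsilon_{i,t}| < 1$, so the logarithm is well defined. Then
\[
\ln\sigma_i(J_t) = \ln\lambda_i + \ln(1+\epsilon_{i,t}).
\]
For sufficiently small $\beta$, I linearise: $\ln(1+\epsilon) = \epsilon + O(\epsilon^2)$. This gives $\ln\sigma_i(J_t) \approx \ln\lambda_i + \epsilon_{i,t}$, and this linearisation is where the "$\approx$" in the statement comes from. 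Alternatively, one can avoid the approximation by using the exact bounds $\ln(1-c) \le \ln(1+\epsilon) \le \ln(1+c)$ with $c = 8\beta\lambda_{max}/\lambda_i$, and then note that both ends equal $\pm c + O(c^2)$.

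\textbf{Step 3.} Average over $t$. Since the bound on $\epsilon_{i,t}$ is uniform in $t$, setting $\Delta_i = \frac{1}{T}\sum_t \epsilon_{i,t}$ gives $\alpha_i \approx \ln\lambda_i + \Delta_i$ with $\Delta_i \in [-\frac{8\beta\lambda_{max}}{\lambda_i}, \frac{8\beta\lambda_{max}}{\lambda_i}]$. Because the bound is uniform, it also survives the limit $T\to\infty$. Finally, I check that the ordering $\alpha_1 \ge \dots \ge \alpha_n$ matches the ordering of the $\lambda_i$. This holds because averaging sorted sequences index-wise preserves the sort.

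\textbf{Main obstacle.} The delicate step is justifying Step 1 index by index: Weyl's perturbation bound must be applied to the \emph{sorted} singular values of each $J_t$ separately, not to an aggregated product. I also need to handle the quantifier "sufficiently small $\beta$" carefully, so that the second-order remainder $O((\beta\lambda_{max}/\lambda_i)^2)$ is genuinely negligible. This is worst for the smallest $\lambda_i$, and there I would track the exact logarithmic bounds rather than rely on the linearisation.
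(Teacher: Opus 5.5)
Your proposal is correct and follows the paper's proof essentially step for step. Like the paper, you use the index-wise Weyl bounds $|\sigma_i(J_t)-\lambda_i|\le 8\beta\lambda_{\max}$ on the sorted singular values of each $J_t$ (Lemma~\ref{lemma:J_singular_vals}), a first-order expansion of $\ln(\lambda_i+\Delta_i^{(t)})$, and a time-average of the perturbations. Your version is slightly cleaner in two places: Weyl's inequality already pairs sorted indices, so the paper's extra eigenvalue-gap assumption is unnecessary, and you justify that the logarithm is defined via $\lambda_{\min}-8\beta\lambda_{\max}>0$.

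Be aware that both arguments rest on the paper's identification of the stable estimator with $\frac{1}{T}\sum_t\ln\sigma_i(J_t)$. An actual QR (Benettin) scheme instead accumulates $\ln|(R_t)_{ii}|$ from $J_tQ_{t-1}=Q_tR_t$, and these are not the singular values of $J_t$. So what is really established is the bound for that per-step estimator.
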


\begin{proof}

For simplicity, and without loss of generality, it is assumed that $\min_{i,j}|\lambda_i - \lambda_j| >8 \beta\lambda_{max}$, i.e., all eigenvalues are further apart than $8\beta\lambda_{max}$. Then, let the singular values of each recurrent Jacobian $J_t$ be $\tilde{\sigma}_1 \ge \tilde{\sigma}_2 \dots \ge \tilde{\sigma}_n$, where each $\tilde{\sigma}_i^{(t)} = \lambda_i + \Delta_i^{(t)}$, $-8\beta\lambda_{max}\le \Delta_i^{(t)} \le +8\beta\lambda_{max}$. As highglighted in Lemma~\ref{lemma:le_spectrum}, for any given product $J_tJ_{t-1}$, the only guarantees on its singular value spectrum are that $ \tilde{\sigma}_{\min}^2\le \sigma_i(J_tJ_{t-1}) \le \tilde{\sigma}_{\max}^2$. That is because matrix multiplication "mixes" the spectrum and only preserves upper/lower bounds for its extremes. Therefore, when computing the singular values of the final long-term Jacobian product $G_t$, one can still only bound its extremes. However, if you compute the spectrum at each time step, for example using QR decomposition \citep{engelken2023gradient}, it will be sorted (high to low) each time, preventing spectral mixing. 

Let $\alpha_i^{(t)}$ be the un-normalised partially-computed Lyapunov exponent at time $t$. Then computing it becomes:

\begin{equation*}
\begin{split}
    \alpha_i^{(t)} &= \alpha_i^{(t-1)} + \ln(\tilde{\sigma_i}^{(t)})
\end{split}
\end{equation*}

Assuming $\beta$ is small enough, then $\Delta_i^{(t)}$ can be treated as a perturbation, and the logarithm can be expanded with its Taylor series:

\begin{equation}
    \begin{split}
        \alpha_i^{(t)} &= \alpha_i^{(t-1)} + ln(\lambda_i + \Delta_i^{(t)}) \\
                       &= \alpha_i^{(t-1)} + ln(\lambda_i)+ \frac{\Delta_i^{(t)}}{\lambda_i}
    \end{split}
\end{equation}

The final, time-averaged over $T$ steps, Lyapunov exponent is:

\begin{equation}
    \begin{split}
        \alpha_i &= \frac{1}{T}\sum_{t=1}^{T}\left(ln(\lambda_i)+ \frac{\Delta_i^{(t)}}{\lambda_i} \right) \\
                 &= ln(\lambda_i) + \frac{1}{\lambda_i} \left( \frac{1}{T}\sum_{t=1}^{T} \Delta_i^{(t)} \right)
    \end{split}
\end{equation}

We denote the time-average as $\mu_i =  \frac{1}{T}\sum_{t=1}^{T} \Delta_i^{(t)}$. Since, each $-8\beta\lambda_{max}\le \Delta_i^{(t)} \le +8\beta\lambda_{max}$, then the average also satisfies $-8\beta\lambda_{max}\le \mu_i^{(t)} \le +8\beta\lambda_{max}$. Let $\Delta_i = \frac{\mu_i}{\lambda_i}$, then the initial statement in Lemma~\ref{lemma:le_exact_match} is proven. 

\end{proof}

\begin{corollary} \label{coro:deltas}
    If $\Delta_i^{(t)}$ are sampled from a zero-mean distribution (e.g., $\mathcal{U}(-\beta \lambda_{\max}, \beta \lambda_{\max}))$, and the sequence length $T$ is sufficiently large, then $\alpha \equiv ln(\Lambda)$. 
\end{corollary}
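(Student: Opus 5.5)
The plan is to build directly on the expression obtained in the proof of Lemma~\ref{lemma:le_exact_match}. That proof writes each sorted Lyapunov exponent, computed in the numerically stable way, as
\begin{equation*}
    \alpha_i = \ln(\lambda_i) + \frac{1}{\lambda_i}\mu_i^{(T)}, \qquad \mu_i^{(T)} = \frac{1}{T}\sum_{t=1}^{T}\Delta_i^{(t)} .
\end{equation*}
Here each perturbation satisfies $|\Delta_i^{(t)}| \le 8\beta\lambda_{\max}$. The standing assumption $\lambda_{\min} - 8\beta\lambda_{\max} > 0$ keeps every $\ln(\lambda_i + \Delta_i^{(t)})$ well defined. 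Under the first-order Taylor approximation already adopted there, the corollary reduces to one claim: $\mu_i^{(T)} \to 0$ as $T \to \infty$ for every $i$. Since $\lambda_i > 0$ is fixed, the prefactor $1/\lambda_i$ is harmless, so this gives $\alpha_i \to \ln(\lambda_i)$ componentwise, i.e. $\alpha \equiv \ln(\Lambda)$.

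The key step is a law of large numbers applied to the sequence $\{\Delta_i^{(t)}\}_t$ for each fixed $i$. I will model the perturbations as identically distributed with mean zero, as the statement allows, e.g. $\mathcal{U}(-\beta\lambda_{\max},\beta\lambda_{\max})$. They are also bounded, $|\Delta_i^{(t)}| \le 8\beta\lambda_{\max}$, so all moments exist.
\begin{itemize}
    \item Under independence across $t$, the strong law of large numbers gives $\mu_i^{(T)} \to 0$ almost surely.
    \item For a quantitative version, Hoeffding's inequality gives $P(|\mu_i^{(T)}| > \epsilon) \le 2\exp\bigl(-T\epsilon^2/(2(8\beta\lambda_{\max})^2)\bigr)$.
\end{itemize}
A union bound over the $n$ indices then makes the statement hold simultaneously for all exponents. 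The resulting deviation of $\alpha_i$ from $\ln(\lambda_i)$ is at most $\epsilon/\lambda_i$ with high probability, and this vanishes as $T$ grows.

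The main obstacle is justifying the independence, or at least the weak dependence, of the $\Delta_i^{(t)}$. These are deterministic functions of a trajectory, and consecutive Jacobians are correlated through the state $x_t$. I would first state the result under the i.i.d.\ modelling assumption, which the corollary's phrasing ("sampled from a zero-mean distribution") licenses. I would then remark that it extends to stationary ergodic sequences via Birkhoff's ergodic theorem, or to mixing sequences via a mixing law of large numbers. In every case the bounded support supplies the needed integrability.

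A secondary point is that the Taylor remainder is $O((\Delta_i^{(t)}/\lambda_i)^2)$, whose mean is not zero. I would note that this leaves a residual bias of order $(8\beta\lambda_{\max}/\lambda_i)^2$. This bias is negligible under the "sufficiently small $\beta$" regime of Lemma~\ref{lemma:le_exact_match}, so the equivalence $\equiv$ should be read as holding up to this second-order term.
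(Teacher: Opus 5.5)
Your proposal is correct and follows the same route as the paper. The paper gives no separate argument: it reads the corollary off the decomposition $\alpha_i = \ln(\lambda_i) + \mu_i/\lambda_i$ from Lemma~\ref{lemma:le_exact_match}, with zero-mean perturbations averaging out as $T$ grows. Your law-of-large-numbers/Hoeffding formulation, the explicit independence (or ergodicity) assumption, and the caveat that the second-order Taylor term leaves a nonzero (Jensen) bias are rigorous refinements that the paper leaves implicit.
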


Lemmas~\ref{lemma:le_spectrum} and \ref{lemma:le_exact_match} differ only on an implementation technicality. Regardless of Lyapunov spectrum computational methodology, the LLE and the long-term behaviour of ADPTNet are still prescribed by the recurrent eigenvalues $\Lambda$ and the choice of $\beta$. 

While the topological conjugate in the ADPTNet provides non-linearity through the matrix exponential and the multiplicative interaction between $M(x)$ and $x$, one can inject additional non-linear processing by  pre-pending activation functions before the $KV$-projections. For instance, if the state $x$ passes through element-wise ReLU before being projected to $k$ and $v$ and normalised, the spectral bounds from Lemma~\ref{lemma:J_singular_vals} are unchanged. This is because, unless completely silent, a ReLU diagonal Jacobian has spectral norm $\equiv 1$, and thus does not influence $\norm{\frac{dM}{dx}x}$ through the derivation chain. This differs from traditional RNNs where non-linearities are typically bottlenecks in the propagation of the recurrent state. 

It should also be mentioned that "\textit {stitching}" together linear dynamics to obtain non-linear behaviour is reminiscent of well-established methods in machine learning such as Recurrent Switching Linear Dynamical Systems \citep{linderman2016recurrent} or the  Piecewise-Linear RNN understanding of ReLU networks \citep{brenner2024almost}. However, these methods typically focus on producing certain dynamical behaviours locally around critical points \citep{smith2021reverse}. In terms of global stability properties, the focus in this line of research is to enhance training methodology through regularisation \citep{engelken2023gradient} or teacher forcing \cite{hess2023generalized}, rather than by the inherent properties of the recurrent architecture, like the ADPTNet model proposed here.

\subsection{Eigenvalue Parametrisation and Positional Embeddings} \label{sec:eigvalue_param}

As highlighted in Section~\ref{ssms}, LTI SSMs have the favourable property of direct control of long-term dynamics through the eigenvalue parametrisation of the recurrent matrix. As established in Lemma~\ref{lemma:le_spectrum}, ADPTNet also enjoys theoretical guarantees on the direct influence of its recurrent eigenvalues $\Lambda$ (Eq.~\ref{eq:state_dependent_similarity}) over its long-term non-linear dynamics, as quantified through its Lyapunov spectrum. Therefore, the parametrisation of $\Lambda$ plays a crucial role in the model's performance. 

The general consensus in the SSM literature is that recurrent weights can be reduced to diagonal matrices containing their complex conjugate eigenspectrum \citep{muca2024theoretical}. Their parametrisation and initialisation are designed for stability in continuous time. Hence, eigenvalues are typically constrained through parametrisation to negative real parts (e.g., negative exponential). For simulation, the recurrent dynamics are discretised using techniques such as Zero-Order Hold (ZOH), and typically the discretisation step size $\Delta t$ is also introduced as a trainable parameter. $\Delta t$ also serves the secondary purpose of normalising the recurrent state and preventing magnitude explosion over long horizons \citep{orvieto2023resurrecting}. Altogether, using ZOH, discrete eigenvalue $\bar{\lambda}$ parametrisation takes the form of Eq.~\ref{eq:eig_parametrisation}, where $\Delta t$, $\lambda_i^{real}$ and $\lambda_i^{imag}$ are initialised to the logarithms of the desired discretisation real and imaginary parts of the eigenvalues. 

\begin{equation} \label{eq:eig_parametrisation}
    \bar{\lambda}_i = e^{-e^{\Delta t} (e^{\lambda_i^{real}} \pm i e^{\lambda_i^{imag}})}
\end{equation}

To ensure long-term slow decay, $\lambda_i$ are initialised close to the unit circle \citep{orvieto2023resurrecting, gu2022parameterization}. Therefore, $\Delta t$ is typically initialised in the range $[10^{-4}, 10^{-1}]$ or $[10^{-3}, 10^{-1}]$ depending on sequence length\footnote{Discretisation step size has the implicit task of normalising the weighted sum of the inputs that is an SSM hidden state. Therefore, for example, if sequence length is $\approx1000$ time steps, one can use a $\Delta t=10^{-1}$ as the slowest evolving timescale to roughly normalise the sum.}, and $\alpha_i^{real}=\ln(\frac{1}{2})$. For the purposes of this work, the distribution of the imaginary parts at initialisation follows the S4D-Inv scheme proposed by \citet{gu2022parameterization} (Eq.~\ref{eq:s4d-inv})

\begin{equation} \label{eq:s4d-inv}
    e^{\lambda_k^{imag}} = \frac{n}{\pi} \left( \frac{n}{2k + 1} - 1 \right), \space n = \dim(\Lambda) /2
\end{equation}

At this stage, it is worth emphasising the logic behind SSM complex eigenvalues. Considering, without loss of generality, an unrolled recurrent state $x_t$, i.e., viewed as the sum of inputs $u_k$ for $k < t$ (Section~\ref{ssms}), that uses a generic simplified view of the eigenvalue parametrisation in Eq.~\ref{eq:eig_parametrisation}, then $x_t$ can be rewritten as:

\begin{equation*}
    \begin{split}
            x_t &= e^{t(\Lambda^{real} + i\Lambda^{imag})}u_0 + e^{(t-1)(\Lambda^{real} + i\Lambda^{imag})}u_1 + \dots + e^{(\Lambda^{real} + i\Lambda^{imag})}u_{t-1} + u_t \\
            &= e^{(t\Lambda^{real}} (e^{ti\Lambda^{imag}}u_0) + 
             e^{(t-1)\Lambda^{real}}(e^{(t-1)i\Lambda^{imag}}u_1) + \dots + 
             e^{\Lambda^{real}}(e^{i\Lambda^{imag}}u_{t-1}) + (e^{0*i\Lambda^{imag}}u_t) \\
            &= e^{t\Lambda^{real}}R(u_0, t\Lambda^{imag}) + \dots + e^{\Lambda^{real}}R(u_{t-1}, \Lambda^{imag}) + R(u_t, 0) \\
            &= e^{t\Lambda^{real}}\tilde{u}_0+ \dots + e^{\Lambda^{real}}\tilde{u}_{t-1} + \tilde{u}_t, \space \tilde u_k = R(u_k, (t-k)\Lambda^{imag})
    \end{split}
\end{equation*}

Where, because in SSMs eigenvalues come in conjugate pairs by convention, the products $e^{i\Lambda^{imag}}u$ can be rewritten as real-valued block-diagonal rotation matrix products:

\begin{equation}\label{eq:rotary_embedding}
     e^{i\mathbf{\theta}}u = R(u, \mathbf{\theta}) = \begin{pmatrix}
        \cos(\theta_1)  & \sin(\theta_1) & 0      & \cdots & 0 & 0 \\
        -\sin(\theta_1) & \cos(\theta_1) & 0      & \cdots & 0 & 0 \\
        0               & 0              & \ddots &        & \vdots & \vdots \\
        \vdots          & \vdots         &        & \ddots & 0 & 0 \\
        0               & 0              & \cdots & 0      & \cos(\theta_n) & \sin(\theta_n) \\
        0               & 0              & \cdots & 0      & -\sin(\theta_n) & \cos(\theta_n)
    \end{pmatrix}
    \begin{pmatrix}
        u1 \\
        u2 \\
        \\
        \vdots
        \\
        \\
        \\
        u_{2n}
        
    \end{pmatrix}
\end{equation}

In this rewritten form, $x_t$ becomes the weighted sum of rotated inputs $\tilde{u}$. One can change the indexing of each rotation so that instead of having frequencies relative to the current time step $t$, $\tilde{u}_k = R(u_k, (t-k)\Lambda^{imag})$, they are relative to the initial state, i.e., $\tilde u_k = R(u_k, k\Lambda^{imag})$ without any meaningful loss of expressivity. In this new form, $\tilde u$ becomes the well-known Rotary Position Embedding (RoPE), widely employed in modern LLMs \citep{su2021roformer}. 

Therefore, complex-diagonal SSMs' recurrent states are a weighted sum of RoPE-like positionally embedded inputs, with an exponentially decaying memory. This is an important distinction to make in the context of parametrising ADPTNet. The Lyapunov spectrum that ADPTNet is theoretically guaranteed to control only describes exponential decay rates, not oscillations. Conversely, that means that while it is possible to use a complex-valued $\Lambda$ in Eq.~\ref{eq:state_dependent_similarity}, its oscillations may not materialise as intended in the non-linear dynamics. In contexts such as Section\ref{sec:trained_lyapunov_results}, to emphasise this explicit link between recurrent eigenvalues and the Lyapunov spectrum, decay rates and oscillation frequencies are functionally separated in ADPTNet, becoming:

\begin{equation}\label{eq:state_dependent_Q_redo}
    M_t = Q_t (\pm e^{\bar{\Lambda}^{real}})Q_t^T, \space Q \in O(n), \space \bar{\Lambda}^{real} = \mathtt{diag}(-e^{\Delta t\lambda^{real}_1}, \dots, -e^{\Delta t\lambda^{real}_n})
\end{equation}

\begin{equation}\label{eq:recurrence_fn_redo}
    \begin{split}
         x_{t} &= M_t x_{t-1} + \tilde u_t, \tilde u_k = R(Bu_k, k\bar{\Lambda}^{imag}), \space \bar{\Lambda}^{imag} = \mathtt{diag}(e^{\Delta t\lambda^{imag}_1}, \dots, e^{\Delta t\lambda^{imag}_{n/2}}) \\
         &= (M_tM_{t-1}\dots M_1)\tilde{u}_0 + \dots + M_{t}\tilde{u}_{t-1} + \tilde{u}_t
    \end{split}
\end{equation}

As opposed to vanilla RoPE, the frequency distribution in $\Lambda^{imag}$ is still governed by S4D-Inv initialisation and parametrisation. A similar separation of recurrent eigenvalue imaginary components into a RoPE-based layer is also present in Mamba-3 \citep{lahoti2026mamba}. However, there, the positional embeddings are also input-dependent. \citet{grazzi2025unlocking} recently suggested that including negative eigenvalues may also improve performance on certain tasks, and therefore, they are also included here for investigation. 

This resulting real-value eigenspectrum parametrisation  derived from S4D-Inv by separation from rotational positional embeddings, is referred to, in this study, as \textit{linspace} initialisation, since all $\lambda_i = ln(\frac{1}{2})$ and $\Delta t \in \mathtt{linspace}(\ln(\Delta t_{\min}), \ln(\Delta t_{\max}), 2n)$. Following \citep{gu2022parameterization}, another real-valued eigenvalue initialisation scheme which forgoes rotations completely is S4D-Real, which takes the form: 

\begin{equation} \label{eq:s4d_real}
    \lambda_i = \ln(i)
\end{equation}

Notably, while $\lambda_i$ is initialised deterministically here, $\Delta t _i \sim \mathcal{U}(\ln(\Delta t_{\min}), \ln(\Delta t_{\max}))$. Unless otherwise specified, experiments in this study default to the \textit{linspace} initialisation.

\subsection{Low-Parameter Linear Layers} \label{sec:low_param_linear}

As described in Section~\ref{sec:ADPTNet_def}, ADPTNet recurrence makes use of three linear projection matrices for $u$ at every time step: input ($B$), Key ($K$), and Value ($V$). This means that the width $h$, and, intrinsically, the number of timescales for the dynamics, come with a taxing parameter budget of $3h^2$ just for linear projections alone. This imposes a sharp trade-off between temporal feature expressivity/context compression and overparametrisation. By contrast, for instance, SSMs such as S4 \citep{gu2021efficiently} or S5 \citep{smith2023simplified} avoid this pitfall since they lack $K$ and $V$ projections. Therefore, to maintain comparability with such benchmark baselines in Section~\ref{sec:results}, ADPTNet layers need a more parameter-efficient parametrisation. This work explores two such methods. 

Firstly, Low-Rank Matrices (Eq.~\ref{eq:low_rank_matrix}) are ubiquitous in both machine learning and deep learning research through methods such as Singular Value Decomposition (SVD)-based data compression or Low-Rank Adaptation (LoRA) \citep{hu2021lora} for memory-efficient LLM fine-tuning, respectively. This factorisation reduces the number of parameters from $n^2$ to $2nr$, where typically $r \ll n$. Furthermore, low-rank vector-matrix multiplication $A(B^Tv)$ avoids materialising a full $n \times n$ matrix and thus also reduces memory and compute overheads.

\begin{equation} \label{eq:low_rank_matrix}
    M = AB^T,\space M \in \mathbb{R}^{n \times n}, \space A, B \in \mathbb{R}^{n \times r}, \space r \le n
\end{equation}

Secondly, Kronecker products (Eq.~\ref{eq:kronecker_product}) can also drastically reduce parameter counts, from $n^2$ to $N_a^2 + N_b^2$, where $N_a, N_b \ll n$. Similar to low-rank matrices, Kronecker factorisation also admits faster matrix-vector multiplication without full $n \times n$ materialisation, using the \textit{vec trick} (Eq.~\ref{eq:vec_trick}). In addition, they have also been extensively studied in neural networks, for instance, for Fisher information matrix approximation \cite{martens2015optimizing} or even alternatives to LoRA \citep{edalati2025krona}. 

\begin{equation} \label{eq:kronecker_product}
    M = A \otimes_{Kron}B = \begin{pmatrix}
        a_{11}B & a_{12}B & \dots & a_{1N_a}B \\
        a_{21}B & a_{22}B & \dots & a_{2N_a}B \\
        &  \ddots & & \\
         a_{N_a1}B & a_{N_a2}B & \dots & a_{N_aN_a}B \\
    \end{pmatrix}, \space M \in \mathbb{R}^{N_aN_b \times N_aN_b}, \space A \in \mathbb{R}^{N_a \times N_a}, \space B \in \mathbb{R}^{N_b \times N_b}
\end{equation}

\begin{equation} \label{eq:vec_trick}
    Mx = (A \otimes_{Kron}B)x = ((B^Tx.\mathtt{reshape}(N_a \times N_b))A).\mathtt{reshape}(N_aN_b), \space x \in \mathbb{R}^{N_aN_b}
\end{equation}

Both factorisations inject priors into the networks they parametrise, as visualised in Figure~\ref{fig:low-param_priors}. Figure~\ref{fig:low-param_priors} consists of a compression task, where the goal is to reconstruct the underlying image using low-parameter matrix factorisations. Evidently, low-rank factorisation restricts the weight space to low-dimensional subspaces. This is useful in applications such as Low-Rank RNNs \citep{mastrogiuseppe2018linking}, where the goal is specifically to create interpretable dynamics in a low-dimensional state space. However, as a general-purpose substitute for dense linear layers, this is potentially severely limiting. As shown in Figure~\ref{fig:low-param_priors}, this is intuitively equivalent to over-smoothing an image and losing the fine detail.

Likewise, while Kronecker products are full-rank, since $\mathtt{rank}(A \otimes_{Kron}B) = \mathtt{rank}(A) *\mathtt{rank}(B)$, they suffer from different strong inductive biases. Namely, the asymmetry in $A \otimes_{Kron} B$ creates a global-local resolution trade-off. As can be observed in Figure~\ref{fig:low-param_priors}, a larger $N_a$ places emphasis on variance between image patches, and enables a globally complex reconstruction. Conversely, a larger $N_b$ improves the local resolution of individual patches.  While it fails for a high-variance image such as the one in Figure~\ref{fig:low-param_priors}, it would lead to a higher-fidelity reconstruction of repeating patterns, for instance. In an abstract setting such as a generic linear layer, one cannot know a priori which bias is more appropriate. 

To hedge between these various inductive biases, this study employs the heuristic of combining the two: low-rank plus Kronecker product. The resulting LowParam linear layer (Eq.~\ref{eq:low_param_linear}) is full-rank and admits faster than dense matrix multiplication. As shown in Figure~\ref{fig:low-param_priors}, the low-rank adjustments can also help mitigate the harsh trade-off between $A$ and $B$ in the Kronecker product. While not as direct a sum as here, combining Kronecker products and Low-Rank matrices has been seen before in deep learning research, once again for LoRa-like LLM fine-tuning \citep{shen2025kron}. It should also be noted that while sparsification methods such as pruning are also heavily used, especially in neuromorphic literature \cite{bellec2017deep}, they typically do not provide any speed-ups compared to dense matrices \citep{gale2019state}. 

\begin{equation} \label{eq:low_param_linear}
\texttt{LowParam}(x) =   A(B^Tx) +   (C \otimes_{Kron} D)x
\end{equation}

\begin{figure}[H]
    \centering
        \centering        
        \includegraphics[width=\textwidth]{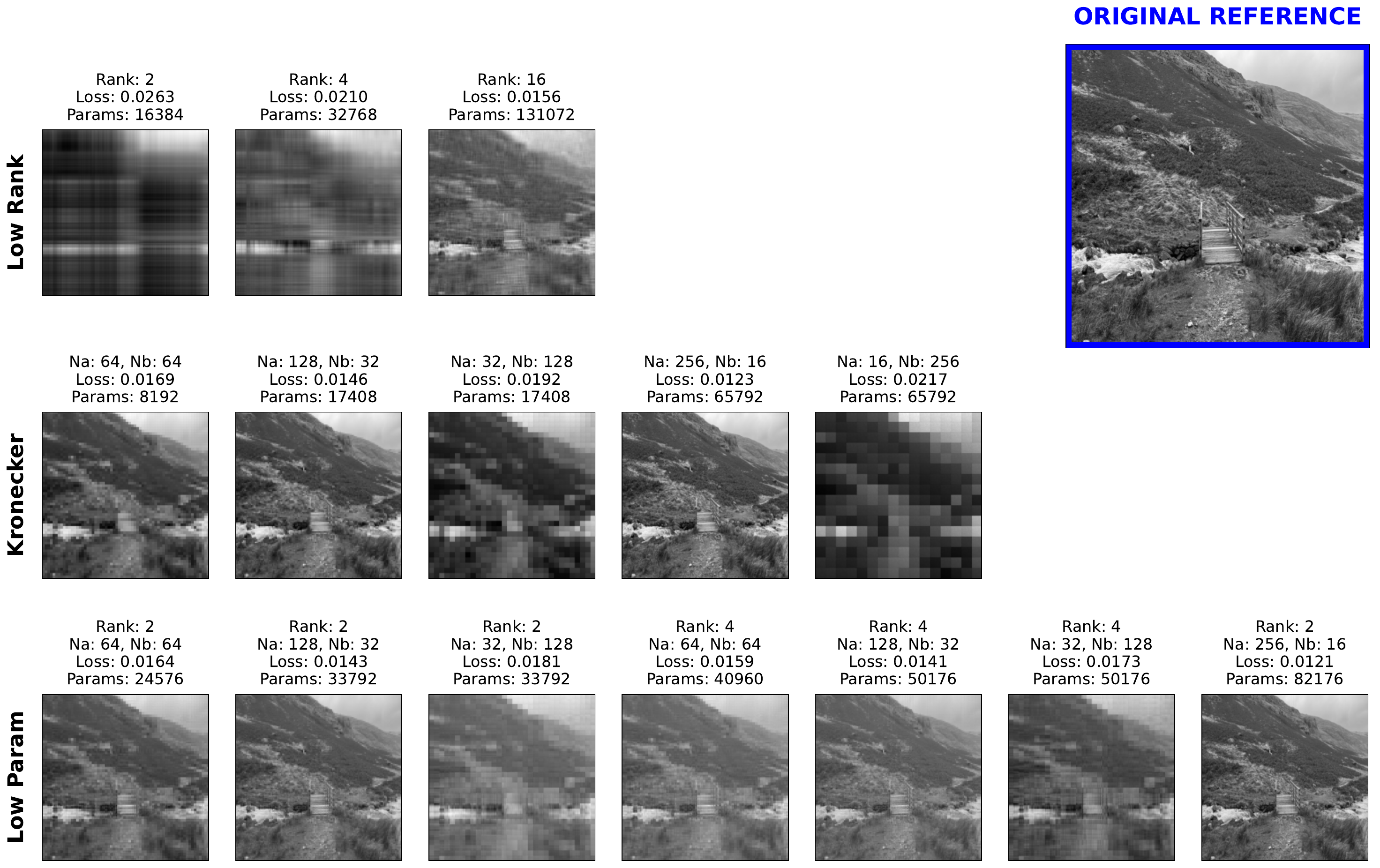}
        \hfill
        \caption{\textbf{Patterns of Image Reconstruction Error} Each row represents a matrix factorisation method, each producing different patterns of distortion when trained with gradient descent to reconstruct a target image on a low parameter budget. The reconstruction loss reported is Mean Squared Error (MSE). On one hand, Low-Rank matrices are shown to oversmooth the image on very low ranks $\in \{ 2, 4\}$. On the other hand, Kronecker products have much more accurate reconstructions on lower parameter budgets, but the performance varies massively depending on the ratio between $N_a$ and $N_b$. The proposed LowParam heuristic is shown to partially mitigate the extremes of Kronecker products with low-rank corrections. }
        \label{fig:low-param_priors}
\end{figure}

Since the goal is for the LowParam layer to behave similarly to a typical dense linear layer, its initialisation is constructed to match the output pre-activation distribution from a Kaiming-initialised weight matrix \citep{he2015delving}. For a square matrix, this entails sampling individual weights from $ \sim \mathcal{U} \left(-\sqrt{\frac{3}{n}}, +\sqrt{\frac{3}{n}}\right)$. The heuristic initialisation solution used here employs the LoRA initialisation scheme for the low-rank component, where $B \equiv \mathbf{0}$ and $A \sim \mathcal{N}(0, \frac{1}{rank})$. For the Kronecker product $C \otimes_{Kron}D$, $C \equiv\mathbf{1}$ and $D \sim \mathcal{U}\left(-\sqrt{\frac{1}{N_aN_b}}, \sqrt{\frac{1}{N_aN_b}}\right)$. As shown in Figure~\ref{fig:pre_activation_distribution}, the output distribution closely matches that of the baseline Kaiming Uniform-initialised dense, with a Kullback-Leibler (KL) divergence $\approx 10^{-2}$. 

\begin{figure}[H] 
    \centering
        \centering        
        \includegraphics[width=0.55\textwidth]{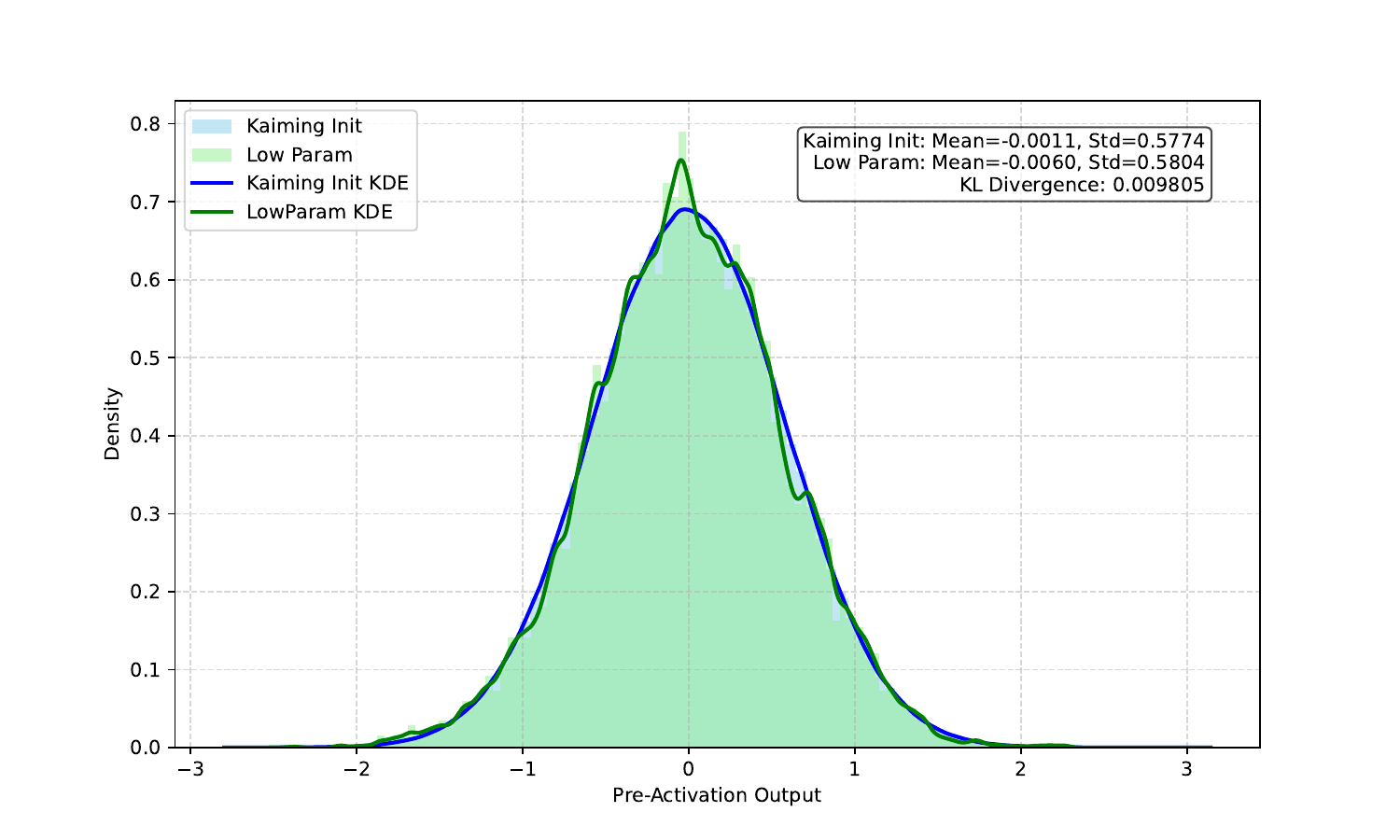}
        \hfill
        \caption{\textbf{Pre-Activation Distribution Comparison between LowParam and Kaiming Initialisation} The plot shows the output distribution of the proposed LowParam heuristic compared to a Kaiming-Uniform initialised dense matrix. The inputs are 1024 random vectors $\in \mathbb{R}^{1024}$ with entries sampled from $\mathcal{N}(0, 1)$. The LowParam layer uses $rank = 16$ for the low-rank component and $N_a, N_b=128, 8$ for the Kronecker product. While less smooth than the baseline, the initialisation scheme closely reproduces the mean and variance of the original, with a KL-divergence of $\approx 10^{-2}$ between the two distributions.}
        \label{fig:pre_activation_distribution}
\end{figure}

\subsection{Memory and Compute-Efficient Matrix Exponential} \label{sec:efficient_exp}

Section~\ref{sec:ADPTNet_def} highlighted that the purpose of using orthogonal matrices in the parametrisation of the topological conjugates is to avoid the computational inefficiency and numerical instability of generic matrix inverses. One may notice, however, that the Cayley Map (Eq.~\ref{eq:cayley_map}) used to map the key-value pairs to the orthogonal manifold also employs a matrix inverse. Because the skew-symmetric matrix $S=kv^T - vk^T$ is rank-2, it could be argued that the inverse does not incur a computational cost since it can be derived efficiently using the Woodbury Identity \citep{woodbury1950inverting}. However, in practice, in this particular case, it suffers from even worse conditioning than baseline 
inversion, and even degrades with wider networks \citep{hashemi2025instability}, thus making it effectively unserviceable. 

A more efficient and effective method for computing the orthogonal retraction is to take full advantage of the idiosyncratic rank-2 structure of $S$ in the matrix exponential function, rather than the Cayley Map approximation. To make the rank-2 structure explicit, $S$ can be rewritten as

\begin{equation*}
    S = EF^T, \space E = \begin{pmatrix}
        k & -v
    \end{pmatrix} \in \mathbb{R}^{n \times 2},
    \space F = \begin{pmatrix}
        v & k
    \end{pmatrix} \in \mathbb{R}^{n \times 2}
\end{equation*}

It is assumed that the $\beta$ step size parameter from Eq.~\ref{eq:cayley_map_rec} is absorbed into $k = \sqrt{\beta} \frac{Kx}{||Kx||}$ and $v = \sqrt{\beta} \frac{Vx}{||Vx||}$. 

To start, \textit{Theorem 1.35} from \citet{higham2008functions} (Eq.~\ref{eq:higham}) provides a means to compute  $f(M)$ for any function $f$, including exponentials, such that if $M\in \mathbb{R}^{n \times n}$ is rank $r < n$ ($M= AB^T, \space A, B \in \mathbb{R}^{n \times r}$), then the problem can be reduced to computing $f$ for a smaller $r \times r$ matrix instead of the full $n \times n$. In this case, the $\alpha$ scalar is set to zero, and $f$ is the matrix exponential function, resulting in Eq.~\ref{eq:eff_exp} for $e^S$. 

\begin{equation} \label{eq:higham}
    f\left(\alpha I_n + AB^T\right) = f(\alpha)I_n + A(B^TA)^{-1}(f(\alpha I_r + B^TA) - f(\alpha)I_r)B^T, \space \alpha \in \mathbb{R}, \space I_k = \mathtt{eye}(k)
\end{equation}

\begin{equation} \label{eq:eff_exp}
        e^{S} = I_n + E(F^TE)^{-1}(e^{F^TE} - I_2)F^T
\end{equation}

\begin{equation} \label{eq:core_matrix}
    C = F^TE = \begin{pmatrix}
        k^Tv & -v^Tv\\
        k^Tk & -v^Tk
    \end{pmatrix}
    =
    \begin{pmatrix}
        k\cdot v & -\beta \\
        \beta  & - k\cdot v
    \end{pmatrix}
\end{equation}

Computing the exponential retraction is thus reduced to finding the inverse and exponential of a $2 \times 2$ matrix $C = F^TE$ (Eq.~\ref{eq:core_matrix}). From Eq.~\ref{eq:core_matrix}, it should be noted that the determinant $\det(C)=\beta^2 - (k \cdot v)^2$ is always $\ge 0$ since $| k \cdot v| \le \left\| k \right\| \left\| v \right\|$. In practice, the determinant is strictly $>0$ since $K$ and $V$ are trained separately, producing distinct $k$ and $v$, and a small perturbation can also be introduced to enforce strict positivity. Because $Tr(C) = 0$ and the determinant is positive, computing $e^C$ simplifies to

\begin{equation} \label{eq:core_exp}
    e^C = \cos(\sqrt{\det(C)})I_2 + \frac{\sin(\sqrt{\det(C}))}{\sqrt{\det(C)}}C
\end{equation}

And $C^{-1} = \frac{-1}{\det(C)}C$. Because of the explicit formulation of $e^S$ in terms of low-rank components, the final orthogonal $Q$ never has to be materialised in full $n \times n$, allowing for memory and compute-efficient matrix-vector multiplication:

\begin{equation*}
    Qx = e^Sx = (I_n + EC^{-1}(e^C - I_2)F^T)x = x + E ((C^{-1}(e^C - I_2))(F^Tx))
\end{equation*}


\subsection{ADPTNet Parallelisation} \label{sec:alt_deer}

\subsubsection{ADPTNet DEER Convergence Properties}

In Lemma~\ref{lemma:le_spectrum}, it is established that the Lyapunov spectrum of ADPTNet is determined by the recurrent eigenvalues $\Lambda$ and the $\beta$ manifold step size parameter. This level of parametric control has secondary benefits in determining the convergence properties of the DEER algorithm  \citep{lim2024parallelizing, gonzalez2024towards} (see Section~\ref{sec:paral_algorithms}) when applied to ADPTNet. 

DEER displays different convergence properties depending on the region of the trajectory guess $s$ state space (Eq.~\ref{eq:residual}). As proven in \citet{gonzalez2026predictability}, DEER is globally guaranteed to converge to the correct trajectory $s^*$ at least linearly. In other words, the distance of the current guess $s^{(i)}$ to $s^*$ decays by at least a global constant factor $\in (0, 1)$ between Newton iterations. Furthermore, DEER can even converge quadratically, i.e., practically in a constant number of iterations, if the residual $r(s^{(i)})$ is sufficiently small. The quadratic convergence condition, as stated in \textit{Theorem 5} from \citet{gonzalez2026predictability}, is defined by the Lipschitz constant $L$ of the recurrent Jacobian and the LLE $\alpha_{max}$  of the system being parallelised: 

\begin{equation} \label{eq:quad_basin}
    \left\|r(s^{(i)}) \right\|_2 < \frac{2}{a^2L}\left(\frac{e^{\alpha_{max}} - 1}{e^{\alpha_{max} T} - 1}\right)^2
\end{equation}

In Eq.~\ref{eq:quad_basin}, $T$ denotes the total number of time steps of the trajectory, and $a \ge 1$ is a constant designed to capture the distortion effect of transient dynamics that may push the observed LLE to be larger than its actual long-term value (i.e.,~overshoot):

\begin{equation} \label{eq:DEER_As}
    \left\| J_{t + k -1} \dots J_t\right\| \le ae^{\alpha_{max}}, \space t > 1, \space k \ge 0
\end{equation}

It should be observed that since the LLE $\alpha$ is present in the definition of the quadratic convergence region, the $\Lambda$ parametrisation in ADPTNet exerts direct influence over it. In practical terms, a lower $\lambda_{max}$ and a sufficiently low $\beta$ can increase the domain where DEER converges quadratically. Moreover, it should be re-emphasised that the Lipschitz constant $L$ for DEER is inherited from the Lipschitz-ness of the recurrent Jacobian $J$ of the dynamics being parallelised (\textit{Theorem 2} from \citet{gonzalez2026predictability}). Concretely, L is defined as either Eq.~\ref{eq:lipschitz_def_1} or Eq.~\ref{eq:lipschitz_def_2} \citep{kim2021lipschitz}: 

\begin{equation} \label{eq:lipschitz_def_1}
    \left\| J(x) - J(y) \right\|_2 \le L \left\| x - y\right\|
\end{equation}

\begin{equation}\label{eq:lipschitz_def_2}
    L = \sup_{x\in \mathbb{R}^n} \left\| \frac{dJ(x)}{dx} \right\|_2
\end{equation} 

Based on Eq.~\ref{eq:lipschitz_def_2}, for ADPTNet dynamics, using Eq.~\ref{eq:J} - \ref{eq:simple_dMdx} and the $A$ and $B$ shorthand notation from Section~\ref{sec:ADPTNet_def}, Lipschitz-ness can be derived as:

\begin{equation} \label{eq:ADPTNet_lip}
    \begin{split}
        L &= \sup_{x\in \mathbb{R}^n}\left\| \frac{dJ(x)}{dx} \right\|_2 \\
          &= \sup_{x\in \mathbb{R}^n}\left\| \frac{d(M + \frac{dM}{dx}x)}{dx} \right\|_2 \\
          &= \sup_{x\in \mathbb{R}^n}\left\|  \frac{dM}{dx} + \left( \frac{dM}{dx} + \frac{d^2M}{dx^2}x \right) \right\|_2 \\
          &= \sup_{x\in \mathbb{R}^n} 
          \left\| 2\left( \beta A \left(\frac{dS}{dx} B - B^T \frac{dS}{dx}\right)A\right) +  \beta \frac{d}{dx} \left( A \left(\frac{dS}{dx} B - B^T \frac{dS}{dx} \right) A\right)x \right\|_2 \\
          &=\beta \sup_{x\in \mathbb{R}^n} 
          \left\| 2\left(A \left(\frac{dS}{dx} B - B^T \frac{dS}{dx} \right) A\right) + \frac{d}{dx} \left( A \left(\frac{dS}{dx} B - B^T \frac{dS}{dx}\right)A\right)x \right\|_2 \\
    \end{split}
\end{equation}

Without having to actually evaluate the upper bound on the norm in Eq.~\ref{eq:ADPTNet_lip}, it can be seen that $L \propto \beta$ ($L = \beta c$, for a constant $c \ge 0$). Therefore, the size of the quadratic convergence region for DEER is itself $\propto \frac{1}{\beta}$. In addition, as highlighted in Lemma~\ref{lemma:le_spectrum}, the LLE $\alpha_{max}$ is always bounded $\in \left[ \ln(\tilde{\sigma}_{min}), \ln(\tilde{\sigma}_{max}) \right]$, regardless of the trajectory length or region in state space.  Therefore, one can define the constant $0\le d\le1$ such that

\begin{equation*}
    \alpha_{max} = d\ln(\tilde{\sigma}_{min}) + (1-d)\ln(\tilde{\sigma}_{max}) 
\end{equation*}

Altogether, the formula for the DEER quadratic convergence basin as a function of ADPTNet parametrisation can be summarised as:

\begin{equation} \label{eq:ADPTNet_quad_conv_basin}
        \left\|r(s^{(i)} \right\|_2 < \frac{2}{a^2\beta c}\left(\frac{e^{ d\ln(\tilde{\sigma}_{min}) + (1-d)\ln(\tilde{\sigma}_{max})} - 1}{e^{(d\ln(\tilde{\sigma}_{min}) + (1-d)\ln(\tilde{\sigma}_{max}))T} - 1}\right)^2
\end{equation}

Eq.~\ref{eq:ADPTNet_quad_conv_basin} has two important implications. Firstly, the lower the extremal eigenvalue parameters $\lambda_{min}$ and $\lambda_{max}$, the lower the bounds on the LLE $\alpha_{max}$ and thus the larger the basin of quadratic convergence. This is intuitive since a lower LLE effectively imposes a more quickly decaying memory for the recurrent dynamics. In turn, any error in the current state guess at time $t$, $s_t^{(i)}$, will influence fewer subsequent $k$ time steps $s_{t:t+k}^{(i+1)}$ in the next Newton iteration (i.e., $\alpha_{max} \propto k$, where $k$ is the number of subsequent time steps affect by the current one).

Conversely, if the LLE is positive, the recurrent dynamics become chaotic, and the quadratic convergence region vanishes. In practice, this chaotic regime forces a number of Newton iterations close to or equal to the total number of time steps $T$ being simulated. If $\beta$ is selected sufficiently low, ADPTNet prevents this by parametrising the eigenspectrum $\Lambda \in [0,\space 1]^{n}$ (see Section~\ref{sec:eigvalue_param}). 

Secondly, a lower $\beta$ implies a larger basin of quadratic convergence. Consider the extreme case where $\beta = 0$, then $L = 0$. $L=0$ is equivalent to a linear system which converges in a single DEER iteration, with an infinitely-sized quadratic convergence region. This can be easily verified since $\beta=0$ implies that $Q=I_n$, and the similarity transform in Eq.~\ref{eq:state_dependent_similarity} becomes $I_n\Lambda I_n^T = \Lambda$, equivalent to an LTI SSM that can be easily parallelised with a single convolution/parallel scan. 

To conclude this section, it should also be mentioned that ADPTNet can also be reduced to linear dynamics by enforcing a constant eigenspectrum $\lambda_{min} = \lambda_{max} = \lambda$. In that case, the recurrent dynamics become $Q\lambda I_nQ^Tx=\lambda QQ^Tx = \lambda x$, regardless of the value of $\beta$. 

\subsubsection{Conv-DEER and Forward-DEER} \label{sec:conv_fwd_deer}

Computing the full $n \times n$ recurrent Jacobians $J_t$ required for vanilla DEER is both computationally and memory intensive, effectively limiting its scalability. \citet{gonzalez2024towards} introduced Quasi-DEER to account for this, replacing the full Jacobian with its diagonal $\mathtt{diag}(J_t)$.

In ADPTNet dynamics, for sufficiently small $\beta$, the Jacobian $J_t  = M + \frac{dM}{dx}x$ is equivalent to the forward dynamics $M$ plus a perturbation $\beta P$ (see Section~\ref{sec:ADPTNet_def}). Taking advantage of this, ADPTNet admits two efficient, Jacobian-free, Quasi-DEER heuristic approximations. In this context, this is understood to mean an approximation scheme for $\mathtt{diag}(J_t)$ which does not require computing any derivatives. 

The most inexpensive approximation approach proposed here is to replace $\mathtt{diag}(J_t)$ with the recurrent eigenvalue spectrum $\Lambda$. This comes at a minimal computational cost, since $\Lambda$ is known a priori for all time steps and only requires to be materialised through ZOH discretisation once per batch \footnote{For more details about exponential parametrisation see Section~\ref{sec:eigvalue_param}}. Furthermore, sharing $\Lambda$ across batched execution provides immediate memory savings proportional to the batch size. Using constant $\Lambda$ at each time step also converts each DEER iteration into an LTI system that can be computed using global and efficient $\mathcal{O}(T\log(T))$ convolutions based on FFTs (see Section~\ref{ssms}), in addition to parallel associative scans. Hence, this method is referred to as Conv-DEER. This opens the door to taking advantage of highly optimised CUDA libraries such as FlashFFTConv \citep{fu2024flashfftconv}, or even potentially using dedicated FFT hardware accelerators \citep{garrido2022survey}. 

To understand why Conv-DEER could be a viable heuristic, it is important to revisit how DEER is implemented in practice \citep{gonzalez2024towards}. First, the term $b$ is computed for each time step:

\begin{equation*}
    b_t = f(s_{t-1}^{(i)}) - A_ts_{t-1}^{(i)}
\end{equation*}

Where $f$ is the recurrent function and $A_t$ is the recurrent Jacobian $J_t$ for DEER, $\mathtt{diag}(J_t)$ for Quasi-DEER, and $\Lambda$ for Conv-DEER. The next trajectory guess $s^{(i+1)}$ is then computed as:

\begin{equation} \label{eq:deer_weighted_sum}
    s_t^{(i+1)} = A_{t-1}\dots A_1b_0 + A_{t-1}\dots A_2b_1 +\dots  + A_{t-1}b_{t-2} + b_{t-1}
\end{equation}

In vanilla DEER, the products of the form $A_tA_{t-1} \dots A_{t-k+1}$ are exactly long-term Jacobians $J_t \dots J_{t-k+1}$ whose effect on $b_{t-k}$, e.g., exponential decay, is described in some capacity by the Lyapunov spectrum and especially the LLE (see Eq.~\ref{eq:DEER_As}). As stated in Lemma~\ref{lemma:le_spectrum}, in the case of ADPTNet the bounds on the entire Lyapunov spectrum are constrained by the choice of $\Lambda$ and $\beta$. Therefore, intuitively, using $\Lambda^{k-1}$ instead of $A_tA_{t-1} \dots A_{t-k+1}$ should produce a similar long-range propagation effect, subject to the choice of $\beta$. 

The second heuristic proposed here is to take the diagonal of the forward recurrent dynamics $\mathtt{diag}(M_t)$ as an approximation to the full Jacobian diagonal $\mathtt{diag}(J_t)$ from Quasi-DEER. Accordingly, this method is referred to as Forward-DEER. For ADPTNet using the low-rank-aware matrix exponential from Section~\ref{sec:efficient_exp}, this does not require materialising the full recurrent matrix $M_t$.  The notation in Eq.\ref{eq:eff_exp} can be updated to emphasise how the exponential is an adjustment to the identity with left $L$ and right $R$ low-rank decomposition terms:

\begin{equation*}
    \begin{split}
        e^{S} &= I_n + E(F^TE)^{-1}(e^{F^TE} - I_2)F^T \\
              &= I_n + LR^T, \space L = E(F^TE)^{-1}(e^{F^TE} - I_2) \in \mathbb{R}^{n \times 2}, \space R = F
    \end{split}
\end{equation*}

The full matrix $M_t$ and its diagonal become: 

\begin{equation*}
\begin{split}
    M &= (I_n + LR^T)\Lambda (I_n + RL^T) \\
      &= \Lambda + \Lambda R L^T + LR^T\Lambda + LR^T\Lambda RL^T
\end{split}
\end{equation*}

\begin{equation} \label{eq:diag_M}
    \mathtt{diag}(M) = \Lambda + 2\Lambda(l_1\odot r_1 + l_2 \odot r_2) + (\hat{l}_1l_1 + \hat{l}_2l_2), \space \hat{L} = L((R^T\Lambda)R)
\end{equation}

Where lower-case $l$, $r$, and $\hat{l}$ denote individual columns of $L$, $R$, and $\hat{L}$ respectively. It should be noted that no full $n \times n$ matrix-matrix or matrix-vector multiplications are required to compute $\mathtt{diag}(M_t)$. 

Both Jacobian-free heuristics proposed in this study evidently introduce approximation errors compared to vanilla or even Quasi-DEER. In this context, the step-wise approximation errors $\epsilon_t^{approx}$ are quantified as the norm of the difference between approximated $\tilde{b}_t$ and the actual $b_t$ obtained with vanilla DEER $A_t = J_t$:

\begin{equation*}
    \begin{split}
         \epsilon_t^{approx} &= \left\| b_t - \tilde{b}_t\right\|_2 \\
                             &=  \left\| f(s_{t-1}) - J_ts_{t-1} - f(s_{t-1}) + A_ts_{t-1}\right\|_2 \\
                             &=  \left\|  (A_t - J_t)s_{t-1} \right\|_2 \\
                             &=  \left\| (A_t - M_t - \frac{dM_t}{ds_{t-1}}s_{t-1})s_{t-1} \right\|_2 \\
    \end{split}
\end{equation*}

On the last line, $J_t$ is instantiated with the ADPTNet Jacobian $J_t = M + \frac{dM_t}{dx_{t-1}}x_{t-1}$ from Section~\ref{sec:ADPTNet_def}. Starting with Conv-DEER where $A_t = \Lambda$ the approximation error takes the form:

\begin{equation} \label{eq:approx_error}
    \begin{split}
        \epsilon_t^{approx} &=  \left\| (\Lambda - M_t - \frac{dM_t}{ds_{t-1}}s_{t-1})s_{t-1} \right\|_2 \\
                            &\le \left (\left\| \Lambda - Q_t\Lambda Q_t^T\right\|_2 + \left\|\frac{dM_t}{ds_{t-1}}s_{t-1}\right\|_2 \right) \left\| s_{t-1}\right\|_2 \\
                            &= (\epsilon_t^{sim} + \epsilon_t^{der}) \left\| s_{t-1}\right\|_2 \\
    \end{split}
\end{equation}

Here, $\epsilon_t^{sim}$ and $\epsilon_t^{der}$ notations are introduced for the approximation errors resulting from the unaccounted effect of the similarity transform (i.e., "misalignment") and missing derivative term, respectively. $\epsilon^{der}_t$ is upper-bounded by $8\beta\lambda_{max}$, as derived in the proof of Lemma~\ref{lemma:J_singular_vals}. For $\epsilon_t^{sim}$, it is important to reiterate that $\Lambda$ is positive real-valued and diagonal, and so it is also, by extension, Hermitian. $Q$ from the similarity transform is orthogonal and thus unitary as well. Therefore, $\Lambda$ and $Q\Lambda Q^T$ exist in the same unitary orbit, and the following bound on $\epsilon_t^{sim}$ applies \citep{hiai1989distance, davidson1988estimating}:

\begin{equation} \label{eq:unitary_orbit_distance}
    \epsilon_t^{sim} = \left\| \Lambda - Q_t\Lambda Q_t^T\right\|_2 \le \lambda_{max} - \lambda_{min}
\end{equation}

Forward-DEER trivially presents the same derivative approximation error $\epsilon_t^{der}$ as Conv-DEER, however, instead of a misalignment error, it introduces an off-diagonal entry error $\epsilon_t^{diag} = \left\| \mathtt{diag}(M_t) - M_t\right\|$. As showcased in Figure~\ref{fig:beta_effect_interactions}, the magnitude of the off-diagonal entries in $M_t$ is directly controlled by $\beta$.

To highlight further distinctions between Forward and Conv-DEER, one can take Quasi-DEER as a baseline and consider the low-rank-aware computation of the matrix exponential:

\begin{equation*}
    \begin{split}
        \mathtt{diag}(J_t) &= \mathtt{diag}(M_t) + \mathtt{diag}(\frac{dM_t}{ds_{t-1}}s_{t-1}) \\
                           &= \Lambda + 2\Lambda(l_1\odot r_1 + l_2 \odot r_2) + (\hat{l}_1l_1 + \hat{l}_2l_2) +  \mathtt{diag}(\frac{dM_t}{ds_{t-1}}s_{t-1}) \\
    \end{split}
\end{equation*}

It can be observed that Conv and Forward-DEER are effectively truncated approximations of the diagonal of the full Jacobian from Quasi-DEER. Furthermore, the difference between the two approximations $\epsilon_t^{fwd} = \left\| 2\Lambda(l_1\odot r_1 + l_2 \odot r_2) + (\hat{l}_1l_1 + \hat{l}_2l_2)\right\|_2 \propto \lambda_{max}$ since $\hat{L}$ also contains $\Lambda$ in its composition. It should also be noted that when using the efficient matrix exponential from Section~\ref{sec:efficient_exp}, the $k$ and $v$ vectors comprising $E$, $F$, and thus $L$ and $R$ as well, each have norm $\sqrt{\beta}$. Hence $\beta$ also determines the bounds of the $\epsilon_t^{fwd}$ approximation error.

So far, the focus has been placed on \textit{local} approximation errors, stemming from computing individual $b_t$ terms. However, each state estimate $s_t^{(i+1)}$ is a weighted sum of all $b_k^{(i)}$ for $k<t$ (Eq.~\ref{eq:deer_weighted_sum}). Therefore, all local errors spread to subsequent time steps:

\begin{equation}\label{eq:weighted_sum_local_errors}
    \tilde{s}_t^{(i+1)} = A_{t-1}\dots A_1\epsilon_0^{approx} + A_{t-1}\dots A_2\epsilon_1^{approx} +\dots  + A_{t-1}\epsilon_{t-2}^{approx}  + \epsilon_{t-1}^{approx}
\end{equation}

Here, $\tilde{s}$ denotes the accumulation of approximation errors in the trajectory guess. The $\epsilon_k^{approx}$ terms are used with a wider scope than originally in Eq.~\ref{eq:approx_error}, since it now differs based on the approximation and baseline methods being compared (see Table~\ref{table:error_summary} for summary) The influence of local error terms $\epsilon_k^{approx}$ over future time steps $\tilde{s}_{k+j}^{(i+1)}$ depends on the long-term products $A_{k+j-1}\dots{A_k}$. This "\textit{transmission}" term can itself introduce errors in the DEER iteration. For instance, if one compares Conv-DEER with vanilla DEER, the $A$-products will themselves differ by at most $(\lambda_{max}- \lambda_{min} + 8\beta\lambda_{max})^{j}$ (Eq.~\ref{eq:approx_error} and \ref{eq:unitary_orbit_distance}). However, given Lemma~\ref{lemma:le_spectrum}, the overall effect of $J_{k+j-1}\dots J_k$ compared to $\Lambda^j$ is tightly related to the LLE of the recurrent dynamics. In other words, the magnitude of a local $\epsilon_k^{approx}$, after being propagated long term over $j$ time steps, will decay at a similar exponential rate $\approx e^{\alpha_{max}j}$. Therefore, the scope here will be limited to measuring the overall magnitude of accumulating local $\epsilon_k^{approx}$ terms, omitting "transmission" errors. 

\begin{table}[h!] 
    \centering
    \renewcommand{\arraystretch}{1.5}
    \begin{tabular}{|l|c|c|}
        \hline
        \diagbox{Baseline}{Approx. Method} & Conv-DEER & Forward-DEER \\
        \hline
        Vanilla DEER & \gape{\makecell{$ \epsilon_k^{sim} \le \lambda_{max} - \lambda_{min}$  \\
        $ \epsilon_k^{der} \le 8\beta\lambda_{max}$}} & N/A \\
        \hline
        Quasi-DEER & \gape{\makecell{
        $\epsilon_k^{fwd} = \left\| 2\Lambda(l_1\odot r_1 + l_2 \odot r_2) + (\hat{l}_1l_1 + \hat{l}_2l_2)\right\|_2 \propto \lambda_{max}, \space \beta$ \\
        $\epsilon_k^{der} \le 8\beta\lambda_{max}$
        }} & \gape{\makecell{ $\epsilon_k^{der} \le 8\beta\lambda_{max}$}}\\
        \hline
        Forward-DEER &\gape{\makecell{
        $\epsilon_k^{fwd} = \left\| 2\Lambda(l_1\odot r_1 + l_2 \odot r_2) + (\hat{l}_1l_1 + \hat{l}_2l_2)\right\|_2 \propto \lambda_{max}, \space \beta$  }} & N/A \\
        \hline
    \end{tabular}
    \caption{Summary of the different local approximation error terms that make up $\epsilon_k^{approx}$ depending on the baseline established method (DEER and Quasi-DEER) and the approximation methods proposed here (Conv and Forward-DEER). The error terms listed are also multiplied by the norm of the state $s_{k-1}$, to obtain $\epsilon_k^{approx} = (\epsilon + \dots) \left\| s_{k-1}\right\|_2$.}
    \label{table:error_summary}
\end{table}

Considering the approximation of the long-term effect of $A_{k+j-1}\dots A_{k} \approx e^{\alpha_{max}j} \le \tilde{\sigma}_{max}^j$, the accumulation of local approximation errors at a given time step $t$ becomes Eq.~\ref{eq:error_per_timestep}. The overall upper bound on the accumulation over all time steps $T$ of the trajectory of local errors $\tilde{S}= \Sigma_{j=1}^T \tilde{s}_j^{(i+1)}$ is Eq.~\ref{eq:total_approx_error}.

\begin{equation} \label{eq:error_per_timestep}
    \tilde{s}_t^{(i+1)} = \tilde{\sigma}_{max}^{t-1}\epsilon_0^{approx} +  \tilde{\sigma}_{max}^{t-2}\epsilon_1^{approx} + \dots + \tilde{\sigma}_{max}\epsilon_{t-2}^{approx} + \epsilon_{t-1}^{approx}
\end{equation}

\begin{equation} \label{eq:total_approx_error}
\begin{split}
            \tilde{S}^{(i+1)} &= (\tilde{\sigma}_{max}^{T-1} + \dots + \tilde{\sigma}_{max}^{1} + 1)\epsilon_0^{approx} + (\tilde{\sigma}_{max}^{T-2} +\dots + \tilde{\sigma}_{max}^{1} + 1)\epsilon_1^{approx} \\
            &\quad + \dots + (\tilde{\sigma}_{max}+1)\epsilon_{t-2}^{approx} + \epsilon_{t-1}^{approx} \\
                    &= \frac{\tilde{\sigma}_{max}^{T} - 1}{\tilde{\sigma}_{max} - 1}\epsilon_0^{approx} + \frac{\tilde{\sigma}_{max}^{T-1} - 1}{\tilde{\sigma}_{max} - 1}\epsilon_1^{approx} \\
                    &\quad + \dots + \frac{\tilde{\sigma}_{max}^{2} - 1}{\tilde{\sigma}_{max} - 1}\epsilon_{t-2}^{approx} +
                    \frac{\tilde{\sigma}_{max}^{1} - 1}{\tilde{\sigma}_{max} - 1}\epsilon_{t-1}^{approx} \\
                    &= \frac{1}{\tilde{\sigma}_{max} - 1}( 
                    (\tilde{\sigma}_{max}^{T} - 1)\epsilon_0^{approx} + (\tilde{\sigma}_{max}^{T-1} - 1)\epsilon_1^{approx}\\
                    &\quad +\dots + (\tilde{\sigma}_{max}^{2} - 1)\epsilon_{t-2}^{approx} + (\tilde{\sigma}_{max}^{1} - 1)\epsilon_{t-1}^{approx} )
\end{split}
\end{equation}

The main takeaway from laying out the sketch of a loose upper bound on accumulating local approximation errors in Eq.~\ref{eq:total_approx_error} is that, perhaps unsurprisingly, it increases with sequence length $T$ and $\tilde{\sigma}_{\max}$, which, in turn, is $\lambda_{max} + 8\beta\lambda_{max}$. In addition, if the overview in Table~\ref{table:error_summary} is also considered, it is possible to make a number of predictions regarding the performance of the proposed approximation schemes. 

Conv-DEER should generally require more Newton iterations than vanilla, Quasi, and Forward DEER. In particular, considering $\epsilon^{sim}$ and $\epsilon^{der}$ error terms, its convergence should be impacted by the spread of the recurrent eigenvalues ($\lambda_{max} - \lambda_{min}$), $\beta$, and the LLE (also determined by $\lambda_{max}$ and $\beta$). This expectation also rests on the intuition that a higher spread of the eigenvalues allows a smaller $\beta$ to have higher variance in decay rates "chosen" per element of the recurrent state $x_t$. Forward-DEER should be expected to be less sensitive to eigenvalue spread and generally track closer to Quasi-DEER than Conv-DEER.  However, a larger LLE also entails, by definition, longer-term memory, which generally "smears" state-guess errors further into the future. Thus, both approximations, much like DEER in general, incur a penalty in convergence speed when the network has long-range memory. 

\subsubsection{Damping} \label{sec:scale_elk_damping}

Gauss-Newton iterative methods such as DEER are known to suffer from instability. Accordingly, as first introduced in \citet{gonzalez2024towards}, a notable DEER stabilisation strategy is to leverage trust regions, i.e. the Levenberg-Marquardt method. The resulting solution, Evaluating Levenberg-Marquardt with
Kalman (ELK), effectively dampens the eigenvalues of the recurrent Jacobians $J_t$ to avoid "exploding" behaviour when computing the $\mathcal{J}^{-1}\text{r}$ product in DEER (see Section~\ref{sec:paral_algorithms}). Furthermore, to minimise computational overhead, as also proposed in \citet{gonzalez2024towards}, eigenvalue damping can also be achieved by simply multiplying recurrent Jacobians by a scalar constant $\in [0, 1]$ (Scale-ELK). 

With the added approximation errors introduced by the Conv and Forward-DEER methods proposed here, numerical instability may also be a challenge. As established in Section~\ref{eq:eig_parametrisation}, ADPTNet is constructed with stable recurrent Jacobians by design. However, throughout DEER convergence, intermediary trajectory guesses $s^{(i)}$ are not subject to the same normalisation constraints as in valid ADPTNet trajectories $s^{*}$. Moreover, Section~\ref{sec:conv_fwd_deer} shows that the magnitude of local approximation errors depends on the norm of these state guesses $s_t$ as well as parameters such as $\beta$ and $\lambda_{\max}$. Propagating unnormalised local errors $\epsilon^{\text{approx}}$ in Eq.~\ref{eq:deer_weighted_sum} may in itself cause instabilities, and thus damping may be necessary. 

Let $\tilde{A_t}^{\text{conv}}$ and $\tilde{A_t}^{\text{fwd}}$ denote Jacobian-free approximations of Quasi-DEER $\text{diag}(J_t)$ from Conv or Forward-DEER respectively. Using Quasi DEER in conjunction with Scale-ELK damping $\mathtt{Damping}(A_t) = kA_t, \space k\in [0, 1]$ on ADPTNet recurrent Jacobians results in:

 \begin{equation} \label{eq:damping_error_and_info}
 \begin{split}
          k(\text{diag}(J_t)) &= k(\text{diag}(M_t)  + \text{diag}(\frac{dM_t}{dx_{t-1}}x_{t-1})) \\
                              &= k(\Lambda +  2\Lambda(l_1\odot r_1 + l_2 \odot r_2) + (\hat{l}_1l_1 + \hat{l}_2l_2) +  \text{diag}(\frac{dM_t}{dx_{t-1}}x_{t-1})) \\
                              &= k (\tilde{A_t}^{\text{conv}} + 2\Lambda(l_1\odot r_1 + l_2 \odot r_2) + (\hat{l}_1l_1 + \hat{l}_2l_2) + \text{diag}(\frac{dM_t}{dx_{t-1}}x_{t-1})) \\
                              &= k(\tilde{A_t}^{\text{fwd}} + \text{diag}(\frac{dM_t}{dx_{t-1}}x_{t-1})) \\
 \end{split}
 \end{equation}

As established in Lemma~\ref{lemma:J_singular_vals}, the bounds on the norm of the derivative term $\frac{dM_t}{dx_{t-1}}x_{t-1}$ and, implicitly, its diagonal, are determined by $\beta$. If $\beta$ is chosen sufficiently small, then $\left\| \frac{dM_t}{dx_{t-1}}x_{t-1}\right\| \ll \tilde A_t^{\text{fwd}}$. Therefore, for a certain damping $k$, $k(\text{diag}(J_t)) \approx k\tilde A_t^{\text{fwd}}$. While including additional approximation error, a similar logic can also be applied to $\tilde{A_t}^{\text{conv}}$. In other words, damping should align Jacobian-free approximations with Quasi-DEER (and similarly vanilla DEER) and lead to more and more similar convergence behaviour as $k$ increases.

\subsubsection{Block-Wise DEER} \label{sec:block_deer}

As proven in \citet{gonzalez2024towards}, even in the worst-case, DEER and its more efficient approximations are guaranteed to converge correctly on the first $i$ states of the trajectory within the first $i$ Newton iterations. If $j$ more Newton iterations are required to converge to the fully correct trajectory $s^*$, compute and memory will still be wasted recomputing the already converged first $i$ steps, at least. When training, for example, all DEER intermediary trajectory guesses have to be stored for back-propagation, imposing a particularly heavy memory cost (Subfigure ~\ref{fig:deer_baseline}). 

In contrast, fully sequential simulation, by definition, only has to compute each time step once, i.e., no wasted resources. However, this evidently comes at the cost of no GPU parallelism and slower wall-clock simulation\footnote{In the event that the dynamics are effectively parallelisable by DEER \citep{gonzalez2026predictability}.}. Therefore, one solution towards a Pareto-optimal balancing between memory utilisation and GPU parallelism during training is to execute DEER in a block-wise sequential manner (Subfigure~\ref{fig:block_savings}). 

Intuitively, the original $T$-length sequence is split into $\frac{T}{B}$ $B$-sized blocks. DEER or DEER-approximate parallelisation is then applied to each individual block. The blocks are sequentially computed and chained together by passing the final state from one as the initial state to the next. In sum, the algorithm pseudocode is summarised in Algorithm~\ref{alg:block_wise_deer}.

\begin{algorithm}
\caption{Block-Wise DEER}\label{alg:block_wise_deer}
\begin{algorithmic}
\Require $T > 0$, $B$, \texttt{inputs}$\in \mathbb{R}^{\text{Batch Size} \times D \times T}$, \texttt{initial state} $\in \mathbb{R}^{\text{Batch Size} \times D}$, \texttt{states}$\in \mathbb{R}^{\text{Batch Size} \times D \times T}$, $\mathtt{n}_{\text{iters}}$
\Ensure $T \% B=0$
\State $N_{\text{blocks}} \gets T / B$
\State \texttt{input blocks} $ \gets \mathtt{inputs.chunk(N_{blocks}, dim=-1)}$
\State \texttt{states blocks} $ \gets \mathtt{states.chunk(N_{blocks}, dim=-1)}$
\State \texttt{states out} $\gets$ \texttt{empty list}
\State $i \gets 0$
\While{$i < n_{\text{blocks}}$}
    \State $j \gets 0$
    \State \texttt{states guess} $\gets$ \texttt{state blocks[i]}
    \While{$j <  n_{\text{iters}}$}
        \State \texttt{states guess} $\gets$ \texttt{DEER(states guess, initial guess, input blocks[i])}
        \State $j \gets j + 1$
    \EndWhile
    \State $i \gets i + 1$
    \State \texttt{initial state = states guess[..., -1]}
    \State \texttt{states out.append(states guess)}
\EndWhile

\Return \texttt{{cat(states out, dim=-1)}}
\end{algorithmic}
\end{algorithm}

\begin{figure}[H]
    \centering
        \begin{subfigure}[b]{0.5\textwidth}
            \centering        
            \includegraphics[width=\textwidth]{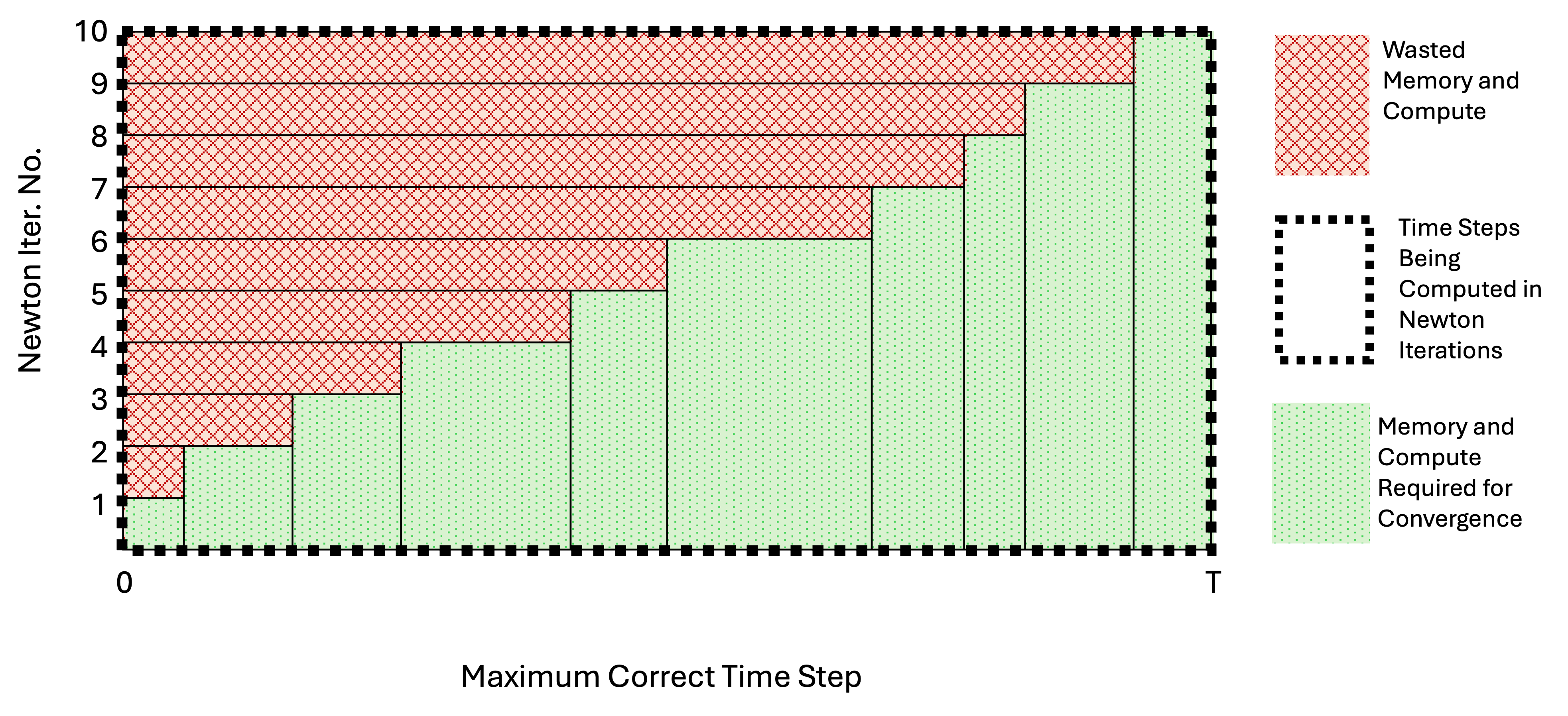}
            \caption{DEER}
            \label{fig:deer_baseline}
         \end{subfigure}
        \begin{subfigure}[b]{0.45\textwidth}
            \centering        
            \includegraphics[width=\textwidth]{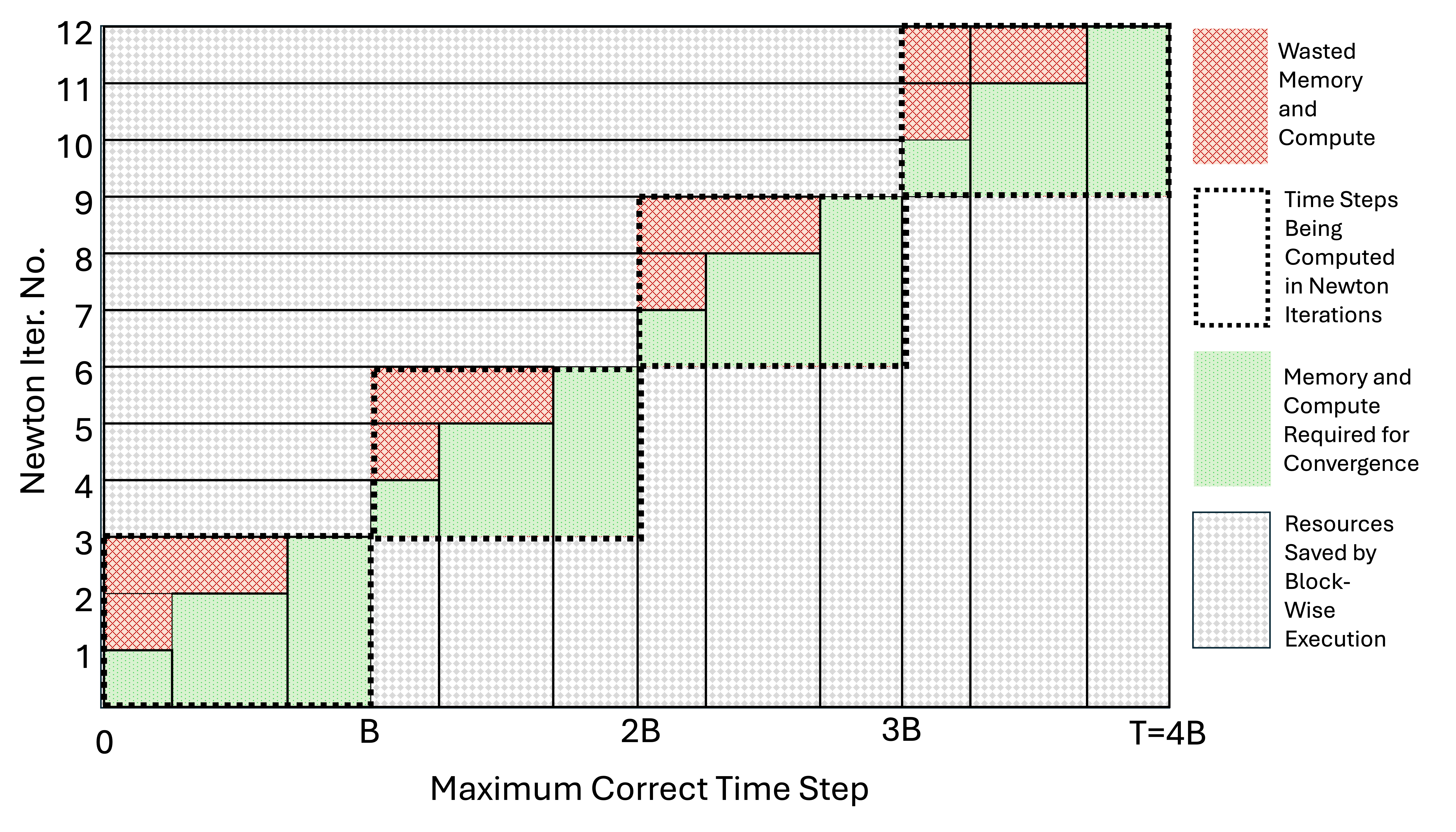}
            \caption{Block-Wise DEER}
            \label{fig:block_savings}
        \end{subfigure}
        
     \hfill
        \caption{\textbf{Saving memory and compute with Block-Wise DEER execution}. Subfigure~\ref{fig:deer_baseline} shows how DEER uses compute and, especially, memory to store intermediary copies of already converged states (the red shaded area). Subfigure~\ref{fig:block_savings} shows how splitting the sequence into 4 equally split sequential blocks, as an example, results in roughly $4\times $ reduction in memory required for storing intermediary Newton iterates.}
        \label{fig:block_wise_deer_comparison}
\end{figure}

Block-wise execution is a prevalent strategy in maximising GPU utilisation in parallel architectures (see Section~\ref{sec:related_parallel}) \citep{dao2022flashattention, fu2024flashfftconv, yang2025gated, liu2023blockwise}. However, the form of Block-Wise DEER investigated here is most similar to the ParaRNN \citep{danieli2025pararnn}. There, block-wise sequential processing for DEER-like algorithms is discussed in more detail with respect to hardware I/O optimisation, rather than DEER idiosyncrasies. 

\subsection{State Expansion}\label{sec:state_expansion}

In this study, unless otherwise specified, inputs $u_t$ in the ADPTNet recurrent step are dense linear projections of the overall input to the layer $o_t\in\mathbb{R}^{h}$ ($u_t = Wo_t, W \in \mathbb{R}^{h \times h}$). However, SSMs such as S4, the LRU, or Mamba \citep{gu2021efficiently, orvieto2023resurrecting, gu2023mamba} typically rely on parameter-efficient projections that expand the size of the recurrent state to $n$ by a factor $n_{\text{state}}$ compared to the hidden dimension $h$ of the network ($n=n_{\text{state}}h$), to allow for higher memory capacity. The projections are implemented with vector-scalar multiplications per input dimension $u_{i:i+n_{\text{state}}-1}[t] = w_i \cdot o_i[t], \space w \in \mathbb{R}^n_{\text{state}}$ \footnote{It should be noted, for notational convenience, that $u_t$ refers to the entire input vector at time step $t$, whereas $u_i[t]$ denotes the $i^{\text{th}}$ dimension of the input at time $t$. }.

\subsection{Input Normalisation} \label{sec:selective_ADPTNet}

A crucial consideration in systems with slowly fading memory, such as SSMs or ADPTNet, is preventing recurrent state magnitude explosion. In fact, \citet{orvieto2023resurrecting} showed how input and state normalisation at initialisation can determine whether SSMs converge beyond random accuracy or not on long-range sequence modelling tasks. One common solution is to adopt leaky integration (see Section~\ref{SNN_formula}) or exponential moving average (EMA) parametrisation \citep{ma2022mega}:

\begin{equation} \label{eq:generic_ema}
    x_t = \bLb x_{t-1} + (1-\bLb) u
\end{equation}

Another normalisation technique, used in \citet{orvieto2023resurrecting} to construct the Linear Recurrent Unit (LRU), is: 

\begin{equation} \label{eq:lru_normalisation}
    x_t = \bLb x_{t-1} + (\sqrt{I_n - \bLb ^2}) u_t
\end{equation}

Finally, one can also tie input normalisation to the step size $\Delta t$, by extending ZOH discretisation to inputs as proposed in \citep{gu2022parameterization}: 

\begin{equation}\label{eq:zoh_normalisation}
    x_t = \bLb x_{t-1} + \frac{(e^{-e^{\Delta t}e^{{\Lambda}}}-1)}{e^\Lambda}u_t
\end{equation}

For simplicity, from now on let $\gamma \in \mathbb{R}^{n}$ denote the element-wise input normalisation term in the ADPTNet recurrent step, either $\gamma_i = 1 - \lambda_i$ for EMA, $\gamma_i = \sqrt{1 - \lambda_i ^2}$ for LRU normalisation, or $\gamma_i = \frac{(e^{-e^{\Delta t_i}e^{{\lambda_i}}}-1)}{e^\lambda_i}$ for ZOH. In addition, following \citet{orvieto2023resurrecting}, $\gamma$ can either be set as a function of $\Lambda$ throughout training, or simply initialised in relation to $\Lambda$ and then allowed to be trained independently.

In the context of ADPTNet, a further consideration is the similarity transform applied to the recurrent eigenspectrum $Q\Lambda Q^T$. As a consequence of the rotations, there is no element-wise correspondence between $x_{t-1}$ and $u_t$ to perfectly match individual $\gamma$ terms. However, several strategies can be investigated for mitigation in the eventuality it is necessary.

Firstly, one could apply a rotation to the input as well to preserve element-wise matching between $\Lambda$ and $1-\Lambda$. This rotation-aware normalisation could take the form of:

\begin{equation} \label{eq:rotation_aware_normalisation}
    x_t = Q_t (\Lambda Q_t^Tx_{t-1} + \gamma \odot  u_t)
\end{equation}

It should be observed that any form of rotation-aware normalisation introduces an additional term in the recurrent Jacobian that has a norm proportional to the input $u_t$ (Equation~\ref{eq:broken_jacobian_ran}). Hence, this breaks the Jacobian norm bound guarantees derived in Lemma~\ref{lemma:J_singular_vals} and thus does not trivially inherit the theoretical properties of ADPTNet. 

\begin{equation} \label{eq:broken_jacobian_ran}
    J = M + \frac{dM}{dx}x + \mathbf{\frac{dQ}{dx} (\gamma \odot u)}
\end{equation}

A second solution would be not to consider the effect of the EMA mismatch. It could be argued that the 2-norm bounds of the recurrence step are invariant to any rotation applied to the input, and thus the recurrence can remain unchanged: 

\begin{equation*}
    x_t = Q_t \Lambda Q_t^Tx_{t-1} + \gamma \odot u_t
\end{equation*}

Finally, it is important to reiterate that \citet{orvieto2023resurrecting} found significant performance degradation due to a lack of proper normalisation when testing on long sequences, with eigenvalues close to the unit circle. Conversely, if $\Lambda$ is initialised for quick decay, the problem should be less pressing. Therefore, one could take inspiration from adLIF neurons (Subsection~\ref{sec:long_range_snn}) \citep{bittar2022surrogate} to separate long-range and short-range timescales ($\Lambda$) into an ADPTNet non-linear module, and a long-term EMA linear memory ($\Gamma$): 

\begin{equation} \label{eq:coupled_ADPTNet}
    \begin{split}
        x_t &= M_tx_{t-1} + \gamma \odot (u_t + \text{gain} \odot w_{t-1}) \\
        w_t &=  \Gamma w_{t-1} + (1-\Gamma)x_t
    \end{split}
\end{equation}

The model resulting from Eq.~\ref{eq:coupled_ADPTNet}, named here \textit{CoupledADPTNet}, reverses the tradition of focusing on linear memory units to complement non-linear RNNs \citep{voelker2019legendre}. Here, the secondary purpose of the CoupledADPTNet is to study the robustness of the proposed non-linear dynamics to coupling with another system as a function of the feedback gain term, rather than the long-term memory capacity of the linear subnetwork, which is essentially a standard SSM.  

\subsection{Input Selectivity} \label{sec:coupled_ADPTNet}

On tasks requiring in-context adaptation or selectivity, to match the formulation of selective SSMs such as Mamba, ADPTNet can be equipped with a data-dependent element-wise input gate ($i$):

\begin{equation} \label{eq:input_gate_ADPTNet}
    x_t = M_tx_{t-1} + i(u_t) \odot \gamma \odot u_t
\end{equation}

Following the convention from Mamba, the input gate $i(u_t)=\sigma(UD^Tu_t)$, where low-rank $U, D\in\mathbb{R}^{n \times r}$, and $\sigma:=\mathtt{softplus}$.

\subsection{ADPTNet Taylor Series Approximation} \label{sec:taylor_ADPTNet}

For low $\beta$, the computational cost of the ADPTNet recurrence (Sec.~\ref{sec:ADPTNet_def}) can be reduced by approximating the similarity transform with its truncated Taylor series expansion. For any arbitrary matrix $X$ and orthogonal matrix $Q=e^{\beta S}$, where $S$ is skew-symmetric, using the Baker-Campbell-Hausdorff formula \citep{campbell1896law}, a similarity transform can be expanded as:

\begin{equation}
    QXQ^T = e^{\beta S}Xe^{-\beta S} = X + \beta [X, S] + \beta^2 * [X, [X, S]] + \dots
\end{equation}

Where $[X, S]$ denotes the commutator $XS - SX$. Accordingly, the ADPTNet recurrence, for sufficiently low $\beta$, can be first-order approximated as: 

\begin{equation} \label{eq:taylor_ADPTNet}
\begin{split}
     x_{t+1} &= M_{\text{Taylor}}x_t + \gamma u_{t+1} \\
     M_{\text{Taylor}} &= \bLb + \beta(\bLb(kv^T - vk^T) - (kv^T - vk^T)\bLb)
\end{split}
\end{equation}

It should be noted that matrix exponentials are no longer required. Furthermore, the diagonal of the commutator term cancels out in the subtraction, and thus $M_{\text{Taylor}}$'s diagonal is exactly the parametrised eigenspectrum, while off-diagonal entries are directly scaled by $\beta$. Hence, for \textit{TaylorADPTNet}, Conv and Forward DEER are identical. 

\subsection{Linear ADPTNet}\label{sec:linear_ADPTNet}

One can notice that the topological conjugation backbone of ADPTNet's recurrence does not inherently have to be non-linear. Removing the state dependence, the derivative term $\frac{dM}{dx}x$ vanishes, and the Lyapunov spectrum of the now LTV system is identical to its eigenspectrum. Therefore, to isolate the expressivity of the similarity transform itself, separate from any non-linear recurrence effects, \textit{LinearADPTNet} can be used: 

\begin{equation}\label{eq:linear_ADPTNet}
    \begin{split}
        x_{t+1} &= M_{\text{Linear}}x_{t} + \gamma u_{t+1} \\
    M_{\text{Linear}} &= Q(u_{t+1})\bLb Q(u_{t+1})^T, \space k = \frac{Ku_{t+1}}{\norm{Ku_{t+1}}}, \space v = \frac{Vu_{t+1}}{\norm{Vu_{t+1}}}
    \end{split}
\end{equation}

Importantly, as an LTV system, LinearADPTNet can be parallelised using a single parallel associative scan \citep{gu2023mamba}. However, this entails materialising all recurrent $n \times n$ $M_{\text{Linear}}$ matrices, at a memory cost of $\mathcal{O}(Tn^2)$, where $T$ is the sequence length. Furthermore, all materialised recurrent weights have to be multiplied with $\mathcal{O}(Tn^3)$ computational cost. Therefore, one could still use Conv or Forward DEER in this context, to propagate off-diagonal entries. The $k$ and $v$ projection can be computed once at the first iteration and cached for the entire simulation, since they do not depend on state trajectory guesses. Then, the memory cost becomes $\mathcal{O}(Tn)$ for all DEER iterations\footnote{This cost assumes gradient checkpointing is applied during training, and intermediary DEER iterates are not stored for the backward pass. The result applies universally during inference.}. In terms of computation, the new scaling $\mathcal{O}(n_{\text{iter}}Tn)$ is advantageous as long as the number of DEER iterations required for convergence $n_{\text{iter}} < n$. To the author's knowledge, Quasi-DEER or any diagonal form of DEER has not previously been used a means for more efficient full-matrix LTV system parallelisation. 

As with non-linear ADPTNet, for small enough $\beta$, LinearADPTNet can also be approximated to first order with a Taylor Series expansion. Interestingly, in this case, since $\frac{dM}{dx}x = 0$, Quasi, Conv, and Forward DEER are all equivalent. 

\subsection{Spiking ADPTNet}\label{sec:spiking_ADPTNet}

To materialise the neuromorphic inspiration from the auditory cortex (Section~\ref{sec:intro}), ADPTNet layers can be equipped with position-wise spiking activations for information propagation across across model depth \citep{stan2024learning}. Using Eq.~\ref{spike_fn}, the output of each layer becomes: 

\begin{equation}
    \text{out}_i\text{[t]} = s(x_i[t]) = \begin{cases}
        0, \space x_i[t] < \theta \\
        1, \space x_i[t] \ge \theta
    \end{cases}
\end{equation}

In Section~\ref{sec:ssc}, where ADPTNet-based SNNs are employed, following \citet{fabre2025structured}, the Boxcar surrogate gradient is used in the backward pass: 

\begin{equation}
\frac{ds}{dx} = \begin{cases}
    0, \space x \notin [-0.5, 0.5) \\
    1, \space x \in (-0.5, 0.5)
\end{cases}
\end{equation}

\subsection{Fine-Grained Control over ADPTNet Non-Linearity} \label{sec:controlled_dot}

The efficient matrix exponential function from Section~\ref{sec:efficient_exp} reveals how the non-linearity, or the level of off-diagonal interaction, is heavily influenced by the choice of $\beta$, but not entirely. More specifically, if one revisits Eq.~\ref{eq:eff_exp} and \ref{eq:core_exp}, it can be observed that the off-diagonal $E$ and $F$-derived entries are scaled by $\sin(\sqrt{\beta ^2 - \beta^2(k\cdot v)^2})$, where the $\sqrt{\beta}$ scaling applied to the normalised $k$ and $v$ is explicitly taken out. As $\beta$ approaches $0$, the $\sin$ also follows, and thus off-diagonal non-linear integrations vanish. However, similarly, if the dot product $k \cdot v$ approaches $1$, i.e., $k$ and $v$ become identical, the non-linear interaction also disappears. Therefore, one could completely control the level of non-linearity in ADPTNet layers by parametrically setting $k\cdot v$ to a value $p$:

\begin{equation}\label{eq:parametric_kv_dot}
\begin{split}
    \hat{k} &=  \frac{(I-vv^T)k}{\norm{(I-vv^T)k}} \\
    k &\leftarrow (1-p^2)\hat{k} + pv
\end{split}
\end{equation}

\section{Results}\label{sec:results}


        

\subsection{Lyapunov Spectrum} \label{sec:lyapunov_results}

Lemmas~\ref{lemma:le_spectrum} and ~\ref{lemma:le_exact_match} provide claims regarding the relationship between the parametrised "strength" of the systems' non-linearity as expressed through $\beta$, the recurrent eigenvalue spectrum $\Lambda$ and the effective timescales of the non-linear ADPTNet dynamics measured through its Lyapunov spectrum. This section provides empirical evidence for these specific claims while also testing their robustness to less strict parametrisation assumptions.

\subsubsection{Untrained Orthogonal RNN Baseline}

As briefly discussed in Section~\ref{sec:orthogonal}, one can parametrise the recurrent weights of RNNs with orthogonal matrices to ensure a degree of stability and long-term memory \citep{arjovsky2016unitary}. Before showcasing empirical evidence supporting the theoretical properties of ADPTNet from Section~\ref{sec:ADPTNet_def}, it is helpful to examine Orthogonal RNNs as a baseline.

Consider a vanilla recurrence rule for Orthogonal RNNs:

\begin{equation} \label{eq:orthogonal_rnn_expriment}
    x_{t} = \mathtt{ReLU}(Qx_{t-1} + Bu_t)
\end{equation}

Where $Q \in O(n)$, $B \in \mathbb{R}^{n \times n}$, $x, u \in \mathbb{R}^{n}$. As long as there are no silent time steps, where the ReLU activation is $\equiv \mathbf{0}$, Orthogonal RNNs of this form are guaranteed to have at least the LLE $\alpha_{max} = 0$. Crucially, there are no theoretical bounds for the rest of the Lyapunov spectrum in terms of how low they can be. Figure~\ref{fig:orthogonal_rnn_lyapunov} visualises how this limitation materialises in practice. The Figure shows the entire Lyapunov spectrum\footnote{Computed using a numerically stable algorithm as described in Lemma~\ref{lemma:le_exact_match}} of a single-layer untrained Orthogonal RNN with 128 hidden units. The network is driven by random inputs $\sim \mathcal{N}(0, 1)$ of lengths $\in \{64, 4096, 16384\}$. It should be mentioned that the longer the input sequence is, the better the long-term dynamics of the system can be observed and Lyapunov spectrum estimated \citep{engelken2023lyapunov}.

Here, the most relevant takeaway is that while the entire singular value spectrum of $\sigma(Q)=1$, the Lyapunov spectrum only partially aligns with $\ln(\sigma(Q)) =0$. In fact, as can be seen in Figure~\ref{fig:orthogonal_rnn_lyapunov}, only approximately half of the spectrum is close to $0$, tied to the sparsity of ReLU activations at initialisation. Moreover, while the network can be regularised during training, there are no parametrised fine-grained controls over the sparsity of the activations. Therefore, as is also observed in Figure~\ref{fig:orthogonal_rnn_lyapunov}, short-term time scales are effectively vanishing, with no controls over their decay rate ($e^{-40}$). Even if a smoother activation function (e.g., sigmoid, GELU) were considered, one would still not have any more control over the lower bounds of the spectrum, and would also lose the LLE properties of ReLU. 

\begin{figure}[H] 
    \centering
        \centering        
        \includegraphics[width=\textwidth]{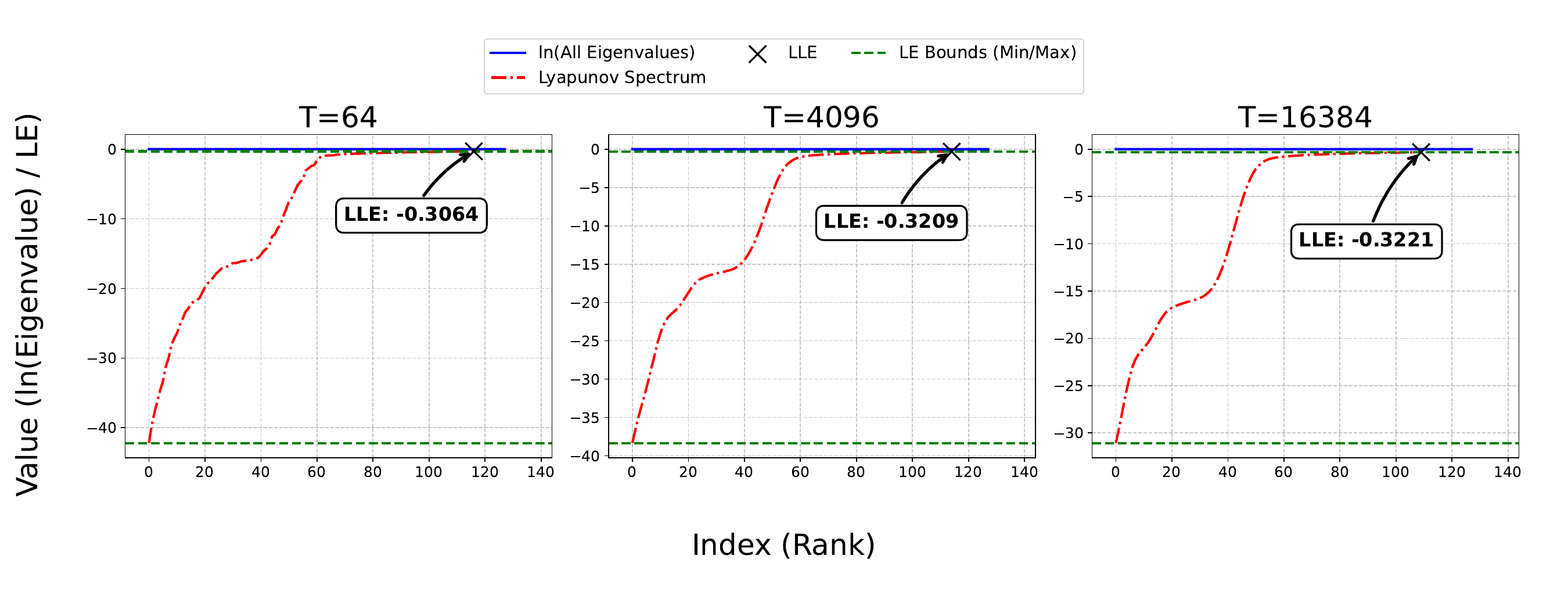}
        \hfill
        \caption{\textbf{Lyapunov Spectrum of Orthogonal RNN at Initialisation} Each subfigure shows the estimated Lyapunov spectrum of a single-layer untrained Orthogonal RNN (Eq.~\ref{eq:orthogonal_rnn_expriment}) at initialisation with hidden size $=128$ for a given random input of length $\in \{64, 4096, 16384\}$. The blue line shows the logarithm of the eigenspectrum of the orthogonal recurrent matrix, which $= \mathbf{1}$, and is only partially aligned with the Lyapunov spectrum of the system. }
        \label{fig:orthogonal_rnn_lyapunov}
\end{figure}

\subsubsection{ADPTNet at Initialisation} \label{sec:untrained_lyapunov_results}

This section examines the robustness of the claims regarding the parametric control over ADPTNet's Lyapunov spectrum at initialisation. Here, untrained networks are driven by inputs that are randomly drawn $\sim \mathcal{N}(0, 1)$. Unless otherwise specified, all networks tested used the configuration in Table~\ref{table:baseline_untrained_config}. Algorithm~\ref{algo:stable_lyapunov_comp} is used to compute the Lyapunov Spectra, extracting Jacobian singular values at each time step. 

\begin{table}[h!] 
    \centering
    \renewcommand{\arraystretch}{1.5}
    \begin{tabular}{|c|c|c|c|c|c|c|c|c|c|c|}
        \hline
        Rot.-Aware & Neg. Eigvals. & $\gamma$ & RoPE & D & Exp. & KV &  $(\lambda_{\min},\lambda_{\max})$ & $(\Delta t_{\min}, \Delta t_{\max})$\\
        \hline

        \ding{55} & \ding{55} & $\sqrt{1 -\bar\lambda^2}$ & \checkmark & 16 & Low Rank & Ortho. & $(2, 100)$ & $(10^{-4}, 10^{-4})$\\
        \hline
    \end{tabular}
    \caption{\textbf{Baseline Untrained ADPTNet Configuration} \textit{Rotation-Awareness} refers to the correction for the element-wise mismatch from the similarity transform (Eq.~\ref{eq:rotation_aware_normalisation}). \textit{Negative Eigenvalues} can be included as mentioned in Section~\ref{sec:eigvalue_param}. The \textit{RoPE} column indicates whether the custom RoPE embeddings with S4D-Inv parametrisation are included. Otherwise, no positional embeddings are used. \textit{D} refers to the hidden state dimension. \textit{Matrix Exponential (Exp.)} refers to whether the low-rank structure-aware formula (Section~\ref{sec:efficient_exp}) or the Cayley Map is used (Section~\ref{sec:orthogonal}). $\gamma $ refers to the input normalisation scheme used, EMA or LRU (Section~\ref{sec:selective_ADPTNet}). \textit{KV} refers to the matrix parametrisation used for the $K$ and $V$ linear projections: baseline unconstrained (Kaiming initialisation), orthogonal, or LowParam (Section~\ref{sec:low_param_linear}). Using the ZOH discretisation scheme (Section~\ref{sec:eigvalue_param}) eigenvalues $\bar{\lambda_i} = e^{-\lambda_i\Delta t_i}$, where $\lambda_i$ is $\sim \mathcal{U}(\lambda_{\min}, \lambda_{\max})$ and $\Delta t_i$ is obtained with $\mathtt{linspace}(\Delta t_{\min},\Delta t_{\max}, D)$. }
\label{table:baseline_untrained_config}
\end{table}

\textbf{Sequence Length and Hidden State Size} Figure~\ref{fig:time_ablation} shows the effect of the input sequence length on the measured Lyapunov spectrum. Firstly, and most crucially, it should be noted that all Lyapunov exponents are well contained within the spread of the logarithms of the recurrent eigenspectrum, implicitly meeting the bounds set out in Lemma ~\ref{lemma:le_spectrum}. Secondly, the hidden state size $D$ of an ADPTNet layer does not affect the alignment between the observed Lyapunov spectrum and the parametrised eigenvalues. Finally, as expected, the input sequence length distorts the Lyapunov exponents measured from the network, with short sequences effectively "squeezing" the spectrum. Intuitively, there is not sufficient long-term information to extract the most slowly evolving timescales, as by definition Lyapunov exponents become more accurate with $T\to \infty$ (Section~\ref{sec:lyapunov_exponents}). Furthermore, in the longest sequence length configuration, $\alpha_i \approx \ln (\bar\lambda_i)$, supporting the claims in Lemma~\ref{lemma:le_exact_match} as well. 

\textbf{Robustness of $K$ and $V$ Linear Projection Parametrisation to Choice of $\beta$} In Lemma~\ref{lemma:J_singular_vals}, it is assumed that $K$ and $V$ are constrained to orthogonal matrices which results in $\left\| K\right\|_2 = \left\| V\right\|_2 = 1$. This is helpful because it allows the norms of several derivative terms to cancel and produce the final $8\beta \bar\lambda_{max}$ factor, which is independent of network width, state $x$ norm, etc. However, in practice, constraining dense and full-rank linear layer weights to the orthogonal manifold is computationally expensive and slow, while also effectively halving the degrees of freedom ($\frac{D(D+1)}{2}$) compared to unconstrained $GL(D)$ matrices. Assuming an arbitrary invertible parametrisation , the norms of the $\frac{dk}{dx}$ and $\frac{dv}{dx}$ terms inside $\norm{\frac{dM}{dx}x}$ in Eq.~\ref{eq:lemma1_result} become: 

\begin{equation*}
\norm{Kx} \ge \sigma_{\min}(K)\norm{x} \Rightarrow \frac{1}{\norm{Kx}} \le \frac{1}{\sigma_{\min}(K)\norm{x}} \Rightarrow \frac{\norm{K}\norm{x}}{\norm{Kx}} \le \frac{\sigma_{\max}(K)\norm{x}}{\sigma_{\min}(K)\norm{x}} = \kappa(K) \Rightarrow
\end{equation*}

\begin{equation} \label{eq:KV_update_lemm1}
\begin{split}
       \norm{\frac{dM}{dx}x} &\le \beta \lambda_{\max}\norm{\frac{dk}{dx} + \frac{dv}{dx}}\norm{x} \\
       &\le  4\beta\lambda_{\max}\left(\frac{\norm{K}\norm{x}}{\norm{Kx}} + \frac{\norm{V}\norm{x}}{\norm{Vx}}\right) \le 4\beta\lambda_{\max}(\kappa(K) + \kappa(V))\\
\end{split}
\end{equation}

 Where $\kappa(K)=\frac{\sigma_{\max}(K)}{\sigma_{\min}(K)}$ denotes the condition number of $K$. In other words, if $K$ or $V$ are ill-conditioned at initialisation or become ill-conditioned during training, the bounds on the Lyapunov spectrum of the network also become looser. As with other random matrices, Kaiming-initialised dense linear weights have condition numbers that, on average, scale linearly with network width, but may have large outliers \cite{edelman1988eigenvalues}. Furthermore, if the weights are low-rank or non-invertible, the bounds are undetermined $< \infty$. Using the initialisation scheme from Section~\ref{sec:low_param_linear}, a LowParam linear layer would in theory have an undefined condition number\footnote{The low-rank component would be all-\textbf{0} at initialisation. The remaining Kronecker component is a product between a dense random matrix and an all-\textbf{1} matrix, which results in an undefined division by zero when computing the condition number.}. Either unpredictable or undefined, one cannot directly predict the conditioning of the two alternative orthogonal parametrisations considered here. 

Regardless, Figure~\ref{fig:kv_ablation} shows that ADPTNet dynamics are generally robust to this choice. More specifically, for a lower $\beta=0.125$, all parametrisation schemes display Lyapunov spectra $\ln(\bar\lambda_{\min}) \le \alpha_i \le \ln(\bar\lambda_{\max})$. With $\beta=1$, all parametrisations are still generally close to the original $\ln(\bar\Lambda)$. A more significant gap between orthogonal and alternatives becomes apparent with a large, and perhaps unrealistic, $\beta=1000$. In terms of the most quickly decaying timescales, while orthogonal constraints show a $\approx -0.002$ "undershoot", LowParam and dense have an order of magnitude higher discrepancies of $< -0.04$ and $<-0.01$, respectively. Moreover, both alternatives technically display chaotic dynamics since their LLEs are $>0$, while the orthogonal-constrained configuration is still stable. It is important to note, however, that all configurations are still well within the bounds prescribed by Lemma~\ref{lemma:le_spectrum}. In the $\beta=1$ case, the orthogonal parametrisation for which they are guaranteed, the theoretical bounds would be roughly $- \infty < \alpha_i < \ln(\bar\lambda_{\max} + 8*\bar\lambda_{\max})$, which are evidently far outside the observed behaviour and thus respected, and likewise for the more extreme case $\beta =1000$. One could argue that this provides evidence that mitigation methods such as spectral normalisation are not necessary for ADPTNet, especially in the more realistic range of $\beta \ll 1$. 

\begin{figure}[H] 
    \centering
        \centering        
        \includegraphics[width=\textwidth]{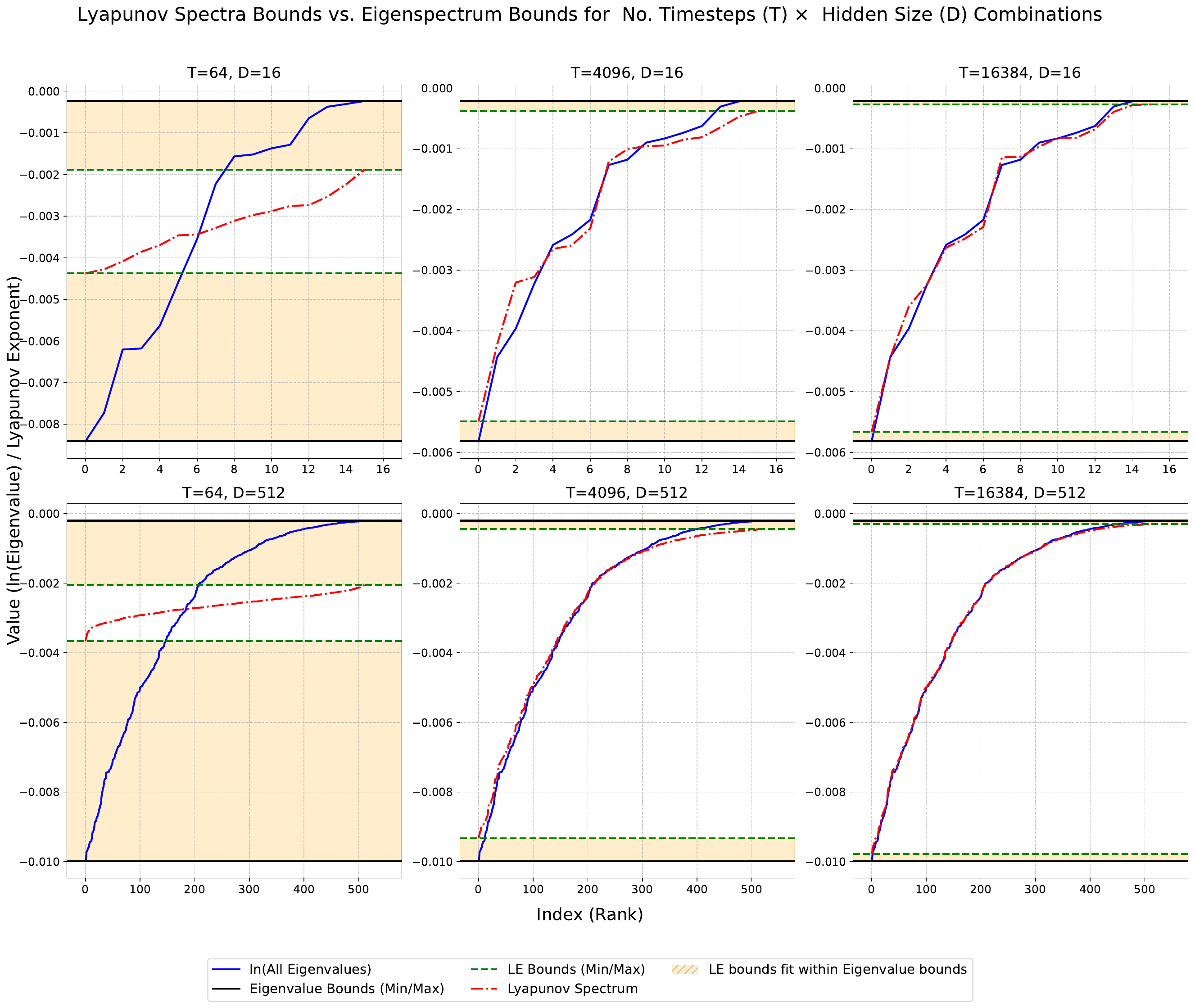}
        \hfill
        \caption{\textbf{Effect of Time Steps and Hidden Size on Lyapunov Spectrum} Each subfigure in the grid represents the recurrent eigenvalues $\Lambda$ and Lyapunov spectrum of untrained ADPTNet layers driven by random input of different sequence lengths $T \in \{ 64, 4096, 16384\}$. Each subfigure is obtained using a different randomly initialised network. Both D configurations have $\beta =0.125$. It should be observed how all Lyapunov exponents $\alpha_i \approx \ln (\lambda_i)$ over long sequences, and are universally contained within the bounds of the recurrent eigenspectrum as shown by the yellow-shaded areas.}
        \label{fig:time_ablation}
\end{figure}

\begin{figure}[H] 
    \centering
        \centering        
        \includegraphics[width=\textwidth]{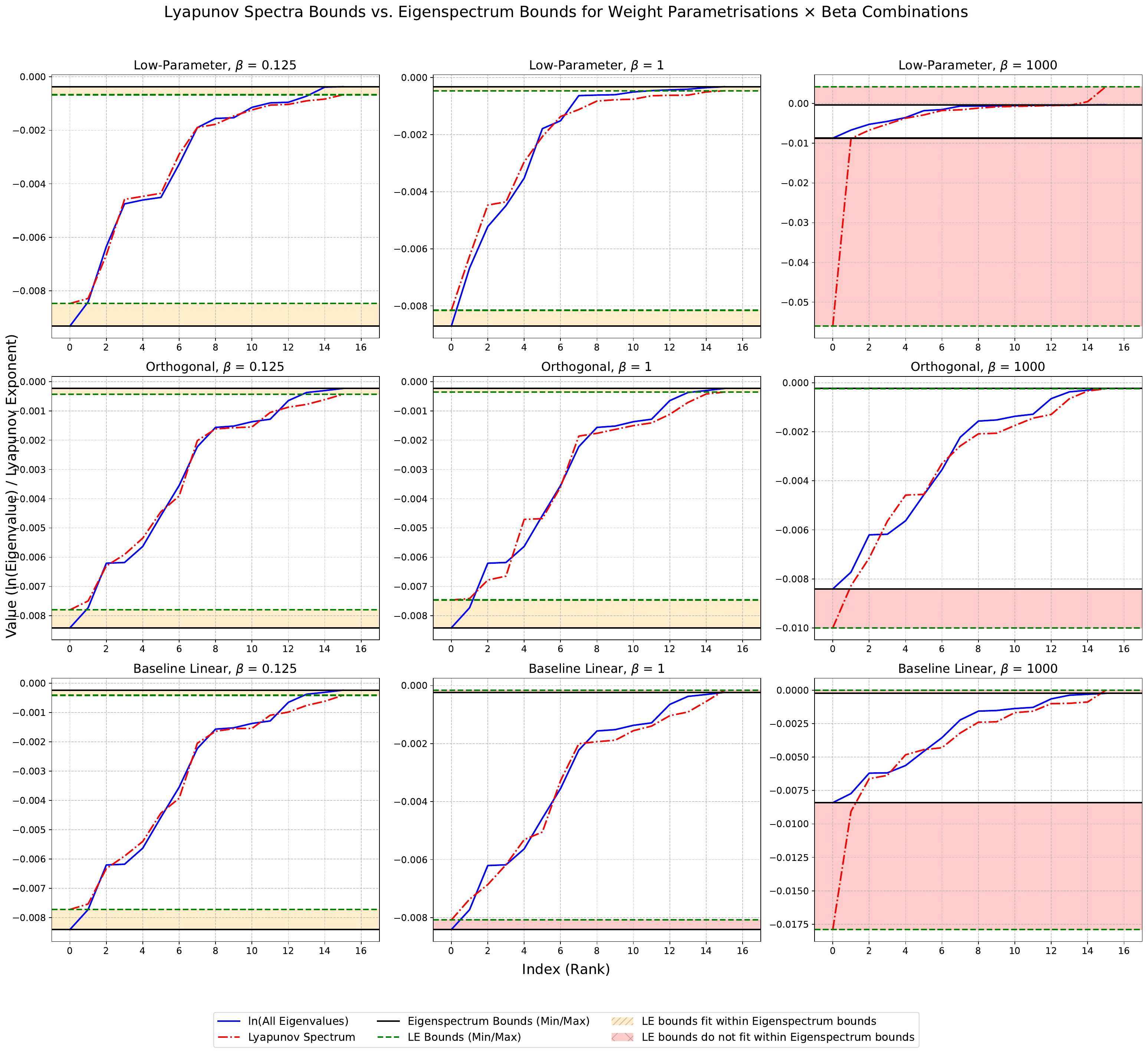}
        \hfill
        \caption{\textbf{Interaction of KV Weight Parametrisation and Choice of $\beta$} Each subfigure shows the Lyapunov spectrum and $\ln(\bar \Lambda)$ for random ADPTNet networks at initialisation with different parametrisation schemes. The LowParam setting uses $\text{rank}=2$ and $N_a = N_b=4$. Inputs are driven by $T=4096$ random inputs. Notably, only for very large $\beta=1000$ do the dynamics of orthogonal constraints start to diverge significantly from the baseline dense linear with Kaiming initialisation and LowParam.}
        \label{fig:kv_ablation}
\end{figure}

\textbf{Effect of Matrix Exponential Function} The proofs for the Lemmas~\ref{lemma:le_spectrum} and \ref{lemma:le_exact_match} rely on the assumption that the matrix exponential required for mapping skew-symmetric matrices to the orthogonal manifold is computed using the Cayley Map. However, in practice, the Cayley map is prohibitively expensive in terms of compute and memory and, thus, an alternative that takes into account the low-rank structure of the skew-symmetric mapping for the $k \otimes v$ outer product is preferable (see Section~\ref{sec:efficient_exp}). However, since the Cayley map is only a truncated approximation of the matrix exponential, one could argue that switching exponential functions may affect the theoretical claims in Section~\ref{sec:ADPTNet_def}. 

Figure~\ref{fig:cayley_ablation} provides evidence to suggest that the ADPTNet recurrence is resilient to this switch. It can be observed that the low-rank-aware exponential function produces very close, if not nearly identical, Lyapunov spectra relative to the parametrised $\ln(\bar\Lambda)$. Even in the large $\beta=1000$ setting, the two functions produce "undershoots" of similar proportions $\approx -0.002$. 

\begin{figure}[H] 
    \centering
        \centering        
        \includegraphics[width=\textwidth]{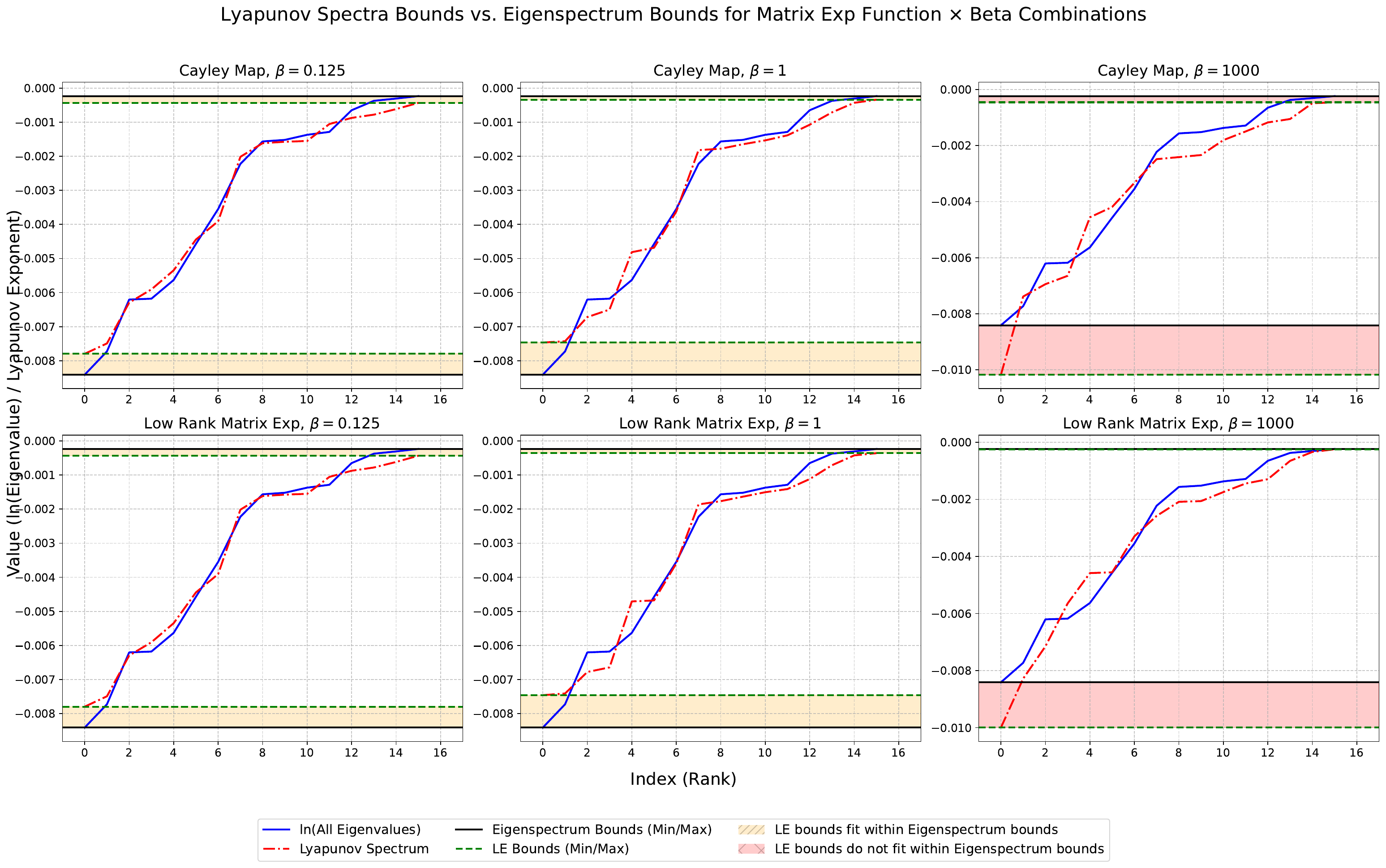}
        \hfill
        \caption{\textbf{Effect of Matrix Exponential Function Choice} The two rows of the grid show the two different matrix exponential functions tested: the Cayley map and the low-rank structure-aware exponential, as described in Section~\ref{sec:efficient_exp}. All randomly initialised networks in the grid are driven by $T=4096$-long random inputs. It is apparent that for any scale of $\beta$, both exponentiation methods produce very similar behaviour in terms of Lyapunov spectra. }
        \label{fig:cayley_ablation}
\end{figure}

\textbf{Effect of Recurrent Condition Number} Since the ADPTNet recurrent matrices $M_t$ are symmetric positive definite by construction, their singular values are $\equiv \bar\Lambda$. Therefore, their 2-norm condition number is $=\frac{\bar\lambda_{\max}}{\bar\lambda_{\min}}$. Figure~\ref{fig:cond_ablation} shows how this ratio does not influence the eigenspectrum's relationship to the observed Lyapunov exponents. While the $\approx 1.01$ setting shows slightly higher discordance between the parametrised and measured spectra, compared to the other condition numbers, alignment appears to be driven mostly by $\beta$. As in Figure~\ref{fig:kv_ablation} and \ref{fig:cayley_ablation}, a large $\beta=1000$ causes more "undershoot" compared to the $\beta=1$. This evidence suggests that regardless of the spread of parametric timescales in the network (e.g., if the spread increases during training), that should not affect the resulting Lyapunov spectrum.

\begin{figure}[H] 
    \centering
        \centering        
        \includegraphics[width=\textwidth]{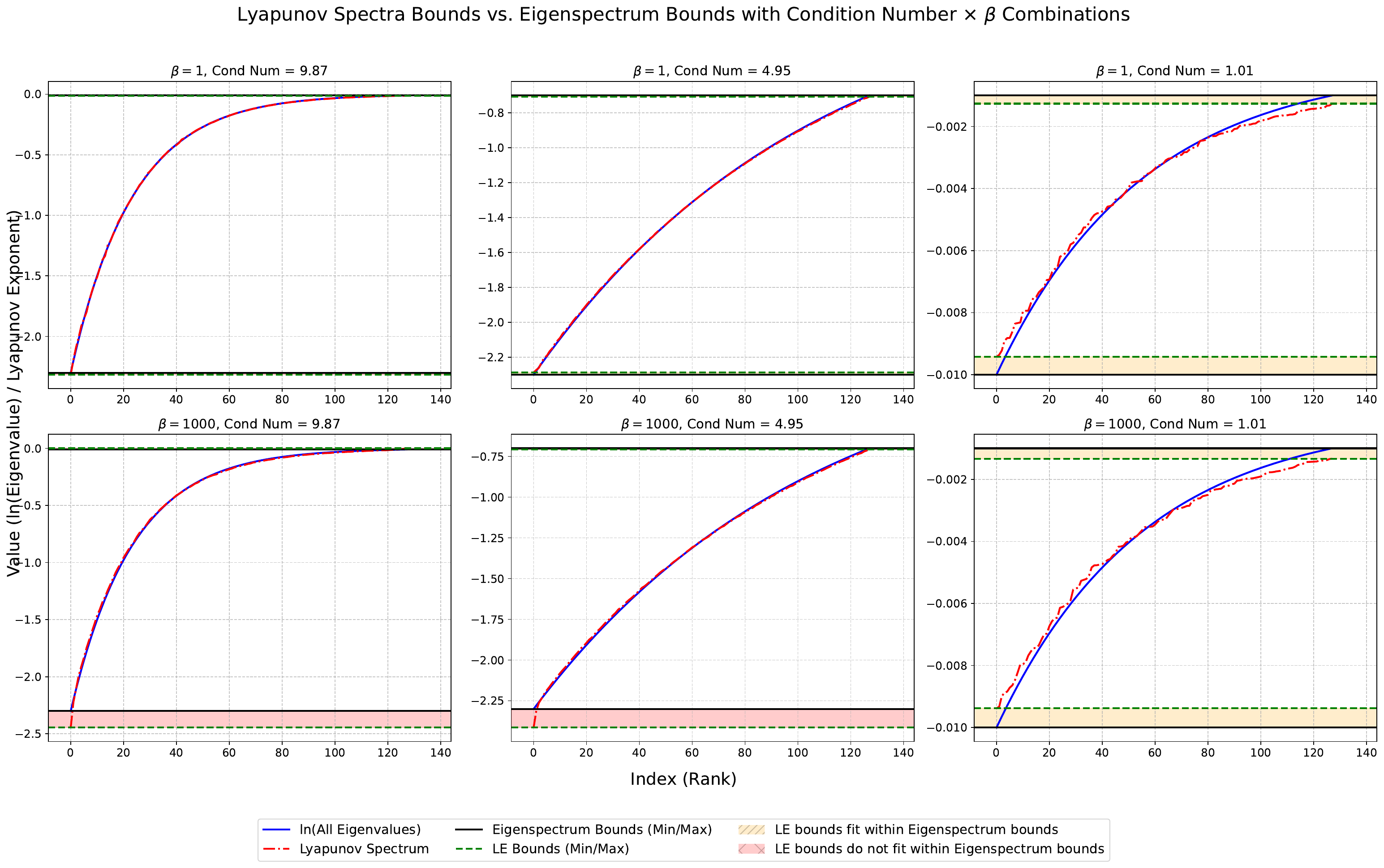}
        \hfill
        \caption{\textbf{Effect of Recurrent Weights Condition Number} Each subfigure shows the effect of different ratios $\frac{\bar\lambda_{\max}}{\bar\lambda_{\min}}$ on the aligment between the eigenvalue and Lyapunov spectra. Each random network with $D=128$ takes in $T=4096$ long random inputs. There are three condition number configurations, obtained by setting $\lambda_{\min}=\lambda_{\max}=1$ and $(\Delta t_{\min}, \Delta t_{\max}) \in \{(10^{-2}, 2.3), (0.7, 2.3), (10^{-3}, 10^{-2})\}$. It should be noted that alignment is driven more by the choice of $\beta$ than by the condition number.}
        \label{fig:cond_ablation}
\end{figure}

\subsubsection{Trained ADPTNet on Copy Memory Task} \label{sec:trained_lyapunov_results}

\begin{table}[h!] 
    \centering
    \renewcommand{\arraystretch}{1.5}
    \begin{tabular}{|c|c|c|c|c|c|c|c|c|c|c|}
        \hline
         Neg. Eigvals. & $\gamma$ & RoPE & D & Exp. & KV &  $(\lambda_{\min},\lambda_{\max})$ & $(\Delta t_{\min}, \Delta t_{\max})$ & $\beta$\\
        \hline

        \ding{55}  & $1 -|\bar\lambda|$ & \checkmark & 128 & Low Rank & Dense & $0.5$ & $(10^{-3}, 10^{-1})$ & 0.0125\\
        \hline
    \end{tabular}
    \caption{\textbf{Baseline Trained ADPTNet Configuration} Column headers follow the same conventions as in Table~\ref{table:baseline_untrained_config}. Notably, $\gamma$ is parametrised as a function of $\bar\lambda$ throughout training, not just initialised to the value. While \textit{Rotational-awareness} is omitted from the table, it is assumed to be absent from all models tested. $KV$ Dense refers to unconstrained dense linear layers with Kaiming initialisation. }
\label{table:baseline_trained_config}
\end{table}

\textbf{Copy Memory Task} This section explores how robust the alignment between parametrised recurrent eigenvalues and observed Lyapunov exponents is throughout training epochs. Table~\ref{table:baseline_trained_config} shows the baseline configuration used for all models tested, unless otherwise specified. All networks are trained on the Copy Memory Task \citep{hochreiter1997long}. The task consists of a sequence of target tokens, followed by a series of distractors, and finally output prompting tokens to elicit recollection of the target tokens (Figure~\ref{fig:copy_memory}). Following \citet{romero2021ckconv}, all models are trained to store 10 tokens in memory from a vocabulary of integers $\in \{1, 2, \dots, 8\}$. The task difficulty is modulated by the number of distractors (e.g., 100, 1000), since the loss computed from the final 10 time steps of the recall must be backpropagated through time over increasing numbers of time steps. This section emphasises the behaviour of ADPTNet Lyapunov spectra over training iterations, rather than raw accuracy. Typically, delayed copying is not intended as a challenge in itself, but rather, it is a pass/fail test of whether information can travel across time in sequence models. Here, passing is quantified as achieving $>90\%$ accuracy within at most 100 training epochs with a learning rate of $\eta=0.004$, cosine scheduling, and Adam optimiser \citep{kingma2014adam}. All reported models pass this standard. 

\begin{figure}[H] 
    \centering
        \centering        
        \includegraphics[width=0.75\textwidth]{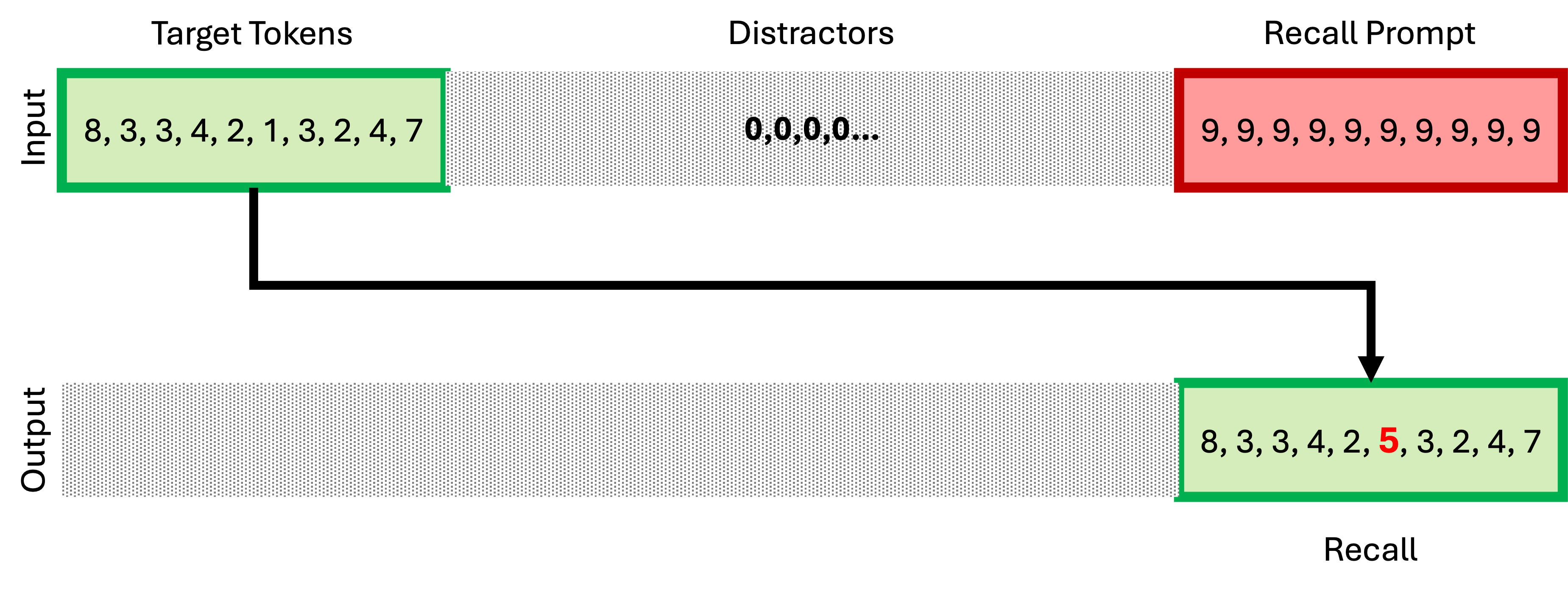}
        \hfill
        \caption{\textbf{Copy Memory Task} All models in this section are trained on the Copy Memory Task, which measures long-term recall ability. Target tokens (integers $\in \{1, 2,\dots, 8\}$) need to be retrieved in the correct order. The loss is then computed on the final recall window and backpropagated through the distraction steps.}
        \label{fig:copy_memory}
\end{figure}

\textbf{Effect of $\beta$ on Trained ADPTNet} Figure~\ref{fig:beta_trained_ablation} shows how the choice of $\beta$ affects $\ln(\bar\Lambda)$ and Lyapunov exponents over training iterations. Firstly, it can be observed that a lower $\beta$ induces a sharper slope and increased spread in the $\bar\Lambda$ distribution. Concretely, at epoch 5, $\ln(\bar\lambda_{\min})$ is an order of magnitude lower for $\beta=0.0125$ compared to $\beta \in \{0.125, 1\}$, going from $-2.5$  compared to $-0.35$ and $-0.14$ respectively. The trend continues up to the final epoch 50, with higher $\beta$ progressively increasing $\ln(\bar\lambda_{\min})$ orders of magnitude from $\approx-7$ to $\approx-2.5$ and $\approx -0.35$, for $\beta=0.0125$, $0.125$, and $1$, respectively. Intuitively, a lower $\beta$ implies a lower rotation angle in the similarity transform $Q\bar\Lambda Q^T$, where the plane of rotation is dictated by $k \otimes v$.  Therefore, to produce more non-linear behaviour, the network may learn to spread the eigenvalues further apart, achieving higher output variance with a limited rotation angle $\beta$.

Regarding the Lyapunov spectrum, once again, increasing $\beta$ decreases alignment with the $\ln(\bar\lambda)$ distribution. Both $\beta=0.0125$ and $\beta = 0.125$ have Lyapunov exponents within the log-spread $\ln(\bar\Lambda)$, evidently meeting the bounds in Lemma~\ref{lemma:le_spectrum}. For $\beta=1$, the LLE is positive, so the dynamics are slightly chaotic. However, the LLE is still $<0.5$, and the theoretical bound $\ln(\bar\lambda_{\max} + 8\beta\bar\lambda_{\max}) \approx \ln(9) = 2.19$, which is evidently larger than the observed value. Therefore, the theoretical bounds hold up over training and are robust to the choice of $\beta$. 

\begin{table}[h!] 
    \centering
    \renewcommand{\arraystretch}{1.5}
    \begin{tabular}{|c|c|c|c|}
        \hline
         \diagbox{Epoch}{KV-Param} & Dense & Orthogonal & LowParam \\
        \hline
         5 & \gape{\makecell{LLE $=0.00344$  \\ Bound $\approx 0.0952$}} &                       \gape{\makecell{LLE $=0.00048$  \\ Bound $\approx 0.095 $}} &                       \gape{\makecell{LLE $=0.00779$  \\ Bound $\approx 0.095$}} \\
         \hline
         25 &\gape{\makecell{LLE $=0.00835$  \\ Bound $\approx 0.0953$}} &
             \gape{\makecell{LLE $=0.01043$  \\ Bound $\approx 0.0953$}} &
             \gape{\makecell{LLE $=0.00723$  \\ Bound $\approx 0.0953$}} \\
         \hline
         50 &\gape{\makecell{LLE $=0.0108$  \\ Bound $\approx 0.0953$}} &
             \gape{\makecell{LLE $=0.0098$  \\ Bound $\approx 0.0953$}} &
             \gape{\makecell{LLE $=0.01026$  \\ Bound $\approx 0.0953$}} \\
        \hline
    \end{tabular}
    \caption{\textbf{LLEs and their Maximum Theoretical Bounds} Each cell in the table contains the LLE associated with each KV-parametrisation scheme along with the theoretical bound on it obtained with Lemma~\ref{lemma:le_spectrum}. While the bounds $\ln(\bar\lambda_{\max}+8\beta\bar\lambda_{\max})$ are technically specific to orthogonal weights, it can be observed that all parametrisations are well within them.}
\label{table:LLE_and_bounds}
\end{table}

\textbf{Effect of KV Parametrisation on ADPTNet Trained Dynamics} As shown for ADPTNet initialisation (Fig.~\ref{fig:kv_ablation}), Figure~\ref{fig:kv_trained_ablation} shows the same effect persisting over training epochs. The Lyapunov spectra closely track the recurrent eigenspectra, with the lower limits of all Lyapunov exponents remaining above their parametrised counterparts across all configurations and epochs. It can, however, also be observed that the LLEs do "overshoot" $\ln(\bar\lambda_{\max})$ by $\ll0.05$, especially towards the end of training. The mismatch remains within the bounds prescribed by Lemma~\ref{lemma:le_spectrum} as detailed in Table~\ref{table:LLE_and_bounds}.

\begin{figure}[H] 
    \centering
        \centering        
        \includegraphics[width=\textwidth]{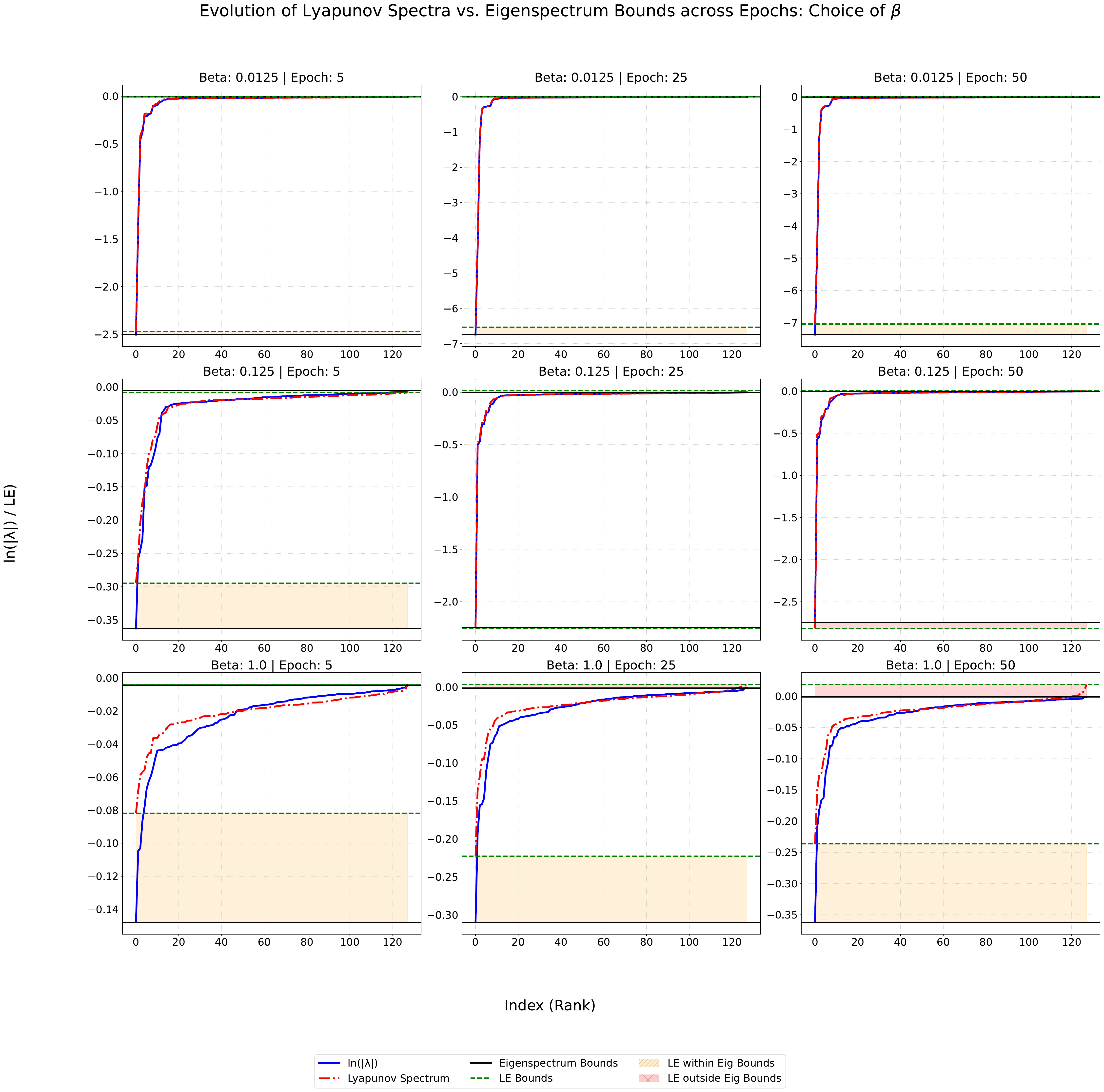}
        \hfill
        \caption{\textbf{Effect of $\beta$ over ADPTNet Trained Dynamics} Each row represents a trained ADPTNet model with a given $\beta$, and the columns are snapshots from different epochs (epoch 50 is the final one). The networks are trained on the Copy Memory task with 44 distraction tokens for a total sequence length of $T=64$. The initial learning rate is $\eta=0.005$. It should be observed how the bounds from Lemma~\ref{lemma:le_spectrum} hold up over training and how lower $\beta$ induces a higher eigenvalue spread. }
        \label{fig:beta_trained_ablation}
\end{figure}

\begin{figure}[H] 
    \centering
        \centering        
        \includegraphics[width=\textwidth]{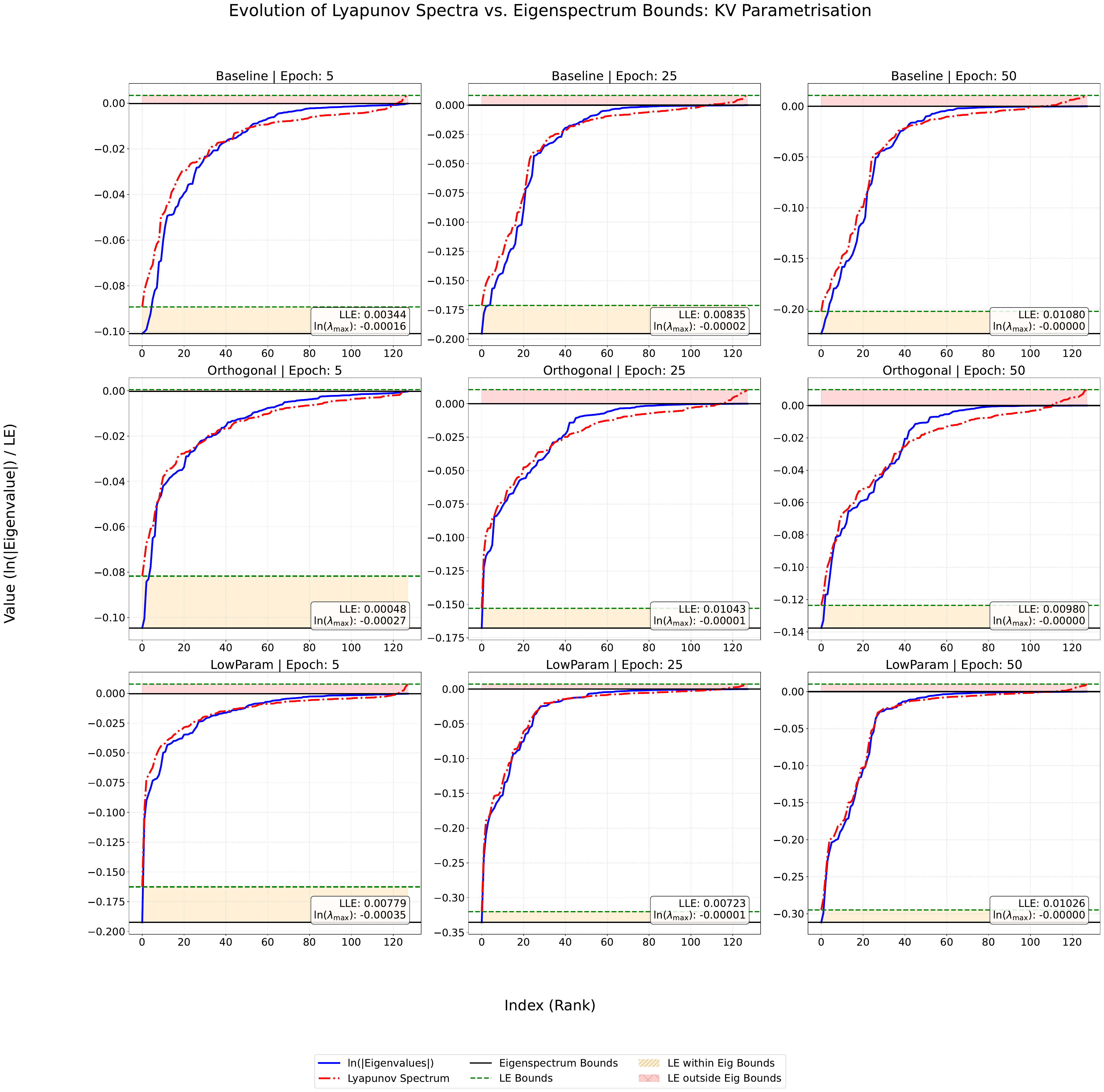}
        \hfill
        \caption{\textbf{Effect of KV-parametrisation over Lyapunov Spectrum During Training} Each subfigure in the grid shows the Lyapunov spectrum and the $ln(\bar \Lambda)$ for each KV parametrisation option. The networks are trained on the Copy Memory task with 108 distraction tokens and a total of $T=128$ time steps. Baseline denotes dense and unconstrained linear layers, and for the LowParam configuration, $\text{rank}=32$, $N_a=32$ and $N_b=4$. As also reinforced in Table~\ref{table:LLE_and_bounds}, all configurations have Lyapunov spectra within the bounds from Lemma~\ref{lemma:le_spectrum}.}
        \label{fig:kv_trained_ablation}
\end{figure}

\textbf{Effect of Including Negative Eigenvalues} As mentioned in Section~\ref{sec:eigvalue_param}, for the purposes of this study, ADPTNet recurrent eigenvalues are constrained parametrically to $ |\bar \lambda_i |\le 1$ by double exponentials, following \citet{gu2022parameterization} (Eq.~\ref{eq:state_dependent_Q_redo}). Because the $M = Q\bar\Lambda Q^T$ structure creates symmetric matrices, even with negative $\bar\lambda_i$, singular values are still $\sigma_i(M) = |\lambda_i|$. Consequently, the singular values $\tilde \sigma_i$ of the recurrent Jacobian $J_t$ are invariant to this choice, and, thus, alignment with the Lyapunov spectrum should not be affected. Figure~\ref{fig:trained_pos_vs_neg_eig} provides empirical evidence supporting this. In particular, Subfigure~\ref{fig:neg_vs_pos_eig_epoch100} shows that trained networks with either completely positive eigenspectra or negative eigenvalues show no visible differences in alignment with their respective Lyapunov spectra. However, building on the evidence from Figure~\ref{fig:beta_trained_ablation}, the networks do display different spectral distributions.

In Figure~\ref{fig:neg_vs_pos_eig_epoch100}, the fully-positive eigenspectrum configuration registers a $|\bar\lambda_{\min}| \approx e ^ {-7}$, orders of magnitude lower than the "negative" configuration's $|\bar\lambda_{\min}| \approx e ^ {-0.04}$. The LLE of the negative eigenvalue condition is slightly positive, which indicates weakly chaotic dynamics, compared to the stability of the fully positive condition. Furthermore, Subfigure~\ref{fig:neg_vs_pos_eig_spread} shows how both configurations maximise eigenvalue spread over training epochs. For the "positive" setting, maximising spread means pushing eigenvalues to the boundaries of $[0, 1]$. The "negative" setting can instead cluster eigenvalues around the $\pm1$ extremities. As highlighted in Section~\ref{sec:conv_fwd_deer}, the topological conjugates $Q \bar\Lambda Q^T$ live within the same unitary orbit, which means that the distance between $M(x)$ and $M(y)$ for any given $x$ and $y$ is bounded by the eigenvalue spread: 

\begin{equation} \label{eq:unitary_max_dist}
    \norm{M(x) - M(y)} = \norm{Q(x)\bar\Lambda Q(x)^T - Q(y)\bar\Lambda Q(y)^T } \le |\bar\lambda_{\max} - \bar\lambda_{\min}|
\end{equation}

Therefore, as argued before, ADPTNet layers may be learning to maximise variance, or the "sharpness of the turns" (Fig.~\ref{fig:topological_conjugates}), possible at each time step.

\begin{figure}[H]
    \centering
        \begin{subfigure}[b]{0.85\textwidth}
            \centering        
            \includegraphics[width=\textwidth]{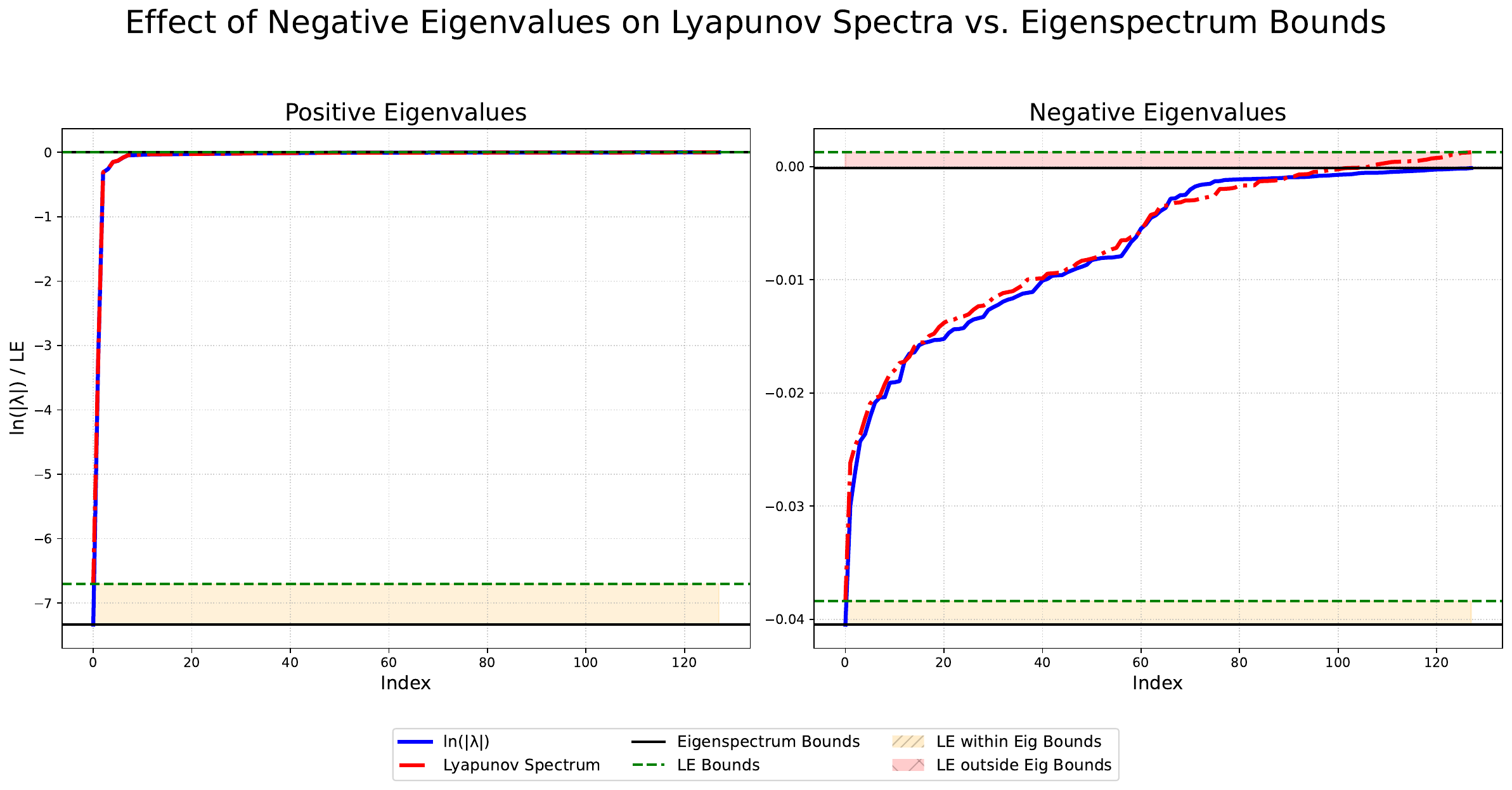}
            \caption{Fully Trained Singular Value and Lyapunov Spectra}
            \label{fig:neg_vs_pos_eig_epoch100}
         \end{subfigure}
        \begin{subfigure}[b]{0.45\textwidth}
            \centering        
            \includegraphics[width=\textwidth]{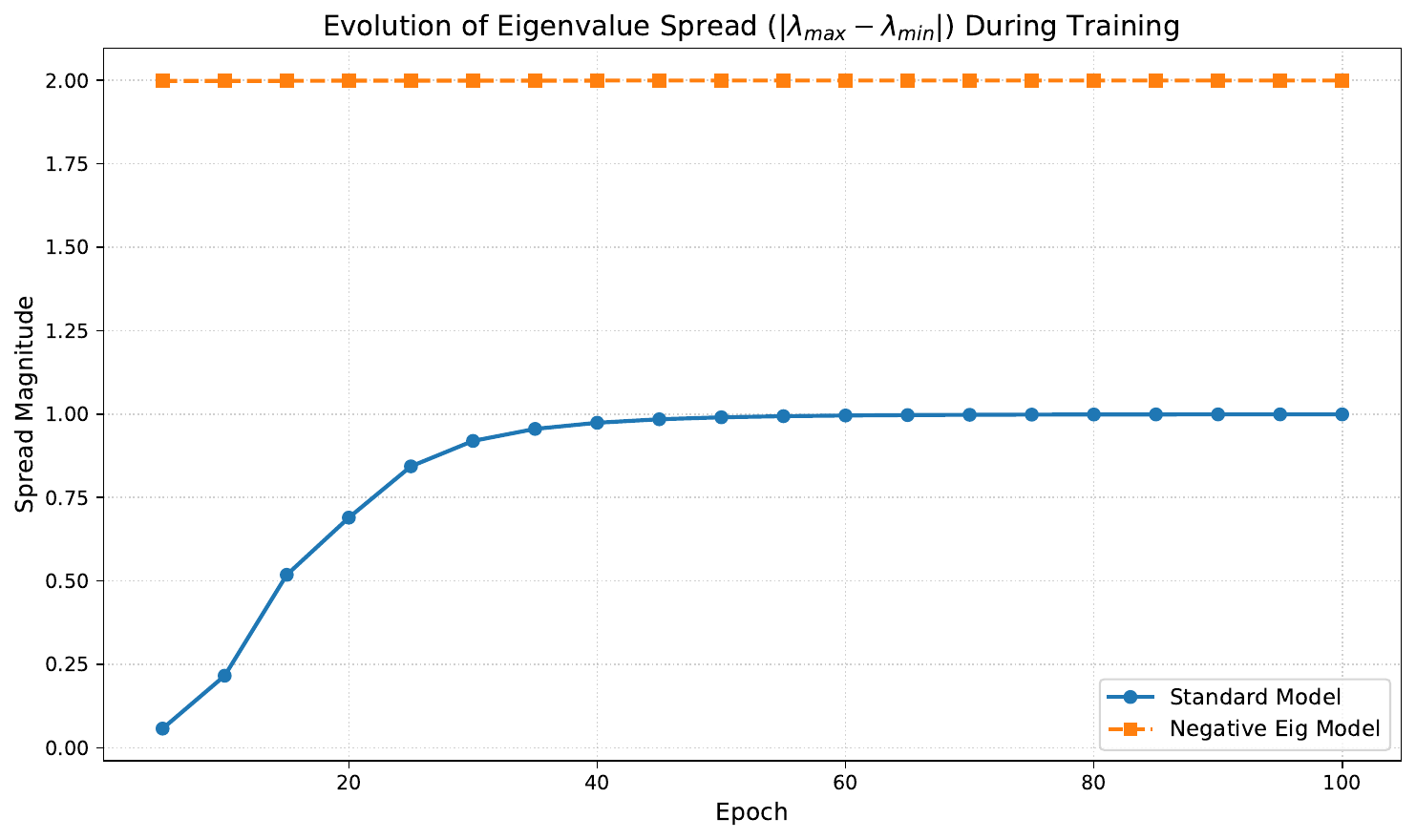}
            \caption{Eigenvalue Spread }
            \label{fig:neg_vs_pos_eig_spread}
        \end{subfigure}
        
     \hfill
        \caption{\textbf{Effect of Including Negative Eigenvalues of Trained ADPTNet Spectra}. Subfigure~\ref{fig:neg_vs_pos_eig_epoch100} shows the different distributions of log-singular values and Lyapunov exponents for a network with a fully positive eigenspectrum and a second network with 32 eigenvalues constrained to be negative throughout training. Both networks are trained on the Copy Memory task with 1004 distraction tokens (total sequence length $T=1024$). The "positive" model reaches $95\%$ test accuracy at Epoch 100, while the "negative" model reaches $90\%$. The distribution snapshots are taken at the end of training (Epoch 100). Subfigure~\ref{fig:neg_vs_pos_eig_spread} shows the evolution of the eigenvalue spread for the two model configurations throughout training. It should be noted that the Lyapunov spectra are close to the log-singular values for both settings. However, the distribution has a lower minimum for the "positive" case. It can also be observed that both spreads tend to their respective maximums over training. }
        \label{fig:trained_pos_vs_neg_eig}
\end{figure}

\textbf{$\mathbf{\Delta_i^{(t)}}$ Distribution Changes over Training Epochs} So far, all observed Lyapunov spectra strongly align with their parametrised $\ln(|\bar\Lambda|)$ counterparts. Since only the stable Lyapunov spectrum computation algorithm (Algorithm~\ref{algo:stable_lyapunov_comp}) has been used so far, all evidence thus far points toward confirming Lemma~\ref{lemma:le_exact_match}. In Lemma~\ref{lemma:le_exact_match}, the notation of $\Delta_i^{(t)} = \tilde\sigma_i - \sigma_i$ was introduced, describing the difference (i.e., perturbation) between the singular values of the recurrent Jacobian $J_t$ and the parametrised $|\bar\Lambda|$. As noted in Corollary~\ref{coro:deltas}, if the distribution of such perturbations over sequence length has mean $=0$, then they effectively cancel out in the long term, and the Lyapunov spectrum matches $\ln(\bar\Lambda)$.

Figure~\ref{fig:delta_distribution} provides evidence supporting this claim. As shown, throughout training, the distribution of $\Delta_i^{(t)}$ is centred around $0$, leading to the strong match between log-singular values and Lyapunov exponents. However, the variance of the perturbations does increase between Epoch 5 and the end of training (Epoch 100). Besides $\beta$, the size of the perturbations also depends on the various $k \cdot v$ dot products inside the $J_t$ formulation. Therefore, as training progresses, ADPTNet layers may also learn to control the distance between $k$ and $v$ and the plane of rotation in $Q$ more precisely. This further supports the argument that by increasing eigenvalue spread, the network dynamics tend towards increased non-linearity throughout training.

\begin{figure}[H] 
    \centering
        \centering        
        \includegraphics[width=0.8\textwidth]{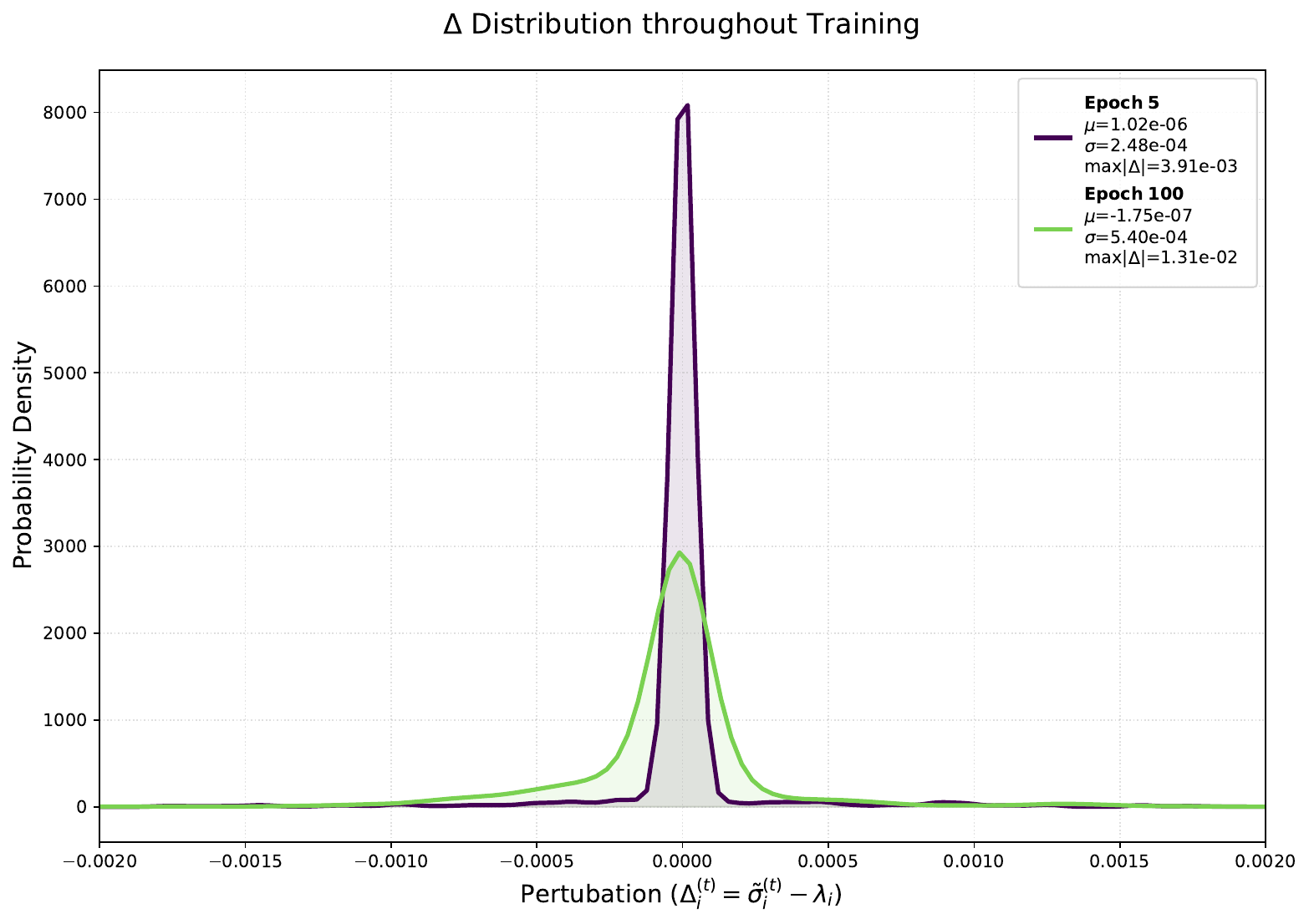}
        \hfill
        \caption{\textbf{$\mathbf{\Delta_i^{(t)}}$ Distribution Throughout Training} $\Delta _i^{(t)}$ perturbations are computed at each time step using as the difference $\sigma_i(J_t) - \bar\lambda_i = \tilde\sigma_i^{(t)}- \bar\lambda_i$, and averaged over sequence length for a Copy Memory task sample. $\mu$ denotes the mean over time steps, while $\sigma$ represents the standard deviation. The perturbations are collected from the "positive" model in Figure~\ref{fig:trained_pos_vs_neg_eig}. The key trends to observe are that for both Epoch 5 and Epoch 100 (final epoch) the mean of the perturbation distribution is 0. In addition, $\Delta_i^{(t)} $variance increases between the two epochs visualised. }
        \label{fig:delta_distribution}
\end{figure}

\subsection{Parallelisation with Conv and Forward DEER} \label{sec:deer_ablation_results}

As in Section~\ref{sec:lyapunov_results}, this section is split between analysing single-layer ADPTNet models at initialisation and throughout training. Using the insights from Section~\ref{sec:alt_deer}, the goal is to assess the degree to which the approximation errors of Conv and Forward-DEER compared to Quasi-DEER impact parallelisation performance in practice. 

\subsection{Conv and Forward DEER Computational Overhead}

Before analysing convergence properties for the Jacobian-free DEER variations proposed in this study, it is important to underscore the limited computational overhead they incur. Here, computational overhead is defined as additional Multiply-Accumulate (MAC) operations required to obtain $A_t$ (Section~\ref{sec:alt_deer}), after computing $M_tx_{t-1}$ for the forward dynamics of the network. This does not include the actual cost of computing DEER iteration parallel scans until convergence, as those vary by variable factors such as ADPTNet layer dynamics or Scale-ELK damping (see Section~\ref{sec:scale_elk_damping}). In Table~\ref{table:conv_forward_flops}, it can be seen that since Conv DEER consists of $A_t = \bLb$, no additional floating point operations are required, as the discretised eigenspectrum is already necessary to compute $M_tx_{t-1}$ in the ADPTNet recurrence and thus can be reused. Forward DEER introduces only a linearly scaling MAC overhead. Evidently, Quasi-DEER would also incur a linearly scaling cost, with a higher constant owing to the terms in $\frac{dM}{dx}x$ in Eq.~\ref{eq:simple_dMdx}. 

\begin{table}[h!] 
    \centering
    \renewcommand{\arraystretch}{1.5}
        \begin{tabular}{l c}
            \hline
            DEER Approx. & MACs \\
            \hline
            Conv & 0  \\
            
            Forward & $20*D$ \\
        \hline
        \end{tabular}
    \caption{\textbf{Computational Overhead for Jacobian-Free DEER Variations} Conv-DEER does not require any additional MACs to obtain its $A_t$ terms, while Forward DEER scales linearly. The MAC count estimate for Forward-DEER is obtained using the \texttt{torchprofile} library and the implementation of Eq.~\ref{eq:diag_M} in Appendix~\ref{appendix:code_fwd_macs}. Here, $D$ refers to the model hidden state size.}
\label{table:conv_forward_flops}
\end{table}

\subsubsection{ADPTNet Parallelisation at Initialisation}

\begin{table}[h!] 
    \centering
    \renewcommand{\arraystretch}{1.5}
    \begin{tabular}{|c|c|c|c|c|c|c|c|c|c|}
        \hline
         Neg. Eigvals. & $\gamma$ & RoPE & D & Exp. & KV &  $(\lambda_{\min},\lambda_{\max})$ & T\\
        \hline

        \ding{55}  & $1 -|\bar\lambda|$ & \checkmark & 64 & Low Rank & Dense & $0.5$ & 1024 \\
        \hline
        
    \end{tabular}
    \caption{\textbf{Baseline Untrained ADPTNet Configuration for DEER Parallelisation Ablations} Overall, the baseline configuration here is close to the trained model configuration in Section~\ref{sec:trained_lyapunov_results}. $T$ refers to the total number of time steps in the random sequences fed into the networks as inputs. To collect convergence metrics, the networks are then simulated with these random drivers following \citet{gonzalez2024towards}.}
\label{table:baseline_deer_untrained_config}
\end{table}

Similar to Section~\ref{sec:untrained_lyapunov_results}, this section explores the DEER parallelisation characteristics of single untrained ADPTNet layers at initialisation. Inputs are once again random signals drawn $\sim \mathcal{N}(0, 1)$. The default configuration for all experiments is in Table~\ref{table:baseline_deer_untrained_config}.

\textbf{Eigenvalue Spread and $\mathbf{\beta}$ Ablation} In Section~\ref{sec:conv_fwd_deer} it is hypothesised that increasing $\beta$ decreases the size of the basin of quadratic convergence for DEER. In practice, this would mean that the increased non-linearity of a larger $\beta$ should also increase the number of Newton iterations required for DEER convergence. Furthermore, considering the various local approximation error terms (Table~\ref{table:error_summary}), one could also predict that increasing $\bar\lambda_{\max}$ and the spread $|\bar\lambda_{\max} - \bar\lambda_{\min}|$ should also increase the required iterations for convergence, and also increase the disparities between Quasi-DEER and the Jacobian-free alternatives proposed here (Conv and Forward-DEER). Figure~\ref{fig:eigenvalue_dt_min_max} provides empirical evidence supporting these hypotheses. 

Firstly, Subfigure~\ref{fig:dt_min_deer} shows how decreasing $\Delta t_{\min}$ (i.e., increasing $\bar\lambda_{\max} = e^{-\lambda_{\min}\Delta t_{\min}}$) also increases the number of DEER iterations required for all variants tested. Furthermore, the increase in iterations required for convergence is larger for larger $\beta$, and a larger $\beta$ in itself raises the total number of iterations required for the same $\Delta t_{\min}$. In other words, a larger $\beta$ increases the non-linearity of the system, and combined with a larger $\bar\lambda_{\max}$, the bounds for the LLE also increase (Lemma~\ref{lemma:le_spectrum}). These two factors (increased non-linearity and larger LLE) are already known to slow down DEER convergence \citep{gonzalez2026predictability}. However, unlike other non-linear RNNs, the evidence in Figure~\ref{fig:eigenvalue_dt_min_max} suggests that ADPTNet can parametrically control them and, in turn, DEER convergence speed. It is worth noting that the differences between Quasi, Conv, and Forward DEER become more pronounced with lower $\Delta t_{\min}$ and larger $\beta$. Namely, for $\Delta t_{\min} = 10^{-3}$ and $\beta=100$, Forward and Conv-DEER have progressively higher penalties compared to Quasi-DEER, reflecting their increasingly aggressive truncations of the terms making up Quasi-DEER's $\text{diag}(J_t)$ (Table~\ref{eq:approx_error}). Additionally, since $\Delta t_{\max}$ is held constant between configurations, the eigenvalue spread increases, which may contribute to worse convergence in the Jacobian-free methods. 

Secondly, Subfigure~\ref{fig:dt_max_deer} shows a complementary story to Subfigure~\ref{fig:dt_min_deer}. Holding $\Delta t_{\min}$ constant and decreasing $\Delta t_{\max}$ effectively decreases eigenvalue spread, while also stretching all timescales of the system for longer-term memory. Subfigure~\ref{fig:dt_max_deer} shows how this decrease in spread leads to faster average convergence across all DEER variations. In fact, it should be noted that, for the final setting where $\Delta t_{\max} = \Delta t_{\min} = 10^{-3}$, the layers are actually linear, and thus only require a single DEER iteration by definition (see Section~\ref{sec:alt_deer}). This is reflected in the average convergence for Forward and Quasi DEER. However, the mean iterations required for Conv-DEER on this configuration are slightly $>1$, which suggests that numerical precision issues can emerge even in trivial cases. 

Finally, Subfigure~\ref{fig:deer_iter_dist_min_max} adds nuance to the differences in convergence behaviour between the DEER versions. Its two histograms show the distribution of DEER iterations required for convergence as different random input sequences are fed to the network for simulation. The vertical lines show the median number of iterations required for convergence across all seeds for each algorithm. For $\beta=1$, all algorithms have comparable median iterations to convergence, but Conv-DEER shows higher maximum outliers than Forward and Quasi DEER. Differences are further amplified by $\beta=100$, with significantly larger outliers for all algorithms, but also a clearer ordering for both mean and median convergence between DEER versions: Quasi < Forward < Conv. This is unsurprising, as it again confirms the accumulation of approximation errors caused by truncating Jacobian diagonal constituent terms. 

\begin{figure}[H]
    \centering
        \begin{subfigure}[b]{0.45\textwidth}
            \centering        
            \includegraphics[width=\textwidth]{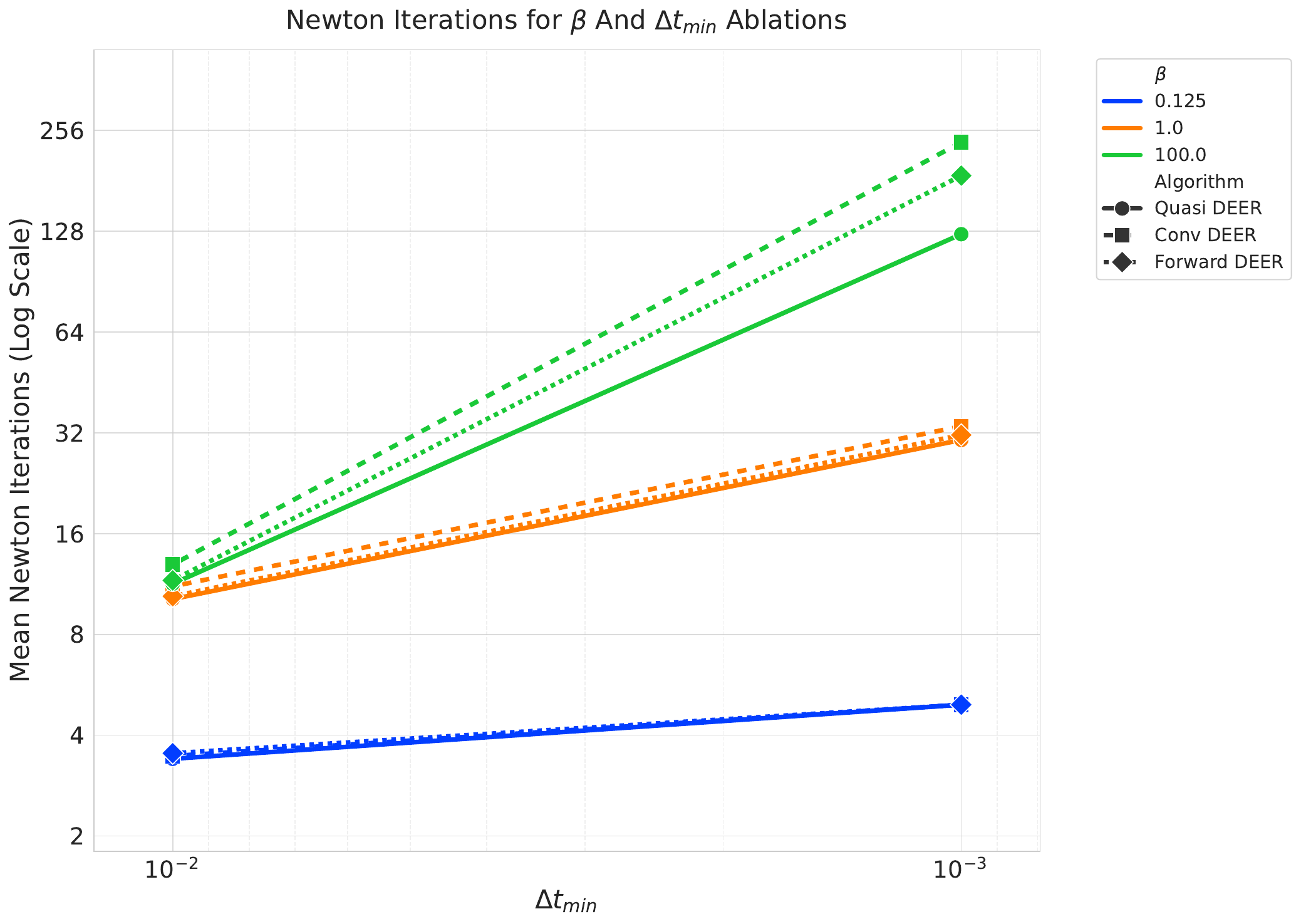}
            \caption{Increasing Long-Term Memory}
            \label{fig:dt_min_deer}
         \end{subfigure}
        \begin{subfigure}[b]{0.45\textwidth}
            \centering        
            \includegraphics[width=\textwidth]{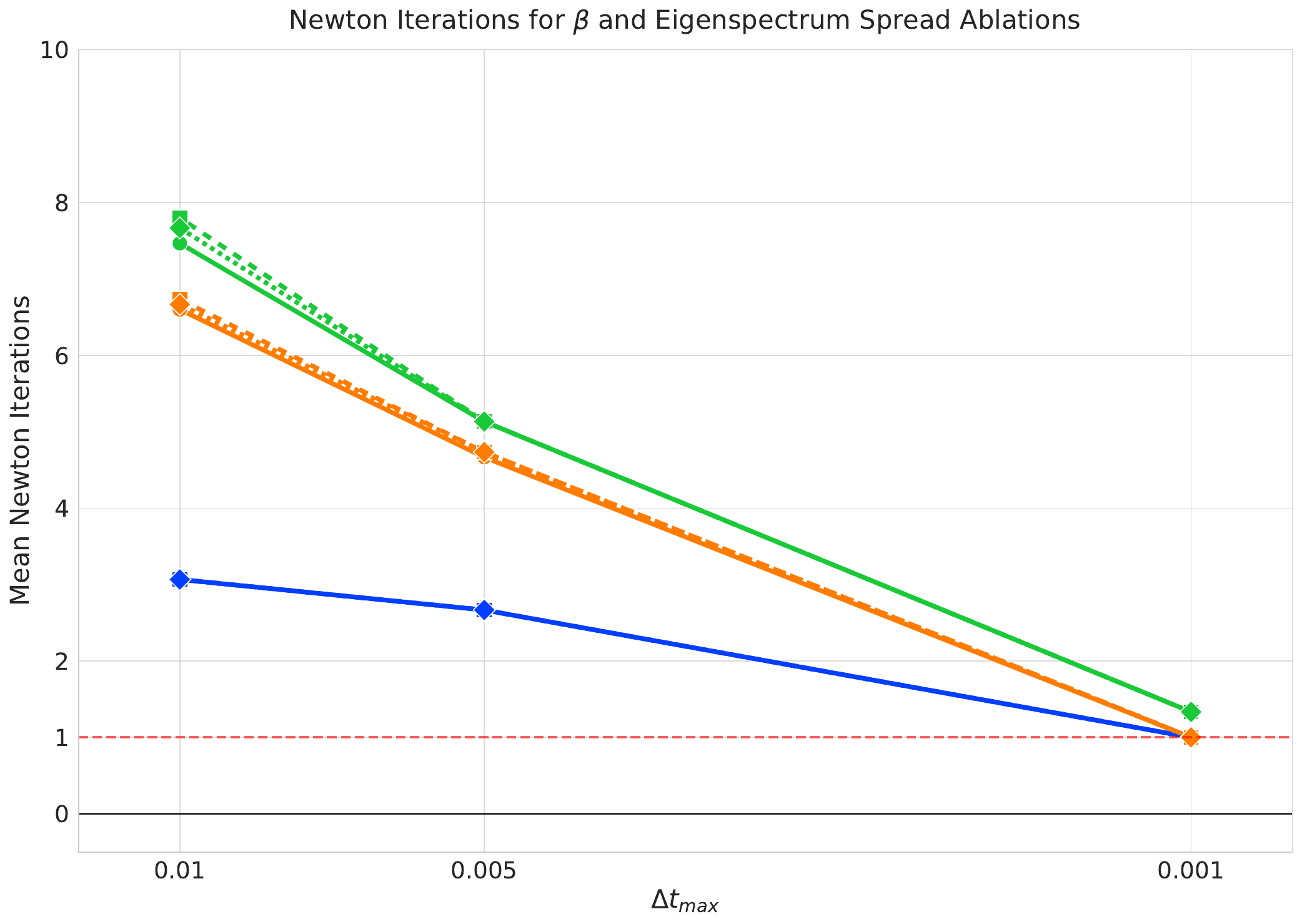}
            \caption{Decreasing Eigenvalue Spread}
            \label{fig:dt_max_deer}
        \end{subfigure}
        \begin{subfigure}[b]{0.9\textwidth}
            \centering        
            \includegraphics[width=\textwidth]{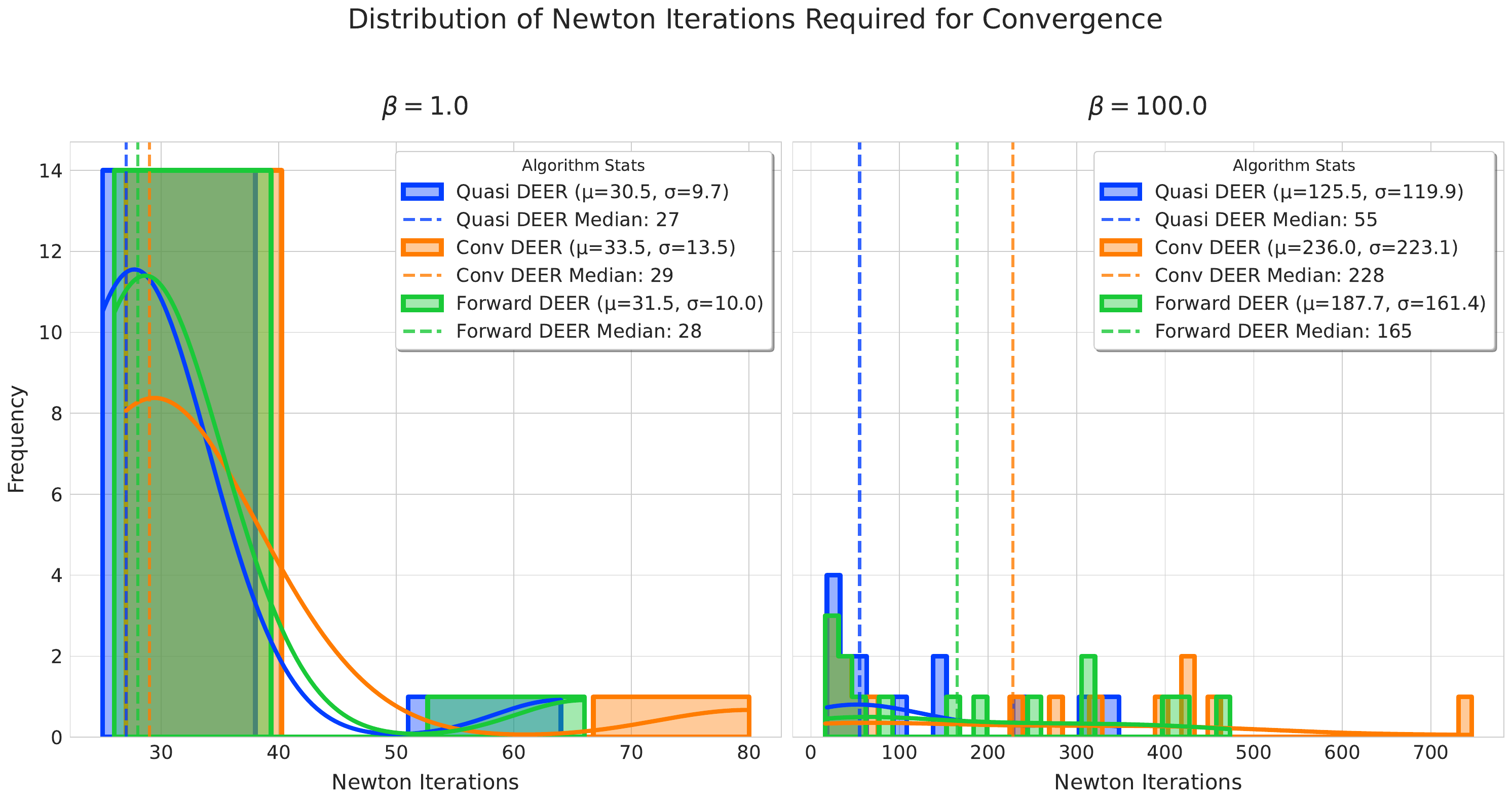}
            \caption{DEER Iteration Distribution}
            \label{fig:deer_iter_dist_min_max}
        \end{subfigure}
        
     \hfill
    \caption{\textbf{Eingspectrum Effects on DEER Convergence}. All Subfigures are produced using 15 random samples for each configuration. For Subfigure~\ref{fig:dt_min_deer}, $\Delta t_{\min}$ controls the longest time scale of the layer through $\blmax$ and takes values $\in\{ 10^{-2}, 10^{-3}\}$. $\Delta t_{\max}$ controls the shortest timescale through $\blmin$ and is constant at $10^{-1}$. The subfigure highlights how increasing the eigenvalue spread and $\beta$ leads to more DEER iterations across all algorithms tested. Subfigure~\ref{fig:dt_max_deer} holds $\Delta t_{\min}$ fixed at $10^{-3}$, while varying $\Delta t_{\max} \in \{ 0.01, 0.005, 0.001\}$. This shows that DEER iterations for long-range, slowly decaying dynamics can be reduced by reducing the eigenvalue spread along with $\beta$. Subfigure~\ref{fig:deer_iter_dist_min_max} uses the same samples and models as Subfigure~\ref{fig:dt_min_deer} and shows that Forward, and to a great extent Conv DEER, suffer from more high-iteration-count outliers. }
        \label{fig:eigenvalue_dt_min_max}
\end{figure}

\textbf{DEER Iteration Scaling with Sequence Length} The total number of time steps in the trajectory $T$ appears explicitly in the expression for the size of the quadratic convergence zone (Eq.~\ref{eq:quad_basin} adapted from \citet{gonzalez2026predictability}). This formalises the intuition that a longer sequence would require more DEER iterations to converge. Figure~\ref{fig:deer_seq_len} clearly reflects this notion as well. Furthermore, it shows that the mean convergence iterations for Conv and Forward DEER largely match those of baseline Quasi-DEER, with a slight sign of upward divergence on the longest $T=16\text{K}$ setting for Conv DEER. Focusing on the maximum outliers for each algorithm, however, one can notice that while they increase with a sharper slope than the mean for all DEER variants, they are universally higher for the Jacobian-free approximations. Still, they are in the same order of magnitude.

\begin{figure}[H] 
    \centering
        \centering        
        \includegraphics[width=0.75\textwidth]{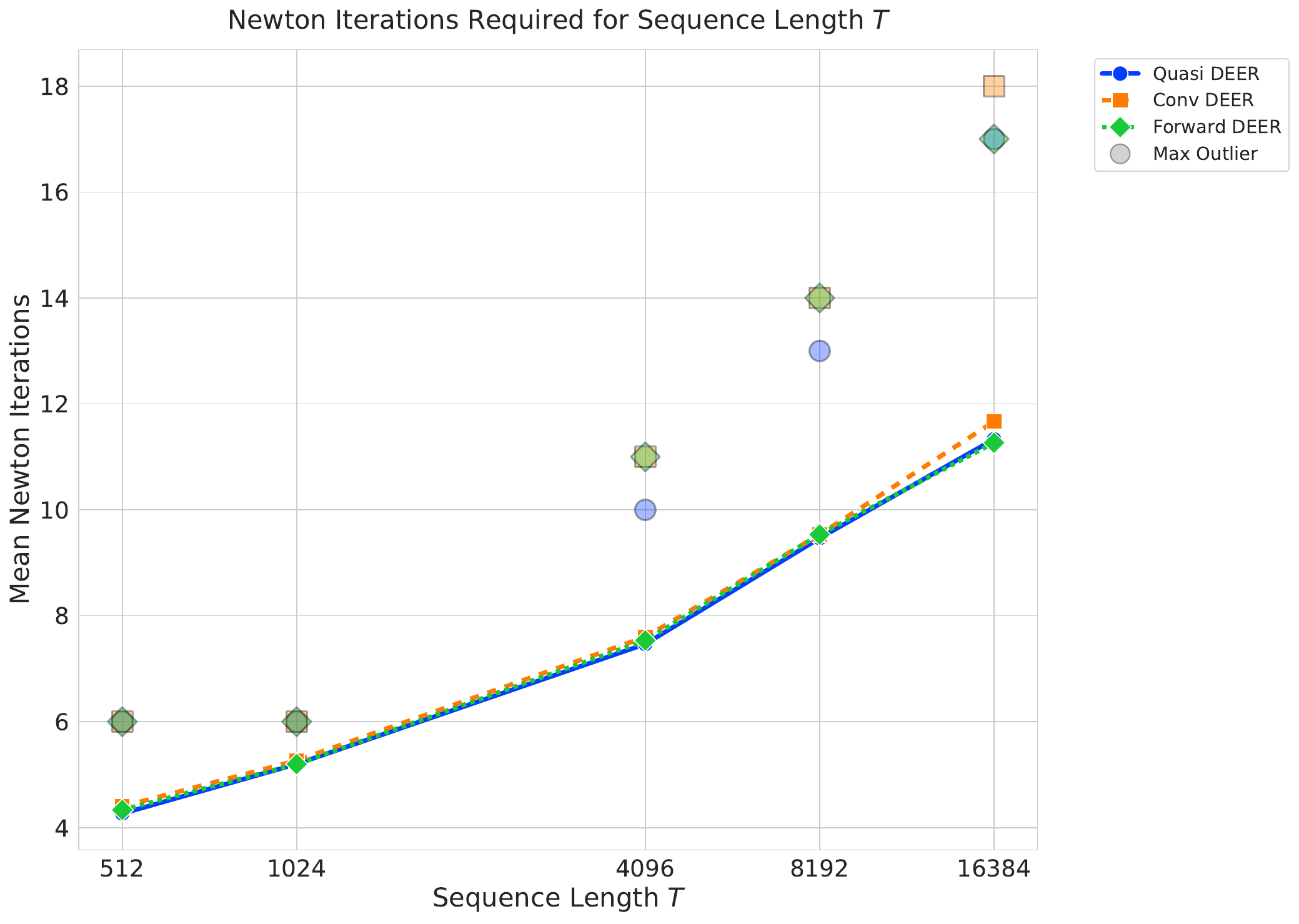}
        \hfill
        \caption{\textbf{Scaling with Sequence Length} This figure shows the averaged number of iterations required for convergence over 15 random samples of varying length $T \in \{ 512, 1204, 4096, 8192, 16384\}$. $\beta=0.125$ , $\Delta t_{\min}=10^{-4}$, and $\Delta t_{\max}=10^{-1}$ for all sequence length configurations. The main takeaway is that, unsurprisingly, the number of Newton iterations needed for convergence grows with sequence length. However, importantly, the average number of convergence iterations is the same between Conv, Forward, and Quasi DEER. }
        \label{fig:deer_seq_len}
\end{figure}

\textbf{Block-Wise DEER Simulation} As highlighted in Section~\ref{sec:block_deer}, by construction, all DEER algorithms suffer from a degree of redundant computation. This resource waste can be mitigated by introducing sequential computation between blocks of parallel DEER execution. Figure~\ref{fig:pareto_blocks} visualises the trade-off that occurs. Subfigures~\ref{fig:pareto_4096} and ~\ref{fig:pareto_8192} show how increasing the number of blocks decreases the memory consumption at the cost of increasing simulation wall-clock time. Moreover, for both sequence lengths shown, memory savings appear to plateau, with seemingly exponentially diminishing returns, while simulation times increase steadily and linearly. Subfigure~\ref{fig:pareto_iters} displays how, by increasing the number of blocks and implicitly reducing each block's length, the number of DEER iterations required to converge, per block, also decreases. Together with Subplots~\ref{fig:pareto_4096} and \ref{fig:pareto_8192}, this suggests that, in this case, the savings in iterations per block do not outweigh the slowness of sequential processing. Finally, Subfigure~\ref{fig:pareto_ratio} offers more detail into the relative performance improvements of Conv over Forward DEER. Most importantly, Conv DEER reduces execution time by $> 25\%$ compared to Forward DEER. Still, mirroring the general trend seen in Subfigures~\ref{fig:pareto_4096} and \ref{fig:pareto_8192}, peak memory allocation advantages vanish with increased block counts. Since both use the same implementation of parallel scans\footnote{Parallel associative scans are computed in PyTorch using \texttt{https://github.com/proger/accelerated-scan}}, the efficiency gains are largely driven by the lower computational overhead in computing $A_t$ terms for Conv DEER (Table~\ref{table:conv_forward_flops}). It is important to emphasise that these findings are strongly tied to \texttt{PyTorch} idiosyncrasies. \citet{danieli2025pararnn} provides a more in-depth analysis of hardware I/O-aware optimisations for DEER-like algorithms.

\begin{figure}[H]
    \centering
        \begin{subfigure}[b]{0.475\textwidth}
            \centering        
            \includegraphics[width=\textwidth]{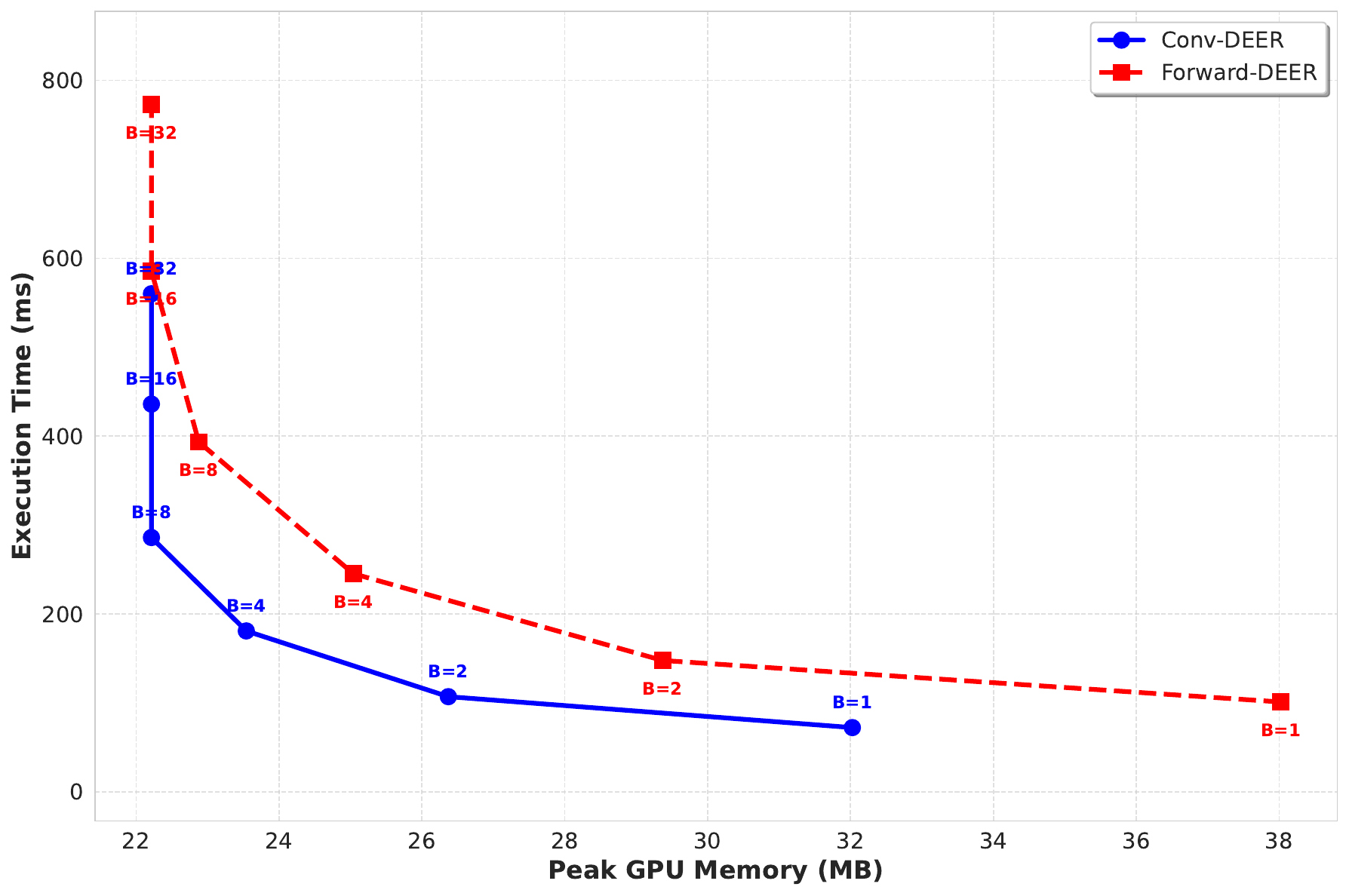}
            \caption{Memory vs Speed ($T=4096$)}
            \label{fig:pareto_4096}
         \end{subfigure}
         \hfill
        \begin{subfigure}[b]{0.475\textwidth}
            \centering        
            \includegraphics[width=\textwidth]{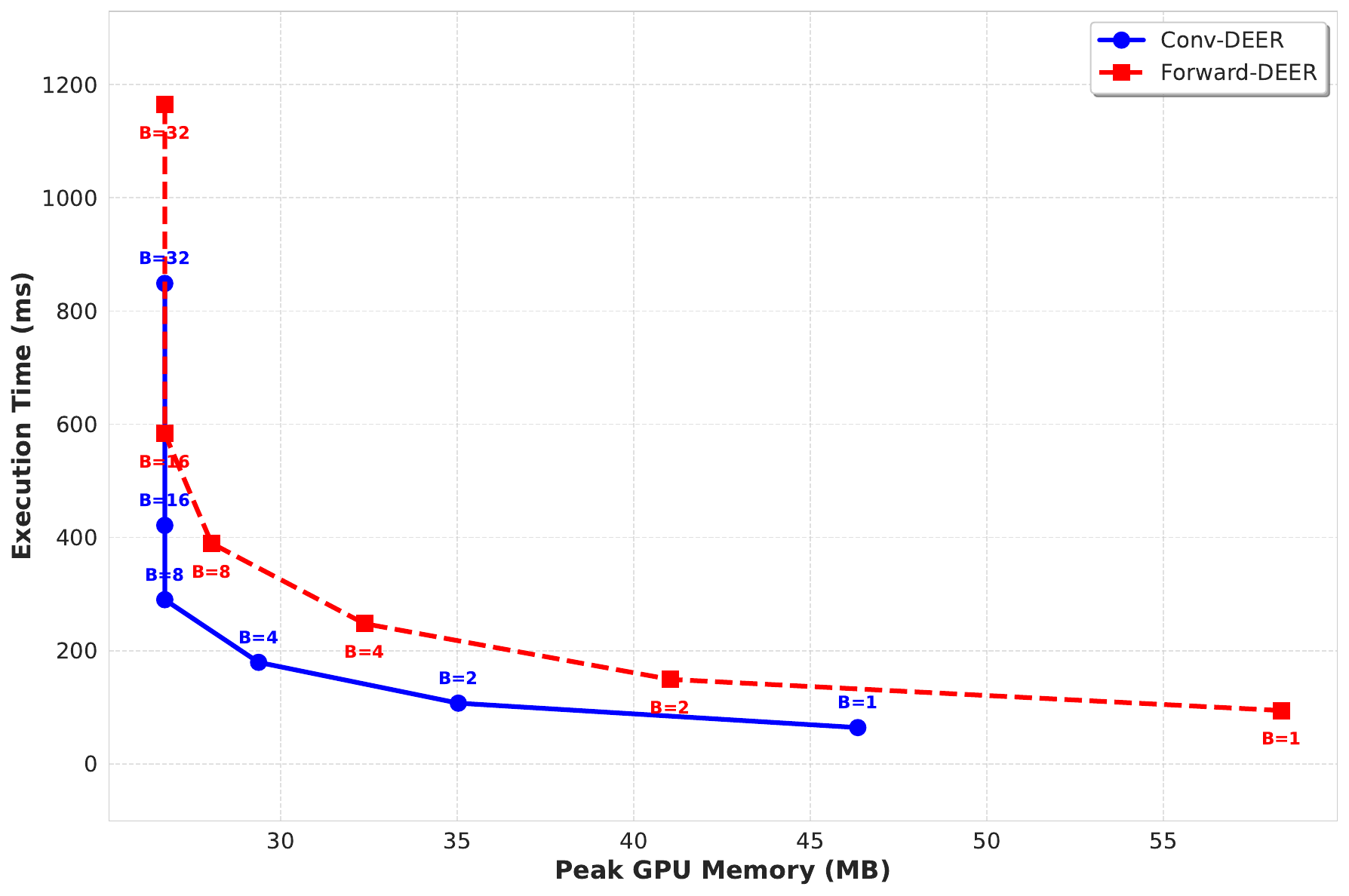}
            \caption{Memory vs Speed ($T=8192$)}
            \label{fig:pareto_8192}
        \end{subfigure}    
         \hfill
        \begin{subfigure}[b]{0.475\textwidth}
            \centering        
            \includegraphics[width=\textwidth]{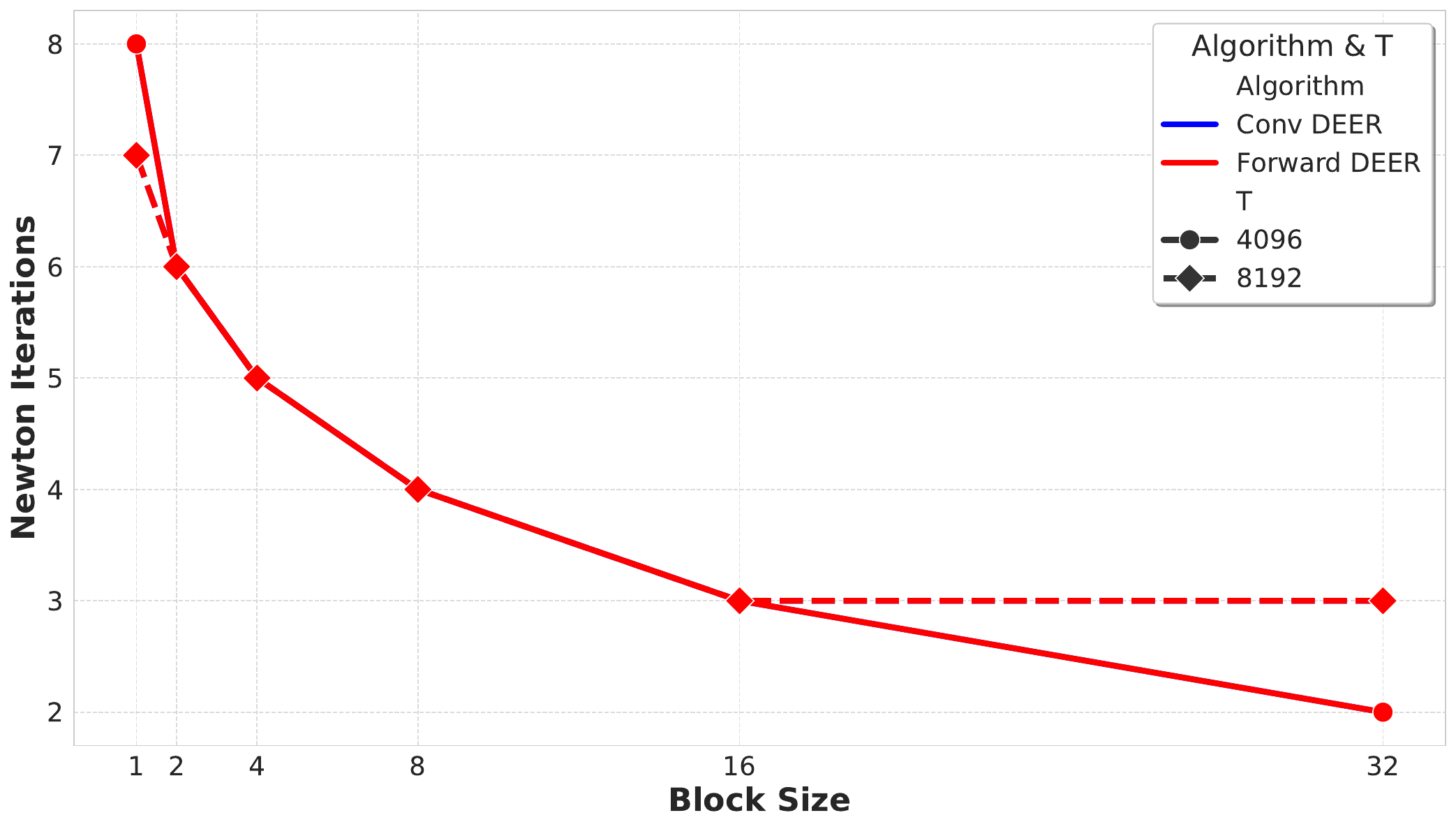}
            \caption{Iterations Required for Convergence}
            \label{fig:pareto_iters}
        \end{subfigure}   
         \hfill
        \begin{subfigure}[b]{0.475\textwidth}
            \centering        
            \includegraphics[width=\textwidth]{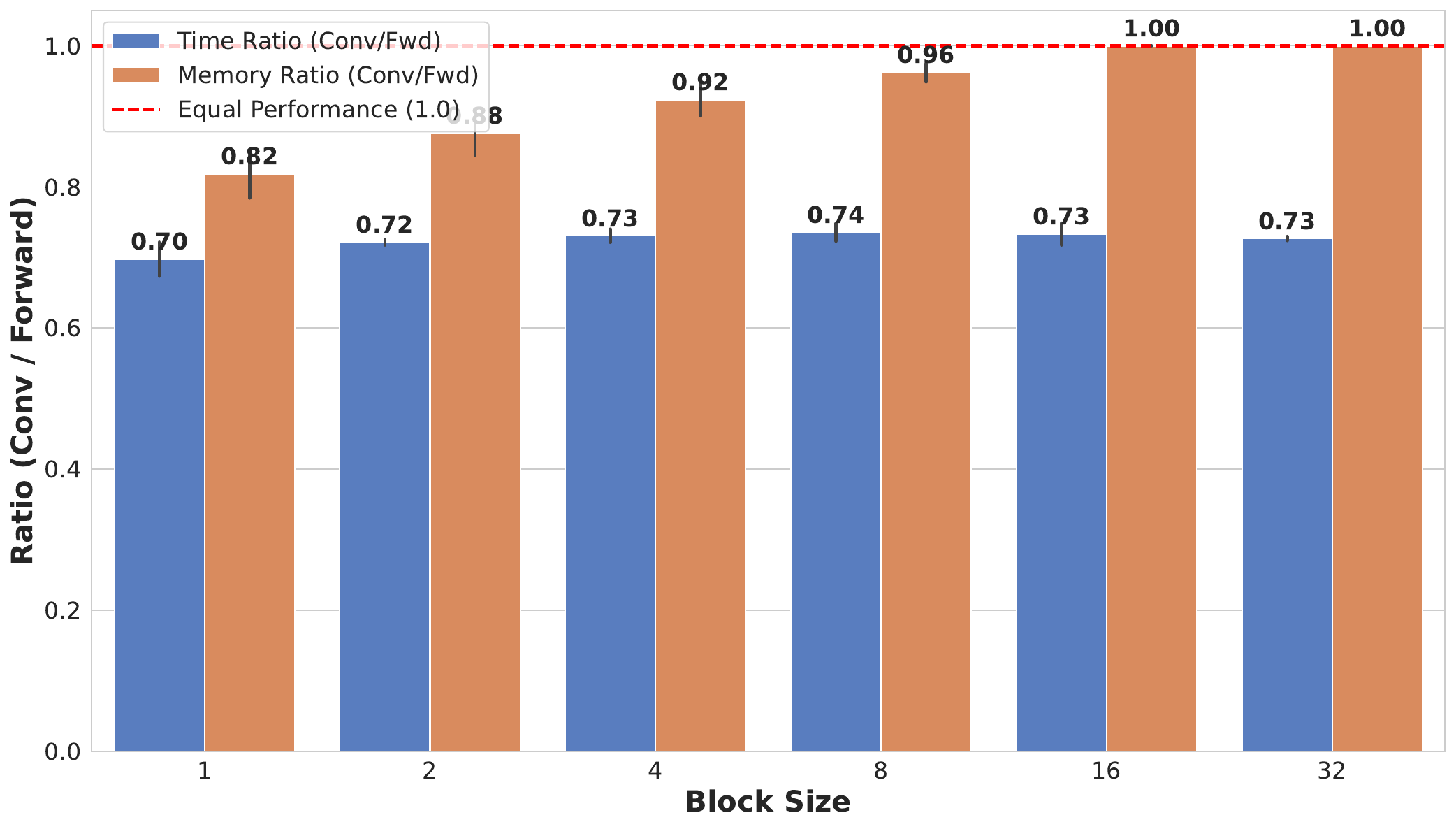}
            \caption{Conv vs Forward DEER Performance}
            \label{fig:pareto_ratio}
        \end{subfigure}        
        \hfill
    \caption{\textbf{Block-Wise DEER Trade-Offs} Subfigures~\ref{fig:pareto_4096} and \ref{fig:pareto_8192} show the wall clock time and peak GPU memory utilisation depending on the different number of sequential blocks ($B \in \{1, 2, 4, 8, 16, 32\}$) used for Conv/Forward DEER simulation. A random input is sampled for each sequence length $T \in \{4096, 8192\}$, along with a randomly initialised ADPTNet layer. For all configurations, $\beta=0.125$, $\Delta t_{\min}=10^{-4}$, and $\Delta t_{\max}=10^{-1}$. Subfigure~\ref{fig:pareto_iters} shows the number of DEER iterations used for each block/ sequence length configuration $(B, T)$. For $B>1$, the number of DEER iterations per sequential block may vary. Therefore, the iteration counts in Subfigure~\ref{fig:pareto_iters} are the median number of iterations to convergence across blocks in a $(B, T)$ simulation. This median is then used in all blocks in each $(B, T)$ setting to obtain the time and memory metrics. Subfigure~\ref{fig:pareto_ratio} shows the efficiency gains in simulation time and peak memory usage of Conv DEER compared to Forward DEER. Note that the Conv DEER implementation here is suboptimal because it recomputes $\bLb$ for $A_t$ instead of caching it from the forward pass. In contrast, Forward DEER reuses the same intermediary results from the forward pass where possible, minimising overhead in accordance with Table~\ref{table:conv_forward_flops}. Peak memory utilisation is obtained with \texttt{torch.cuda\_max\_memory\_allocated()}, while simulation time is obtained as an average of 7 trials after 2 warm-up runs. The recordings are taken without saving gradients for the backwards pass.}
    \label{fig:pareto_blocks}
\end{figure}

\textbf{Scale-ELK Damping} The hypothesis presented in Section~\ref{sec:scale_elk_damping} is that applying a scalar damping factor to Jacobian-free Quasi DEER approximations would "squeeze" the local approximation errors introduced by the $\text{diag}(J_t)$ truncation (Table~\ref{table:error_summary}). Figure~\ref{fig:damping_delta_and_iter} shows this phenomenon in effect. Subfigure~\ref{fig:damping_iter} displays how increasing the damping effect first removes high-iteration-count outliers for both Conv and Forward DEER. Then, all algorithms progressively enter an over-damping regime where iterations to convergence rise because actual information, rather than local approximation errors, vanishes before it propagates to target future time steps (Eq.~\ref{eq:damping_error_and_info}). Subfigure~\ref{fig:damping_delta} brings additional detail to the gradually growing alignment between Conv/Forward DEER and Quasi DEER as the damping effect increases. Interestingly, on the particular settings tested here, only mild damping of $0.999$ or $0.99$ is sufficient to remove most outliers for both Conv and Forward DEER. Still, Conv-DEER retains a slightly higher mismatch in iterations to convergence than Forward DEER, relative to Quasi DEER, across all damping-factor configurations tested. This perhaps reflects the non-trivial influence of the $\epsilon^{fwd}$ error term (Table~\ref{table:error_summary}).

\begin{figure}[H]
    \centering
        \begin{subfigure}[b]{\textwidth}
            \centering        
            \includegraphics[width=\textwidth]{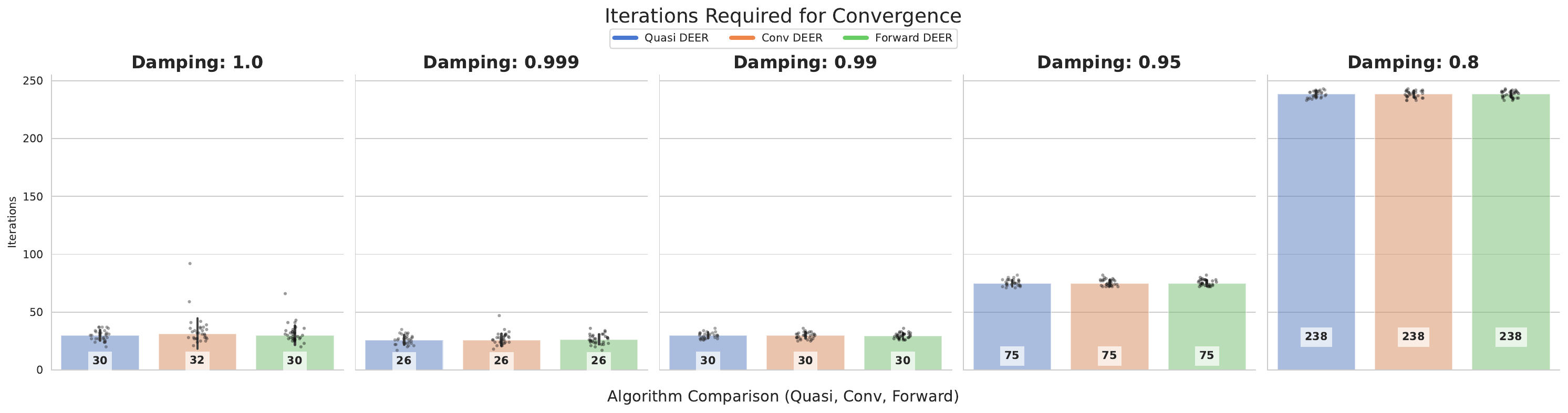}
            \caption{Total Iterations for Convergence}
            \label{fig:damping_iter}
         \end{subfigure}
        \begin{subfigure}[b]{\textwidth}
            \centering        
            \includegraphics[width=\textwidth]{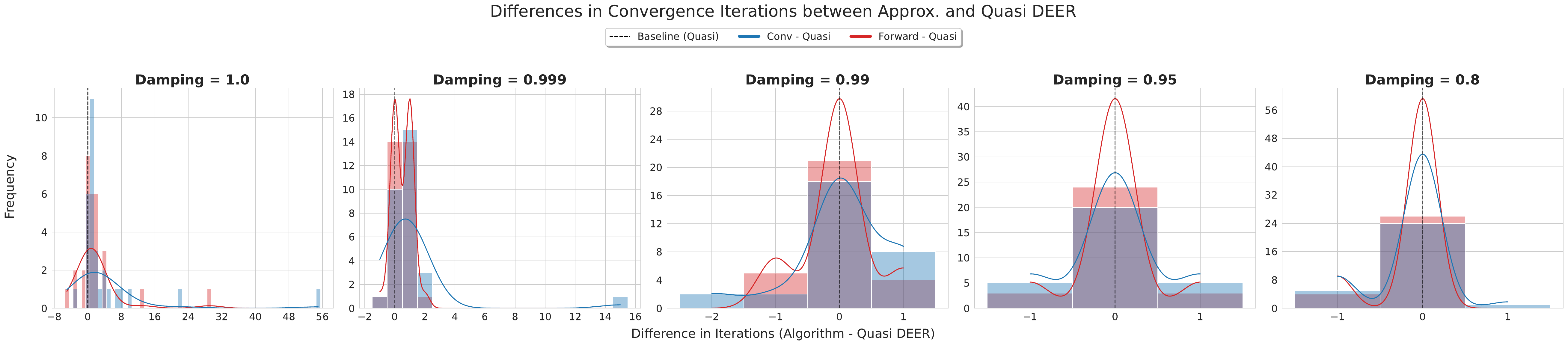}
            \caption{Reducing Approximation Errors}
            \label{fig:damping_delta}
        \end{subfigure}        
     \hfill
        \caption{\textbf{Effect of Damping on DEER Convergence}. Subfigure~\ref{fig:damping_iter} shows the effect of Scale-ELK damping on the absolute number of iterations required for convergence. The 30 dots in each column represent the convergence iterations for different random input samples, while the column shows their median. Each sample is also obtained with its own randomly initialised ADPTNet layers, with 16 eigenvalues $\bLb$ set to negative, leading to a higher median count for $T=1024$ than reported in Figure~\ref{fig:deer_seq_len}. All models tested have $\beta=0.125$, $\Delta t_{\min}=10^{-4}$, and $\Delta t_{\max}=10^{-1}$. Subfigure~\ref{fig:damping_delta} shows the relative difference in iterations to convergence between Conv/Forward DEER and Quasi DEER. The smooth curve shows the probability density. It can be noted that increasing the damping effect quickly reduces the number of outliers. }
        \label{fig:damping_delta_and_iter}
\end{figure}

\subsubsection{ADPTNet Parallelisation during Training}\label{sec:deer_trianing_results}

In Section~\ref{sec:trained_lyapunov_results}, it is shown that the spread of $\bLb$ values tends to increase over training epochs. At the same time, Figure~\ref{fig:deer_iter_dist_min_max} underlines how a higher eigenvalue spread leads to slower DEER convergence. Therefore, one could expect that DEER convergence would slow down throughout training. In Figure~\ref{fig:trained_damping} it can be seen that this is indeed likely to happen in practice. Taking all subfigures into consideration, it can be observed how (in the absence of damping), the sharp increase in DEER iterations during the early training epochs coincides with the fastest rate of growth in the $\bLb$ spread (i.e., during the first $\approx 20-30$ epochs). In addition, the fully parallel simulations in Subfigure~\ref{fig:trained_damped_iter} appear prone to occasionally collapsing into sequential-step parity (DEER iterations = $T$) for the Jacobian-free DEER versions. Scale-ELK damping seems to eliminate extreme outliers from Forward-DEER simulations, at the cost of considerably more baseline iterations to convergence. In the case of Conv-DEER, however, even severe damping, which leads to orders of magnitude higher baseline convergence iterations, does not fully eliminate $T$-iteration outliers. In contrast, block-wise DEER simulations appear stable and even equivalent for all DEER variants, without requiring any damping (Subfigure~\ref{fig:trained_block_iter}). Given that block-wise execution may be the only viable option for training on memory-constrained hardware, this raises the prospect of Conv and Forward DEER as viable, faster and more efficient alternatives to Quasi-DEER. 

\begin{figure}[H]
    \centering
        \begin{subfigure}[b]{\textwidth}
            \centering        
            \includegraphics[width=\textwidth]{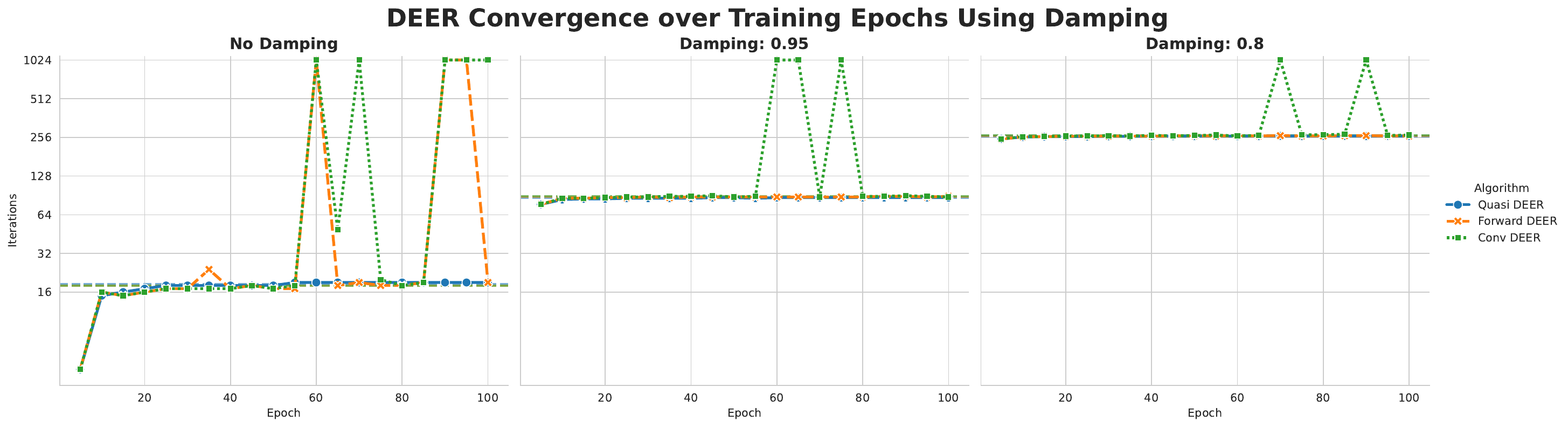}
            \caption{Damping Fully Parallel DEER}
            \label{fig:trained_damped_iter}
         \end{subfigure}
         \hfill
        \begin{subfigure}[b]{0.5\textwidth}
            \centering        
            \includegraphics[width=\textwidth]{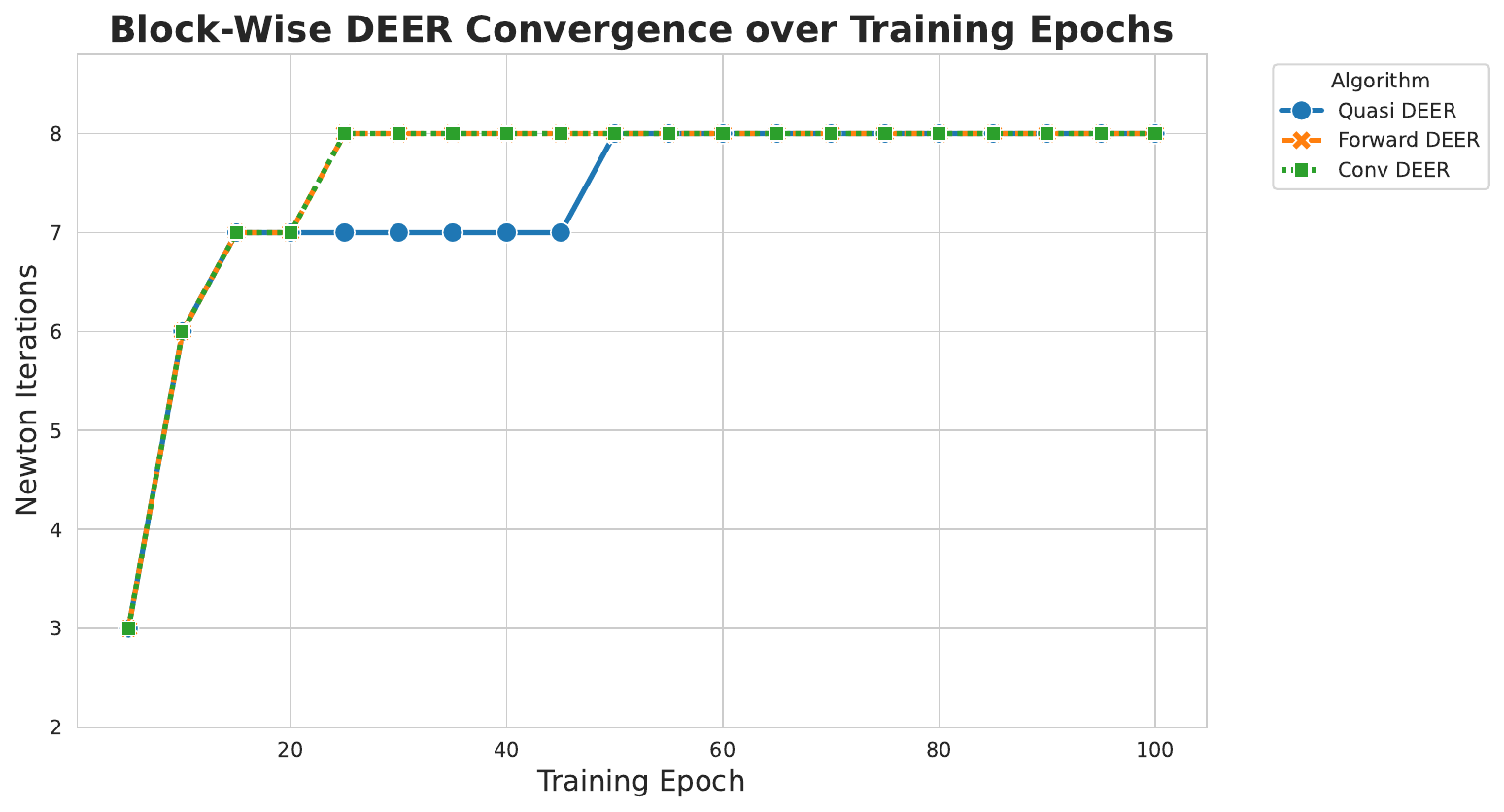}
            \caption{Block-Wise DEER}
            \label{fig:trained_block_iter}
        \end{subfigure}        
        \hfill
        \begin{subfigure}[b]{0.45\textwidth}
            \centering        
            \includegraphics[width=\textwidth]{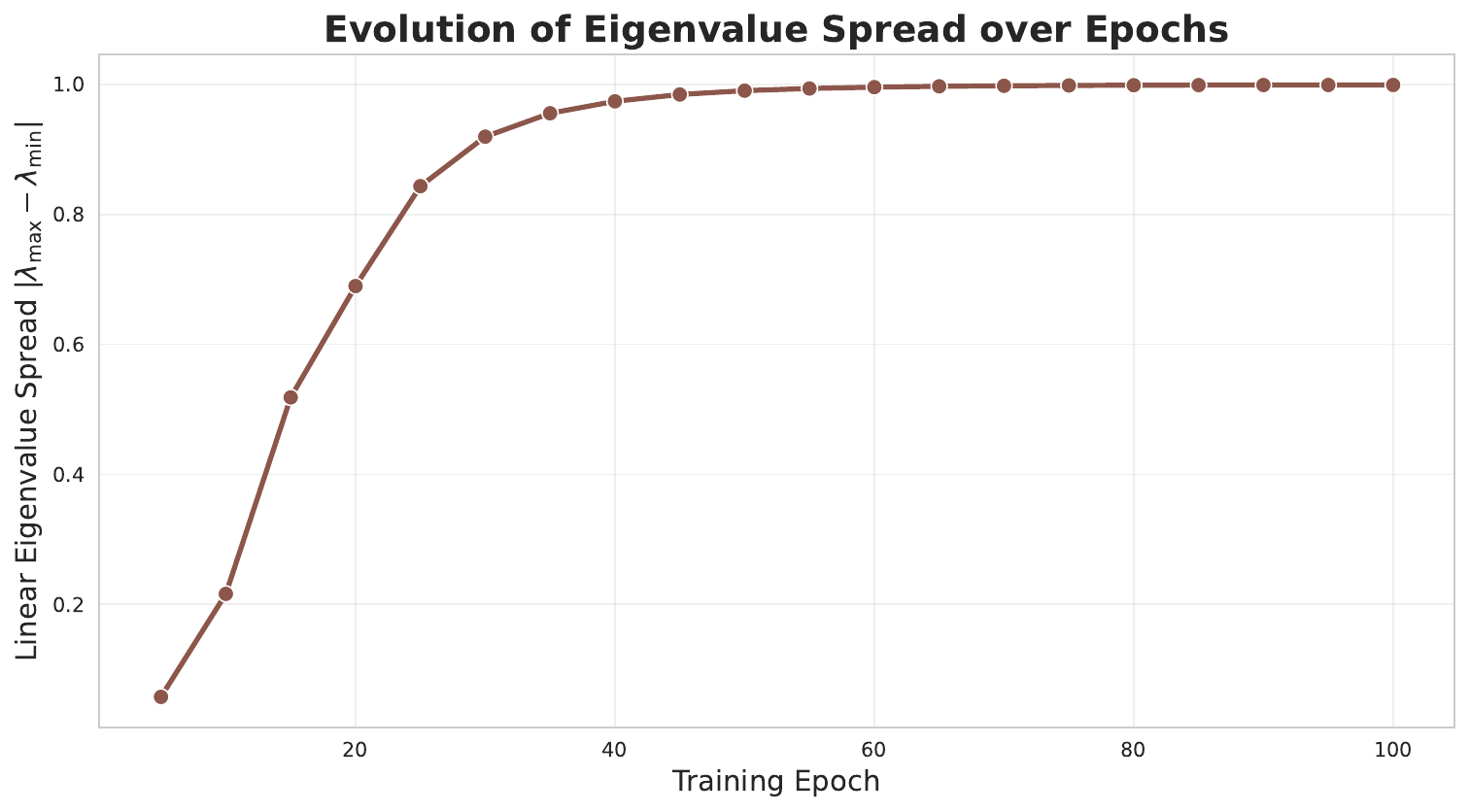}
            \caption{Eigenvalue Spread Evolution}
            \label{fig:trained_damping_spread}
        \end{subfigure}        
     \hfill
    \caption{\textbf{DEER Convergence over Training Epochs}. The convergence data in this figure are obtained from the same fully positive $\bLb$ model snapshots used to produce the spectral analysis in Figure~\ref{fig:trained_pos_vs_neg_eig} (Table~\ref{table:baseline_trained_config}). That also means that the training iterations correspond to learning the $T=1024$ Copy Memory Task. Consequently, Subfigure~\ref{fig:trained_damping_spread} is a reproduction of the data in Subfigure~\ref{fig:neg_vs_pos_eig_spread} included here to aid convergence data analysis. For the block-wise simulations in Subfigure~~\ref{fig:trained_block_iter}, the number of sequential blocks is $B=8$. The shown convergence iteration counts are obtained from the same randomly selected Copy Memory test sample. The main observations are that convergence iterations for all DEER variants increase over epochs in line with increasing $\bLb$ spread. Additionally, in this particular case, Conv-DEER appears unstable for full DEER parallelism in Subfigure~\ref{fig:trained_damped_iter}, while block-wise execution in Subfigure~\ref{fig:trained_block_iter} appears to mitigate the convergence pathology. } 
    \label{fig:trained_damping}
\end{figure}

\subsection{Measuring Non-Linearity with State Tracking} \label{sec:state_tracking_results}

Recall from Section~\ref{ssms} that SSMs place non-linear activation functions position-wise, between layers. Therefore, a single SSM is, in theory, restricted to modelling fading memory models that eventually decay back to a singular resting state / fixed point. However, despite their linear recurrence, there is substantial evidence to suggest that fixed-depth stacked SSMs are capable of modelling non-linear dynamical systems \citep{orvieto2023resurrecting, hu2025state}, such as Mackey-Glass \citep{chilkuri2021parallelizing, abreu2024q}. In other words, for practical purposes, SSM systems can model non-linear dynamics. Conversely, dynamical systems are not fully adequate to answer the question of the extent to which the ADPTNet topological conjugate recurrence induces non-linearity. 

Instead, in recent years, the de facto standard methodology for probing non-linear sequential modelling has become \textit{state tracking} \citep{merrill2023parallelism}. As the name suggests, state tracking refers to the capability of a single-layer sequence model to track changes to a state over time. For instance, \citet{merrill2024illusion} provides the analogy of tracking a game of chess. Sequential piece moves affect the state of the board at each time step, and the network needs to correctly encode these state changes all throughout the game to correctly output its end-state. The same principle can be more abstractly encapsulated by the word problem over the symmetric group $S_n$\footnote{$S$ is once again being overloaded. In SNN contexts, it represents the spiking activation; in the ADPTNet recurrence, it represents a skew-symmetric matrix; and here it is the standard notation for the symmetric group} \citep{merrill2024illusion}. The symmetric group encodes permutations applied to a set of items. For example, given the set of symbols, $[a, b, c, d, e]$, the $S_5$ group encodes all possible permutations(e.g., $\mathtt{swap}(1, 3) \rightarrow [c, b, a, d, e] \in S_5$). The world problem over $S_5$ consists of tracking a sequence of permutations applied to the base set (e.g., what is the end result of sequentially applying $\mathtt{swap}(1, 3) \rightarrow \mathtt{swap}(2, 4) \rightarrow \dots$). 

Crucially, in a seminal finding, \citet{merrill2023parallelism} theoretically proved that state tracking tasks such as $S_5$ cannot be solved by fully parallel architectures such as Transformers without scaling network depth with sequence length. In contrast, traditional non-linear RNNs can perfectly simulate $S_5$ using a single layer. \citet{merrill2024illusion} extended the result to SSMs, which despite the recurrent formulation, lack the expressivity of non-linear RNNs and cannot solve state-tracking at fixed depth either. Table~\ref{table:mamba_state_tracking} summarises concrete challenges that Mamba, for instance, faces when applied to the $S_5$ task, as reported in \citet{schone2025implicit}. 

The same contrast between linear and non-linear recurrences can be observed in Table~\ref{table:baselines_state_tracking}, where a vanilla RNN generalise to 2x and 4x the training sequence length. At the same time, a single-layer,linear, and diagonal ADPTNet ($\beta=0$) cannot surpass $\approx 31.2\%$ accuracy on the training sequence length, and collapses to random accuracy on longer sequences. Furthermore, not even increase the width of the system can improve performance. \citet{grazzi2025unlocking} showed how including negative eigenvalues in linear SSMs can help the networks learn parity state tracking tasks (e.g., determine the final sign after mulitplication with sequence of $\{ \pm1\}$: $1, -1, 1, 1, -1, \dots$ ). Interestingly, even for $\beta=0$ on a non-parity state tracking task such as $S_5$,  there appears to be a slight improvement of performance when setting half the eigenspectrum negative. 

\begin{table}[H]\centering\small\setlength{\tabcolsep}{6pt}
\begin{tabular}{c c c c}
\toprule
Model & Seq.Len. & N. Layers Required & Extrapolation \\
\midrule
\multirow{2}{*}{Mamba} & 8 & 4 & \texttimes \\
 & 32 & 16 & \texttimes \\
\midrule
\multirow{2}{*}{RNN} & 8 & 1 &  \checkmark \\
& 32 & 1 &  \checkmark \\
\bottomrule
\end{tabular}
\caption{\textbf{Mamba / RNN State Tracking Performance Summarised from \citet{schone2025implicit}} This table summarises the number of layers required for Mamba and non-linear RNNs to converge on the $S_5$ task, as reported in \citet{schone2025implicit}. Of note, Mamba requires more layers as sequence length grows, while not being able to extrapolate to longer sequences at test time (i.e., 2x or 4x the training length). In contrast, a single layer RNN can not only solve the task, but also extrapolate to longer input sequences. }
\label{table:mamba_state_tracking}
\end{table}

\begin{table}[H]\centering\small\setlength{\tabcolsep}{6pt}
\begin{tabular}{l l r r c rrr}
\toprule
\multirow{2}{*}{Eig.\ Init.} & \multirow{2}{*}{Model} & \multirow{2}{*}{Width} & \multirow{2}{*}{Sparsity} & \multirow{2}{*}{Neg.\ Eig.} & \multicolumn{3}{c}{Accuracy (\%)} \\
\cmidrule(lr){6-8}
 & & & & & $L{=}8$ & $L{=}16$ & $L{=}32$ \\
 & & & & & \footnotesize$1\times$ & \footnotesize$2\times$ & \footnotesize$4\times$ \\
\midrule
  \multirow{2}{*}{--} & ReLU RNN & 512 & -- & -- & 100.0 & 100.0 & 99.6 \\
 & ReLU RNN & 864 & -- & -- & 100.0 & 100.0 & 100.0 \\
 \midrule
 \multirow{3}{*}{Linspace}
 &  ADPTNet ($\beta=0$) & 512 & 0\% & \checkmark & 31.2 & 1.1 & 0.9 \\
 &  ADPTNet ($\beta=0$) & 624 & 0\% & \checkmark & 31.2 & 1.0 & 0.9 \\
\ & ADPTNet ($\beta=0$) & 512 & 75\% & \checkmark & 26.5 & 0.8 & 0.9 \\
\midrule
\multirow{4}{*}{S4D-Real} & ADPTNet ($\beta=0$) & 512 & 0\% & \checkmark & 28.6 & 1.2 & 0.8 \\
 & ADPTNet ($\beta=0$) & 512 & 0\% & \texttimes & 26.7 & 0.9 & 0.8 \\
 & ADPTNet ($\beta=0$) & 512 & 75\% & \checkmark & 26.9 & 1.1 & 0.8 \\
 & ADPTNet ($\beta=0$) & 512 & 75\% & \texttimes & 25.3 & 0.9 & 0.8 \\
\bottomrule
\end{tabular}
\caption{\textbf{Baseline $\mathbf{S_5}$ Accuracy} Results are obtained by averaging over three seeds for each configuration and following the training setup from Appendix~\ref{appendix:s5_training_config}. One can observe the strong alignment with the results in Table~\ref{table:mamba_state_tracking} with respect to the stark gap in accuracy between linear and non-linear recurrence. }
\label{table:baselines_state_tracking}
\end{table}

Table~\ref{table:s5_8_results} summarises the impact on state tracking abilities that the choice of ADPTNet initialisation and parametrisation can have. Overall, it can be observed that the best predictor of $S_5$ accuracy is $\beta$, with performance monotonically increasing with a higher $\beta$, regardless or eigenspectrum initialisation/parametrisation. Moreover, for the largest setting $\beta=4$, almost all eigenspectrum choices enable signs of extrapolation up to $4\times$ the training sequence length. This evidence supports the intuition that as $\beta$ increases, the "sharpness of turns" of the topological conjugates with $\beta$-bound Lispchitz-ness (Corollary~\ref{coro:lipschitz}), also increases to the point where the system is sufficiently non-linear to model state tracking. Stated differently, the choice of $\beta$ is effectively a gradual parametric "dial" for increasing the non-linearity of the recurrent dynamics. 

While $\beta$ dominates $S_5$ performance, it does not singularly control it. A second visible factor is the choice of eigenspectrum parametrisation, which comprises whether $75\%$ of the eigenvalues are masked to $0$ permanently, and whether half of the non-zero eigenvalues are constrained to be negative. For $\beta < 1$, negative eigenvalues appear to improve accuracy regardless of sparsity, while for $\beta=1$, sparsity appears to becomes the dominant driver of performance. Interestingly, for $\beta=4$, sparsity and negative eigenvalues become mutually exclusive, with both reaching comparable peak accuracies only when separated from the other. The effect does appear conditional on the initialisation used, with Linspace reaching its highest performance with a sparse and positive spectrum, while S4D-Real does so on the reverse. In terms of maximum extrapolation performance, Linspace initialisation outperforms S4D-Real. 

Hence, it can be argued that sparsity and negative eigenvalues engender non-linearity through different inductive biases. On one hand, sparsity can be seen as forcing the network to learn ReLU-like forgetting, with the similarity transform effectively rotating information into the 0-masked dimensions to be discarded. The "surviving" information is then projected onto a low-dimensional state space akin to the action of low-rank RNNs \citep{mastrogiuseppe2018linking}. On the other hand, negative eigenvalues maximise the possible spread of the eigenspectrum. This, in turn, maximises the output variance from the similarity transforms, increasing model expressivity. At the lowest tested $\beta=0.00125$, both mechanisms are necessary to boost accuracy slightly over the linear $\beta=0$ baseline ($32.4\% > 31.2\%$). However, with higher $\beta$, as previously highlighted, they become incompatible.

\begin{table}[H]\centering\small\setlength{\tabcolsep}{6pt}
\begin{tabular}{l r r c rrr}
\toprule
\multirow{2}{*}{Eig.\ Init.} & \multirow{2}{*}{Sparsity} & \multirow{2}{*}{$\beta$} & \multirow{2}{*}{Neg.\ Eig.} & \multicolumn{3}{c}{Accuracy (\%)} \\
\cmidrule(lr){5-7}
 & & & & $L{=}8$ & $L{=}16$ & $L{=}32$ \\
 & & & & \footnotesize$1\times$ & \footnotesize$2\times$ & \footnotesize$4\times$ \\
\midrule
\multirow{8}{*}{Linspace} & \multirow{2}{*}{0\%} & 4 & \checkmark & 43.4 & 1.7 & 0.8 \\
 &  & 4 & \texttimes & 51.5 & 1.3 & 0.9 \\
\cmidrule(lr){2-7}
 & \multirow{6}{*}{75\%} & 0.00125 & \checkmark & 28.8 & 0.8 & 0.8 \\
 &  & 0.0125 & \checkmark & 36.2 & 1.2 & 1.1 \\
 &  & 0.125 & \checkmark & 43.6 & 0.9 & 0.8 \\
 &  & 1 & \checkmark & 76.9 & 30.5 & 1.5 \\
 &  & 4 & \checkmark & 79.9 & 44.2 & 8.7 \\
 &  & 4 & \texttimes & 98.1 & 62.4 & 2.9 \\
\midrule
\multirow{24}{*}{S4D-Real} & \multirow{12}{*}{0\%} & 0.00125 & \checkmark & 30.3 & 1.3 & 0.8 \\
 &  & 0.00125 & \texttimes & 29.4 & 1.2 & 0.8 \\
 &  & 0.0125 & \checkmark & 50.3 & 1.4 & 0.8 \\
 &  & 0.0125 & \texttimes & 42.5 & 1.3 & 0.8 \\
 &  & 0.125 & \checkmark & 50.7 & 1.7 & 0.8 \\
 &  & 0.125 & \texttimes & 45.1 & 1.3 & 0.8 \\
 &  & 0.5 & \checkmark & 49.6 & 1.6 & 0.8 \\
 &  & 0.5 & \texttimes & 52.0 & 1.5 & 0.9 \\
 &  & 1 & \checkmark & 64.3 & 18.1 & 5.5 \\
 &  & 1 & \texttimes & 56.2 & 1.5 & 0.9 \\
 &  & 4 & \checkmark & 85.3 & 60.3 & 23.0 \\
 &  & 4 & \texttimes & 66.1 & 27.4 & 2.3 \\
\cmidrule(lr){2-7}
 & \multirow{12}{*}{75\%} & 0.00125 & \checkmark & 32.4 & 1.4 & 0.9 \\
 &  & 0.00125 & \texttimes & 29.4 & 1.4 & 0.8 \\
 &  & 0.0125 & \checkmark & 48.4 & 1.3 & 0.8 \\
 &  & 0.0125 & \texttimes & 43.8 & 1.2 & 0.8 \\
 &  & 0.125 & \checkmark & 57.3 & 1.6 & 0.8 \\
 &  & 0.125 & \texttimes & 44.4 & 1.5 & 0.8 \\
 &  & 0.5 & \checkmark & 53.6 & 2.5 & 0.8 \\
 &  & 0.5 & \texttimes & 51.2 & 1.4 & 0.9 \\
 &  & 1 & \checkmark & 55.9 & 1.7 & 0.9 \\
 &  & 1 & \texttimes & 53.7 & 1.3 & 0.9 \\
 &  & 4 & \checkmark & 64.5 & 29.5 & 10.2 \\
 &  & 4 & \texttimes & 83.3 & 45.5 & 4.2 \\
\bottomrule
\end{tabular}
\caption{\textbf{Effect of $\mathbf{\beta}$ and Eigenspectrum on ADPTNet State Tracking Performance} Each result in this table is obtained by averaging over three seeds, as in Table~\ref{table:baselines_state_tracking}. In addition, the training configuration is in Appendix~\ref{appendix:s5_training_config}. It can be seen that the most reliable indicator of $S_5$ word problem accuracy is the choice of $\beta$. To maximise performance, however, eigenvalue initialisation and parametrisation must also be considered. }
\label{table:s5_8_results}
\end{table}

Table~\ref{table:s5_32_results} shows the effect of including additional $k$ and $v$ processing steps on $S_5$-word problem accuracy, this time on training sequence length $L=32$. As mentioned in Section~\ref{sec:ADPTNet_def}, applying a ReLU non-linearity before $K$ and $ V$ projections does not affect long-term behaviour guarantees related to the Lyapunov spectrum (Lemma~\ref{lemma:le_spectrum}). However, as shown in Table~\ref{table:s5_32_results}, it does make a substantial difference in state-tracking ability. More specifically, including ReLU doubles accuracy on the training length (perfectly learning the task with $100\%$ accuracy), and almost completely bridges the gap to vanilla RNNs for $2\times$ length extrapolation ($97\%$). Even in the $4\times$ extrapolation setting, maximum accuracy is visibly higher with ReLU than all results on the shorter sequences in Table~\ref{table:s5_8_results}. 

Here, it is important to recall that the orthogonal matrices used in the ADPTNet recurrence are obtained via Riemannian gradient descent steps from the origin $I$, of step size $\beta$ and in a high-dimensional direction defined by $k$ and $v$ outer and inner products. Therefore, there are two hyperparameters for ADPTNet non-linearity and expressivity that do not pertain to $\beta$ or the inclusion of ReLU activations. Firstly, as argued in Section~\ref{sec:controlled_dot}, the strength of off-diagonal interactions is controlled, in part, by the dot-product similarity between $k$ and $v$. The $<k, v>$ setting in Table~\ref{table:s5_32_results} uses the a priori dot product control method from Section~\ref{sec:controlled_dot} to either enforce or not enforce $k\cdot v=0.5$. As a result, it can be seen that strict constraints on the variance of the dot product do reduce model expressivity and hurt $S_5$ performance to some extent. For example, compared to $49.2\%$ accuracy on $L=32$ for the baseline configuration (no LayerNorm, ReLU or $k \cdot v$ control), including $k\cdot v$ reduces accuracy by almost $6\%$ ($43.4\%$).  

Secondly, LayerNorm can be applied to $Kx$ and $Vx$ before unit-normalisation to constrain their variance to 1. In other words, the variance in the directions to travel across the manifold is constrained. Table~\ref{table:s5_32_results} shows how including this effectively erases state-tracking abilities. Conversely, this suggests that ADPTNet actively learns to control the interaction between $k$ and $v$ to achieve state tracking. However, including ReLU largely mitigates constraints on both the inner-product and outer-product. 

\begin{table}[H]\centering\small\setlength{\tabcolsep}{6pt}
\begin{tabular}{c c c rrr r}
\toprule
\multirow{3}{*}{LN} & \multirow{3}{*}{ReLU} & \multirow{3}{*}{$\langle k, v\rangle$} & \multicolumn{3}{c}{Mean Accuracy (\%)} & Best (\%) \\
\cmidrule(lr){4-6} \cmidrule(lr){7-7}
 & & & $L{=}32$ & $L{=}64$ & $L{=}128$ & $L{=}128$ \\
 & & & \footnotesize$1\times$ & \footnotesize$2\times$ & \footnotesize$4\times$ & \footnotesize$4\times$ \\
\midrule
\texttimes  & \texttimes  & \texttimes  &  49.2 & 26.3 &  9.9 & 10.3 \\
\checkmark  & \texttimes  & \texttimes  &   5.3 &  3.6 &  2.5 &  2.9 \\
\texttimes  & \checkmark  & \texttimes  & 100.0 & 97.0 &  5.1 & 36.8 \\
\texttimes  & \texttimes  & \checkmark  &  43.4 & 21.8 &  8.7 &  8.9 \\
\checkmark  & \checkmark  & \texttimes  & 100.0 & 93.6 &  0.8 &  8.8 \\
\texttimes  & \checkmark  & \checkmark  & 100.0 & 89.8 & 15.8 & 29.7 \\
\bottomrule
\end{tabular}
\caption{\textbf{Ablation on L=32 $\mathbf{S_5}$} As with Tables~\ref{table:baselines_state_tracking} and \ref{table:s5_8_results}, results are obtained by averaging over 3 seeds and following the training configuration from Appendix~\ref{appendix:s5_training_config}. \textit{LN} refers to whether LayerNorm is applied to $Kx$ and $Vx$ before unit normalisation. \textit{ReLU} refers to including a ReLU activation in $K\mathtt{ReLU}(x)$ and $V\mathtt{ReLU}(x)$. $<k, v>$ denotes whether the dot product between $k$ and $v$ is constrained to 0.5 using the methods from Section~\ref{sec:controlled_dot}. Overall, ReLU can be observed to significantly benefit state tracking performance, mitigating even the effects of adding LN and $<k, v>$ constraints, which otherwise can severely impact accuracy. }
\label{table:s5_32_results}
\end{table}

\subsection{Measuring Adaptability with Selective Copy} \label{sec:selective_copy_results}

Sections~\ref{sec:trained_lyapunov_results} and \ref{sec:deer_trianing_results} employ the Copy Memory task (Fig.~\ref{fig:copy_memory}), which measures a network's ability to learn fixed delay lines. While it effectively probes long-range dependencies, it requires no \textit{selectivity} or \textit{adaptability}. Instead, \citet{gu2023mamba} showed that \textit{Selective Copying} task performance can distinguish data-dependent, adaptive networks, such as Mamba, from fixed-parameter LTI SSMs such as S4. In Selective Copying, target tokens are randomly spread throughout the input sequences, interleaved with distractors (Fig.~\ref{fig:selective_copy_memory}). As in the fixed Copy Memory task, the network is prompted to recall all the target tokens at the end of the input sequence. \citet{gu2023mamba} presented evidence that, while S4 struggles to generalise on Selective Copy, Mamba can fully solve it given the same parameter budget. The most widely accepted explanation is that a non-data-dependent network effectively dilutes its memory with distractors and cannot discard unhelpful information. Conversely, adaptive/selective models dynamically discard distractors. 

Tables~\ref{table:linear_ADPTNet_selective_results} and \ref{table:non-linear_selective_results} show how ADPTNet performs on the Selective Copying task. Here, the focus is on ablating the effect of $\beta$ on selectivity rather than establishing absolute performance metrics compared to existing methods. In this regard, both tables mirror the state-tracking results from Table~\ref{table:s5_8_results}. Namely, there is a robust and monotonic improvement in Selective Copying accuracy stemming from higher $\beta$. The effect is present regardless of whether the topological conjugacy operation is linear (i.e., input-driven, see Section~\ref{sec:linear_ADPTNet}) or non-linear (i.e. recurrent state-driven). Remarkably, Table~\ref{table:linear_ADPTNet_selective_results} shows how even relatively small $\beta = 0.00125$, which effectively forces off-diagonal interactions to be weak couplings/perturbations, shows visible improvements in selectivity over the diagonal $\beta=0$ baseline. The non-linear ADPTNet recurrence in Table~\ref{table:non-linear_selective_results} similarly shows dramatic improvements in selectivity with a relatively low $\beta=0.0125$, more than doubling Selective Copy accuracy. It is important to note that the results between the two tables discussed here are not directly comparable, since Table~\ref{table:linear_ADPTNet_selective_results} is the result of a larger model with $ L=1024$ sequence length, compared to the $L=128$ sequence length in Table~\ref{table:non-linear_selective_results}. 
\begin{figure}[H] 
    \centering
        \centering        
        \includegraphics[width=0.75\textwidth]{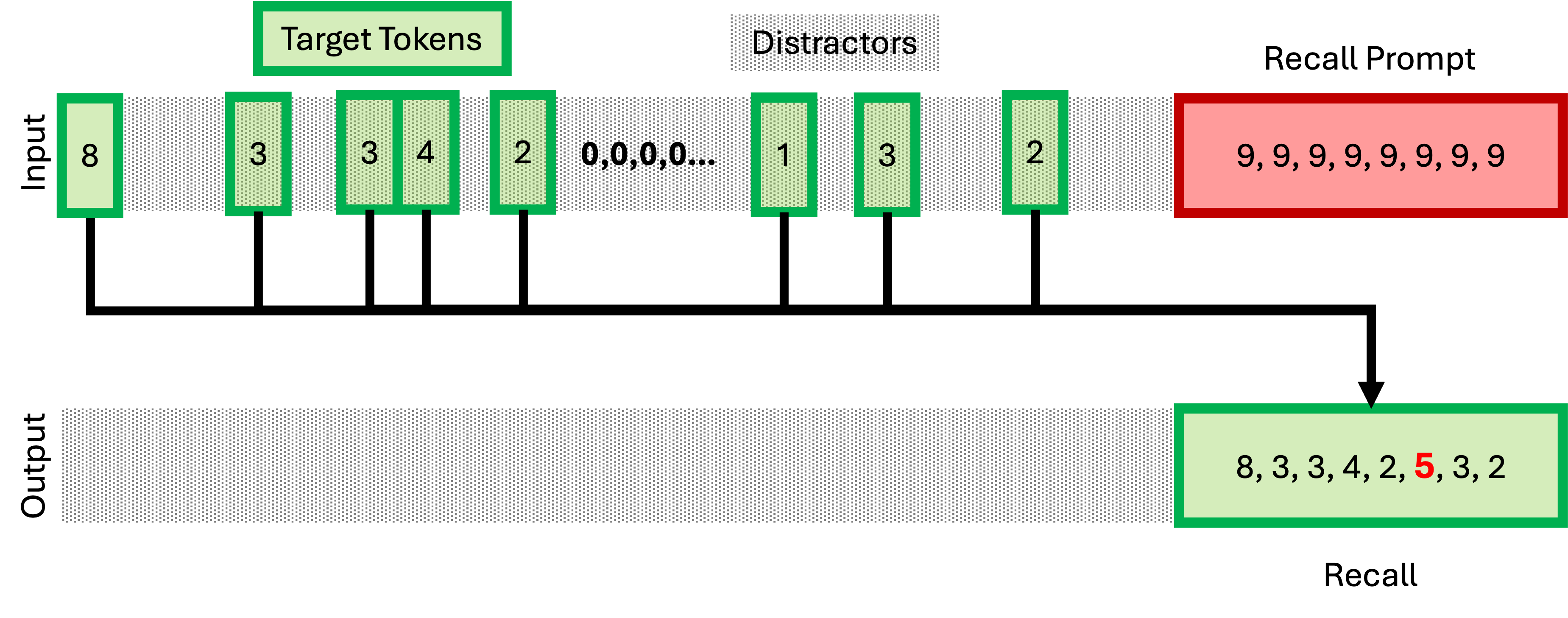}
        \hfill
        \caption{\textbf{Selective Copy Task} The Selective Copying task inherits the same general formulation as the Copy Memory task (Fig.~\ref{fig:copy_memory}). The difference is that here the target tokens are spread out randomly among the distractors. }
        \label{fig:selective_copy_memory}
\end{figure}

\begin{table}[H]
    \centering
    \renewcommand{\arraystretch}{1.5}
    \begin{subtable}[t]{0.46\textwidth}
        \centering
        \begin{tabular}{|c|c|}
            \hline
            Model & Acc (\%) \\
            \hline
            Baseline ($\beta=0$) & 46.3\\
            \hline
            $\beta = 0.00125$ & 64\\
            \hline
            $\beta = 0.0125$ & 69\\
            \hline
            $\beta = 0.125$ & 75.6\\
            \hline
        \end{tabular}
        \caption{\textbf{LinearADPTNet}}
        \label{table:linear_ADPTNet_selective_results}
    \end{subtable}
    \hfill
    \begin{subtable}[t]{0.46\textwidth}
        \centering
        \begin{tabular}{|c|c|}
            \hline
            Model & Acc (\%) \\
            \hline
            Baseline ($\beta=0$) & 36\\
            \hline
            $\beta = 0.0125$ & 80\\
            \hline
        \end{tabular}
        \caption{\textbf{Non-Linear ADPTNet}}
        \label{table:non-linear_selective_results}
    \end{subtable}
    \caption{\textbf{Effect of $\mathbf{\beta}$ on Selective Copy Accuracy} Details of the experimental setup are available in Appendix~\ref{appendix:selective_copy_setup}. It should be mentioned that each baseline in Tables~\ref{table:linear_ADPTNet_selective_results} and \ref{table:non-linear_selective_results} is obtained by taking the maximum accuracy over three seeds across six learning rate $\times $ weight decay configurations. The main observation from both tables is that using a larger $\beta$ improves Selective Copying accuracy. Table~\ref{table:linear_ADPTNet_selective_results} is obtained using sequence length $L=1024$, while Table~\ref{table:non-linear_selective_results} uses $L=128$. In addition, Table~\ref{table:linear_ADPTNet_selective_results} uses a model hidden size of $h=128$, and an state expansion of $n_{\text{state}=2}$ (see Section~\ref{sec:state_expansion}). For Table~\ref{table:non-linear_selective_results}, hidden size is $h=64$ and state expansion is $n_{\text{state}}=4$. It is important to highlight that the specific hyperparameter selection here is not optimised for producing the absolute highest Selective Copy accuracy. Instead, it purposefully underparametrises models to the point where differences in expressivity can be isolated. Evidently, drastically scaling the number of parameters could allow non-selective architectures to simply memorise all possible target token positions. 
}
    \label{table:ADPTNet_sc_results}
\end{table}

\subsubsection{ADPTNet Task-Dependent Adaptation}

Given how closely related the Copy Memory and the Selective Copying tasks are, one can use them to probe how ADPTNet learns to use the topological conjugate for different purposes. Namely, by setting the same vocabulary size, number of target tokens and distractors, model size, and training hyperparameters, it is possible to study how the exact same randomly initialised network evolves depending on the learning objective: selectivity or fixed delay-line learning. The specific shared hyperparameters between the twin experimental conditions are included in Appendix~\ref{appendix:selective_copy_setup}. 

\textbf{Lyapunov Spectrum} A key question worth answering is whether training for selectivity affects the Lyapunov spectrum's alignment with the parametrised recurrent eigenspectrum. In this regard, Figure~\ref{fig:sel_copy_lyapunov} shows how, irrespective of the task objective or network layer, the Lyapunov spectrum rests within the bounds of $\ln(|\bLb|)$, and thus implicitly meets the prediction of Lemma~\ref{lemma:le_spectrum}. Interestingly, as one would intuitively suspect, the eigenspectra resulting from the Selective Copying tasks have lower minima than those from Copy Memory. Lower eigenvalues yield shorter timescales, allowing quicker decay and disposal of distractors. It should also be noted that while Figure~\ref{fig:sel_copy_lyapunov} appears to show a stronger "squeezing effect" of the Lyapunov spectrum compared to the eigenvalues, it is only so relative to the spread of the spectrum. Specifically, on all layers for both tasks, the absolute difference between the minimum Lyapunov exponent $\alpha_{\min}$ and $\ln(|\bLb|)_{\min}$ is on the order of $\approx 1^{-2}$. 

\begin{figure}[H]
    \centering
        \begin{subfigure}[b]{0.475\textwidth}
            \centering        
            \includegraphics[width=\textwidth]{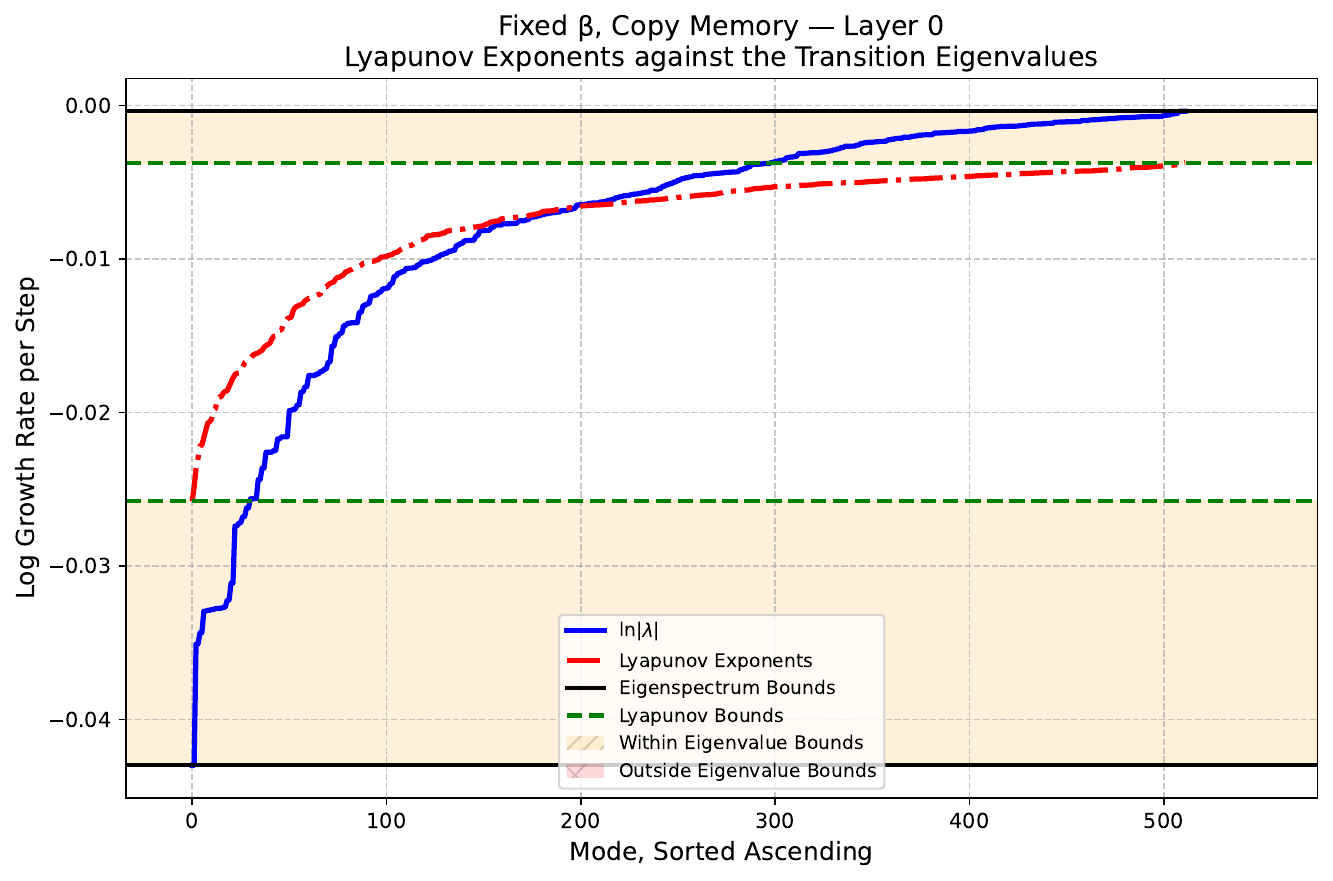}
            \caption{Copy Memory Task - First Layer}
            \label{fig:cm_le_0}
         \end{subfigure}
         \hfill
        \begin{subfigure}[b]{0.475\textwidth}
            \centering        
            \includegraphics[width=\textwidth]{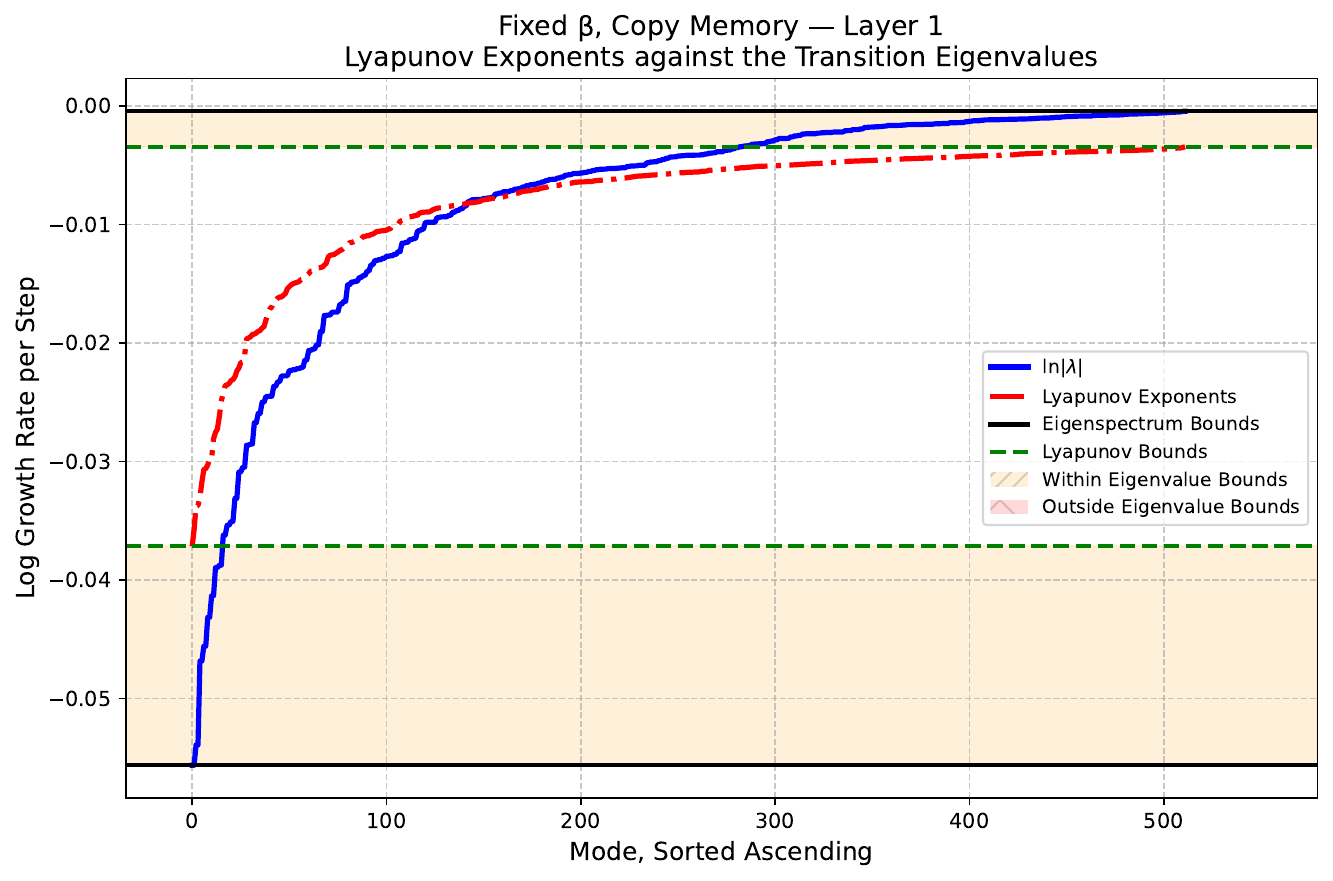}
            \caption{Copy Memory Task - Second Layer}
            \label{fig:cm_le_1}
        \end{subfigure}    
         \hfill
        \begin{subfigure}[b]{0.475\textwidth}
            \centering        
            \includegraphics[width=\textwidth]{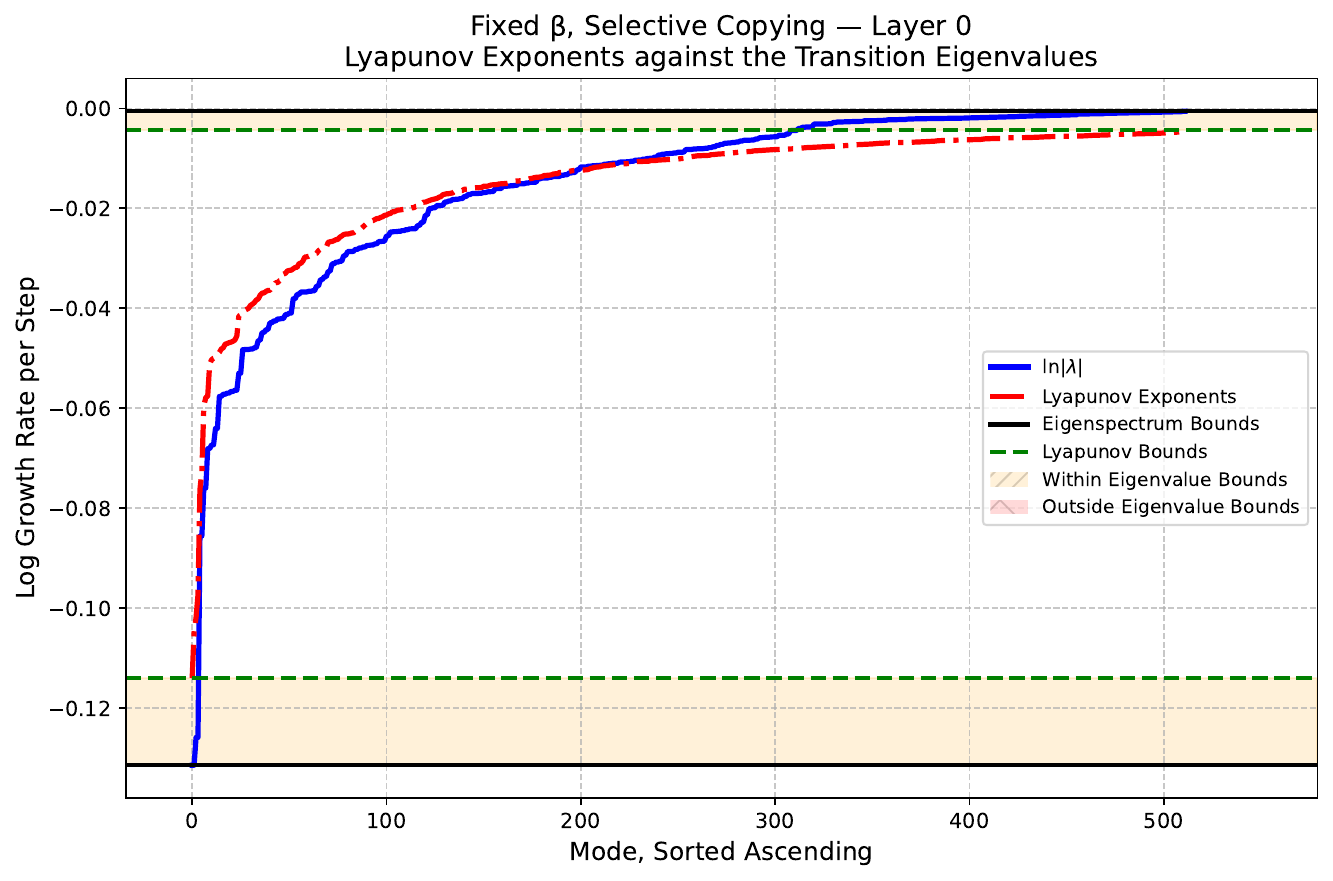}
            \caption{Selective Copy - First Layer}
            \label{fig:sc_le_0}
        \end{subfigure}   
         \hfill
        \begin{subfigure}[b]{0.475\textwidth}
            \centering        
            \includegraphics[width=\textwidth]{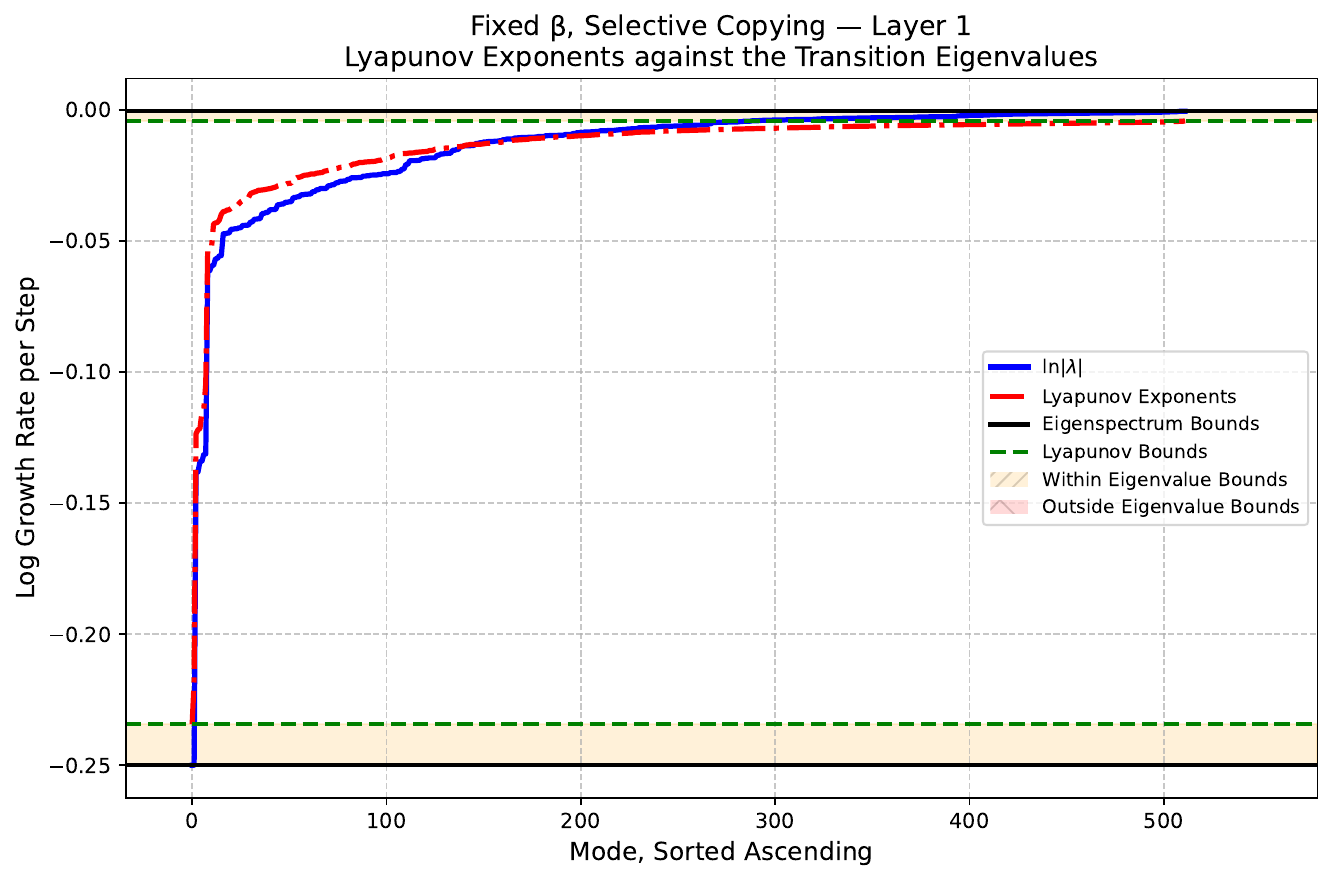}
            \caption{Selective Copy - Second Layer}
            \label{fig:sc_le_1}
        \end{subfigure}        
        \hfill
    \caption{\textbf{Lyapunov Spectrum learning for Matching Copying Tasks}. Each subfigure here shows the Lyapunov spectrum compared to the eigenspectrum for each layer in both networks trained on Selective Copying and Copy Memory tasks with matching training configurations and model hyperparameters (Appendix~\ref{appendix:selective_copy_setup}). Note that the empirical Lyapunov exponents are bounded and closely match $\ln(|\bLb|)$ across all layers examined. To obtain the spectra, a random sample is chosen from both datasets for each corresponding model. }
    \label{fig:sel_copy_lyapunov}
\end{figure}

\textbf{Trainable $\mathbf{\beta}$} Another question is whether, if ADPTNet layers are permitted to include $\beta$ in the training, the learning objective might push it towards making the network more/less diagonal-dominated, non-linear, and selective. Using the same training and model setup from Appendix~\ref{appendix:selective_copy_setup}, networks are now initialised with $\beta=0.0125$ and allowed to train, with clipping applied to keep it within $[0.0001, 0.025]$. Table~\ref{table:trainable_beta_sc} shows that, regardless of task, learning pushes $\beta$ towards its maximum, saturating the upper bound clipping value. 

This suggests that the similarity transforms are being actively used to learn the task, whether it is Selective Copying or Copy Memory. In other words, the models are not simply learning to circumvent the dynamic off-diagonal couplings. Still, Table~\ref  {table:trainable_beta_sc} does also show how including trainable $\beta$ only improves accuracy over the fixed baseline Selective Copying, following the trend of higher accuracies for higher $\beta$  from Table~\ref{table:ADPTNet_sc_results}. 

\begin{table}[H]
    \centering
    \begin{tabular}{c c c c}
        \hline
        Task & Layer & Learned $\beta$ & Acc. Diff.\\
        \midrule
        \multirow{2}{*}{Copy Memory}  & 0 & 0.026 & \multirow{2}{*}{-1.5\%}\\
        & 1 & 0.029 \\
        \midrule
         \multirow{2}{*}{Selective Copy} & 0 & 0.027 & \multirow{2}{*}{+2.4\%}\\
         & 1 & 0.027 \\
       \midrule
    \end{tabular}
    \caption{\textbf{Trained $\mathbf{\beta}$ across Tasks and Layers} Both networks are initialised with $\beta=0.0125$, and it can be observed that training pushes it to saturation of the upper clipping bound $0.025$. The \textit{Acc. Diff.} column shows the change in final accuracy compared to the fixed $\beta$ models used for Figure~\ref{fig:sel_copy_lyapunov}. In absolute terms, fixed $\beta$ resulted in $89.69\%$ accuracy for Copy Memory and $63.70\%$ for Selective Copy. For trainable $\beta$, the accuracies are $88.20\%$ for Copy Memory and $66.07\%$ for Selective Copying.}
    \label{table:trainable_beta_sc}
\end{table}

\textbf{Recurrent Weights Entropy} In this context, normalised Shannon entropy \citep{shannon1948mathematical} $H$ measures the degree of uniformity across the recurrent matrix entries, with $H = 1$ being achieved for all constant entries, and $H=0$ for a single large entry that dominates. It should be mentioned that since the diagonal of $M$ (Eq.~\ref{eq:state_dependent_similarity}) is dominant regardless of task, it is excluded from this computation. Hence, for the off-diagonal entries $|m_i|$ from $M \in \mathbb{C}^{n\times n}$, the discrete probabilities $p_i, i = 1, 2, \dots, t, t = n(n-1)$ and the normalised entropy $H$ are shown in Equation~\ref{eq:entropy}.

\begin{equation}\label{eq:entropy}
    \begin{split}
         p_i &= \frac{|m_i| ^2}{\sum_j^t|m_j|^2} \\
         H &= -\frac{\sum_i^tp_i\log(p_i)}{\log (t)}
    \end{split}
\end{equation}

Figures~\ref{fig:materialise_matrices_layer0} and ~\ref{fig:materialise_matrices_layer1} show how the Copy Memory and Selective Copy tasks shape the data-dependent ADPTNet recurrent matrices across depth and training epochs. In this regard, the most striking effect observable is the higher degree of uniformity emerging from training on Selective Copy. On both layers, Selective Copying engenders higher off-diagonal entropies compared to Copy Memory at the end of training, ($0.77 > 0.66$ for the first layer and $0.71> 0.61$ for the second layer). Visually, Figure~\ref{fig:materialise_matrices_layer0} in particular shows the largest discrepancy emerging from training on the two tasks, with the Selective Copying-induced first layer dynamics evidently showing a denser "all-to-all" coupling pattern between recurrent neurons, compared to Copy Memory. Since the weights are data-dependent, slight differences are also visible at initialisation. Namely, while the overall patterns largely match layer-wise between the two tasks, Selective Copy displays persistently higher amplitudes in both Figures~\ref{fig:materialise_matrices_layer0} and \ref{fig:materialise_matrices_layer1}. 

These observations, combined with the more quickly decaying time-scales present for Selective Copy (Figure~\ref{fig:sel_copy_lyapunov}), point towards the networks effectively learning to not only create "forget" channels, but also learning to move distractor information to those channels for disposal.  In other words, some neurons learn to have quickly fading memory, and the network dynamically assigns information to those neurons to induce selective forgetting via changing connectivity. As a concrete example, on Copy Memory, the minimum log-eigenvalue is $\ln(\bar\lambda_{\min})\approx -0.04$ and the sample recurrent matrix entropy is $\approx 0.61$ for the second layer of the network. By contrast, for Selective Copying, the second layer has $\ln(\bar\lambda_{\min})\approx -0.25$ and a sample recurrent matrix entropy of $\approx 0.71$. Therefore, Selective Copying induces higher entropy (more uniform connectivity) and a lower minimum timescale (faster decay), consistent with the hypothesis of the network routing information to "forget" neurons. 

\begin{figure}[H]
    \centering
        \begin{subfigure}[b]{\textwidth}
            \centering        
            \includegraphics[width=\textwidth]{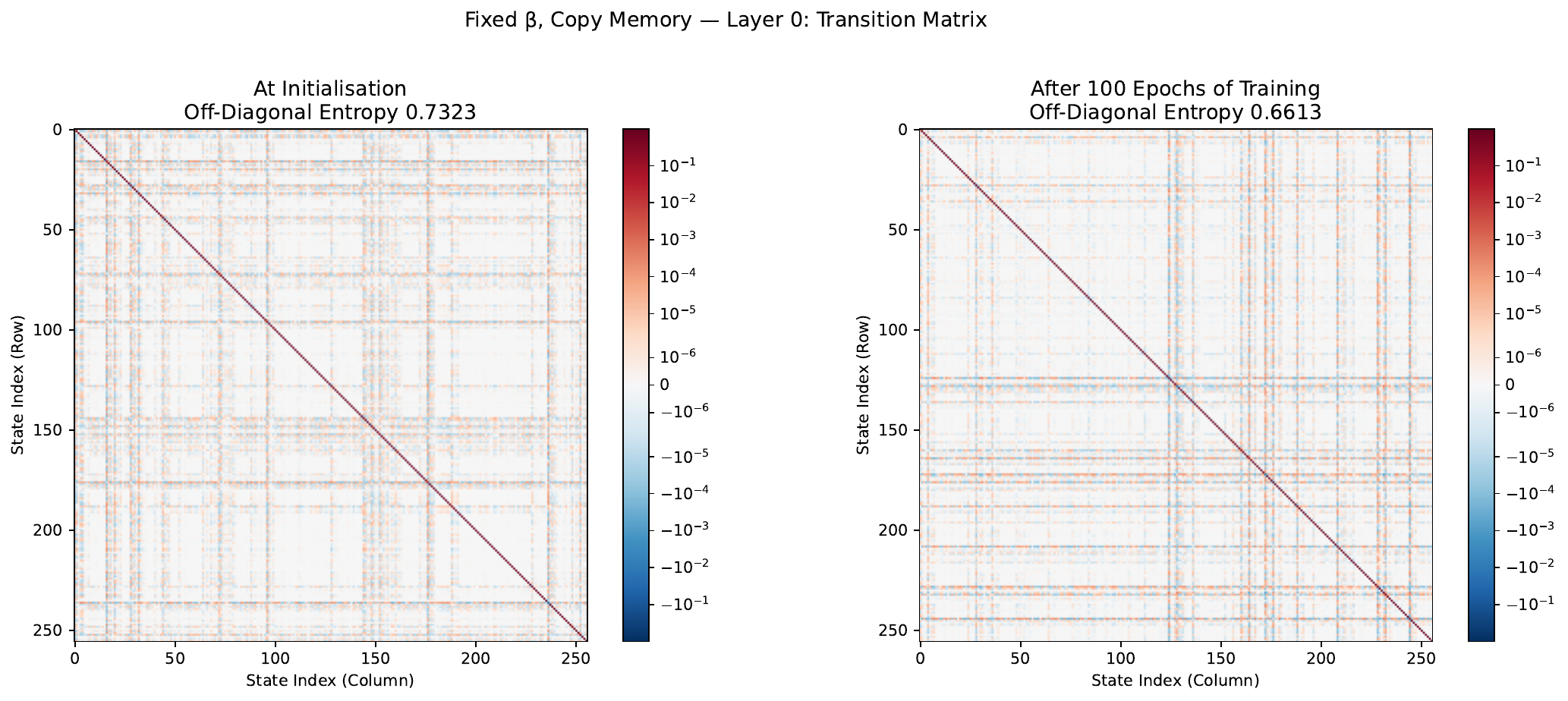}
            \caption{Copy Memory - First Layer}
            \label{fig:cm_matrix_layer0}
         \end{subfigure}
         \hfill
        \begin{subfigure}[b]{\textwidth}
            \centering        
            \includegraphics[width=\textwidth]{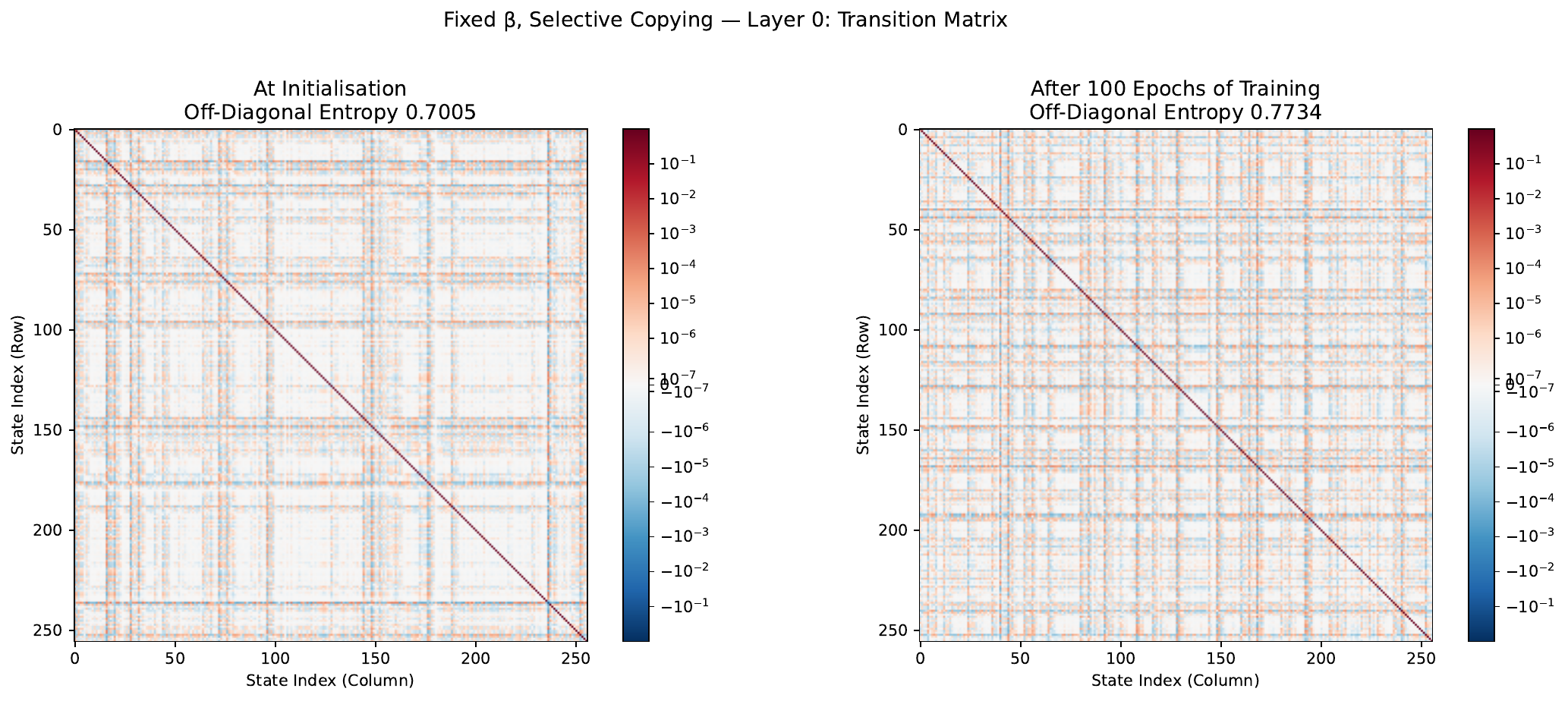}
            \caption{Selective Copy - First Layer}
            \label{fig:sc_matrix_layer0}
        \end{subfigure}    
         \hfill
    \caption{\textbf{Recurrent Matrix $\mathbf{M}$ Materialisation during Training - First Layers}. Figures~\ref{fig:materialise_matrices_layer0} and \ref{fig:materialise_matrices_layer1} show how the data-dependent recurrent matrices $M$ adapt depending on the task, Copy Memory or Selective Copy. Here, the same models and training experiments are used as in Fig.~\ref{fig:sel_copy_lyapunov}. On the left, the matrices are extracted at initialisation (before training) from their respective networks, while on the right the matrices are materialised at the end of training. Each entry in the matrices is the coupling strength between two recurrent neurons, so each matrix also implicitly represents a network topology. The main differences pertain to how evenly-distributed the off-diagonal entries are, as quantified by the normalised Shannon entropy (Eq.~\ref{eq:entropy}). It can be observed that Selective Copying leads to higher overall entropy, and thus more evenly distributed all-to-all recurrent connectivity.  }
    \label{fig:materialise_matrices_layer0}
\end{figure}

\begin{figure}[H]
    \centering
        \begin{subfigure}[b]{\textwidth}
            \centering        
            \includegraphics[width=\textwidth]{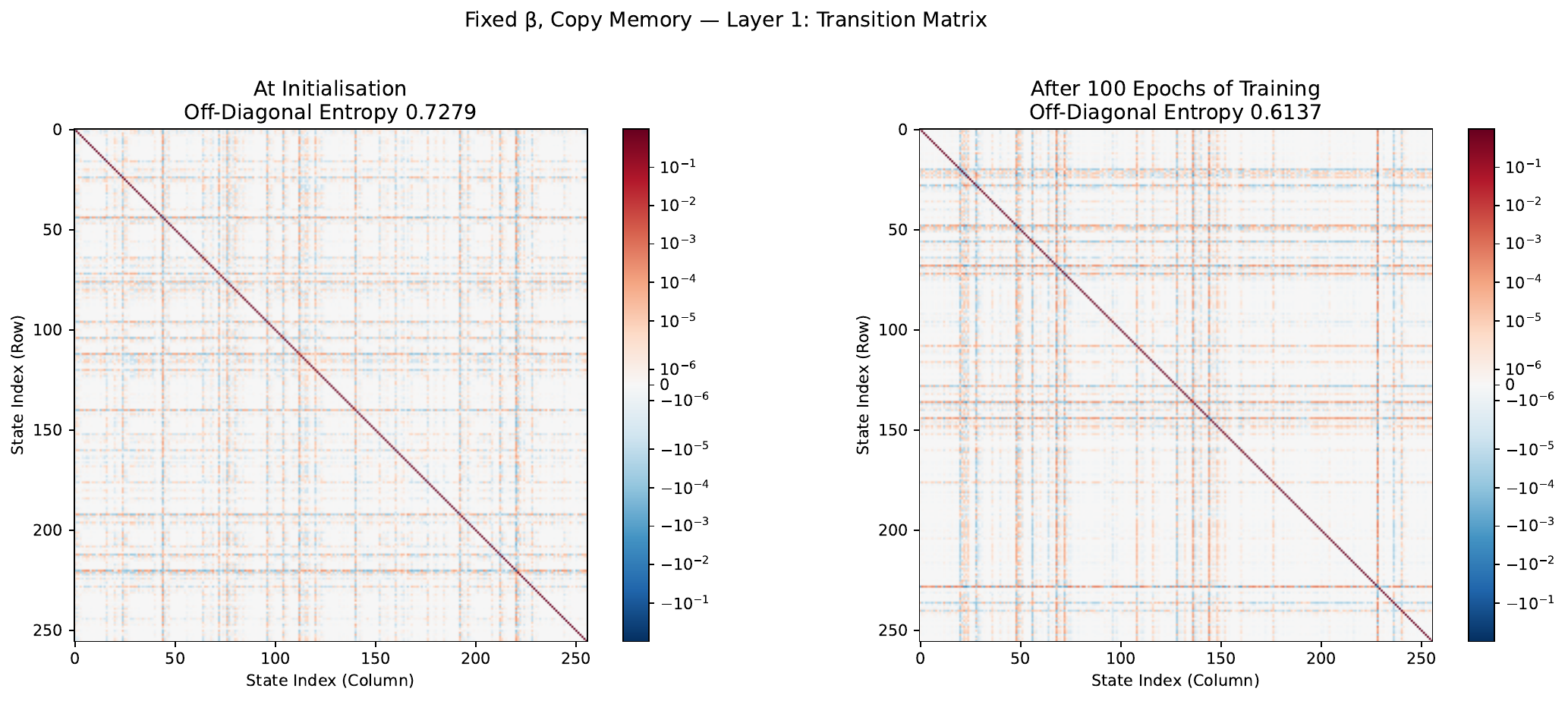}
            \caption{Copy Memory - Second Layer}
            \label{fig:cm_matrix_layer1}
        \end{subfigure}   
         \hfill
        \begin{subfigure}[b]{\textwidth}
            \centering        
            \includegraphics[width=\textwidth]{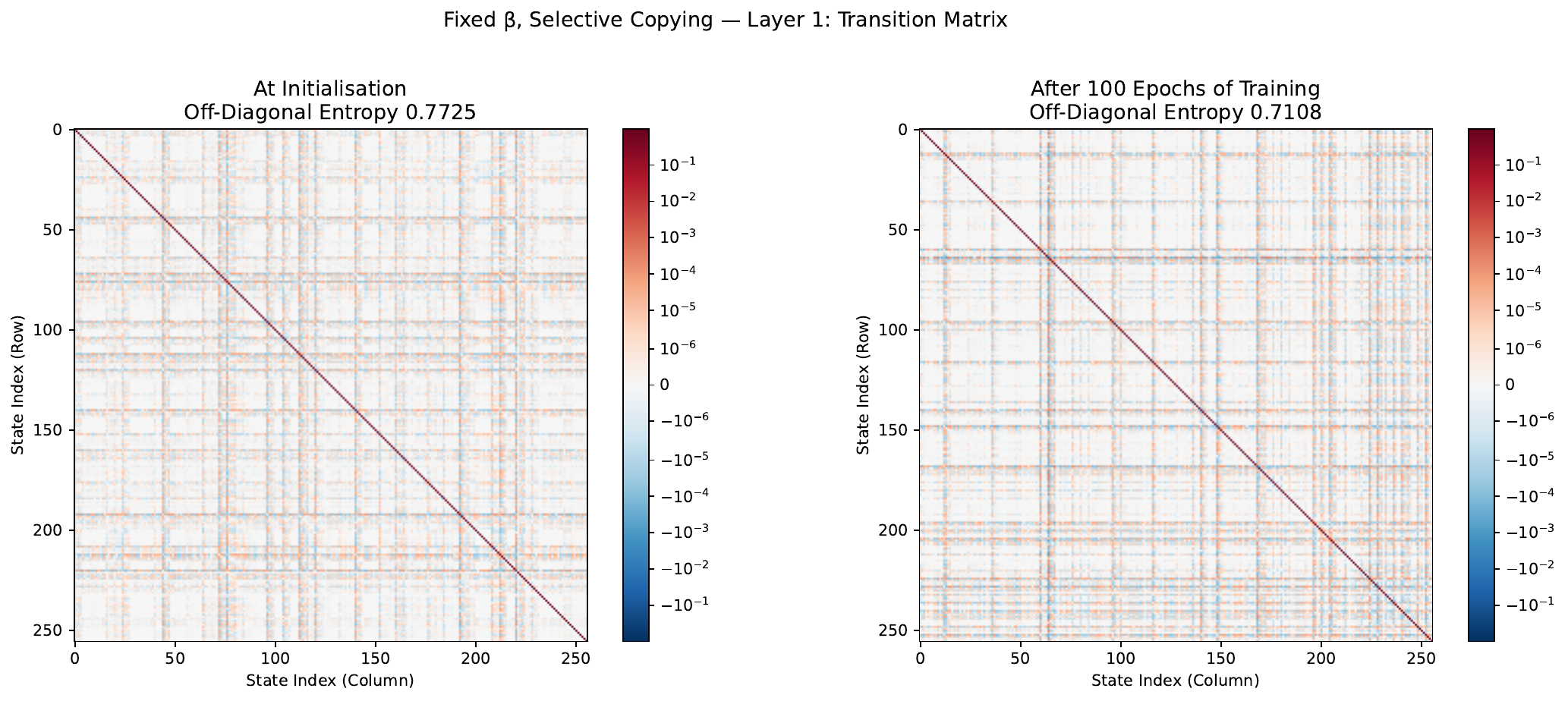}
            \caption{Selective Copy - Second Layer}
            \label{fig:sc_matrix_layer1}
        \end{subfigure}        
        \hfill
    \caption{\textbf{Recurrent Matrix $\mathbf{M}$ Materialisation during Training - Second Layers}. This figure accompanies Figure~\ref{fig:materialise_matrices_layer0}, showing the effect of training on different tasks on the materialised recurrent matrix $M$. While the effect on the second layers is less dramatic than on the first, there is still a visiable difference in entropy, with more evenly spread out connectivity for Selective Copying than Copy Memory.}
    \label{fig:materialise_matrices_layer1}
\end{figure}

\textbf{DEER Iterations} Figure~\ref{fig:sel_copy_deer} shows how Conv-DEER iterations required for convergence change after training and between tasks. Firstly, the same trend from Figure~\ref{fig:trained_damping} is also visible for both Copy Memory and Selective Copying. Namely, convergence slows down as training progresses. Furthermore, as seen in Figure~\ref{fig:sel_copy_lyapunov}, eigenvalue spread is higher in the deeper layers, and, accordingly, DEER convergence is slower on the second layers in all experimental conditions. Corroborating Figure~\ref{fig:eigenvalue_dt_min_max}, on all layers for both tasks, it can be seen that pushing $\beta$ to the $0.025$ ceiling after training invariably increases Newton iterations. 

Interestingly, however, undermining the trends set out in Section~\ref{sec:deer_ablation_results}, Figure~\ref{fig:sel_copy_deer} shows how the convergence for Copy Memory is slightly slower than that of Selective Copying. This is despite the fact that, as highlighted in Figure~\ref{fig:sel_copy_lyapunov}, Selective Copy induces a higher eigenvalue spread. One could hypothesise that higher off-diagonal entropy partially mitigates the effect of higher spread by effectively smoothing information across recurrent neurons. While $\beta$ remains a reliable predictor of DEER convergence, perhaps eigenvalue spread also requires consideration of effective recurrent connectivity to predict DEER convergence.  

\begin{figure}[H] 
    \centering
        \centering        
        \includegraphics[width=0.75\textwidth]{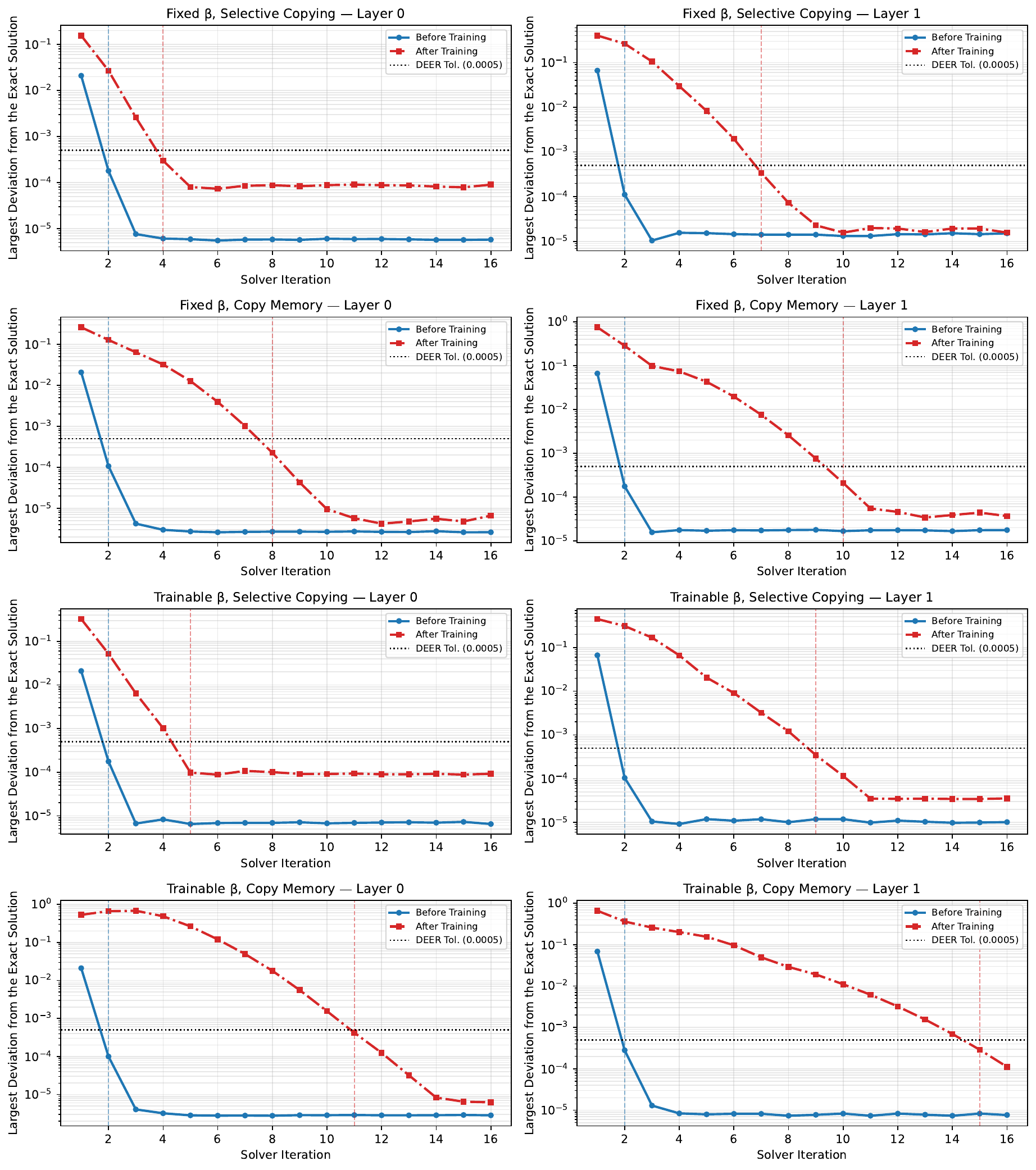}
        \hfill
        \caption{\textbf{Training Effect on DEER Convergence}. This figure uses the models from Table~\ref{table:trainable_beta_sc} and Figure~\ref{fig:sel_copy_lyapunov}, i.e., the trained and fixed $\beta$ models trained on either Selective Copying or Copy Memory. Using Conv-DEER, it can be seen that deeper layers are slower to converge across all settings. Notably, Selective Copy appear to converge more quickly than Copy Memory, despite the higher spread eigenspectrum (see Fig.~\ref{fig:sel_copy_lyapunov}).}
        \label{fig:sel_copy_deer}
\end{figure}

\subsection{Measuring Loose Coupling Expressivity on Mechanistic Architecture Design}  \label{sec:mad_results}

The key question being answered in this section is whether a small $\beta =0.00125$, which induces loose perturbation-like coupling, is expressive enough to compete with existing Transformer efficient alternatives. For this purpose, this study uses the Mechanistic Architecture Design (MAD) benchmark \citep{poli2024mechanistic}. 

The MAD benchmark is a synthetic test suite designed to predict large-scale language modelling capabilities from small token manipulation tasks. This enables identification of how specific model characteristics compare with established sequence architectures, including self-attention and efficient alternatives such as SSMs. Furthermore, all tasks within MAD use a fixed model dimension, standardised training hyperparameter sweeps, and the same task difficulty settings, maximising comparability across architectures. Thus, in this work, MAD serves as a proxy for validating ADPTNet adaptability, as required for language modelling, against prior methods. 

One of the tasks in MAD is the now familiar Selective Copying. The overall MAD score for Selective Copying is computed by varying its difficulty through increased numbers of distractors, larger vocabulary sizes, more target tokens to store and smaller training dataset sizes (see Appendix~\ref{appendix:mad_setup} for details). 

Another task is \textit{Compression}, which tests a network's ability to compress the concatenated input token sequence into a fixed-sized context (e.g., $\mathtt{input}: [\text{a, b, c, d, e}] \rightarrow \mathtt{output}: \text{cat(abcde)} $). The basic premise is reminiscent of the Long-Range Arena (LRA) benchmark \citep{tay2020long}, which tests a network's ability to model long-range dependencies and implicitly requires a degree of context compression. However, while some of the LRA's tasks comprised tokens simply consisting of greyscale pixels, Compression synthetically and systematically constructs larger and more diverse vocabularies, requiring a higher degree of expressivity from the network. As with Selective Copying, the details of the difficulty tunings for Compression are in Appendix~\ref{appendix:mad_setup}. 

The final MAD task employed here is \textit{Memorisation}, which tests key-value associative retrieval of factual memories.  This differs from in-context recall tasks because it only requires learning fixed key-value pairs of facts from the training data as ground truth and answering prompts at test time from the same fact collection (e.g., \texttt{training set (key, value) pairs}: [(a, b), (c, d), (e, f), $\dots$], \texttt{test prompt}: [$a\rightarrow?$, $c\rightarrow?$]). Memorisation can be considered a precursor to LLMs storing facts such as the \textit{(Key: Capital of Romania, Value: Bucharest)}. Once again, MAD averages over several Memorisation difficulty settings, including the number of facts to store and training dataset size (Appendix~\ref{appendix:mad_setup}). It should be noted that, while they do test computational primitives required for language modelling, neither Memorisation nor Compression requires in-context adaptation to the same degree as Selective Copying. Furthermore, key-value associative fact knowledge is typically stored within position-wise Feed-Forward Networks (FFN) / MLPs  in LLMs \citep{geva2021transformer}. As such, the Memorisation task is likely to be a stronger reflection of those components, rather than the sequence-mixing layers such as ADPTNet. 

Table~\ref{table:mad_results} shows how ADPTNet within the perturbation regime ($\beta=0.00125$) compares against existing MAD baselines. Notably, the only architectures with vector states in the table are ADPTNet and Hawk \citep{de2024griffinmixinggatedlinear}. The rest not only use matrix-valued states, but also expand the recurrent dimension to $\in \mathbb{R}^{2048\times 2048}$, much larger than $\mathbb{C}^{512}$  for ADPTNet. Despite this, for the Compression task, ADPTNet outperforms all reported matrix-valued architectures and is outperformed only by the Transformer and Hawk. In contrast, Memorisation appears more challenging for the tested ADPTNet configuration, as it only outperforms Mamba2 \citep{dao2024transformers} and DeltaNet. 

Before examining Selective Copying performance, it is important to establish in more detail how Hawk relates to ADPTNet. As proposed  in \citet{de2024griffinmixinggatedlinear}, Hawk relies on a selective Real-Gated LRU (RG-LRU) sequence-mixing layer: 

\begin{equation} \label{eq:hawk_rec}
    h_t = a_t \odot h_{t-1} + \sqrt{1 - a_t^2} \odot (i_t \odot x_t)
\end{equation}

Where $a_t = a^{cr_t}$, and the input-driven gates are $r_t = \sigma(W_r x_t + b_r)$, and $i_t = \sigma(W_i x_t + b_i)$, where $\sigma$ denotes a \texttt{softplus} activation, $a$ is parametrically constrained $0 \le a \le 1$, and $c$ is constant. As argued in \citet{de2024griffinmixinggatedlinear}, this recurrence rule differs from other selective SSMs such as Mamba because it cannot arbitrarily discard information. More precisely, in Mamba, each new state is obtained by interpolating between new inputs and the previous state, so there is no lower bound on the information that can be discarded at any given step. In other words, the network can decide to completely override its memory with the new input. By contrast, RG-LRU interpolates between its past state and an LRU update. Information is either added or not, but nothing is forgotten faster than the decay rates set by the network's fixed eigenspectrum through $a$. In principle, this constitutes a direct alternative linear solution to ADPTNet's stated goal of controlling the effective long-term dynamics of the recurrence, while enabling adaptability. 

Therefore, as highlighted in Table~\ref{table:mad_results}, for Selective Copy, the task that tests adaptability most directly among the MAD tasks included here, it is important to note that ADPTNet, at a perturbation level $\beta=0.00125$, manages to outperform Hawk ($80.9$ vs $77.0$). This provides evidence that the topological conjugate adaptability implemented in ADPTNet outperforms its closest equivalent in existing literature. Furthermore, it does so using weak perturbative coupling. 

\begin{table}[H]
    \centering
    \begin{tabular}{c c c c}
        \hline
        Model & Memorisation & Selective Copy & Compression \\
        \midrule
        Mamba2 \citep{dao2024transformers} & 42.0 & 95.4 & 41.3\\
        GLA \citep{yang2023gated} & 82.5 & 96.1 & 42.3 \\
        xLSTM \citep{beck2024xlstm} & 79.8 & 95.4 & 43.4\\
        DeltaNet \citep{yang2024parallelizing} & 40.8 & 98.8 & 43.3\\
        Gated DeltaNet\citep{yang2025gated} & 81.7 & 95.7 & 45.0 \\
        MesaNet \citep{von2025mesanet} & 77.2 & 99.2 & 45.4 \\
        Transformer \citep{von2025mesanet} & 84.7 & 96.0 & 49.5 \\
       \midrule
         \textbf{Hawk} \citep{de2024griffinmixinggatedlinear} & \textbf{91.3} & \textbf{77.0} & \textbf{47.7} \\
       \textbf{ADPTNet} ($\beta=0.00125$) & \textbf{46.1} & \textbf{80.9} &  \textbf{45.9} \\
       \midrule
    \end{tabular}
    \caption{\textbf{Accuracies on MAD Tasks} The ADPTNet network tested here is situated in a perturbation regime, where off-diagonal couplings are scaled by $\beta=0.00125$. The baseline results included in this table are reported from \citet{von2025mesanet}. GLA stands for Gated Linear Attention \citep{yang2023gated}. All networks besides Hawk, the Transformer, and ADPTNet, compress information in a matrix-valued state. The Transformer evidently keeps the entire sequence as its state, while ADPTNet and Hawk both have vector-valued states.  The most notable results are that ADPTNet performs competitively against models with much larger states on Compression, while outperforming its closest equivalent alternative, Hawk, on Selective Copying.}
    \label{table:mad_results}
\end{table}

\subsection{Measuring Long-Range Temporal Modelling with Sequential-CIFAR10} \label{sec:scifar}

Sections~\ref{sec:state_tracking_results}, \ref{sec:selective_copy_results}, and \ref{sec:mad_results} showed how the data-dependent similarity transforms within ADPTNet recurrence can enable state tracking and selectivity in SSMs. The key question, then, is whether ADPTNet preserves the desirable long-range modelling qualities of LTI SSMs. For instance, Mamba achieves a markedly lower accuracy than S4 on the LRA~\citep{yu2026block}. Could the topological conjugation operation ruin the coherence of long-range features? 

To check this, this study uses Sequential CIFAR-10 (sCIFAR), a well-established benchmark for modelling long-range temporal dependencies \citep{krizhevsky2009learning,hafner2017learning, gu2021efficiently, tay2020long}. It consists of $32 \times 32$ images from the CIFAR-10 dataset being converted into $L=1024$-long 1-dimensional temporal signals by processing each pixel at a time. The sequences are produced by concatenating each image's rows together. 

Tables~\ref{table:linear_scifar_results} and\ref{table:non-linear_scifar_results} show how ADPTNet, with a $\beta=0.0125$ which improves performance on both Selective Copying and state tracking (see Sections~\ref{sec:state_tracking_results} and \ref{sec:selective_copy_results}), compares to the baseline $\beta=0$ as well as existing non-linear recurrent or selective/adaptive architectures. Firstly, Table~\ref{table:non-linear_scifar_results} shows how state-driven non-linear ADPTNet does not degrade performance compared to the same baseline model with $\beta=0$, and actually marginally improves accuracy. It should also be noted that, as mentioned in Section~\ref{sec:eigvalue_param}, ADPTNet is only theoretically guaranteed to preserve, within given bounds, the decay rates encoded in the eigenvalue spectrum. Because the baseline $\beta=0$ and the non-linear $\beta=0.0125$ models both use S4D-Inv complex initialisation and parametrisation, the slight improvement in accuracy suggests that ADPTNet preserves the imaginary/oscillatory components of the recurrent eigenspectrum to some extent in its long-term dynamics. Compared to existing non-linear RNNs, Table~\ref{table:non-linear_scifar_results} also shows how small 2-layer non-linear ADPTNet is capable of outperforming them. 

Secondly, Table~\ref{table:linear_scifar_results} shows how a larger 4-layer input-driven LinearADPTNet outperforms current selective/adaptive architectures. To the best of the authors' knowledge, this is the highest recorded accuracy on this particular task (RGB~colour~sCIFAR) for a selective model. Moreover, the baselines in Table~\ref{table:linear_scifar_results} use deeper networks (6 layers) than both ADPTNet networks tested, yet even the smaller 2-layer non-linear ADPTNet configuration outperforms them. Nevertheless, the main takeaway here is that ADPTNet retains baseline LTI SSM performance for long-range sequence modelling while also improving state tracking and Selective Copy. The topological conjugation data-dependent adjustments improve adaptability while not damaging long-range features. 

\begin{table}[H]
    \centering
    \renewcommand{\arraystretch}{1.5}
    \begin{subtable}[t]{0.46\textwidth}
        \centering
        \begin{tabular}{c c}
            \hline
            Model & Acc (\%) \\
            \hline
            Llama  \citep{touvron2023llama} & 0.629 \\
            Mamba \citep{gu2023mamba} & 0.765 \\
            RWKV-4 \citep{peng2023rwkv} & 0.757 \\
            LSTM & 0.756 \\
            xLSTM & 0.761 \\
            \hline
            ADPTNet ($\beta=0.0125$ & \textbf{81.34} \\
        \hline
        \end{tabular}
        \caption{\textbf{LinearADPTNet - Four Layers}}
        \label{table:linear_scifar_results}
    \end{subtable}
    \hfill
    \begin{subtable}[t]{0.46\textwidth}
        \centering
        \begin{tabular}{c c}
        \hline
        Model & Acc (\%) \\
        \hline
        Transformer \citep{trinh2018learning} & 62.2\\
        CKConv \citep{romero2021ckconv} & 63.74\\
        TrellisNet \citep{bai2018trellis} & 73.42\\
        r-LSTM \citep{trinh2018learning} & 72.2\\
        UR-LSTM \citep{gu2020improving} & 71.00\\
        UR-GRU \citep{gu2020improving} & 74.4 \\
        HiPPO-RNN \citep{gu2020hippo} & 61.1\\
        LipschitzRNN  \citep{erichson2020lipschitz} & 64.2\\
        \hline
        Ablation S4D-Inv ($\beta=0$) & 76.39\\
        ADPTNet ($\beta=0.0125$) & \textbf{76.62}\\
        \hline
        \end{tabular}
        \caption{\textbf{Non-Linear ADPTNet - Two Layers}}
        \label{table:non-linear_scifar_results}
    \end{subtable}
    \caption{\textbf{sCifar Accuracy} Details of the experimental setup are available in Appendix~\ref{appendix:cifar_setup}. The baseline accuracies in Table~\ref{table:linear_scifar_results} are reported from \citet{beck2024xlstm}, while Table~\ref{table:non-linear_scifar_results} collects baseline results from \citet{gu2021efficiently}. It should be observed that both the 2-layer non-linear ADPTNet and the 4-layer LinearADPTNet outperform all of the larger selective architectures in Table~\ref{table:linear_scifar_results}. Most importantly, however, the non-linear ADPTNet configuration matches and even slightly outperforms the baseline S4D-Inv model ($\beta=0$) of the same size. }
    \label{table:ADPTNet_sc_results}
\end{table}

\subsection{Neuromorphic Speech Processing with Spiking Speech Commands} \label{sec:ssc}

As mentioned in Section~\ref{sec:intro}, the neuroscientific inspiration for ADPTNet stems from the auditory cortex's input-invariant time scales. As argued before, this implies that non-linear neuronal activity, synaptic modulation, and plasticity do not affect the temporal scales of population-level dynamics as captured by localised electrodes. Moreover, the same section detailed that the ultimate goal of ADPTNet is to provide a viable energy-efficient alternative to the Transformer, which could be achieved through an ADPTNet-based SNN (Section~\ref{sec:spiking_ADPTNet}). Therefore, this section addresses whether SpikingADPTNet can play a role in neuromorphic auditory processing. The de facto standard tasks in this regard are the Spiking Heidelberg Digits (SHD) and Spiking Speech Commands (SSC) \citep{cramer2020heidelberg}. Both datasets are constructed by passing spoken language recordings through a biologically plausible artificial cochlea model. For SSC, the original recordings consist of the Google Speech Commands dataset, which is also prevalent in mainstream sequence processing \citep{warden2018speech, gu2021efficiently}. It should be noted that the SHD dataset is smaller, and state-of-the-art models have recently approached saturation near 100\% accuracy \citep{sun2025towards}. Therefore, this work focuses only on the SSC task. 

The SpikingADPTNet setup used in this section is effectively a drop-in replacement for the LIF neurons within SiLIF \citep{fabre2025structured}. The training, hyperparameters, network topology, and eigenvalue initialisation are held constant with the experiments in \citet{fabre2025structured}. This is achieved by adding spiking activations to CoupledADPTNet (see Section~\ref{sec:selective_ADPTNet}, which mirrors the AdLIF backbone of SiLIF (see Section~\ref{sec:long_range_snn}). Both architectures are non-linear spiking units coupled with a slower-evolving "adaptation" linear state. To keep the parameter count comparable, LowParam linear layers are deployed within the input, $K$, and $V$ projections. Full details of the training setup are included in Appendix~\ref{appendix:ssc_setup}. Two SpikingADPTNet configurations are tested: one setting has its parameter count and recurrent state dimension matched to SiLIF, while the other expands the recurrent state $\times 4$ to match the size of the state-of-the-art spiking Transformer baseline, the SpikCommander \citep{wang2026spikcommander}. 

Table~\ref{table:ssc_results} shows the performance of SpikingADPTNet compared to the current state-of-the-art on SSC\footnote{Official ranking is available at https://zenkelab.org/resources/spiking-heidelberg-datasets-shd/.}. The first important takeaway is that the parameter-matched SpikingADPTNet model outperforms the baseline SiLIF result ($82.5 > 82.03\pm 0.25$) and bridges the gap to the current state-of-the-art, DelRec \citep{queant2025delrec}. Although only one seed is tested for the small SpikingADPTNet configuration, it lands slightly higher than DelRec's mean accuracy ($82.42$). Scaling up, the larger SpikingADPTNet configuration sets a new state-of-the-art accuracy on the SSC dataset, averaging $83.56$ over 4 random seeds, more than $1$ point higher than the previous state-of-the-art. Improvements in accuracy with higher parameter counts are not trivial on SSC, as Table~\ref{table:ssc_results} shows several architectures within the $\approx 1M$ parameter range still outperformed by the methods proposed here. Moreover, as reported in \citep{queant2025delrec}, DelRec is based on delay learning, which requires buffering past outputs, even as far back as 59 time steps. Therefore, while the recurrent state size is set to $256$ (Table~\ref{table:ssc_results}), the effective memory footprint is closer to the large ADPTNet configuration, strengthening the comparability of the two models. 

It should be mentioned that the accuracies reported in Table~\ref{table:ssc_results} contain only models trained on SSC without data augmentation. The highest accuracy recorded on SSC in \citet{schone2024scalable} relies on several spike-data augmentation techniques and does not use spiking activations within the model itself, leading to $> 88\%$ accuracy. Similarly, SpikCommander, proposed in \citet{wang2026spikcommander}, a Transformer-based spiking architecture, reaches at most $85.98\%$, using spike-dropping data augmentation. Interestingly, the 1.1M SpikCommander configuration reaches $83.26\%$, which, even with data augmentation, is below the comparably-sized SpikingADPTNet proposed here.

\begin{table}[H]
    \centering
    \begin{tabular}{c c c c c}
        \hline
        Model & Acc(\%) & State Size & N. Layers & N. Param. \\
        \midrule
        RSNN \citep{cramer2020heidelberg} & $51.1 \pm1.1$ & -- & -- & -- \\
        Adaptive RSNN \citep{yin2021accurate} & $74.2$ & -- & -- & --\\
        EventProp \citep{meszaros2025efficient} & $76.1\pm 1.0$ & -- & -- & --\\
        RSNN with Adaptation \citep{bittar2022surrogate} & $77.4$ & -- & -- & -- \\
        d-cAdLIF \citep{deckers2024co} & $80.23\pm0.07$ &-- & -- &  0.35M \\
        SE-adLIF \citep{baronig2025advancing} & $80.4 \pm 0.3$ &-- & -- &  1.6M \\
        DCLS \citep{hammouamri2024learning} & $80.7\pm0.2$ &-- & -- &  1.2M \\
        Adapt. Skip Rec. Connection SNN \citep{xu2026asrc} & $81.93$ &-- & -- & -- \\
        SiLIF \cite{fabre2025structured} & $82.03 \pm0.25$ & 512($\times2$) & 2 & 0.35M \\
        RSNN DelRec \citep{queant2025delrec} & $82.42 \pm 0.23$ & 256(*) & 3 &  0.37M \\
        \midrule 
       \multirow{2}{*}{\textbf{SpikingADPTNet}} & $82.5$ & 512($\times2$) & 2 & 355,689\\
       & $\mathbf{83.56 \pm 0.15}$ ($\max: 83.76$) & 2048($\times2$) & 2 & 1,069,481 \\
       \midrule
    \end{tabular}
    \caption{\textbf{Accuracies on SSC} The ranking of prior state-of-the-art work reported in this table is reported from the official SSC leaderboard (Available at https://zenkelab.org/resources/spiking-heidelberg-datasets-shd/). As shown, the smaller SpikingADPTNet configuration performs on par with the current state-of-the-art, while the larger one sets a new state-of-the-art by over $1$ point. (*) denotes that the state size in practice is larger due to axonal/synaptic delays requiring explicit buffers of past outputs. }
    \label{table:ssc_results}
\end{table}

\section{Discussion}

\textbf{Motivation Revisited} As stated in Section~\ref{sec:intro}, the goal of this work is to lay the foundations for large-scale neuromorphic systems as viable alternatives to the energy-intensive Transformer-GPU paradigm that dominates AI today. This requires an efficient neuromorphic alternative to the Transformer. Therefore, ADPTNet is proposed here, combining: (i) non-linear recurrent dynamics, (ii) adaptability, (iii) fine-grained parametric control over the time-scales of the system (to achieve long-term temporal modelling), and (iv) parallelisability. These four qualities are either already present and crucial to the Transformer's performance (i.e., adaptability (ii), long-range sequence modelling (iii), and parallelisability (iv)), or a key shortcoming that sets it back (i.e., vanilla fixed-depth Transformers lack scalable non-linear recurrence (i), which is now considered important for tasks such as reasoning \citep{jolicoeur2025less, schone2025implicit, geiping2026scaling}). 

The challenge in combining all four properties within a single recurrent architecture is that they are, to some extent, antithetical (Fig.~\ref{fig:tradeoffs}). Non-linear recurrence (i) impedes GPU-parallelism (iv).  In addition, non-linear recurrence (i) and adaptability (ii) typically hinder fine-grained control over timescales, and thus long-range sequence modelling as well (iii). As shown through both theoretical proofs and empirical evidence throughout this study, ADPTNet either mitigates or outright removes these trade-offs.

\textbf{Non-Linearity and Parallelism Trade-Off} Until recently, non-linear recurrence traditionally enforced sequential step-by-step simulation, which effectively prevented GPU-parallel training. With the popularisation of parallel-in-time simulation methods for RNNs, such as DEER, this limitation has gradually softened. Nevertheless, as highlighted in \citet{gonzalez2026predictability}, non-linear RNN parallelisation still depends on the properties of the dynamics, with penalties on larger Lipschitz constants and LLEs, and thus, implicitly on expressivity and long-term memory \citep{erichson2020lipschitz}. Moreover, training traditional non-linear RNNs such as LSTMs or GRUs is susceptible to bifurcations \citep{eisenmann2023bifurcations}, whereby recurrent dynamics can unpredictably become chaotic and, thus, difficult to parallelise. While researchers have parallelised novel recurrent architectures by design in the past \citep{farsang2026parallelization}, ADPTNet is the first to use dynamical systems to provide theoretically principled control over its parallelisation. Namely, as prescribed in Sections~\ref{sec:alt_deer} and \ref{sec:ADPTNet_def}, and empirically validated in Section~\ref{sec:deer_ablation_results}, the step size $\beta$ and recurrent eigenvalues $\bLb$ reliably predict and control the network's non-linearity and long-term behaviour, and, thus, its DEER convergence behaviour. One can parametrically and gradually "dial" the level of parallelism vs non-linearity within ADPTNet, unlike existing non-linear RNNs. 

Furthermore, the ADPTNet recurrence rule requires only efficient low-rank/element-wise multiplications and two dense linear projections, for a total computational cost comparable to GRUs and LSTMs. However, knowing the recurrent eigenvalues a priori, and thus also long-term behaviour, enables more efficient adaptations to Quasi-DEER compared to those established architectures. Namely, this work introduces Conv-DEER, which requires zero computational overhead and no derivatives or Jacobians. In addition, Conv-DEER reduces memory overhead by using a fixed kernel for the entire batch, and allows a choice between FFT-based signal convolutions and parallel scans for computing DEER iterations. Even with the efficiency gains, Conv-DEER retains average convergence rates similar to Quasi-DEER. With little additional cost and still no Jacobians, Forward-DEER matches baseline convergence. To the authors' knowledge, this is the first example of co-designing non-linear RNN dynamics and efficient parallel simulation algorithms, with direct user control over both.  

\textbf{Adaptability and Long-Range Sequence Modelling Trade-Off} As showcased by Mamba's degradation in performance compared to S4 on long-range sequence modelling tasks \citep{yu2026block}, in recurrent architectures, adaptability can come at the cost of stable long-term memory. Existing methods such as Hawk \citep{de2024griffinmixinggatedlinear} aim to mitigate this by disallowing arbitrarily strong forget gates and enforcing a lower bound on memory decay rates. In principle, ADPTNet follows a similar strategy. Information can only decay as fast as the fastest recurrent eigenvalue, perturbed by a term controlled by $\beta$. However, in practice, ADPTNet outperforms Hawk in adaptability, as quantified by the superior accuracy on the Selective Copying task (Section~\ref{sec:mad_results}). Furthermore, Section~\ref{sec:scifar} shows how this does not come at the cost of long-range sequence modelling performance. Both linear and non-linear ADPTNet outperform existing selective/adaptive architectures, while using significantly fewer parameters on sCIFAR, matching baseline LTI SSM performance. For both the MAD Selective Copying benchmark and sCIFAR, ADPTNet employs complex S4D-Inv and a perturbation-level $\beta$. Hence, interestingly, this study shows for the first time how linear oscillators with loose but dynamic and non-linear coupling can enable competitive selectivity compared to traditional forget gates, while retaining long-range memory. This finding contributes to the recently increasing interest in oscillators as a computational primitive in sequence models \citep{darlow2026continuous, unconventionalai2026un0}.

\textbf{Non-Linearity and Fine-Grained Timescale Control Trade-Off} Currently, controlling the non-linear dynamics of RNNs for long-range sequence modelling has been a matter of shaping the LLE. As showcased in Section~\ref{sec:lyapunov_results}, if one uses RNNs with orthogonally-constrained recurrent weights to set the LLE close to zero, this still does not prevent an arbitrarily quickly decaying rest of the Lyapunov spectrum. That differs from LTI SSMs where the entire spectrum can be fine-tuned to initialise and parametrise timescales according to task needs. The closest existing solution to attaining a form of full-spectrum control is Gradient Flossing \citep{engelken2023gradient}. Still, as detailed in Section~\ref{sec:lyapunov_exponents}, Gradient Flossing is a regularisation method relying on a loss term "nudging" dynamics towards the desired spectrum. It is sensitive to, and to some extent reliant on, the trajectories sampled for training. Furthermore, without continually optimising the flossing objective, the spectrum drifts unpredictably from its target. In terms of computational cost, it requires not only materialising full recurrent Jacobians but also computing their QR decomposition up to the number of exponents targeted for flossing, a cubically scaling operation. By contrast, ADPTNet has theoretical guarantees on parametric control of its Lyapunov spectrum without the need for loss-based regularisation, without spectral drift over training, and regardless of inputs. Lemmas~\ref{lemma:le_spectrum} and \ref{lemma:le_exact_match} provide the proofs for these guarantees, and Section~\ref{sec:lyapunov_results} reinforces them with empirical evidence. Furthermore, Section~\ref{sec:scifar} shows that non-linear ADPTNet networks can still retain SSM state-of-the-art long-range sequence modelling performance, enabled by adopting their powerful recurrent eigenspectrum parametrisation and initialisation. At the same time, Section~\ref{sec:state_tracking_results} shows that, while ADPTNet can offer guarantees on its Lyapunov spectrum, it is still sufficiently non-linear to perform state-tracking. Complementary to how $\beta$ can smoothly "dial" the selectivity and parallelisability of the network (Sections~\ref{sec:deer_ablation_results} and \ref{sec:selective_copy_results}), improvements in state tracking show by proxy how $\beta$ also incrementally increases non-linearity. 

It is worth emphasising how this interpolation behaviour differs from existing Almost Linear RNNs \citep{brenner2024almost}. Existing methods tune non-linearity by applying non-linear activations only to a subset of recurrent neurons (e.g., 10\% of neurons). That requires discrete increments in non-linear units, while $\beta$ is a continuous control. This enables, by contrast with Almost Linear RNNs, direct tuning of $\beta$ via gradient descent, allowing the network to set the level of non-linearity appropriate for the task (Table~\ref{table:trainable_beta_sc}). 

\textbf{State-of-the-Art Neuromorphic Speech Processing} ADPTNet takes inspiration from the computational and dynamical systems properties of the auditory cortex. This work argues that the input-invariant time scales in the auditory cortex can be abstracted as predictable Lyapunov exponents in a non-linear, adaptive RNN. In Section~\ref{sec:ssc}, it is shown how these brain-inspired principles lead to state-of-the-art accuracy on the SSC dataset, a de facto standard for benchmarking neuromorphic speech processing. Furthermore, it does so without the need to include large explicit buffers for delays as the previous state-of-the-art, which become more and more difficult to scale with sequence length (see Section~\ref{sec:long_range_snn}). SpikingADPTNet also outperforms state-of-the-art spiking Transformers on SSC (the SpikCommander), given a comparable parameter budget, without the need for data augmentation. 

Within the broader deep learning research landscape, ADPTNet is the first architecture to employ a theoretically principled adaptation of linear Transformers/DeltaNet online optimisation rules to Riemannian manifolds. Furthermore, this is the first study to connect modern linear Transformers/DeltaNet to dynamical systems theory using Chaos Theory (i.e., Lyapunov spectra). This latter connection extends the principles behind Neuromorphic Intermediate Representations (NIR) \citep{pedersen2024neuromorphic}. NIR connects heterogeneous neuromorphic hardware and software platforms through dynamical systems formalisms. This study likewise connects brain computing principles, GPU-parallelisation, long-range sequence modelling, and state-of-the-art notions of adaptability/selectivity through the shared framework of dynamical systems theory. 

\textbf{Significance} ADPTNet pushes forward the state-of-the-art in multiple distinct fields. It advances sequence modelling as the first combination of linear DeltaNet concepts with principles from dynamical systems theory, particularly Lyapunov exponents, and Riemannian manifold optimisation. It proposes a novel, more efficient paradigm for parallelising non-linear dynamics that exploits highly controllable long-term dynamics to reduce computational and memory overhead and offer fine-grained control over parallelisation properties. It also contributes to the RNN vanishing/exploding gradients mitigation literature, namely Gradient Flossing, as a more efficient alternative that achieves stable gradients by construction, not regularisation, to significantly improve efficacy and reliability. In addition, SpikingADPTNet sets a direct state-of-the-art result on real-world neuromorphic speech processing ($83.56\% \pm 0.15$), outperforming the current state-of-the-art spiking Transformer in the process. Ultimately, all the steps taken within this work bring neuromorphic architectures closer to competing with the Transformer, even outperforming it on long-range dependency modelling and state tracking. As a result, ADPTNet is also a step towards a viable neuromorphic and energy-efficient alternative to current LLM systems. This is a vital research direction, as an uncontrolled rise in AI energy consumption can have severe negative effects on both the environment and society at large. 

\section{Future Work}

\textbf{Limitations} Although ADPTNet makes strides towards capturing the qualities that support the Transformer's dominance, several challenges remain. First, while ADPTNet outperforms its closest existing equivalent, Hawk, it remains unclear whether it can outperform other adaptive networks on Selective Copying, since only a small perturbative $\beta=0.00125$ is tested here. Evidence from Table~\ref{table:ADPTNet_sc_results} suggests testing larger $\beta$ should further close that performance gap. Second, while Selective Copy (and MAD) and state tracking are synthetic predictors of language modelling and reasoning performance, respectively, they are not sufficient to conclude whether ADPTNet performs on par with the Transformer at scale. A more informative test would be to scale ADPTNet to billions of parameters and directly test it on full language modelling and reasoning benchmarks. Third, the ADPTNet and Conv/Forward DEER implementations are in PyTorch, which may be suboptimal compared to fused CUDA kernels and may also be limiting scalability at the moment.  Generally, to fully evaluate the potential of ADPTNet, more compute than was available for the scope of this work is needed. 

\textbf{Koopman-DEER} Briefly, Koopman Operator theory \citep{koopman1931hamiltonian} posits that any non-linear dynamics can be approximated by a linear operator $\mathcal{K}$ if lifted to an appropriate infinite-dimensional state space. In practice, a high-dimensional embedding, such as a delay embedding or the recurrent state of an SSM or ADPTNet, can serve as a proxy for computing the linear approximator $K \approx \mathcal{K}$, which can then effectively describe the system's non-linear behaviour. The topological conjugates that underpin ADPTNet recurrence have an established connection to the Koopman Operator. Dynamical Similarity Analysis (DSA) \citep{ostrow2023beyond} uses orthogonally-constrained topological conjugates $QKQ^T$ to define a metric distance between different non-linear dynamics. In general, finding a Koopman operator $K$ is done using Dynamic Mode Decomposition (DMD) \citep{tu2013dynamic}, which is essentially a linear least-squares problem of the form $\norm{(s_t - Ks _ {t-1})} $, where $s_t$ are all the states in a trajectory \footnote{It is worth mentioning that the DMD optimisation problem bears striking resemblance to the residual minimisation problem within DEER.}.

One direction for future work is to build on Conv-DEER toward a Koopman-DEER. Conv-DEER instability largely stems from errors caused by not considering the off-diagonal interactions between recurrent states. A Koopman-DEER implementation would similarly minimise computational overhead compared to traditional Quasi-DEER, but use a single dense matrix approximator $K$ instead of the diagonal eigenvalues $\bLb$.  Given the $Q \bLb Q^T$ structure of ADPTNet recurrent dynamics, finding an ADPTNet Koopman operator $K = \hat{K}\bLb \hat{K}^T$ may be a question of finding an optimal averaging of rotations $\hat{K} \approx \mathtt{mean}(Q_1, Q_2, \dots, Q_t)$. Computing DEER iterations is then just a matter of finding $K^t$ terms, which are $\hat{K} \bLb^t \hat{K}^T$, enjoying similar efficiency and convolution equivalence to Conv-DEER. 

\textbf{Proximal-DEER} More generally, future work could also explore taking advantage of manifold structure within the DEER update itself. In other words, DEER relies on unconstrained Newton steps in the direction minimising the residual. However, in networks such as ADPTNet, each recurrent step has a known and highly structured form. One could potentially devise new architectures and accompanying DEER extensions that can take advantage of Riemannian optimisation techniques to constrain DEER updates to valid manifold-bound state guesses. Depending on the constraints, one can use efficient proximal optimisation algorithms with accelerated convergence \citep{gokhale2024proximal}. 

\textbf{Topological Conjugate / Manifold-Constrained Online and Local Learning Rules} ADPTNet relies on similarity transforms and topological conjugations of its recurrent dynamics at each timestep. However, one could space out the updates and accumulate gradient/learning signals over larger sequences. This could yield an online learning rule similar to e-prop \citep{bellec2020solution} or STDP \citep{bengio2015stdp}, where, however, optimisation updates are constrained to the orthogonal manifold and used to topologically conjugate (i.e. apply similarity transforms to) SNN/RNN recurrent weights. This would mean that the fundamental timescales of the networks remain unchanged throughout learning, avoiding vanishing/exploding history compression pathologies.  

\textbf{Improve Energy-Efficiency} As presented, SpikingADPTNet could be further optimised for energy efficiency by adopting more neuromorphic computing principles. For instance, while ReLU is currently applied before obtaining $k$ and $v$, a spiking activation could avoid the dense and expensive $K$ and $V$ vector matrix multiplications. Furthermore, the loose coupling induced by the topological conjugates could also be optimised for sparsity and could eventually be derived stochastically rather than deterministically, to reduce memory overhead. 

\textbf{Dynamical Systems Modelling} The focus of this work is on stable long-range sequence modelling, and thus the eigespectrum is parametrised accordingly. However, the ADPTNet recurrence imposes no inherent constraints. One could also explore ADPTNet modelling of chaotic dynamical systems. Systems such as the Lorenz 63 attractor have well-known positive Lyapunov spectra which could be used as part of ADPTNet initialisation or as a form of inductive bias towards learning unstable dynamics. Furthermore, one could use the ADPTNet topological conjugate primitive to build interpretable models of different dynamical system topologies, with different similarity transforms for individual critical points/attractor basins in the spirit of recurrent switching SSMs \citep{linderman2016recurrent}.  

\textbf{Tighter Lyapunov Exponent Bounds} As shown in Section~\ref{sec:lyapunov_results}, the Lyapunov spectra empirically observed for ADPTNet at initialisation and during training lie within much tighter bounds of the eigenspectrum, relative to the limits prescribed by Lemma~\ref{lemma:le_spectrum}. Future work should establish whether the theoretical bounds could be further tightened, or whether the observed behaviour is an idiosyncrasy of the data and model setup used in this study.

\textbf{Architectural Extensions and Scaling} One of the key limits on ADPTNet expressiveness is the low-rank structure of Key-Value outer products projected onto the orthogonal manifold. For instance, \citet{siems2026deltaproduct} showed how taking multiple gradient descent steps over the DeltaNet learning objective improved state-tracking performance. Similar results could be a useful future direction for ADPTNet. In addition, ADPTNet currently relies on Riemannian gradient descent steps from the origin $I_n$ at each time step, which may also limit expressiveness. It may be worth exploring alternative starting points, or even dynamic starting points, potentially with matrix-valued memory similar to DeltaNet-like state-of-the-art recurrent architectures. Ultimately, the goal of ADPTNet is to propose computational alternatives to the Transformer. This evidently requires testing ADPTNet at billion-parameter scale

\section{Author Contributions Statement}
M.I.S. conceived the models under investigation, conducted the experiments, contributed to their design, prepared figures, and wrote the main manuscript text. O.R. supervised the research, contributed to devising the experiments and provided major revisions to the final manuscript text and figures. All authors reviewed the final manuscript.

\section{Data Availability Statement}

The datasets used in this study are publicly available:

\section{Additional Information}
Competing interests: The authors declare no competing interests.
\bibliographystyle{plainnat}
\bibliography{paper}  

\begin{appendices}

\section{Numerically Stable Lyapunov Exponent Algorithm}

Computing the Lyapunov spectrum from the long-term Jacobian of a dynamical system may encounter numerical stability issues due to vanishing or exploding values in repeated matrix products. This study uses Algorithm~\ref{algo:stable_lyapunov_comp} from  \citet{engelken2023lyapunov} and \citet{engelken2023gradient} to compute exponents accurately. 

\begin{algorithm}
\caption{Numerically Stable Lyapunov Spectrum Computation} \label{algo:stable_lyapunov_comp}
\begin{algorithmic}
\Require $T > 0$, $B$, \texttt{inputs}$\in \mathbb{R}^{\text{Batch Size} \times D \times T}$, \texttt{initial state} $\in \mathbb{R}^{\text{Batch Size} \times D}$, \texttt{states}$\in \mathbb{R}^{\text{Batch Size} \times D \times T}$, $\mathtt{n}_{\text{iters}}$
\Ensure $T \% B=0$
\State $N_{\text{blocks}} \gets T / B$
\State \texttt{input blocks} $ \gets \mathtt{inputs.chunk(N_{blocks}, dim=-1)}$
\State \texttt{states blocks} $ \gets \mathtt{states.chunk(N_{blocks}, dim=-1)}$
\State \texttt{states out} $\gets$ \texttt{empty list}
\State $i \gets 0$
\While{$i < n_{\text{blocks}}$}
    \State $j \gets 0$
    \State \texttt{states guess} $\gets$ \texttt{state blocks[i]}
    \While{$j <  n_{\text{iters}}$}
        \State \texttt{states guess} $\gets$ \texttt{DEER(states guess, initial guess, input blocks[i])}
        \State $j \gets j + 1$
    \EndWhile
    \State $i \gets i + 1$
    \State \texttt{initial state = states guess[..., -1]}
    \State \texttt{states out.append(states guess)}
\EndWhile

\Return \texttt{{cat(states out, dim=-1)}}
\end{algorithmic}
    
\end{algorithm}

\section{Code for Forward DEER MAC Estimate} \label{appendix:code_fwd_macs}

\begin{verbatim}
    class ForwardDEERMACs(nn.Module):
        def __init__(self):
            super().__init__()
    
        def forward(self, U, c, V, Lambda):
            L, R = U @ c, V 
            LR_diag = L[..., 0] * R[..., 0] + L[..., 1] * R[..., 1]
            RLL = R.matmul((L.mT * Lambda).matmul(L))#
            RLLR_diag = RLL[..., 0] * R[..., 0] + RLL[..., 1] * R[..., 1] 
            diag_approx = Lambda + 2 * Lambda * LR_diag + RLLR_diag 
            return diag_approx 
\end{verbatim}

\section{Training Setup for State Tracking} \label{appendix:s5_training_config}

\begin{table}[H] \centering\small\setlength{\tabcolsep}{6pt}
\begin{tabular}{l l l}
\toprule
& \textbf{$L=8$ sweep} & \textbf{$L=32$ sweep} \\
& \footnotesize\texttt{s5\_L8\_*\_beta\_sparsity} & \footnotesize\texttt{s5\_L32\_ln\_relu\_cdot} \\
\midrule
\multicolumn{3}{l}{\emph{Task and data}} \\
Group                  & \multicolumn{2}{l}{$S_5$, 120 elements, non-solvable; identity monoid, $p=0$} \\
Training sequences     & \multicolumn{2}{l}{$8{,}388{,}608$ ($512\times16384$), \texttt{uint8}} \\
Test sequences         & \multicolumn{2}{l}{$4{,}096$ per evaluation length} \\
Training length        & 8                       & 32 \\
Evaluation lengths     & 8, 16, 32 ($1/2/4\times$) & 32, 64, 128 ($1/2/4\times$) \\
\midrule
\multicolumn{3}{l}{\emph{Model}} \\
Layers                 & \multicolumn{2}{l}{1} \\
Width                  & \multicolumn{2}{l}{512} \\
Heads                  & \multicolumn{2}{l}{1} \\
Normalisation          & \multicolumn{2}{l}{batch norm} \\
Dropout                & \multicolumn{2}{l}{0.05} \\
Output projection      & \multicolumn{2}{l}{GLU} \\
Positional encoding    & \multicolumn{2}{l}{none (RoPE off)} \\
Input gate             & \multicolumn{2}{l}{on, per element; gate trainable} \\
Recurrence solver      & \multicolumn{2}{l}{Sequential Simulation} \\
\midrule
\multicolumn{3}{l}{\emph{Optimisation}} \\
Optimiser              & \multicolumn{2}{l}{AdamW (\texttt{adamw\_torch\_fused})} \\
Learning rate          & \multicolumn{2}{l}{$2\times10^{-3}$} \\
Weight decay           & \multicolumn{2}{l}{$0$} \\
$\beta_1,\beta_2$      & \multicolumn{2}{l}{$0.9$, $0.95$} \\
$\epsilon$             & \multicolumn{2}{l}{$10^{-8}$} \\
Gradient clipping      & \multicolumn{2}{l}{$0.25$} \\
Schedule               & \multicolumn{2}{l}{linear, $10\%$ warmup ($1{,}638$ steps)} \\
Batch size             & \multicolumn{2}{l}{512} \\
Epochs                 & \multicolumn{2}{l}{1 ($16{,}384$ steps)} \\
Seeds                  & \multicolumn{2}{l}{0, 1, 2} \\
Model selection        & \multicolumn{2}{l}{best \texttt{eval\_loss}, loaded at end} \\
Evaluation interval    & \multicolumn{2}{l}{every $1{,}000$ steps, all lengths} \\
\midrule
\multicolumn{3}{l}{\emph{Swept}} \\
Spectrum init          & \texttt{s4d-real}       & \texttt{continuous\_pm} (fixed) \\
$\beta$                & $0$, $0.00125$, $0.0125$, & $4$ (fixed) \\
                       & $0.125$, $0.5$, $1$, $4$  & \\
Sparsity               & $0\%$, $75\%$           & $75\%$ (fixed) \\
Negative half          & yes / no                & yes (fixed) \\
$k/v$ LayerNorm        & off                     & swept \\
\texttt{relu\_past\_out} & off                   & swept \\
Controlled dot         & off                     & swept, \texttt{dot\_value} $=0.5$ \\
Early stopping         & none                    & stop when $L{=}64$ is perfect \\
Configurations         & $7\times2\times2=28$    & see results table \\
\bottomrule
\end{tabular}
\caption{}
\label{}
\end{table}

\section{Training Setup for Selective Copy} \label{appendix:selective_copy_setup}

\begin{table}[H]\centering\small\setlength{\tabcolsep}{6pt}
\begin{tabular}{l|l|l}
\toprule
                 & Baseline & ADPTNet \\
\midrule
Vocabulary       & \multicolumn{2}{l}{$16$} \\
Tokens to copy   & \multicolumn{2}{l}{$16$} \\
Training samples & \multicolumn{2}{l}{$10{,}000$ per epoch, regenerated} \\
Test samples     & \multicolumn{2}{l}{$1{,}000$} \\
Layers           & \multicolumn{2}{l}{$2$} \\
Eig.\ Init.      & \multicolumn{2}{l}{S4D-Real} \\
Batch size       & \multicolumn{2}{l}{$64$} \\
Epochs           & \multicolumn{2}{l}{$200$} \\
\midrule
LR               & $10^{-3}$, $5\times10^{-3}$, $10^{-2}$ & $10^{-3}$ \\
WD               & $0$, $0.1$ & $0.1$ \\
No.\ Seeds       & $3$ & $1$ \\
\bottomrule
\end{tabular}
\caption{}
\label{}
\end{table}

\begin{table}[H]\centering\small\setlength{\tabcolsep}{6pt}
\begin{tabular}{l|l}
\toprule
$d_{\text{model}}$ & $64$ \\
$n_{\text{state}}$ & $4$ \\
Layers             & $2$ \\
Eig.\ Init.        & S4D-Inv \\
$\beta$            & 0.0125 \\
\midrule
Epochs             & $100$ \\
LR                 & $5\times10^{-4}$ \\
WD                 & $0$ \\
\bottomrule
\end{tabular}
\caption{}
\label{}
\end{table}

\section{MAD Benchmark Setup} \label{appendix:mad_setup}
\section{Sequential CIFAR-10 Setup} \label{appendix:cifar_setup}
\section{Spiking Speech Commands Setup} \label{appendix:ssc_setup}

\end{appendices}
\end{document}